\newcolumntype{Y}{>{\centering\arraybackslash}X}
\newcommand\notsotiny{\@setfontsize\notsotiny{6.31415}{7.1828}}
\DeclareMathOperator*{\argmin}{arg\,min}
\newtheorem{theorem}{Theorem}
\newtheorem{lemma}{Lemma}
\newtheorem{definition}{Definition}
\newtheorem{proposition}{Proposition}
\newtheorem{corollary}{Corollary}
\newtheorem{remark}{Remark}
\newtheorem{assumption}{Assumption}
\newcounter{term}[section]
\renewcommand\theterm{\alph{term}}
\newcommand{\vast}{\bBigg@{4}}
\newcommand{\Vast}{\bBigg@{5}}
\newcommand\semiHuge{\fontsize{22.7}{31.38}\selectfont}
\begin{document} 

\title{{\semiHuge Parallel Successive Learning for Dynamic Distributed Model Training over Heterogeneous Wireless Networks}}
\author{Seyyedali Hosseinalipour,~\IEEEmembership{Member,~IEEE}, Su Wang,~\IEEEmembership{Student~Member,~IEEE}, Nicol\`{o} Michelusi,~\IEEEmembership{Senior~Member,~IEEE}, Vaneet Aggarwal,~\IEEEmembership{Senior~Member,~IEEE}, Christopher G. Brinton,~\IEEEmembership{Senior~Member,~IEEE}, David~J.~Love,~\IEEEmembership{Fellow,~IEEE}, and Mung~Chiang,~\IEEEmembership{Fellow,~IEEE}
\vspace{-3mm}
\thanks{
This work was in part supported by Cisco, Inc., NSF CNS-2146171, ONR N000142212305, DARPA D22AP00168-00, and NSF CNS-2129615.

S. Hosseinalipour is with the Department of Electrical Engineering,
University at Buffalo-SUNY, Buffalo, NY 14228 USA (e-mail: alipour@buffalo.edu).

S. Wang, C. G. Brinton, D. J. Love, and M. Chiang are with the School of Electrical and Computer Engineering,
Purdue University, West Lafayette, IN 47906 USA (e-mail: \{wang2506,cgb,djlove,chiang\}@purdue.edu).

N. Michelusi is with the School of Electrical, Computer, and Energy
Engineering, Arizona State University, Tempe, AZ 85287 USA (e-mail:
nicolo.michelusi@asu.edu).

V. Aggarwal is with the School of Industrial Engineering, Purdue
University, West Lafayette, IN 47906 USA (e-mail: vaneet@purdue.edu).

(Corresponding author: Seyyedali Hosseinalipour)}}
\maketitle
\vspace{-9mm}
\setulcolor{red}
\setul{red}{2pt}
\setstcolor{red}

\begin{abstract}
Federated learning (FedL) has emerged as a popular technique for distributing model training over a set of wireless devices, via iterative local updates (at devices) and global aggregations (at the server). In this paper, we develop \textit{parallel successive learning} (PSL), which expands the FedL architecture along  three dimensions: (i) \textit{Network}, allowing decentralized cooperation among the devices via device-to-device (D2D) communications. (ii) \textit{Heterogeneity}, interpreted at three levels: (ii-a) Learning:  PSL considers heterogeneous number of stochastic gradient descent iterations with different mini-batch sizes at the devices; (ii-b) Data: PSL presumes a \textit{dynamic environment} with  data arrival and departure, where the distributions of local datasets evolve over time, captured via a new metric for \textit{model/concept drift}.  (ii-c) Device: PSL considers devices with different computation and communication capabilities. (iii) \textit{Proximity}, where devices have different distances to each other and the access point.  PSL considers the realistic scenario where global aggregations are conducted with \textit{idle times} in-between them for resource efficiency improvements, and incorporates \textit{data dispersion} and \textit{model dispersion with local model condensation} into FedL. Our analysis sheds light on the notion of \textit{cold} vs. \textit{warmed up} models, and model \textit{inertia} in distributed machine learning.  We then propose \textit{network-aware dynamic model tracking}  to optimize the model learning vs. resource efficiency tradeoff, which we show is an NP-hard signomial programming problem. We finally solve this problem through proposing a general optimization solver. Our numerical results reveal new findings on the interdependencies between the idle times in-between the global aggregations, model/concept drift, and D2D cooperation configuration.
\end{abstract}
\vspace{-.1mm}
\begin{IEEEkeywords}
Cooperative federated learning, device-to-device communications, network optimization, dynamic machine learning
\end{IEEEkeywords}
\vspace{-6.5mm}
\section{Introduction}
\noindent 
Distributed machine learning (ML) over wireless networks has attracted significant attention recently, for applications ranging from keyboard next word prediction to autonomous driving~\cite{jiang2016machine,8865093}. Distributed ML is an alternative to centralized ML, which requires a central node, e.g., a server, coexisting with the dataset. This alternative is of particular interest since in many applications the dataset is  collected in a distributed fashion across a set of wireless devices, e.g., through their sensing equipment or the users' input, where the transfer of data to the cloud incurs significant energy consumption and long delays.

Federated learning (FedL) is the most recognized distributed ML technique, with the premise of keeping the devices' datasets local~\cite{mcmahan2017communication,zhu2020toward}. Its conventional architecture resembles a \textit{star} topology of device-server interactions (Fig.~\ref{fig:simpleFL}). Each  model training round  of FedL consists of (i) \textit{local updating}, where devices update their local models based on their datasets and the global model, e.g., via stochastic gradient descent (SGD), and (ii) \textit{global aggregation}, where the server aggregates the local models to a new global model and broadcasts it.

\vspace{-.1mm}
 \subsection{Related Work}
 \vspace{-.15mm}
 Researchers have considered the effects of limited/imperfect communications in wireless networks (e.g.,  channel fading, packet loss, and limited bandwidth) on the performance of FedL~\cite{chen2019joint,8737464,8664630,8870236,9014530}. Also, quantization~\cite{9054168} and sparsification~\cite{renggli2019sparcml} techniques have been studied to facilitate  FedL implementation. 

\begin{figure}[t]
\centering
\includegraphics[width=.46\textwidth]{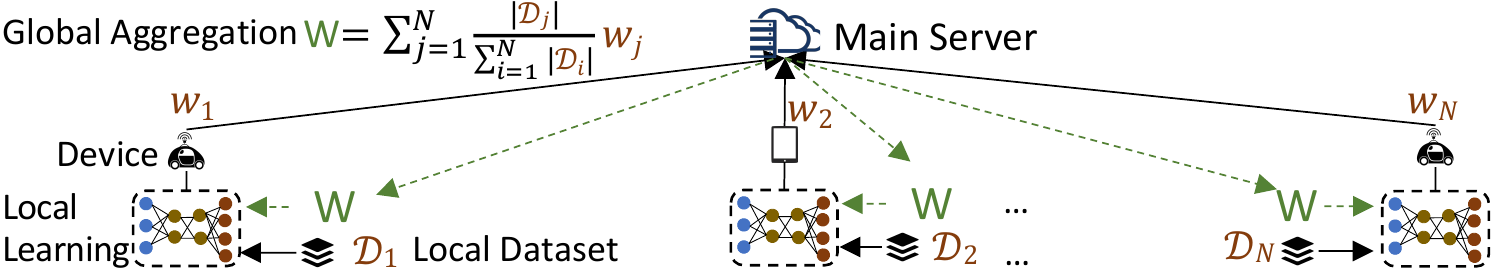}
\vspace{-1mm}
\caption{``Star" learning topology architecture of conventional FedL.}
\label{fig:simpleFL}
\vspace{-0.1mm}
\end{figure}
Researchers have also considered the computation aspects of implementing FedL over wireless networks~\cite{8737464,9261995,9024521,wang2021network,8664630}. Part of this literature has focused on the impact of straggler nodes, i.e., when some nodes have low computation capability, on model training~\cite{wang2021network,8737464,9261995,wang2018wide}. Another emphasis has been reducing the computation requirements of model training, e.g., through coding techniques~\cite{9024521}, intelligent data offloading between devices~\cite{wang2021network}, and efficient device sampling~\cite{ji2020dynamic}. 




Other research has focused on extending the \textit{star} topology of FedL. Hierarchical FedL has proposed a \textit{tree} structure between edge devices and the main server, e.g.,~\cite{TobeAdded4}. This literature has mostly focused on specific use cases of two-tiered network structures above wireless cellular devices, e.g., with edge clouds at base stations.
Additionally, there is a literature on \textit{fully decentralized} FedL over \textit{mesh} network architectures without a centralized server, where device-server communications are replaced with device-to-device (D2D) communications~\cite{8950073,9154332}. Building upon this,  \textit{semi-decentralized architectures} for FedL have also been proposed, where D2D communications are exploited in conjunction with device-server interactions to improve the model training performance~\cite{hosseinalipour2020multi,lin2021two}. In this literature, D2D communications are solely used for distributed model parameter aggregation across the nodes.

Different from existing works, we propose a new architecture for \textit{dynamic/online} distributed ML, which is motivated by the shortcomings of FedL, discussed next.
\vspace{-3.5mm}
    \subsection{FedL Shortcomings and Solution Overview}
    \vspace{-.7mm}
    \subsubsection{Limitations of FedL over heterogeneous wireless networks} Consider a wireless network consisting of a set of battery limited devices, e.g., smart phones or sensors. Assume that a base station (BS) aims to train an ML model using the data gathered by the devices.
    There are multiple challenges faced in utilizing conventional FedL in this environment:
    \begin{enumerate}[leftmargin=4.3mm]
        \item There might exist some devices with high data quality/quantity  that suffer from low computation capability, resulting in their data being neglected during training.
        \item There might exist some devices with high computation capability suffering from low data quantity/quality, resulting in idle processing resources.
        \item There might exist some devices with high computation capability and good data quality/quantity, but poor channel conditions to the BS, making their participation in the model aggregation step challenging.
        \item There might exist some devices with low computation capability and low data quality/quantity that have good channel conditions to the BS, which will be unused.
    \end{enumerate}
    Hence, there are several conditions under which conventional FedL will result in poor performance. Roughly speaking, FedL models are biased towards devices with good channel conditions that have large amounts of data, which might encompass only a small portion of the overall network.
      \subsubsection{Enabling Data Sharing in FedL} Most of the FedL literature to date has assumed that  users have strict data privacy concerns and never share their data. While this is true in some applications, e.g., healthcare systems, there are also applications where the data privacy is not strictly regulated, e.g., model training over sensor networks to detect abrupt environmental changes. Also, economic incentives (e.g., rewards, gas credit, or cash back) can be developed to encourage data sharing in many applications such as autonomous driving. Further, research on privacy preserving representation learning, which aims to obfuscate  sensitive attributes of raw data, can expand the types of data which can be shared~\cite{bertran2019adversarially,azam2020towards}. These factors have motivated recent initial investigations of data offloading in FedL. Specifically, the recent work \cite{9492062} 
proposes data offloading for edge-assisted FedL in vehicular networks, where the data collected at the cars is not  sharing restricted. Also, researchers in~\cite{zhao2018federated} use data offloading to alleviate the impact of non-i.i.d. data in FedL. Finally, works \cite{hosseinalipour2020federated,wang2021device} introduce data offloading
for federated and distributed learning in heterogeneous networks.
    \subsubsection{Parallel successive learning (PSL)}
     Motivated by the above challenges,
  we propose a novel methodology for distributing ML over wireless networks that leverages the following properties,   which constitute the pillars of \textit{parallel successive learning} (PSL). PSL is a foremost realization of \textit{fog learning} paradigm introduced by us in~\cite{hosseinalipour2020federated}, which enables both parameter and data offloading among the devices.
    \begin{enumerate}[leftmargin=4.3mm]
        \item Modern wireless devices are capable of device-to-device (D2D) communications~\cite{6815897,hosseinalipour2020federated,9311931}, which  are often considerably low power consuming. D2D  communications can also be carried out in the \textit{out-band} mode that does not occupy the licensed spectrum. There is an existing set of literature on D2D communications for various ad-hoc networks~\cite{abolhasan2004review,zeadally2012vehicular,bekmezci2013flying}.  Motivated by this, we exploit D2D communications among the edge devices in PSL.
        \item Devices with high data quantity/quality and low computation capability can transfer a portion of their data to those with better computation resources via D2D communications. Then, the devices with more computation resources will train on larger datasets, while the rest train on smaller datasets.  To accomplish this, in PSL, we introduce a \textit{data dispersion} mechanism among the devices.
        \item Devices with high computation capabilities and bad channel conditions to the BS can execute model training and offload their trained models/gradients to those with better channels. Thus, we introduce a \textit{model/gradient dispersion} mechanism in PSL to transfer the models/gradients among the devices through D2D communications. Further, given the heterogeneity among the devices and time-varying nature of datasets, in PSL, we consider local model training via \textit{non-uniform SGD with stratified data sampling} with various mini-batch sizes and number of iterations across the devices.
        \item Devices with good channel conditions to the BS and low computation capability can act as \textit{aggregators}, receiving models from neighboring devices and conducting a \textit{local aggregation} followed by uplink transmission to the BS.  Subsequently, we introduce a \textit{local condensation} method in PSL for conducting these device-side model aggregations.
    \end{enumerate}

    \vspace{-5mm}
    \subsection{Summary of Contributions}
    \vspace{-.5mm}
    Our main contributions can be summarized as follows:
    \begin{itemize}[leftmargin=4mm]
    \item We develop PSL, a distributed ML technique that introduces a new degree of freedom into model training, which is \textit{idle times} in between global aggregations. PSL further extends FedL in the following three dimensions: 
    \begin{enumerate}[label=\Roman*.,leftmargin=4mm]
        \item Network: PSL considers the local D2D network among the devices and allows direct device cooperation for data and parameter dispersion. This migrates away from the star topology of FedL and paves the road to more decentralized distributed ML architectures. This approach is complementary to recent works that utilize D2D in distributed ML to conduct model consensus~\cite{8950073,9154332,hosseinalipour2020multi,lin2021two}.
        \item Heterogeneity: PSL considers and addresses three types of heterogeneity in wireless distributed ML:
        \begin{itemize}[leftmargin=0mm]
        \item  Device: PSL assumes different computation and communication capabilities across devices. This is reflected in different CPU cycles ranges, chip-set coefficients, and transmit powers in D2D vs. uplink transmissions.
            \item Learning:  PSL adapts device participation in  model training according to their capabilities. In particular, it considers heterogeneous number of local SGD iterations with various mini-batch sizes at the devices. 
            \item Data: PSL considers a \textit{dynamic environment} with data arrival and departure at the devices, where the distribution of the local datasets are non-i.i.d and evolving over time. We interpret this as the \textit{drift} of the local loss over time. 
        \end{itemize}
        \item Proximity: PSL considers D2D and device-to-server proximities to determine efficient transfers of both ML models and data across the network.
        \end{enumerate}
\item  We introduce \textit{resource pooling}, where device resources are cooperatively orchestrated to facilitate ML model training. This is realized through a sequence of steps we develop and analyze: \textit{data dispersion}, \textit{local computation}, and \textit{model/gradient dispersion with local condensation}.
        \item We analytically characterize the convergence behavior of PSL, through which we quantify the joint effect of (i) SGD with non-uniform local data sampling at each device, (ii) non-uniform numbers of local SGD iterations across the devices, and (iii) dynamic data at the devices captured via \textit{model/concept drift}. We leverage this to formulate the \textit{network-aware PSL} problem, which optimizes over the tradeoffs between energy consumption, delay, and model performance for heterogeneous wireless networks.
        \item We show that network-aware the PSL problem is highly non-convex and NP-hard. We then propose a tractable approach based on posynomial approximation and constraint correction to solve the problem through a sequence of convex problems, which enjoys convergence guarantees. The proposed optimization transformation technique, given the generality of the analysis and the problem formulation, sheds light on the solution of a broader range of problems under the umbrella of \textit{network-aware distributed learning}.
    \end{itemize}
          
\vspace{-3mm}
    \section{System Model}
    \vspace{-.7mm}
    
\noindent  \textbf{PSL in a nutshell.} PSL is a general distributed ML paradigm, with its cornerstone built on FedL.  It conducts distributed model training via \textit{resource pooling} and coordinates device resources to operate in a cooperative manner. It conducts each round of model training via five steps: (i) global  model broadcasting, (ii) data dispersion, (iii) local computation, (iv) model/gradient dispersion with local condensation, and (v) global model aggregation, which are illustrated in Fig.~\ref{fig:PSLbig}. 

Henceforth, we present the PSL model tracking in Sec.~\ref{subsec:ML}, describe the  data management of PSL in Sec.~\ref{subsec:data}, and discuss the local model training of PSL in~Sec.~\ref{subsec:data2}. We then introduce model training phases experienced through PSL in Sec.~\ref{subsec:cold}, and model the device orchestration in PSL in Sec.~\ref{subsec:DLM}.

\vspace{-3mm}
 \subsection{Dynamic/Online Model Tracking Problem in PSL}\label{subsec:ML}
 \vspace{-.5mm}
 We consider a network of $N$ devices $\mathcal{N}=\{1,2,\cdots, N\}$ coexisting with a server located at a BS, where ML model training is conducted through a series of global aggregations indexed by $k\in\mathbb{N}$.  In contrast to most  existing works in FedL that assume a stationary data distribution, PSL considers \textit{dynamic ML} characterized by \textit{data variations}. In particular, in PSL, the size and distribution of devices' datasets are assumed to be time-varying, i.e., changing across global aggregations. This is more realistic for real-world applications of distributed ML. For instance, in online product recommendation systems, user preferences may change from day to night and from season to season~\cite{zhang2019deep2}, and in keyword next word prediction, word choices are affected by trending news~\cite{LAtimes}. 
   
   At global iteration $k$, each device $n\in\mathcal{N}$ is associated with a dataset $\mathcal{D}^{(k)}_n$, which has $D^{(k)}_n\triangleq |\mathcal{D}^{(k)}_n|$ data points. Each data point $d \in \mathcal{D}^{(k)}_n$ contains a feature vector, denoted by $\bm{d}$, and a label. For example, in image classification, the feature may be the RGB colors of all pixels in the image, and the label may be the location where the image was taken.

    In PSL, the model training is started with an initial global model broadcast among the nodes, i.e., $\mathbf{w}^{(0)}\in\mathbb{R}^{M}$, where $M$ is the \textit{model dimension}. 
 For each data point $d \in \mathcal{D}^{(k)}_n$, the ML model is associated with a \textit{loss function} ${f}(\mathbf{w},d)$
that quantifies the error of parameter $\textbf{{w}}\in\mathbb{R}^{M}$. We refer to Table 1 in~\cite{8664630} for a list of ML loss functions. 
    For an arbitrary $\mathbf w$, during the $k$-th global aggregation, let
 $F_{n}^{(k)}(\mathbf{w})\triangleq F_{n}(\mathbf{w}|\mathcal{D}^{(k)}_n)$ denote the local loss at node $n$,  $F_{n}(\mathbf{w}|\mathcal{D}^{(k)}_n) = \sum_{d \in \mathcal{D}^{(k)}_n} {f}(\mathbf{w},d)/{D}^{(k)}_n$.\footnote{Note that since the same ML model (e.g., neural network) architecture is trained across the node, $f$ is not indexed by $n$.} Then, the global loss of the ML model is given by
\begin{equation}\label{eq:globlossinit}
  F^{(k)}(\mathbf{w})\triangleq  F(\mathbf{w}| \mathcal{D}^{(k)})= \frac{1}{{D}^{(k)}} \sum_{n \in \mathcal{N}} {D}^{(k)}_n F_{n}^{(k)}(\mathbf{w}),
\end{equation}
 where ${D}^{(k)}\triangleq |\mathcal{D}^{(k)}|$ is the cardinality of $\mathcal{D}^{(k)}$, the collection of data points across devices. We model the evolution of data through a new definition of \textit{model drift}  in Sec.~\ref{sec:conv} (Definition~\ref{def:cons}). 


 Due to the temporal variations in the distributions of local datasets, the optimal global model is time varying. In particular, for a training duration of $K$ global iterations, the optimal global models can be represented as a sequence {\small $\big\{\mathbf{w}^{{(k)}^\star}\big\}_{k=1}^{K}$},  where
\vspace{-1mm}
\begin{equation}\label{eq:genForm}
\mathbf{w}^{{(k)}^\star}= \underset{\mathbf{w}\in \mathbb{R}^M}{\argmin} \;  F^{(k)}(\mathbf{w}), ~\forall k,
\vspace{-1mm}
\end{equation}
which may not be unique in the case of loss functions which are not strongly convex, e.g., neural networks.

  \vspace{-3mm}
  \subsection{Data Heterogeneity and Management in PSL}\label{subsec:data}
\vspace{-.4mm}
  We propose \textit{partitioning} the dataset of each device into a \textit{disjoint} set of sub-datasets, called stratum (see Fig.~\ref{fig:startData}).\footnote{We use ``stratum" as singular form and ``strata" as plural form.}
  At global iteration $k$, we assume that dataset of device $n$  consists of set {\small $\mathcal{S}^{(k)}_n\triangleq \{\mathcal{S}^{(k)}_{n,1},\mathcal{S}^{(k)}_{n,2},\cdots\}$}  of {\small ${S}^{(k)}_n\triangleq|\mathcal{S}^{(k)}_n|$} strata, each with size {\small ${S}^{(k)}_{n,j}\triangleq|\mathcal{S}^{(k)}_{n,j}|$}.
  We also let {\small $\widetilde{\sigma}^{(k)}_{n,j}=\sqrt{\frac{1}{{S}^{(k)}_{n,j}-1}\sum_{{d}\in \mathcal{S}^{(k)}_{n,j}} \Vert \bm{d}- \widetilde{\mu}^{(k)}_{n,j}\Vert^2_{_2}}$} and $\widetilde{\mu}^{(k)}_{n,j}=\sum_{{d}\in \mathcal{S}^{(k)}_{n,j}} \bm{d}/{S}^{(k)}_{n,j}$ denote the sample (total) standard deviation and the mean of data inside stratum $\mathcal{S}^{(k)}_{n,j}$. We will exploit the means of the strata for data management as explained in the following, and  the variance for optimal local data sampling for SGD in Sec.~\ref{sec:conv} (Proposition~\ref{prop:neyman}).
    We assume that data points in each stratum possess the same label. 
    
    This data  partitioning has three advantages: (i) it provides effective data management upon data arrival/departure, (ii) it leads to tractable techniques to track the local dataset evolution, (iii) it opens the door to effective non-uniform data sampling to reduce the noise of SGD. We describe the first two advantages below and defer the explanation of the third one to Sec.~\ref{subsec:data2}.


       \begin{figure}[t]
\vspace{-.5mm}
\includegraphics[width=.42\textwidth]{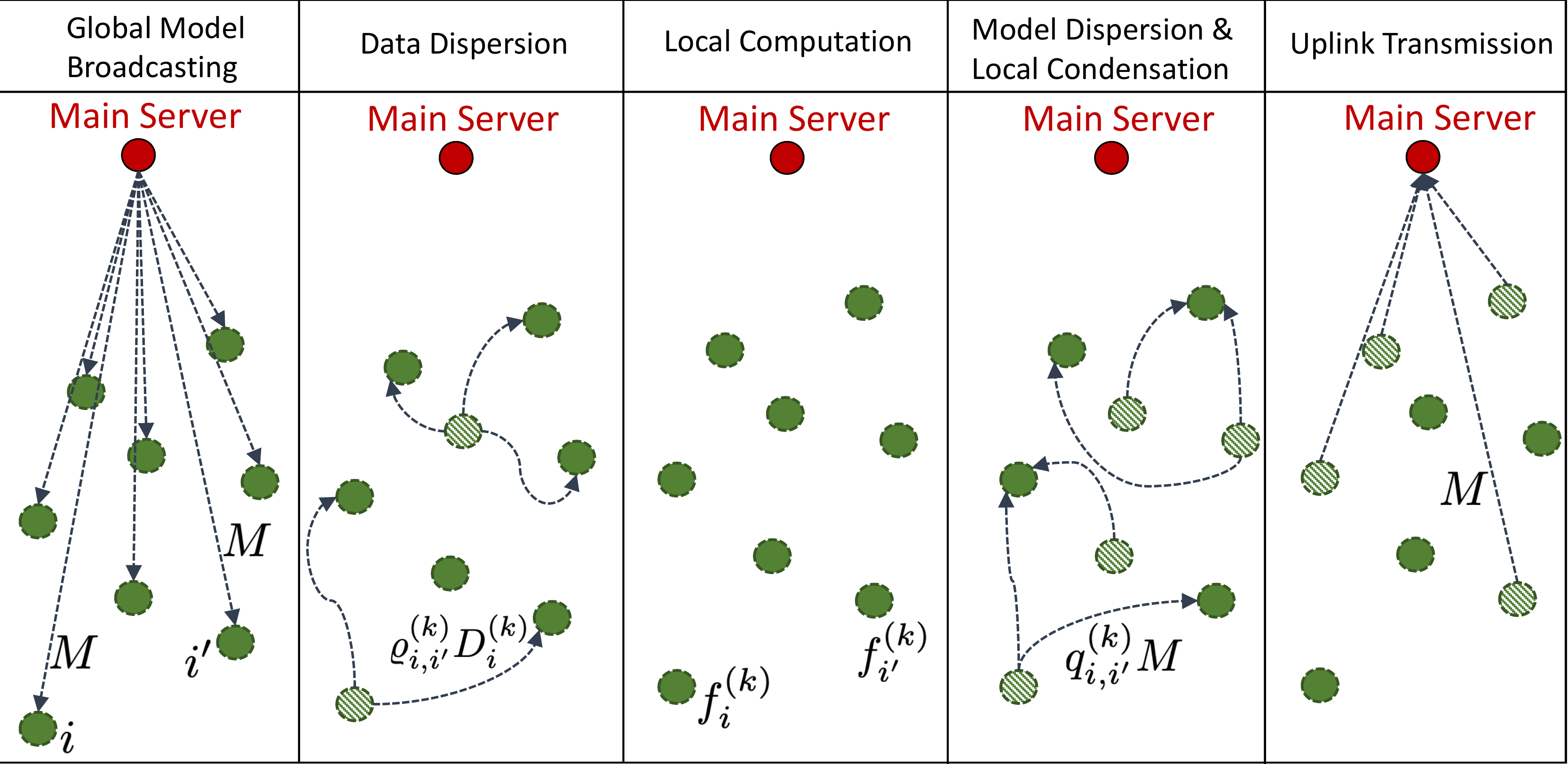}
\centering
\vspace{-1mm}
\caption{A schematic of the learning architecture of PSL with five steps: (i) global model broadcasting, where the devices receive the global parameter from the server, (ii) data dispersion, where the devices conduct partial dataset offloading, (iii) local computation, where the devices compute their local models, (iv) model/gradient dispersion with local condensation, where devices conduct partial model/gradient transfer and perform local aggregations, (v) uplink transmission, where some devices transmit their models to the BS.}
\label{fig:PSLbig}
\vspace{-.1mm}
\end{figure}
 PSL considers two types of data movement: (i) inter-node data arrival/departure and (ii) intra-node data arrival/departure. In case (i), the arrival/departure of data at the devices is due to local data offloading, facilitated by device-to-device (D2D) communications. In case (ii),  arrival/departure is caused by devices collecting and abandoning data. When new data arrives, the device has control on how to assign it to the existing strata. Also, in case (i), the devices have control on which data points from which strata to transfer (see Fig.~\ref{fig:startData}).
      
      \textbf{Data arrival.} If data points with new labels arrive, the device
       will form new strata and assign the respective data points to them. Otherwise, the device assigns each arriving data point to the stratum with the closest mean  among those with the same label. In particular, given the current means of the strata at device $n$, {\small $\widetilde{\mu}^{(k)}_{n,j}$}, $\forall j$, 
      the arriving data point $d$, with feature vector $\bm{d}$, gets assigned to stratum {\small $\mathcal{S}^{(k)}_{n,j^\star}$}, where {\small $\mathcal{S}^{(k)}_{n,j^\star}=\argmin_{\mathcal{S}^{(k)}_{n,j}\in \mathcal{S}^{(k)}_{n}}\Vert\widetilde{\mu}^{(k)}_{n,j}- \bm{d}\Vert$}. This promotes homogeneity among the data points within each stratum. After the size of a stratum reaches a predefined maximum size $s_{\textrm{max}}$, the stratum is further partitioned into two strata with equal sizes.\footnote{$s_{\textrm{max}}$ is assumed to be an even number without loss of generality.} In addition, if the size of the stratum falls bellow a threshold $s_{\textrm{min}}$, the stratum is assumed to merged with the strata containing data with the data points with the same label if such strata exists.

      \textbf{Data offloading.} 
    When a device offloads data, we can imagine two potential strategies for data point selection: (i) choosing from strata with higher variances, which results in a smaller sampling error for SGD, and (ii) choosing from strata with more data points, which reduces local model bias. Since data across the devices is non-iid in PSL, strategy (i) can increase the divergence across local datasets, which can significantly reduce the performance of the global model. We thus rely on strategy (ii), where for offloading a data point,  device $n$ chooses the stratum $\mathcal{S}^{(k)}_{n,j\star}$ with largest size and offloads the data point $d$ that has the closest distance to the mean of strata, i.e., $d=\argmin_{d\in \mathcal{S}^{(k)}_{n,j\star}}\Vert \widetilde{\mu}^{(k)}_{n,j}- \bm{d}\Vert$. In this way, we minimize the impact an offloading device experiences on its local data distribution.
       \vspace{-1mm}
      \begin{remark}
      In general, the optimal data offloading strategy may be a hybrid of the two mentioned strategies, since the impact of data offloading on the divergence of the local models from the global model is difficult to quantify in environments with unknown local data distributions. In this work, we propose the first steps towards smart data management and leave further investigations to future work.
      \end{remark}
     
       \vspace{-2mm} 
     \begin{remark}
     The introduction of device dataset statistics and D2D data offloading in PSL opens the possibility of incorporating techniques such as generative adversarial networks (GANs) \cite{creswell2018generative} for dataset generation/augmentation. If a device shares a sufficiently detailed set of distribution statistics (i.e., more than mean and variance) with the server summarizing its local dataset, the server could potentially reproduce a portion of the local datasets to further train/refine the global model.
       Also, given the received data points from the neighboring devices, each device could potentially employ a GAN model to augment its local dataset with synthetically-generated data points that are similar to those in the neighboring devices, resulting in local model debiasing from reductions in data heterogeneity. We leave further investigations on this subject as future work.
      \end{remark}
      
          \vspace{-1.5mm}
       \textbf{Tracking of local data statistics.} To cope with the dynamics of local datasets, we exploit an \textit{online data statistics tracking} technique.  
  We assume that each device has computed or otherwise gained knowledge of the initial mean and variance of its local data strata. Upon arrival or departure of each data point (or groups of data points), the device updates the mean and variance of its strata in an online manner through the following lemma, which eliminates the need for recomputing these quantities from scratch over the entire set of local data points.
     
     \vspace{-1.5mm}
           \begin{lemma} [Online Tracking of Strata Mean and Variance]\label{lemma:trackMeanvar}
      Let $\mathcal{S}$ denote a set of $|\mathcal{S}|$ (vector) data points  with mean $\bm{\mu}_{\mathsf{old}}$ and sample variance $\sigma_{\mathsf{old}}^2$. Let $\mathcal{A}$ denote a set of new data points that are added to $\mathcal{S}$ with
       mean and variance of $\bm{\mu}_{\mathcal A}$ and $\sigma^2_{\mathcal A}$, respectively, and  $\mathcal{D}$ denote a set of data points departing from $\mathcal{S}$ with mean and variance of $\bm{\mu}_{\mathcal D}$ and $\sigma^2_{\mathcal D}$, respectively. Then, the new mean and variance of 
      $\mathcal{S}$ are given by
      \vspace{-1mm}
          \begin{align}
      &\hspace{-3mm}\bm{\mu}_{\mathsf{new}} =  \frac{|\mathcal{S}| \bm{\mu}_{\mathsf{old}} + |\mathcal{A}|\bm{\mu}_{_\mathcal{A}}-|\mathcal{D}|\bm{\mu}_{_\mathcal{D}}}{|\mathcal{S}|+|\mathcal{A}|-|\mathcal{D}|},
      \end{align}
      \vspace{-2mm}
   \begin{align}
      &\hspace{-20mm}\sigma^2_{\mathsf{new}}= \frac{(|\mathcal{S}|-1)\sigma^2_{old} + (|\mathcal{A}|-1)\sigma^2_{_\mathcal{A}}-(|\mathcal{D}|-1)\sigma^2_{_\mathcal{D}}
            }{|\mathcal{A}|+|\mathcal{S}|-|\mathcal{D}|-1}\hspace{-15mm}\nonumber\hspace{-5mm} \\&\hspace{-20mm}
             +\hspace{-.5mm}\frac{\hspace{-.5mm}\left(\hspace{-.5mm}\frac{|\mathcal{A}||\mathcal{S}|}{|\mathcal{A}|+|\mathcal{S}|-|\mathcal{D}|}\hspace{-.5mm}\right)\hspace{-.8mm}\Vert \bm{\mu}_{\mathsf{old}}-\bm{\mu}_{_\mathcal{A}}\Vert^2\hspace{-.5mm}-\hspace{-.5mm}
        \left(\hspace{-.5mm}\frac{|\mathcal{S}||\mathcal{D}|}{|\mathcal{A}|+|\mathcal{S}|-|\mathcal{D}|}\hspace{-.5mm}\right)\hspace{-.8mm}\Vert \bm{\mu}_{\mathsf{old}}-\bm{\mu}_{_\mathcal{D}}\Vert^2}{|\mathcal{A}|+|\mathcal{S}|-|\mathcal{D}|-1}\hspace{-15mm} \nonumber \hspace{-5mm}\\&\hspace{-20mm}-\hspace{-.5mm}\frac{         \left(\hspace{-.5mm}\frac{|\mathcal{A}||\mathcal{D}|}{|\mathcal{A}|+|\mathcal{S}|-|\mathcal{D}|}\hspace{-.5mm}\right)\Vert \bm{\mu}_{A}-\bm{\mu}_{_\mathcal{D}}\Vert^2}{|\mathcal{A}|+|\mathcal{S}|-|\mathcal{D}|-1}.\hspace{-5mm}
          \end{align}
      \end{lemma}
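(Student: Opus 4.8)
The plan is to reduce everything to the additivity of two \emph{unnormalized} quantities: the coordinate-wise sum $\bm{\Sigma}(\mathcal{X})\triangleq\sum_{d\in\mathcal{X}}\bm{d}$ and the raw second moment $Q(\mathcal{X})\triangleq\sum_{d\in\mathcal{X}}\Vert\bm{d}\Vert^2$. The updated set is $\mathcal{S}_{\mathsf{new}}=(\mathcal{S}\setminus\mathcal{D})\cup\mathcal{A}$ with $\mathcal{D}\subseteq\mathcal{S}$ and $\mathcal{A}$ disjoint from $\mathcal{S}\setminus\mathcal{D}$, so both quantities split additively: $\bm{\Sigma}(\mathcal{S}_{\mathsf{new}})=\bm{\Sigma}(\mathcal{S})+\bm{\Sigma}(\mathcal{A})-\bm{\Sigma}(\mathcal{D})$ and $Q(\mathcal{S}_{\mathsf{new}})=Q(\mathcal{S})+Q(\mathcal{A})-Q(\mathcal{D})$. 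Since $\bm{\Sigma}(\mathcal{X})=|\mathcal{X}|\bm{\mu}_{\mathcal{X}}$, the mean identity follows immediately from the first relation upon dividing by $|\mathcal{S}_{\mathsf{new}}|=|\mathcal{S}|+|\mathcal{A}|-|\mathcal{D}|$.

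For the variance I would first rewrite the sample variance in unnormalized form through the parallel-axis (bias--variance) identity $(|\mathcal{X}|-1)\sigma^2_{\mathcal{X}}=Q(\mathcal{X})-|\mathcal{X}|\Vert\bm{\mu}_{\mathcal{X}}\Vert^2$, obtained by expanding $\sum_{d}\Vert\bm{d}-\bm{\mu}_{\mathcal{X}}\Vert^2$. Applying this to $\mathcal{S}_{\mathsf{new}}$ and substituting the additive decomposition of $Q$ gives
\begin{align}
(|\mathcal{S}_{\mathsf{new}}|-1)\sigma^2_{\mathsf{new}}
&= (|\mathcal{S}|-1)\sigma^2_{\mathsf{old}}+(|\mathcal{A}|-1)\sigma^2_{\mathcal{A}}-(|\mathcal{D}|-1)\sigma^2_{\mathcal{D}} + T,\nonumber
\end{align}
where $T$ collects the purely mean-dependent quadratic terms,
\begin{align}
T &= |\mathcal{S}|\Vert\bm{\mu}_{\mathsf{old}}\Vert^2+|\mathcal{A}|\Vert\bm{\mu}_{\mathcal{A}}\Vert^2-|\mathcal{D}|\Vert\bm{\mu}_{\mathcal{D}}\Vert^2-|\mathcal{S}_{\mathsf{new}}|\Vert\bm{\mu}_{\mathsf{new}}\Vert^2.\nonumber
\end{align}

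The remaining, and main, step is to verify that $T$ equals exactly the three pairwise squared-distance terms appearing in the statement. To do so I would substitute $|\mathcal{S}_{\mathsf{new}}|\bm{\mu}_{\mathsf{new}}=|\mathcal{S}|\bm{\mu}_{\mathsf{old}}+|\mathcal{A}|\bm{\mu}_{\mathcal{A}}-|\mathcal{D}|\bm{\mu}_{\mathcal{D}}$ into the last term of $T$, expand the squared norm $\Vert|\mathcal{S}|\bm{\mu}_{\mathsf{old}}+|\mathcal{A}|\bm{\mu}_{\mathcal{A}}-|\mathcal{D}|\bm{\mu}_{\mathcal{D}}\Vert^2$, divide by $|\mathcal{S}_{\mathsf{new}}|$, and match the coefficients of $\Vert\bm{\mu}_{\mathsf{old}}\Vert^2,\Vert\bm{\mu}_{\mathcal{A}}\Vert^2,\Vert\bm{\mu}_{\mathcal{D}}\Vert^2$ and of the three inner products against the analogous expansion of the target expression. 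For instance, the coefficient of $\Vert\bm{\mu}_{\mathsf{old}}\Vert^2$ in $T$ is $|\mathcal{S}|-|\mathcal{S}|^2/|\mathcal{S}_{\mathsf{new}}|=|\mathcal{S}|(|\mathcal{A}|-|\mathcal{D}|)/|\mathcal{S}_{\mathsf{new}}|$, which matches the contribution $|\mathcal{A}||\mathcal{S}|/|\mathcal{S}_{\mathsf{new}}|-|\mathcal{S}||\mathcal{D}|/|\mathcal{S}_{\mathsf{new}}|$ coming from the first two squared-distance terms; the inner-product coefficients match identically. Dividing the whole identity by $|\mathcal{S}_{\mathsf{new}}|-1$ then yields the claimed formula.

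The main obstacle is purely bookkeeping: every cancellation in $T$ relies on the relation $|\mathcal{S}_{\mathsf{new}}|=|\mathcal{S}|+|\mathcal{A}|-|\mathcal{D}|$, so I would use this identity consistently rather than treat the three cardinalities as independent, which is where sign errors typically creep in. A secondary subtlety worth making explicit is the set-theoretic hypothesis $\mathcal{D}\subseteq\mathcal{S}$ together with $\mathcal{A}\cap(\mathcal{S}\setminus\mathcal{D})=\emptyset$; without disjointness the additive splitting of $Q$ and $\bm{\Sigma}$ fails and the entire derivation collapses, so the departing points must genuinely be removed from $\mathcal{S}$ and the arriving points added disjointly.
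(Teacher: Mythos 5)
Your proposal is correct and is essentially the same argument as the paper's: the paper likewise splits the sum of squared deviations over the new set into the contributions of $\mathcal{S}$, $\mathcal{A}$, and $\mathcal{D}$, expands the squared norms (which is exactly your parallel-axis identity applied inline), substitutes $\bm{\mu}_{\mathsf{new}}=(|\mathcal{S}|\bm{\mu}_{\mathsf{old}}+|\mathcal{A}|\bm{\mu}_{\mathcal{A}}-|\mathcal{D}|\bm{\mu}_{\mathcal{D}})/(|\mathcal{S}|+|\mathcal{A}|-|\mathcal{D}|)$, and collects coefficients to reach the pairwise squared-distance form. Your packaging via the additive quantities $\bm{\Sigma}(\cdot)$ and $Q(\cdot)$ is a cleaner bookkeeping of the identical computation, and your coefficient matching (e.g., $|\mathcal{S}|-|\mathcal{S}|^2/|\mathcal{S}_{\mathsf{new}}|=|\mathcal{S}|(|\mathcal{A}|-|\mathcal{D}|)/|\mathcal{S}_{\mathsf{new}}|$) checks out.
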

      \vspace{-2.3mm}
      \begin{proof}
     The proof is provided in Appendix~\ref{app:meanVar}.
      \end{proof}
      \vspace{-2mm}
 {The method described in Lemma \ref{lemma:trackMeanvar} requires the computation of mean and variance over the entire local dataset of each device only once at the beginning of ML model training. This is particularly useful for settings of large local datasets at heterogeneous devices, as the computational complexity of the instantaneous mean and variance of each strata is  dependent on the number of arriving and departing data points, as opposed to the full dataset.}

\vspace{-1mm}



      \vspace{-3mm}
      \subsection{Local Data Sampling and Model Training Iterations} \label{subsec:data2}
      \vspace{-.5mm}
      
      \begin{figure}[t]
\vspace{-1mm}
\includegraphics[width=.43\textwidth]{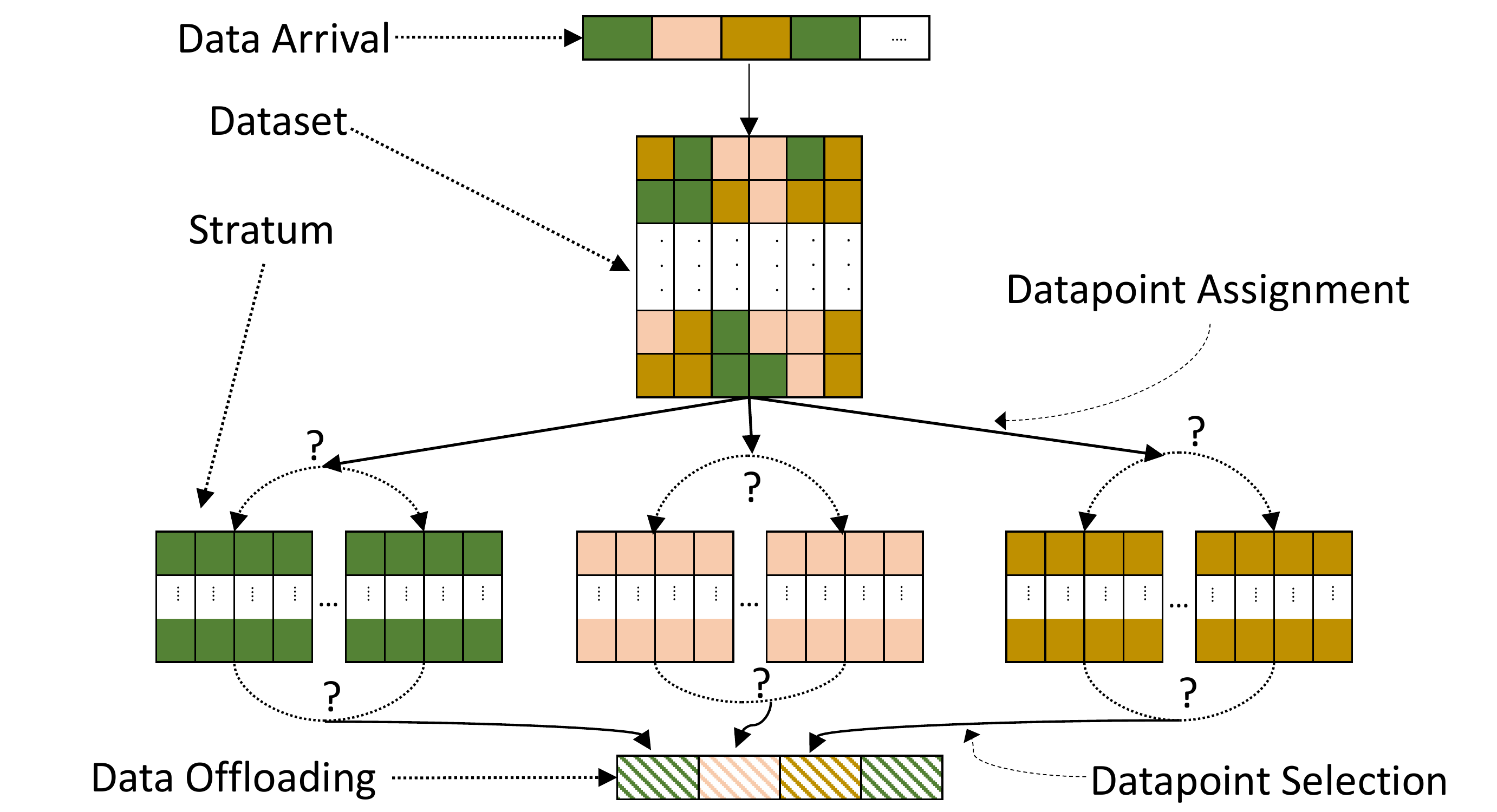}
\centering
\vspace{-1.mm}
\caption{A schematic of the data stratification for PSL. The figure represents the dataset at a device. The device faces two  dilemmas: (i) how to assign the arriving data to the strata, and (ii) which data points to choose to offload.}
\label{fig:startData}
\end{figure}

 As compared to SGD with uniform sampling, commonly used in FedL literature, we exploit SGD with non-uniform sampling. Our technique, inspired by \textit{stratified sampling} in statistics~\cite{lohr2019sampling}, is advantageous to uniform sampling techniques when the distribution of data in each stratum is homogeneous  while between strata is heterogeneous. The initial allocation of data points across the strata in each device at the beginning of training can be conducted as in centralized SGD~\cite{zhao2014accelerating}; we are focused on the benefits of this technique to distributed ML. 
 

      To solve~\eqref{eq:genForm} in a distributed manner, within each global training round, devices conduct local model training through successive mini-batch SGD updates. However, device heterogeneity leads to varying contributions to model training. PSL assumes that devices utilize SGD with (i) different numbers of local iterations, (ii) different mini-batch sizes, and (iii) non-uniform data sampling. Formally, at global iteration $k$,
 device $n$ performs $e^{(k)}_n$ iterations of SGD over its local dataset. The evolution of local model parameters is then given by\footnote{The \textit{complexity} of ML model training using different choices of neural networks has been discussed in~\cite{bienstock2018principled}.}
 \vspace{-1.5mm}
\begin{equation}\label{eq:WeightupdateStrat}
\hspace{-12mm}
   \mathbf{w}_n^{(k),e}\hspace{-0.5mm}=\mathbf{w}^{(k),e-1}_{n}\hspace{-0.5mm} - \hspace{-0.5mm}{\frac{\eta_{_k}}{{D}^{(k)}_n}} \hspace{-1mm}\sum_{j=1}^{{S}^{(k)}_{n}}\hspace{-0.6mm}\sum_{d\in \mathcal{B}^{(k),e}_{n,j}} \hspace{-3mm} {\frac{{S}^{(k)}_{n,j}\nabla  f(\mathbf{w}^{(k),e-1}_{n},d)}{{B}^{(k)}_{n,j}}}\hspace{-0.1mm},\
    \hspace{-9mm}
    \vspace{-2.5mm}
\end{equation}
where $e\in\{1,\cdots,e^{(k)}_n\}$
denotes the local iteration index, $\eta_{_k}$ denotes the step-size, and $\mathbf{w}^{(k),0}_{i}=\mathbf{w}^{(k)}$ is the previously received global parameter from the BS. The nested sum in~\eqref{eq:WeightupdateStrat} indicates the overall gradient is computed by calculating the gradient over samples from strata.
In particular, $\mathcal{B}^{(k),e}_{n,j}$ denotes the data batch, sampled at the $e$-th iteration from stratum $\mathcal{S}^{(k)}_{n,j}$. 

We conduct data sampling uniformly at random within each stratum, with the number of samples collected \textit{varying} by strata, leading to an overall \textit{non-uniform} sampling procedure (see Proposition~\ref{prop:neyman}).
We assume that the number of sampled data points from each stratum does not change during each training interval,
i.e., ${B}^{(k)}_{n,j}=|\mathcal{B}^{(k),e}_{n,j}|$, $\forall e$.
We also define $B^{(k)}_n=\sum_{j=1}^{S^{(k)}_n} {B}^{(k)}_{n,j}$ as the SGD mini-batch size. 
 Both $B^{(k)}_n$ and ${B}^{(k)}_{n,j}$ are design variables that should be tuned according to device capability and dataset heterogeneity, discussed in Sec.~\ref{sec:conv} and \ref{sec:PF}. Furthermore, the bounds in Sec.~\ref{sec:conv} and the subsequent optimization problem in Sec.~\ref{sec:PF} will differentiate between  conducting several SGD iterations with small mini-batches vs. a single SGD with a large mini-batch, and further reveal the advantage of conducting multiple local SGD iterations.


After each device $n$ performs its last iteration of local model training, i.e., $e^{(k)}_n$, it computes the accumulated gradient {\small $\overline{\nabla {F}}_n^{(k)} = \big(\mathbf{w}^{(k)}-\mathbf{w}_n^{(k),e^{(k)}_n}\big)\big/\eta_{_k}$}, offloaded either to its neighbors or to the BS. The global update is carried out at the BS as
\vspace{-1.5mm}
\begin{equation}\label{eq:mainupdateWeight}
    \mathbf{w}^{(k+1)}=\mathbf{w}^{(k)}-\eta_{_k}\overline{\nabla {F}}^{(k)},
    \vspace{-2.5mm}
\end{equation}
where {\small $\overline{\nabla {F}}^{(k)}$} is the normalized accumulated gradient of the devices,  factoring in the heterogeneous number of local SGD
 iterations~\cite{rizk2020dynamic,wang2020tackling}, given by
 \vspace{-1mm}
\begin{equation}\label{eq:globAgg}
    \overline{\nabla {F}}^{(k)}=\sum_{n'\in \mathcal{N}}\frac{{D}^{(k)}_{n'}e^{(k)}_{n'}}{D^{(k)} }
    \sum_{n\in \mathcal{N}}\frac{{D}^{(k)}_n}{D^{(k)} e^{(k)}_n}\overline{\nabla {F}}_n^{(k)}.
     \vspace{-1.2mm}
\end{equation}
Global model {\small $\mathbf{w}^{(k+1)}$} will be then used to synchronize the devices for the next round of local updates. The process of recovering $\overline{\nabla {F}}^{(k)}$ at the server through gradient dispersion and local condensation among the devices is discussed in Sec.~\ref{sec:GradCalc}.

      \vspace{-3mm}
\subsection{Dynamic Model Training with Idle Times: ``\textit{Cold}" vs. ``\textit{Warmed up}" Model, and Model ``Inertia"}  \label{subsec:cold}

Many distributed ML applications (e.g., keyboard next word prediction or online cloth recommendation system) call for model training to be executed across long time periods (e.g., multiple weeks or seasons). In such settings, it is unrealistic to assume that the global aggregations are conducted continuously back to back, where the devices are always engaged in local model training, since this will require prohibitively high resource consumption. Thus, as compared to current art, PSL further introduces a new design parameter $\Omega^{(k)}\hspace{-1mm}\in \hspace{-1mm} \mathbb{Z}^+ \hspace{-1mm}\cup\hspace{-0.7mm}\{0\}$ that captures the \textit{idle time} between the end of global aggregation $k-1$ and the beginning of $k$, during which the devices are not engaged in model training ($\Omega^{(0)}\triangleq 0$). This parameter captures the frequency of engagement of the devices in model training. 

Initially, when the global model is launched from a ``cold start", i.e., it is not well-trained, conducting model training results in significant improvements in model performance, calling for rapid global training rounds, i.e., small $\Omega^{(k)}$-s. After several global training rounds, the model training at the devices starts with a ``warmed up" model, which marginalizes the reward in terms of model performance gains. In this regime, if the data at the devices changes rapidly, to track the changes, fast global rounds are required; otherwise, model training can be delayed, i.e., large $\Omega^{(k)}$-s, to save energy and network resources. In particular, given a warmed up global model, the model training should be \textit{triggered} when sufficient changes in the local data distributions is occurred. We call this  phenomenon the \textit{inertia} of the global model, since it resembles the same notion in physics. 
Initially, the model has a lower inertia. During model training, the inertia of the global model increases (i.e., it becomes reluctant to changes) and it takes large shifts in the data distribution to trigger a new model training round.  A key contribution of our work is in characterizing this notion of inertia in distributed ML.
\vspace{-4mm}
      \subsection{Cooperative Resource Pooling in PSL}\label{subsec:DLM}
In PSL, the resource of the devices are utilized cooperatively, where both data and model parameters can be transferred in D2D mode,
 to improve model training.  Optimizing these transfers requires consideration of their resource requirements.

     At global aggregation $k$, for each device $n\in\mathcal{N}$, we let $h^{(k)}_n$
      denote the channel gain of the device to the BS. 
      Consequently, the data rate of device $n$ to the BS is given by\footnote{$\log(.)$ denotes logarithm with base $2$ unless otherwise stated and the data rate expressions are measured at the instance of data/gradient transmissions.}
      \vspace{-1mm}
      \begin{equation}\label{rateU}
          r^{(k)}_n=B^{\mathsf{U}} \log\left(1+{|h^{(k)}_n|^2 p^{\mathsf{U}}_n}\big /{N^{\mathsf{U}}}\right),
           \vspace{-1mm}
      \end{equation}
      where {\small $B^{\mathsf{U}}$} denotes the \underline{u}plink bandwidth given to each device, {\small $p^{\mathsf{U}}_n$} is the uplink transmit power of the device, and {\small $N^{\mathsf{U}}=N_0 B^{\mathsf{U}}$} is the noise power with {\small $N_0$} denoting the noise spectral density.\footnote{An alternative method is to use the average/expected data rate expressions obtained under average fading power. This would lead to closed form expressions for data rates (e.g., see~\cite{8424236})  that can be directly used in our subsequent optimization formulation. Note that upon using the instantaneous data rates as in \eqref{rateU}, if the data transmission time exceeds the channel coherence time, the instantaneous data rate may change during the data transmission. Using the average/expected data rate will also resolve this issue.} 
      
      Similarly, with D2D communications, for two devices $n,m\in\mathcal{N}$, we define the data rate at device $m$ achieved via transmission from device $n$ in D2D mode as follows:
       \vspace{-.7mm}
       \begin{equation}\label{rateD}
          r^{(k)}_{n,m}=B^{\mathsf{D}} \log\left(1+{|h^{(k)}_{n,m}|^2 p^{\mathsf{D}}_{n}}\big/{N^{\mathsf{D}}}\right),
           \vspace{-.7mm}
      \end{equation}
      where {\small $B^{\mathsf{D}}$} denotes the D2D bandwidth given to each user pair,
     {\small  $h^{(k)}_{n,m}$} denotes the channel gain among the respective nodes, {\small $p^{\mathsf{D}}_{n}$} denotes the D2D transmit power of device $n$, and {\small $N^{\mathsf{D}}=N_0 B^{\mathsf{D}}$}. 
      \begin{remark}\label{rem:ofdm} Since the physical layer details are abstracted in this paper, for simplicity, it is assumed that the devices utilize a multi-access protocol such as {\small FDMA}~\cite{7110419} that avoids interference of the devices in uplink transmissions. The same holds for D2D communications. Investigating the effect of interference is left as a topic for future works.
      \end{remark}
      \vspace{-.5mm}
       \vspace{-1mm}
      We next formalize the data dispersion, local computations, and local parameter aggregations with local model condensation processes in PSL visualized in Fig.~\ref{fig:PSLbig}:
      
      \subsubsection{Data Dispersion} In PSL, each device can offload a portion of its data  in D2D mode. 
       We define a continuous variable {\small $\varrho^{(k)}_{n,m}\in [0,1]$}, where {\small $\sum_{m\in\mathcal{N}}\varrho^{(k)}_{n,m}=1$}, $\forall n,k$, as the fraction of data of node $n$ offloaded to node $m$ at global iteration $k$, with $\rho_{n,n}^{(k)}$ being the amount of data kept locally.\footnote{We assume that the offloaded data points of each device are no longer used by that device in its local model training. This assumption is made to alleviate the global model becoming biased towards the data distributions of devices offloading samples, i.e., to avoid double-counting them in the SGD process of the transmitting and the receiving devices.} We refer to {\small $\bm{\varrho}^{(k)}=\big[\varrho^{(k)}_{n,m}\big]_{1\leq n,m \leq N}$} as the \textit{data dispersion matrix}. 
       The \textit{\underline{d}ata \underline{r}eception time} at device $m$ from $n$ is thus given by
       \vspace{-1mm}
      \begin{equation}
          T^{\mathsf{DR},(k)}_{n,m}= \varrho^{(k)}_{n,m} D^{(k)}_n b^{\mathsf{D}} \big/r^{(k)}_{n,m},
              \vspace{-1mm}
      \end{equation}
      where $b^{\mathsf{D}}$ denotes the number of bits representing one \underline{d}ata point.
    Assuming data dispersion occurs in parallel (see Remark~\ref{rem:ofdm}), the total {d}ata {r}eception time at node $m$ is\footnote{Interference-avoidance multiple access techniques other than FDMA would also be compatible with our methodology, so long as the expression in~\eqref{eq:TDRDef} is changed accordingly. For example, with TDMA, which is used in \cite{9261995}, {\scriptsize $ \max_{n\in\mathcal{N}\setminus\{m\}} \Big\{T^{\mathsf{DR},(k)}_{n,m}\Big\}$} in~\eqref{eq:TDRDef} should be replaced with {\scriptsize $\sum_{n\in\mathcal{N}\setminus\{m\}} \Big\{T^{\mathsf{DR},(k)}_{n,m}\Big\}$}, which our optimization methodology in Sec.~\ref{sec:PF} can easily incorporate. The same holds for other parts of the paper, where simultaneous information exchange under FDMA is presumed.}
    \vspace{-1mm}
     \begin{equation}\label{eq:TDRDef}
         T^{\mathsf{DR},{(k)}}_{m}=\max_{n\in\mathcal{N}\setminus\{m\}} \Big\{T^{\mathsf{DR},(k)}_{n,m}\Big\}.
         \vspace{-1mm}
     \end{equation}
      \subsubsection{Local Computation} We assume that the devices are equipped with different processing units. Let us define $a_n$ as the number of CPU cycles that are used to process one data sample at device $n$.
     Subsequently, the \textit{\underline{c}omputation time} at device $n$ to execute $e^{(k)}_n$ local  SGD iterations based on~\eqref{eq:WeightupdateStrat} at global iteration $k$ is given by
     \vspace{-1mm}
     \begin{equation}\label{eq:TCdef}
     \begin{aligned}
     T^{\mathsf{C},{(k)}}_n&= e^{(k)}_n{a_n B^{(k)}_n }\big/{f^{(k)}_n},
         \end{aligned}
           \vspace{-1mm}
          \end{equation}
     where $f^{(k)}_n$ denotes the CPU frequency of the device, and the mini-batch size satisfies $B_n^{(k)} \leq \sum_{m\in \mathcal{N}}\varrho^{(k)}_{m,n}D^{(k)}_{m}$.

      \subsubsection{Model Parameter/Gradient Dispersion and Local Condensation}\label{sec:GradCalc}
     
    PSL introduces a scenario in which after performing local computations, each device can partition its vector of local model/accumulated gradient into different \textit{chunks} and disperse the chunks among the neighboring devices in D2D mode, which we call \textit{model/gradient dispersion} (see Fig.~\ref{fig:modelConds}). Considering the update rule in~\eqref{eq:mainupdateWeight}\&\eqref{eq:globAgg}, at each device $n$, this can be carried out via either dispersing the (normalized) latest local model {\small $\frac{{D}^{(k)}_n}{D^{(k)} e_n^{(k)}}\mathbf{w}^{(k),e_n^{(k)}}$}, 
    or via dispersing the (normalized accumulated) gradient {\small $\frac{{D}^{(k)}_n}{D^{(k)} e_n^{(k)}}\overline{\nabla {F}}_n^{(k)}$},
    both of which are vectors with size $M$. In the following, we focus on the latter approach without loss of generality.
    In this work, we consider that in each aggregation period, a device
    either (i) completely disperses its local gradient across its neighbors in D2D mode or (ii) keeps its local gradient, receives gradient chucks from its neighbors, and engages in uplink transmission.
    \vspace{-2mm}
    \begin{remark}
    One can imagine a hybrid framework where each device disperses a fraction of its model/gradient parameters in D2D mode and uploads the remaining portion to the BS. In this regime, to reduce the communication burden, the dispersion of chunks to different neighbors should be conducted to maximize the overlap between the chunks that a device receives and those it retains locally, studying of which is left to future work.
    \end{remark}
    \vspace{-2mm}
    To alleviate uplink transmission of multiple gradients, when a device receives gradient chunks, it conducts a local aggregation, i.e., summing the received chunks with its local gradient, and only sends the resulting vector to the BS. We call this \textit{local model/gradient condensation} since it results in uplink transmission of a vector with size $M$ at each node regardless of the number of received chunks from its neighbors (see Fig.~\ref{fig:modelConds}).
\begin{figure}[t]
\vspace{-5mm}
\centering
\includegraphics[width=.49\textwidth]{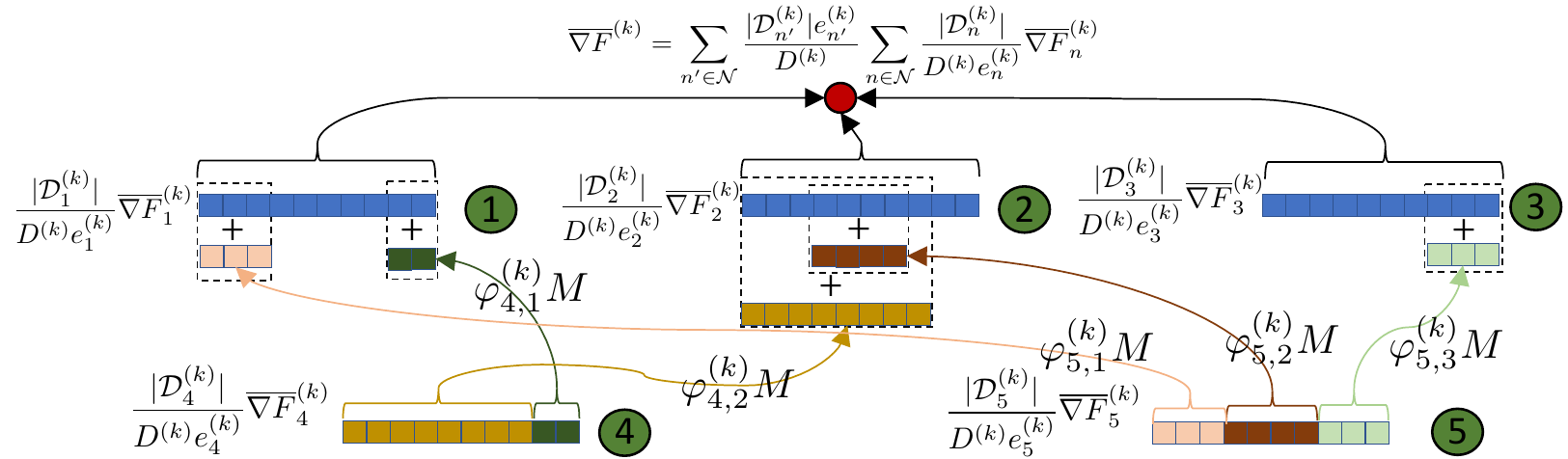}
\caption{A schematic of gradient dispersion with local condensation for a network of $5$ nodes. Each node $n$ computes {\scriptsize $\overline{\nabla {F}}_n^{(k)}$}
(with size $M$), and normalizes it with respect to its number of data points and SGD iterations. Nodes $4$ and $5$ partition the resulting vector to multiple chunks and disperse the chunks across their neighbors. The recipients sum the received vectors with their own vectors and transmit the results (vector of size $M$) to the main server, which computes {\scriptsize $\overline{\nabla {F}}^{(k)}$} (see~\eqref{eq:globAgg}) and conducts global aggregation (see~\eqref{eq:mainupdateWeight}).}
\label{fig:modelConds}
\vspace{-1.2mm}
\end{figure}
    
    \vspace{-4.5mm}
    To model this process, we define a continuous variable {\small $\varphi^{(k)}_{n,m}\in [0,1]$} to denote the fraction of gradient parameters (i.e., the fraction of indices of the gradient vector) offloaded from device $n$ to $m$ during global iteration $k$, where {\small $\sum_{m\in\mathcal{N}} \varphi^{(k)}_{n,m}=1$}, $\forall n,k$. If node $n$ does not share its gradient with any neighbor, {\small $\varphi^{(k)}_{n,n}=1$}.  We define {\small$\bm{\varphi}^{(k)} = [\varphi^{(k)}_{n,m}]_{1\leq n,m\leq N}$} as the \textit{gradient dispersion matrix}. 
    The time required for the \underline{r}eception of \underline{g}radient elements at node $m$ from node $n$ is given by
    \vspace{-1mm}
    \begin{equation}\label{eq:RP}
         {T}^{\mathsf{RG},(k)}_{n,m}= {\varphi_{n,m}^{(k)} {M}b^{\mathsf{G}} }/{r^{(k)}_{n,m}},
         \vspace{-1mm}
     \end{equation}
    where $b^{\mathsf{G}}$ denotes the number of bits required to represent one element of the \underline{g}radient vector with size $M$.
   Since the reception of gradient occurs in a parallel manner, the total time required for the \underline{r}eception of \underline{g}radient parameters at node $m$ is given by
   \vspace{-2mm}
   \begin{equation}\label{eq:RP}
         {T}^{\mathsf{RG},(k)}_{m}=\max_{n\in\mathcal{N}\setminus\{m\}}\left\{ {T}^{\mathsf{RG},(k)}_{n,m}\right\}.
   \vspace{-1.8mm}
   \end{equation}
   Finally, the uplink transmission time at node $n$  is given by
   \vspace{-1mm}
   \begin{equation}\label{eq:MTtext}
       {T}^{\mathsf{GT},(k)}_{n} = {Mb^{\mathsf{G}}}/{r^{(k)}_n}.
   \vspace{-1mm}
   \end{equation}
   As explained earlier, each node either disperses its gradient or keeps it entirely local, and thus $\varphi^{(k)}_{n,n}$ is a binary variable, $\forall n$. Thus,~\eqref{eq:MTtext} can be expressed as {\small $ {T}^{\mathsf{GT},(k)}_{n} = {Mb^{\mathsf{G}}\varphi^{(k)}_{n,n}}/{r^{(k)}_n}$}. 

    
    \vspace{-.2mm}
     \subsubsection{Energy Consumption}
     At global iteration $k$, 
       the  total energy consumption $E^{(k)}_n$ at each device $n$ is given by
    \vspace{-1mm}
\begin{equation}\label{eq:EoneGlob}
    E^{(k)}_n = E^{\mathsf{DD},(k)}_n + E_n^{\mathsf{C},(k)} + E_n^{\mathsf{GD},(k)}+ E_n^{\mathsf{GT},(k)},
    \vspace{-1mm}
\end{equation}
 where $E^{\mathsf{DD},(k)}_n$, $E_n^{\mathsf{C},(k)}$, $E_n^{\mathsf{GD},(k)}$, $E_n^{\mathsf{GT},(k)}$ denote the energy used for \underline{d}ata \underline{d}ispersion, local \underline{c}omputation, \underline{g}radient \underline{d}ispersion, and \underline{g}radient \underline{t}ransmission to the BS, given as follows:
 \vspace{-1mm}
 \begin{align}
 &E^{\mathsf{DD},(k)}_n= \sum_{m\in \mathcal{N}\setminus\{n\}} {p^{\mathsf{D}}_{n} \varrho^{(k)}_{n,m} D^{(k)}_nb^{\mathsf{D}}}/{r^{(k)}_{n,m}},\\[-.5em]
 & E_n^{\mathsf{C},(k)}= (\alpha_n/2) a_n e^{(k)}_n B^{(k)}_n \left(f^{(k)}_n\right)^2 ,\label{eq:Ec}\\ 
 & E_n^{\mathsf{GD},(k)}= \sum_{m\in\mathcal{N}\setminus \{n\}}  {p^{\mathsf{D}}_{n} \varphi_{n,m}^{(k)} M b^{\mathsf{G}}}/{r^{(k)}_{n,m}},\\
 & E_n^{\mathsf{GT},(k)}= {p^{\mathsf{U}}_n Mb^{\mathsf{G}}\varphi_{n,n}^{(k)}}/{r^{(k)}_n},
  \end{align}
  In~\eqref{eq:Ec}, $\alpha_n/2$ is the effective chipset capacitance of $n\in\mathcal{N}$~\cite{8737464}.
 
\subsubsection{PSL under Synchronization Signaling}\label{subsec:syncSig} To synchronize the processes involved in PSL, we assume that the BS ochestrates the devices through a sequence of signals. In particular, the BS starts each global training round via broadcasting signal $\mathsf{S^{D}}$, which begins the data dispersion phase among the devices. Afterward, the BS dictates the start of local model training via broadcasting signal  $\mathsf{S^{L}}$. Then, it starts the model/gradient dispersion phase with a signal $\mathsf{S^{M}}$. Finally, the global aggregation round ends with the BS broadcasting a signal $\mathsf{S^{U}}$, at which time the devices start uplink transmissions. Let ${T^{\mathsf D,(k)}}$, ${T^{\mathsf L,(k)}}$, ${T^{\mathsf M,(k)}}$, ${T^{\mathsf U,(k)}}$ denote the corresponding time interval associated with data dispersion, local model training, model/gradient dispersion, and uplink transmissions, respectively. The total delay of global training round $k$ is $T^{\mathsf{Tot},(k)} = T^{\mathsf{D},(k)}+T^{\mathsf{L},(k)}+T^{\mathsf{M},(k)}+T^{\mathsf{U},(k)}$, which we refer to as the \textit{device acquisition time}. Fig.~\ref{fig:PSLTimes} depicts a schematic of the timeline of PSL during a global training round.

 \vspace{-1.9mm}
\begin{remark}
PSL can be extended to an asynchronous regime, which is left as future work. 
For example, devices which finish the data dispersion faster can start the local computation and upload their resulting models/gradients to neighboring devices, while some others are still conducting data dispersion.
\end{remark}

\begin{table*}[t!]
\begin{minipage}{0.99\textwidth}
{\footnotesize
\begin{equation}\label{eq:gen_conv}
\hspace{-10mm}
\begin{aligned}
 &\frac{1}{K} \sum_{k=0}^{K-1} \mathbb E\left[ \big\Vert \nabla F^{({k})}(\mathbf{w}^{({k})}) \big\Vert^2\right] \leq \frac{1}{K} \vast[
 \sum_{k=0}^{K-1}\underbrace{\frac{\mathbb E\left[{{F}^{({k-1})}(\mathbf{w}^{(k)})} - F^{({k})}(\mathbf{w}^{(k+1)})\right]}{\Gamma^{(k)}(1-\Lambda^{(k)})}}_{(a)}
 +\sum_{k=0}^{K-1}\underbrace{\frac{\Omega^{(k+1)} \Delta^{(k+1)} }{\Gamma^{(k)}(1-\Lambda^{(k)})}}_{(b)}
   \\[-0.2em]&+\sum_{k=0}^{K-1} \frac{1}{(1-\Lambda^{(k)})}\Bigg(\underbrace{{8\beta^2\Theta^2 \eta_k^2 }\sum_{n\in \mathcal{N}}\frac{\widehat{{D}}^{(k)}_n}{{D}^{(k)} }( e_n^{(k)}-1) 
    \sum_{j=1}^{S^{(k)}_n} \left(1-\frac{{B}^{(k)}_{n,j}}{{S}^{(k)}_{n,j}} \right) \frac{{S}^{(k)}_{n,j}}{\left({D}^{(k)}_{n}\right)^2} \frac{{({S}^{(k)}_{n,j}-1)}
     \left(\widetilde{\sigma}_{n,j}^{(k)}\right)^2}{{B}^{(k)}_{n,j}}}_{(c)}\hspace{-14mm} \hspace{-6mm}\\[-0.3em]
     &
     +\underbrace{{8\zeta_2 \eta_k^2\beta^2 \left(e_{\mathsf{max}}^{(k)}\right)\left(e_{\mathsf{max}}^{(k)}-1\right)}
     }_{(d)}  +\underbrace{{8\Theta^2 {{\beta} \Gamma^{(k)} }}\sum_{n\in \mathcal{N}} \left(\frac{\widehat{{D}}^{(k)}_n}{{D}^{(k)} \sqrt{e^{(k)}_n}} \right)^2 \sum_{j=1}^{S^{(k)}_n} \left(1-\frac{{B}^{(k)}_{n,j}}{{S}^{(k)}_{n,j}} \right) \frac{{S}^{(k)}_{n,j}}{\left({D}^{(k)}_{n}\right)^2} \frac{{({S}^{(k)}_{n,j}-1)}
     \left(\widetilde{\sigma}_{n,j}^{(k)}\right)^2}{{B}^{(k)}_{n,j}}}_{(e)}
     \Bigg)
 \vast]
 \end{aligned}
 \end{equation}
 }
 \vspace{-2mm}
 \hrule
 \end{minipage}
  \vspace{-2mm}
  \end{table*}
   \begin{table*}[t!]
\begin{minipage}{0.99\textwidth}
 \begin{equation}\label{eq:cor1}
 \footnotesize
     \begin{aligned}
 \hspace{-16mm}\frac{1}{K} \sum_{k=0}^{K-1} \mathbb E\left[ \big\Vert \nabla F^{({k})}(\mathbf{w}^{({k})}) \big\Vert^2\right] &\leq 
      2 \sqrt{\widehat{e}_{\mathsf{max}}} \frac{F^{({-1})}(\mathbf{w}^{(0)}) - F^{{(K)}^\star}}{\overline{e}_{\mathsf{min}}\alpha \sqrt{N K}(1-\Lambda_{\mathsf{max}})}
      +\frac{ 2 \sqrt{\widehat{e}_{\mathsf{max}}}}{\overline{e}_{\mathsf{min}}\alpha \sqrt{N K}}\sum_{k=0}^{K-1}\frac{\Omega^{(k+1)} \Delta^{(k+1)} }{1-\Lambda_{\mathsf{max}}}+\frac{1}{K}\sum_{k=0}^{K-1} \frac{1}{(1-\Lambda_{\mathsf{max}})}\Bigg({\frac{8\beta^2\Theta^2 \alpha^2N}{K e^{(k)}_{\mathsf{sum}}} }
   \hspace{-10mm} 
   \\&\hspace{-6mm}\times \sum_{n\in \mathcal{N}}\frac{\widehat{{D}}^{(k)}_n}{{D}^{(k)} }( e_n^{(k)}-1)    
    \sum_{j=1}^{S^{(k)}_n} \left(1-\frac{{B}^{(k)}_{n,j}}{{S}^{(k)}_{n,j}} \right) \frac{{S}^{(k)}_{n,j}}{\left(\widehat{D}^{(k)}_{n}\right)^2} \frac{{({S}^{(k)}_{n,j}-1)}
     \left(\widetilde{\sigma}_{n,j}^{(k)}\right)^2}{{B}^{(k)}_{n,j}} 
     +{ \frac{8\zeta_2\alpha^2\beta^2 N}{Ke^{(k)}_{\mathsf{sum}}} \left(e_{\mathsf{max}}^{(k)}\right)\left(e_{\mathsf{max}}^{(k)}-1\right)}
     \Bigg)
     \hspace{-10mm} \\[-.2em]&\hspace{-6mm} +\frac{1}{K}\sum_{k=0}^{K-1}\frac{4\overline{e}_{\mathsf{max}}\alpha \Theta^2 {\beta \sqrt{N} }}{(1-\Lambda_{\mathsf{max}})\sqrt{{e}^{(k)}_{\mathsf{sum}}}\sqrt{K}}\sum_{n\in \mathcal{N}} \left(\frac{\widehat{{D}}^{(k)}_n}{{D}^{(k)} \sqrt{e^{(k)}_n}} \right)^2 \sum_{j=1}^{S^{(k)}_n} \left(1-\frac{{B}^{(k)}_{n,j}}{{S}^{(k)}_{n,j}} \right) \frac{{S}^{(k)}_{n,j}}{\left(\widehat{D}^{(k)}_{n}\right)^2} \frac{{({S}^{(k)}_{n,j}-1)}
     \left(\widetilde{\sigma}_{n,j}^{(k)}\right)^2}{{B}^{(k)}_{n,j}}\hspace{-14mm}
     \end{aligned}
     \vspace{-1.2mm}
 \end{equation}
\hrule
\end{minipage}
\vspace{-6.8mm}
\end{table*}
  
\begin{figure}[t]
\centering
\includegraphics[width=.48\textwidth]{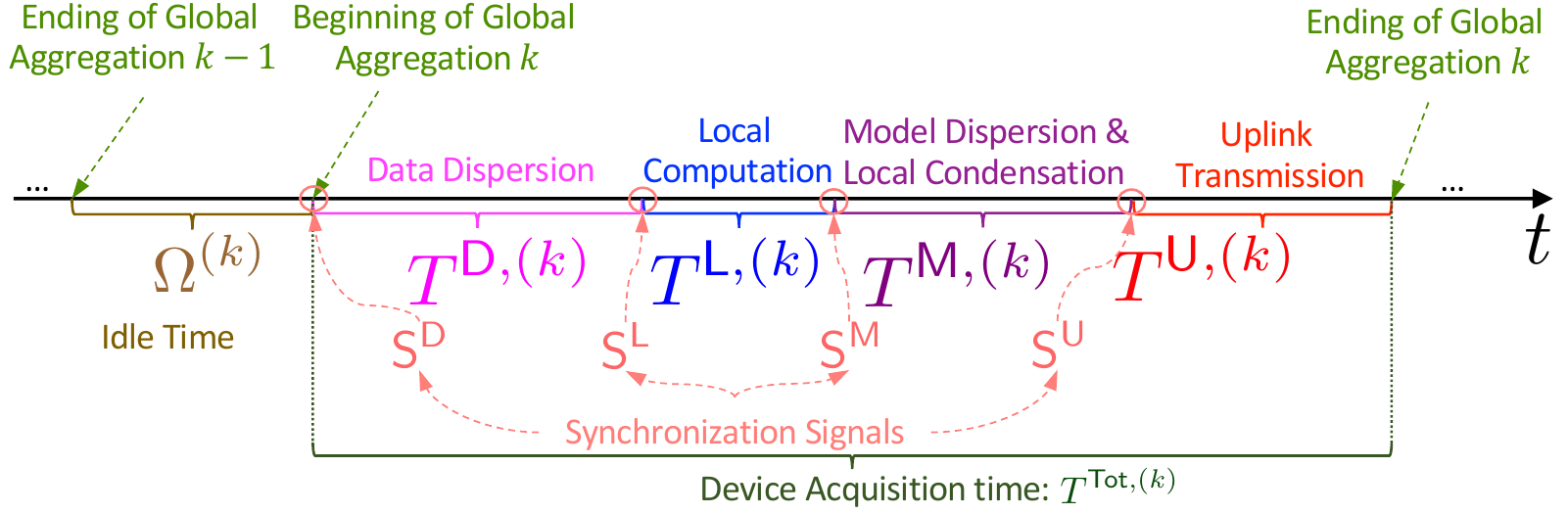}
\vspace{-2.1mm}
\caption{A schematic of PSL operations  during global round $k$. There is an \textit{idle time} in between global aggregations. During the \textit{device acquisition time},  a series of synchronization signals are broadcast by the BS.}
\label{fig:PSLTimes}
\vspace{1mm}
\end{figure}
   \vspace{-4mm}
\section{Convergence Analysis of PSL}\label{sec:conv}
 \noindent  
We first make two standard assumptions,  which are common in literature~\cite{8737464,8664630,dinh2019federated,wang2021network,wang2020tackling}, to conduct convergence analysis. Henceforth, notation $\Vert .\Vert$ denotes the 2-norm.
\vspace{-2mm}
\begin{assumption}[Smoothness of the Global and Local Loss Functions]\label{Assup:lossFun}
    Local loss function $F^{({k})}_n$  is  $\beta$-smooth,~$\forall n\in\mathcal{N},k$:
     \vspace{-2.mm}
    \begin{equation}
       \hspace{-1.5mm} \Vert \nabla F^{({k})}_n(\mathbf{w})- \nabla F^{({k})}_n(\mathbf{w}') \Vert \hspace{-.4mm}\leq \hspace{-.4mm} \beta \Vert \mathbf{w}-\mathbf{w}' \Vert,~ \hspace{-.8mm}\forall \mathbf{w},\mathbf{w}'\hspace{-.4mm}\in\hspace{-.4mm}\mathbb{R}^M\hspace{-.7mm}, \hspace{-2.3mm} 
        \vspace{-1.5mm}
    \end{equation}
    which implies the $\beta$-smoothness of the global loss function $F$. 
\end{assumption}
 \vspace{-3mm}
\begin{assumption}[Bounded Dissimilarity of Local Loss Functions]\label{Assup:Dissimilarity}
There exist finite constants $\zeta_1 \geq 1$, $\zeta_2 \geq 0$, for any set of coefficients $\{a_n\geq 0\}_{n\in \mathcal{N}}$, where $\sum_{n\in \mathcal{N}} a_n=1$, such that
    \vspace{-1mm}
   \begin{equation}
       \hspace{-3mm} \sum_{n\in \mathcal{N}}\hspace{-1mm} a_n \Vert \nabla F^{({k})}_n (\mathbf{w}) \Vert^2 \hspace{-0.8mm}\leq \hspace{-0.5mm} \zeta_1  \Big\Vert \hspace{-1mm} \sum_{n\in \mathcal{N}}\hspace{-1mm}a_n    \hspace{-.5mm} \nabla F^{({k})}_n (\mathbf{w})   \hspace{-.5mm} \Big\Vert^2\hspace{-2mm} +\zeta_2,\hspace{-1mm}~\forall k,\mathbf{w}.    \hspace{-3mm}
   \end{equation}
\end{assumption}
The two parameters $\zeta_1$ and $\zeta_2$ introduced in Assumption~\ref{Assup:Dissimilarity} measure
the level of heterogeneity (i.e., non-i.i.d-ness) across the devices' datasets. If the device's datasets are homogeneous (i.e., i.i.d. across the devices), we will have $\zeta_1=1,\zeta_2=0$, and these values will increase as the heterogeneity of data across the devices increases. These two parameters will later play a key role in our convergence analysis through determining the acceptable ranges for the step size of SGD and the rate of convergence of ML model training. Roughly speaking, as these two parameters increase, there would be a higher chance of local model bias across the devices, which often leads to more stringent conditions on step size and a worse ML performance.

We next define \textit{local data variability} to further measure the level of heterogeneity at the devices' local datasets: 
 \vspace{-1mm}
\begin{definition}[Local Data Variability]\label{Assump:DataVariabilit}
    The local data variability at each device $n$ is measured via $\Theta_n\geq 0$, which $\forall \mathbf{w} , k$ satisfies
    \vspace{-2mm}
    \begin{equation}
    \hspace{-5mm}
    \resizebox{.93\linewidth}{!}{$
   \Vert \nabla f\hspace{-.4mm}(\mathbf{w},d) \hspace{-.4mm}-\hspace{-.4mm} \nabla f\hspace{-.4mm}(\mathbf{w},d')\Vert  \hspace{-.4mm} \leq \hspace{-.4mm} \Theta_n \Vert d\hspace{-.4mm}-\hspace{-.4mm}d' \Vert,\hspace{-1.2mm}~\forall d,d'\hspace{-.7mm}\in\hspace{-.7mm}\mathcal{D}^{(k)}_n\hspace{-.9mm}.$}\hspace{-5mm}
    \end{equation}
    We further define $\Theta = \max_{n\in\mathcal{N}}\{\Theta_n \}$.
\end{definition}

 \vspace{-1mm}
We next quantify the dynamics of data variations at the devices via introducing a versatile measure that connects the variation in the data to the performance of the ML model:
\vspace{-1.9mm}
\begin{definition} [Model/Concept Drift]\label{def:cons}
Let $D_n(t)$ and $D(t)$ denote the instantaneous number of data points at device $n$ and total number of data points at wall clock time $t$ (measured in seconds) during PSL. For each device $n$, we measure the online model/concept drift for two consecutive time instances $t-1$ and $t$ during which the device does not conduct ML model training by $\Delta_n(t)\in\mathbb{R}$, which captures the variation of the local loss for any model parameter, $ \forall \mathbf{w}\in\mathbb{R}^M$:
\begin{equation}\label{eq:conceptDrift}
\hspace{-16.9mm}
\resizebox{.94\linewidth}{!}{$
 \frac{D_n(\hspace{-.1mm}t\hspace{-.1mm})}{D(\hspace{-.1mm}t\hspace{-.1mm})}\hspace{-.4mm}  F_n \hspace{-.5mm} \left(\hspace{-.2mm} \mathbf w\big|{\mathcal{D}_n}(t)\hspace{-.2mm} \right)\hspace{-.7mm} -\hspace{-.7mm}  \frac{D_n(\hspace{-.1mm}t-1\hspace{-.2mm})}{D(\hspace{-.1mm}t-1\hspace{-.2mm})}\hspace{-.4mm} F_n  \hspace{-.5mm}  \left(\hspace{-.2mm} \mathbf w\big|{\mathcal{D}_n}(\hspace{-.1mm}t-1\hspace{-.1mm})\hspace{-.2mm} \right)\hspace{-.7mm} \leq\hspace{-.5mm}  \Delta_n(t)\hspace{-.1mm}.
 $}
 \hspace{-11mm}
\end{equation}
$\Delta_n^{(t)}$ is assumed to be measured only at discrete time instances $t\in \mathbb{Z}^+$ and is presumed to be fixed in the continues time interval $(t-1,t]$ (i.e., for the duration of $1$ second).
\end{definition}
A larger value for the model/concept drift, i.e., $\Delta_n \gg 0$, implies a larger local loss variation and a harder tracking of the optimal model parameters for the ML training. Also, the above definition encompasses the case where due to model drift the old model becomes more fit to the current data (when $\Delta_n<0$). Our definition of model drift is  different from other few definitions in current art~\cite{rizk2020dynamic,ruan2021flexible} from two aspects. First, our definition connects the data variations to the model performance and can be used in scenarios where major variations in some dimensions of data (i.e., some of the features) does not affect the performance. Second, our proposed drift is estimable (i.e., it is not defined with respect to the variations in the optimal model as in~\cite{rizk2020dynamic,ruan2021flexible} which is by itself unknown a priori).

We next obtain the convergence behavior of PSL for non-convex loss functions.  We use $\widehat{\mathcal{D}}_n^{(k)}$ to denote the set of data points at device $n$ used during global round $k$ obtained after the model dispersion phase, {\small $\widehat{{D}}_n^{(k)}=|\widehat{\mathcal{D}}_n^{(k)}|= \sum_{m\in\mathcal{N}}\varrho_{m,n}^{(k)}D_m^{(k)}$}. Also, with slight abuse of notations, we use $S^{(k)}_{n,j}$ in the bounds to denote the number of data points in respective stratum and {\small $\widetilde{\sigma}_{n,j}^{(k)}$} to denote its variance during local model training of global round $k$  which are obtained via Lemma~\ref{lemma:trackMeanvar}.
\vspace{-2mm}
\begin{theorem}[General Convergence Behavior of PSL]\label{th:main}
Assume 
\vspace{-1.5mm}
\begin{equation*}
\resizebox{.99\linewidth}{!}{$
    \eta_k \leq \min \Big\{\frac{1}{2\beta} \sqrt{ \frac{\Lambda^{(k)}}{\zeta_1(\beta^2+\Lambda^{(k)})\left( e_{\mathsf{max}}^{(k)}\left(e_{\mathsf{max}}^{(k)}-1\right)\right)}}, \left(2\beta\sum_{n\in \mathcal{N}}\frac{{D}^{(k)}_{n}e^{(k)}_{n}}{{D}^{(k)} } \right)^{-1}\Big\},$}
    \vspace{-1.2mm}
\end{equation*}
where $\Lambda^{(k)}<1$ is a constant and  $e^{(k)}_{\mathsf{max}}=\max_{n\in\mathcal{N}}\{e^{(k)}_n\}$. Also, define  $\Gamma^{(k)}=\frac{\eta_{_k}}{2}\sum_{n\in \mathcal{N}}\frac{{D}^{(k)}_{n}e^{(k)}_{n}}{{D}^{(k)} }$, and $\Delta^{(k)}=\sum_{n\in \mathcal{N}}\widehat{\Delta}_n^{(k)}$, where $\widehat{\Delta}_n^{(k)}=\max_{t\in T^{\mathsf{Idle},(k)}} \Delta_n(t)$. Then,
the cumulative average of the gradient of the global loss function over the training period of PSL satisfies the upper bound in~\eqref{eq:gen_conv}.

 
\end{theorem}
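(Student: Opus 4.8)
The plan is to establish the per-round descent inequality and then telescope over the $K$ global rounds. First I would fix a global round $k$ and expand $F^{(k)}(\mathbf{w}^{(k+1)})$ using the $\beta$-smoothness of the global loss (Assumption~\ref{Assup:lossFun}), writing
\begin{equation*}
F^{(k)}(\mathbf{w}^{(k+1)}) \leq F^{(k)}(\mathbf{w}^{(k)}) + \langle \nabla F^{(k)}(\mathbf{w}^{(k)}), \mathbf{w}^{(k+1)}-\mathbf{w}^{(k)}\rangle + \frac{\beta}{2}\Vert \mathbf{w}^{(k+1)}-\mathbf{w}^{(k)}\Vert^2,
\end{equation*}
and substituting the global update rule~\eqref{eq:mainupdateWeight} together with the aggregated gradient~\eqref{eq:globAgg}. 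The inner-product term is the source of the descent $\Gamma^{(k)}\Vert\nabla F^{(k)}(\mathbf{w}^{(k)})\Vert^2$, while the quadratic term will be controlled by the step-size restriction assumed in the theorem. The key conceptual move is to take conditional expectation over the stochastic mini-batch sampling, decomposing the accumulated gradient $\overline{\nabla F}_n^{(k)}$ into its expectation (which relates to the true local gradient) plus sampling noise; the noise is where the variance terms $(c)$ and $(e)$ originate.

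Second, I would control the sampling noise by invoking the stratified-sampling variance structure. Because data is sampled uniformly at random within each stratum with batch size $B^{(k)}_{n,j}$ out of $S^{(k)}_{n,j}$, the variance of the per-stratum gradient estimator carries the finite-population correction factor $\bigl(1-\tfrac{B^{(k)}_{n,j}}{S^{(k)}_{n,j}}\bigr)$ and the stratum variance $(\widetilde{\sigma}^{(k)}_{n,j})^2$; combining this with the Lipschitz bound on gradients from Definition~\ref{Assump:DataVariabilit} (to relate gradient dispersion to data dispersion $\widetilde{\sigma}^{(k)}_{n,j}$) produces exactly the summands in $(c)$ and $(e)$, with the factor $8\beta^2\Theta^2\eta_k^2$ absorbing the smoothness and data-variability constants. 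The multi-iteration drift of the local models away from $\mathbf{w}^{(k)}$ over the $e^{(k)}_n$ SGD steps must be bounded separately: here I would use $\beta$-smoothness to bound how far $\mathbf{w}^{(k),e-1}_n$ drifts, and the bounded-dissimilarity Assumption~\ref{Assup:Dissimilarity} to convert $\sum_n a_n\Vert\nabla F_n^{(k)}\Vert^2$ into a multiple of $\Vert\sum_n a_n\nabla F_n^{(k)}\Vert^2$ plus $\zeta_2$; the $\zeta_2$ residual becomes term $(d)$, which scales as $\eta_k^2\beta^2 e^{(k)}_{\mathsf{max}}(e^{(k)}_{\mathsf{max}}-1)$, and the $\zeta_1$-weighted part is what the step-size bound is designed to suppress, giving rise to the $(1-\Lambda^{(k)})$ denominator.

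Third, the dynamic/drift aspect must be threaded in. The left side of the descent inequality references $F^{(k)}$, but between rounds the loss landscape itself shifts, so I would compare $F^{(k-1)}(\mathbf{w}^{(k)})$ with $F^{(k)}(\mathbf{w}^{(k)})$ using the model/concept drift bound of Definition~\ref{def:cons}: summing $\Delta_n(t)$ over the idle interval $T^{\mathsf{Idle},(k)}$ of length $\Omega^{(k+1)}$ and over devices yields the $\Omega^{(k+1)}\Delta^{(k+1)}$ contribution in term $(b)$, while the telescoping of the $F^{(k-1)}(\mathbf{w}^{(k)}) - F^{(k)}(\mathbf{w}^{(k+1)})$ differences across $k$ produces term $(a)$. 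Dividing through by $\Gamma^{(k)}(1-\Lambda^{(k)})$ to isolate $\Vert\nabla F^{(k)}(\mathbf{w}^{(k)})\Vert^2$, averaging over $k=0,\dots,K-1$, and collecting the pieces gives~\eqref{eq:gen_conv}.

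The main obstacle I anticipate is the careful bookkeeping of the multi-iteration, non-uniform-batch local updates: because each device runs a heterogeneous number $e^{(k)}_n$ of SGD steps with stratum-dependent batch sizes, the accumulated gradient $\overline{\nabla F}_n^{(k)}$ is not a single unbiased gradient estimate at $\mathbf{w}^{(k)}$ but a sum of correlated stochastic gradients evaluated at drifting iterates. Disentangling the bias (from evaluating gradients at $\mathbf{w}^{(k),e-1}_n\neq\mathbf{w}^{(k)}$) from the variance (from stratified sampling), and ensuring the finite-population correction and the weighting $\widehat{D}^{(k)}_n/D^{(k)}$ from data dispersion are tracked consistently, is where the proof will be most delicate; the step-size constraint is precisely what makes the cross terms between successive iterations summable rather than divergent.
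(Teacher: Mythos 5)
Your proposal is correct and follows essentially the same route as the paper's proof in Appendix~\ref{app:th:main}: the $\beta$-smoothness expansion of $F^{(k)}(\mathbf{w}^{(k+1)})$ under the update rule~\eqref{eq:mainupdateWeight}--\eqref{eq:globAgg}, the bias--variance split of the accumulated stochastic gradients with the finite-population stratified-sampling variance converted to data variance via $\Theta$, the recursive bound on local-iterate drift combined with Assumption~\ref{Assup:Dissimilarity} (producing term $(d)$ and the $(1-\Lambda^{(k)})$ factor through the step-size condition), and the concept-drift inequality $F^{(k)}(\mathbf{w}^{(k)})\le F^{(k-1)}(\mathbf{w}^{(k)})+\Omega^{(k+1)}\Delta^{(k+1)}$ from Definition~\ref{def:cons}, telescoped to yield terms $(a)$ and $(b)$. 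The only steps you leave implicit --- the polarization identity $2\mathbf{a}^\top\mathbf{b}=\Vert\mathbf{a}\Vert^2+\Vert\mathbf{b}\Vert^2-\Vert\mathbf{a}-\mathbf{b}\Vert^2$ used to extract the descent term, and solving the self-referential drift inequality under $\eta_k\le\bigl(2\beta\sqrt{e^{(k)}_{\mathsf{max}}(e^{(k)}_{\mathsf{max}}-1)}\bigr)^{-1}$ --- are routine details within exactly the argument you outline.
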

\vspace{-3.7mm}
\begin{proof}
Refer to Appendix~\ref{app:th:main} for the detailed proof.
\end{proof}
\vspace{-1mm}
In~\eqref{eq:gen_conv}, ${{F}^{({k-1})}(\mathbf{w}^{(k)})}={{F}(\mathbf{w}^{(k)}|\mathcal{D}^{(k-1)}})$ denotes the loss under which the $k$-th global model training ends where $\mathbf w^{(k)}$ is obtained and $F^{({k})}(\mathbf{w}^{(k+1)})={{F}(\mathbf{w}^{(k+1)}|\mathcal{D}^{(k)}})$ denotes the loss under which the $k+1$-th global model training ends where $\mathbf{w}^{(k+1)}$ is obtained. Also, $F^{(-1)}(\mathbf{w}^{(0)})$ denotes the \textit{initial} loss of the algorithm before the start of any model training, i.e., $\Omega^{(1)}$ seconds before the first model training round.

\textbf{Main Takeaways.}
The bound in~\eqref{eq:gen_conv} captures the effect of the ML-related parameters on the performance of PSL. Term $(a)$ captures the effect of consecutive loss function gains 
during ML training. Term $(b)$ captures the effect of model drift (via $\Delta^{(k+1)}$). Terms $(c)$ and $(e)$ are both concerned with the data stratification and mini-batch sizes used at the devices. Term $(d)$ captures the effect of model dissimilarity (via $\zeta_2$) and the number of SGD iterations (via $e^{(k)}_{\mathsf{max}}$). Larger model dissimilarity leads to smaller step size choices ($\zeta_1$ incorporated in the condition on $\eta_{_k}$) and larger upper bound ($\zeta_2$ in $(d)$). Also, larger local data variability implies a larger bound ($\Theta$ in $(c)$ and $(e)$). Further, terms $(a)$ and $(b)$ are inversely proportional to $\eta_k$ (incorporated in $\Gamma^{(k)}$), terms $(c)$ and $(d)$ are quadratically proportional to $\eta_k$, and term $(e)$ is linearly proportional to it (incorporated in $\Gamma^{(k)}$).
The bound also provides further insights: (i) upon having $e^{(k)}_n=1$, $\forall n,k$, terms $(c)$ and $(d)$ become zero and the bound demonstrate 1-epoch distributed ML with non-uniform SGD sampling; (ii) upon sampling all the data points, i.e., $B^{(k)}_{n,j}=S^{(k)}_{n,j}$, $\forall j,n,k$, terms $(c)$ and $(e)$ become zero and the bound reveals the convergence of full-batch local gradient descents, (iii) upon having uniform sampling across  strata, $B^{(k)}_{n,j}=B^{(k)}_n S^{(k)}_{n,j}/D^{(k)}_{n}$, the bound  demonstrates the convergence upon uniform data sampling using SGD with mini-batch size $B^{(k)}_n$ at each device $n$; (iv) the effect of offloading is reflected in $S_{n,j}^{(k)}$ and $\widehat{D}_{n}^{(k)}$. In particular, considering terms $(c)$ and $(e)$, increasing  $S_{n,j}^{(k)}$-s (i.e.,  data reception at device $n$), with everything else being constant, needs to be met via increasing $B^{(k)}_{n,j}$ (i.e., increasing the mini-batch size) to keep the bound value fixed. Similarly, upon data offloading, the device can use a smaller mini-batch size. Thus, the bound promotes offloading data from devices with low computation capability to those with higher computation capability.

 \begin{table*}[t]
\begin{minipage}{0.99\textwidth}
     \begin{equation}\label{eq:cor2}
 \footnotesize
         \begin{aligned}
        \frac{1}{K} \sum_{k=0}^{K-1} \mathbb E\left[ \big\Vert \nabla F^{({k})}(\mathbf{w}^{({k})}) \big\Vert^2\right] \leq&  2 \sqrt{\widehat{e}_{\mathsf{max}}}  \frac{F^{({-1})}(\mathbf{w}^{(0)}) - F^{{(K)}^\star}}{\overline{e}_{\mathsf{min}}\alpha \sqrt{N K}(1-\Lambda_{\mathsf{max}})}
      +\frac{ 2\sqrt{\widehat{e}_{\mathsf{max}}} \gamma   }{\overline{e}_{\mathsf{min}}\alpha \sqrt{N K}(1-\Lambda_{\mathsf{max}})}
     +\frac{4\overline{e}_{\mathsf{max}}\alpha \Theta^2 {\beta \sqrt{N} }}{(1-\Lambda_{\mathsf{max}})\sqrt{\widehat{e}_{\mathsf{min}}}\sqrt{K}}  \sigma_{\mathsf{max}} 
    \\[-.2em]&+\frac{1}{K}\frac{1}{(1-\Lambda_{\mathsf{max}})}\Bigg({8\beta^2\Theta^2 {\alpha^2} N}( e_{\mathsf{max}}-1) \sigma_{\mathsf{max}}/\widehat{e}_{\mathsf{min}}+{8\zeta_2 {\alpha^2 N}\beta^2 \left(e_{\mathsf{max}}\right)\left(e_{\mathsf{max}}-1\right)}/\widehat{e}_{\mathsf{min}} \Bigg)
         \end{aligned}
         \vspace{-1.8mm}
     \end{equation}
     \hrulefill
      \end{minipage}
      \vspace{-4mm}
\end{table*}

The bound in~\eqref{eq:gen_conv} reveals that reaching convergence is attainable under certain circumstances, which we aim to obtain:
\vspace{-6mm}
\begin{corollary}[Convergence under Proper Choice of Step Size and Bounded Local Iterations]\label{cor:1} 
In addition to the conditions in Theorem~\ref{th:main}, further assume that (i) {\small $\eta_k = \alpha \big /{\sqrt{K e^{(k)}_{\mathsf{sum}}/N}}$} with a finite positive constant $\alpha$ chosen to satisfy the condition on $\eta_k$ in Theorem~\ref{th:main}, where $e^{(k)}_{\mathsf{sum}}=\sum_{n\in\mathcal{N}} e_n^{(k)}$, (ii) $ \widehat{e}_{\mathsf{min}} \leq e^{(k)}_{\mathsf{sum}}\leq  \widehat{e}_{\mathsf{max}}$ for two finite positive constants $\widehat{e}_{\mathsf{min}}$ and $\widehat{e}_{\mathsf{max}}$, $\forall k$, (iii)  $\max_k \left\{\Lambda^{(k)}\right\} \leq \Lambda_{\mathsf{max}}< 1$, (iv)  $\overline{e}_{\mathsf{min}} \leq e^{(k)}_{\mathsf{avg}}\leq  \overline{e}_{\mathsf{max}}$ for two finite positive constants $\overline{e}_{\mathsf{min}}$ and  $\overline{e}_{\mathsf{max}}$, $\forall k$, where $e^{(k)}_{\mathsf{avg}}=\sum_{n\in \mathcal{N}}{{D}^{(k)}_{n}e^{(k)}_{n}}/{D^{(k)} }$.
Then, the cumulative average of the global loss function gradient for PSL satisfies~\eqref{eq:cor1}.
\end{corollary}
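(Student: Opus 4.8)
The plan is to obtain \eqref{eq:cor1} directly from the general bound \eqref{eq:gen_conv} of Theorem~\ref{th:main} by inserting the prescribed step size and replacing every per-round quantity by the uniform constants granted in assumptions (i)--(iv). First I would record the two algebraic consequences of the choice $\eta_k = \alpha\sqrt{N}/\sqrt{K e^{(k)}_{\mathsf{sum}}}$: namely $\eta_k^2 = \alpha^2 N/(K e^{(k)}_{\mathsf{sum}})$, and, since $\Gamma^{(k)} = \frac{\eta_k}{2}\sum_{n}\frac{D_n^{(k)}e_n^{(k)}}{D^{(k)}} = \frac{\eta_k}{2}e^{(k)}_{\mathsf{avg}}$, the identity $\Gamma^{(k)} = \alpha\sqrt{N}\,e^{(k)}_{\mathsf{avg}}/(2\sqrt{K e^{(k)}_{\mathsf{sum}}})$. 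Combining $e^{(k)}_{\mathsf{avg}}\ge\overline{e}_{\mathsf{min}}$ and $e^{(k)}_{\mathsf{sum}}\le\widehat{e}_{\mathsf{max}}$ then gives the uniform bound $1/\Gamma^{(k)}\le 2\sqrt{K\widehat{e}_{\mathsf{max}}}/(\alpha\sqrt{N}\,\overline{e}_{\mathsf{min}})$, while assumption (iii) yields $1/(1-\Lambda^{(k)})\le 1/(1-\Lambda_{\mathsf{max}})$. These are the only estimates needed throughout.

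I would then process the five terms of \eqref{eq:gen_conv} one at a time. For term $(a)$ I would pull the uniform bound on $1/[\Gamma^{(k)}(1-\Lambda^{(k)})]$ out of the sum and telescope $\sum_{k=0}^{K-1}[F^{(k-1)}(\mathbf{w}^{(k)}) - F^{(k)}(\mathbf{w}^{(k+1)})] = F^{(-1)}(\mathbf{w}^{(0)}) - F^{(K-1)}(\mathbf{w}^{(K)})$, then lower-bound the terminal loss by $F^{(K-1)}(\mathbf{w}^{(K)})\ge F^{{(K)}^\star}$; after multiplying by the leading $1/K$ the factor $\sqrt{K}/K = 1/\sqrt{K}$ consolidates into the first summand of \eqref{eq:cor1}. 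Term $(b)$ is handled identically, with the telescoping difference replaced by $\Omega^{(k+1)}\Delta^{(k+1)}$, producing the second summand. For terms $(c)$ and $(d)$ I would substitute $\eta_k^2 = \alpha^2 N/(K e^{(k)}_{\mathsf{sum}})$ and apply $1/(1-\Lambda^{(k)})\le 1/(1-\Lambda_{\mathsf{max}})$, which reproduces the corresponding summand of \eqref{eq:cor1} (with the post-offloading normalization $\widehat{D}_n^{(k)}$ understood). Finally, for term $(e)$ I would substitute the closed form of $\Gamma^{(k)}$, bound $e^{(k)}_{\mathsf{avg}}\le\overline{e}_{\mathsf{max}}$, and collect the constant $8\cdot\tfrac12 = 4$, yielding the last line of \eqref{eq:cor1}.

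The step I expect to be the main obstacle is the telescoping in term $(a)$: because the coefficient $1/[\Gamma^{(k)}(1-\Lambda^{(k)})]$ varies with $k$, replacing it by its uniform maximum and then telescoping is only immediate when the loss increments $F^{(k-1)}(\mathbf{w}^{(k)}) - F^{(k)}(\mathbf{w}^{(k+1)})$ are nonnegative. I would justify this either by exploiting the narrow admissible ranges of $e^{(k)}_{\mathsf{sum}}$ and $e^{(k)}_{\mathsf{avg}}$ granted in (ii) and (iv) to treat the coefficient as effectively uniform, or by splitting the sum according to the sign of each increment and bounding the coefficient from above on the positive part; the same care requires $\Omega^{(k+1)}\Delta^{(k+1)}\ge 0$ so that the uniform-coefficient estimate is legitimate for term $(b)$ as well. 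Once the telescoping is settled, all remaining steps are routine substitutions of the step size and the iteration bounds, and \eqref{eq:cor1} follows.
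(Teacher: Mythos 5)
Your proposal follows essentially the same route as the paper's proof: substitute $\eta_k = \alpha\big/\sqrt{K e^{(k)}_{\mathsf{sum}}/N}$ into the bound of Theorem~\ref{th:main} (using $\Gamma^{(k)} = \eta_k e^{(k)}_{\mathsf{avg}}/2$), replace $\Lambda^{(k)}$, $e^{(k)}_{\mathsf{sum}}$, and $e^{(k)}_{\mathsf{avg}}$ by their uniform bounds from assumptions (ii)--(iv), telescope term $(a)$, and lower-bound the terminal loss by $F^{{(K)}^\star}$, with terms $(b)$--$(e)$ handled by direct substitution exactly as you describe. The ``main obstacle'' you flag --- that a $k$-dependent positive coefficient cannot in general be pulled out of a telescoping sum whose increments may be negative --- is a genuine subtlety, but the paper's own proof performs precisely this uniform-coefficient substitution followed by telescoping without comment, so your extra caution goes beyond, rather than diverges from, the published argument.
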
 
\vspace{-4mm}
\begin{proof}
Refer to Appendix~\ref{app:cor:1} for the detailed proof.
\end{proof}
\vspace{-2mm}
Bound~\eqref{eq:cor1} no longer depends on consecutive loss gains ($(a)$ in~\eqref{eq:gen_conv}), rather it depends on the initial error $F^{(-1)}(\mathbf{w}^{(0)}) - F^{{(K)}^\star}$. 
We next obtain the conditions under which  the  cumulative  average of the global gradient converges to zero:


\begin{table*}[t!]
\begin{minipage}{0.99\textwidth}
 \begin{equation}\label{eq:gen_conv_neyman_main}
 \footnotesize
 \hspace{-13mm}
 \begin{aligned}
 & \frac{1}{K}  \sum_{k=0}^{K-1}\mathbb E\left[\Vert \nabla F^{({k})}(\mathbf{w}^{({k})}) \Vert^2\right]\leq  \Xi\left(\hspace{-.6mm}\widehat{\bm{D}}^{(k)}\hspace{-.7mm},\bm{B}^{(k)}\hspace{-.7mm},\Omega^{(k)}\hspace{-.7mm},\Delta^{(k)}\hspace{-.6mm} \right)\triangleq 
 \underbrace{\frac{2\sqrt{\widehat{e}_{\mathsf{max}}}\left(\hspace{-.3mm}{F}^{({-1})}(\mathbf{w}^{(0)}) - F^{{(K)}^\star}\hspace{-.3mm}\right)}{\alpha \overline{e}_{\mathsf{min}}\sqrt{N K}(1-\Lambda_{\mathsf{max}})}}_{(a)}
 \hspace{-.6mm}+\sum_{k=0}^{K-1}\hspace{-.2mm}\frac{2\sqrt{e^{(k)}_{\mathsf{sum}}}\Omega^{(k+1)}\Delta^{(k+1)} }{\alpha e^{(k)}_{\mathsf{avg}}\sqrt{N K}(1-\Lambda^{(k)})}\hspace{-26mm}
  \\[-.27em]&+\sum_{k=0}^{K-1} \frac{1}{(1-\Lambda^{(k)})}\vast({ \frac{8\beta^2\Theta^2\alpha^2N}{{e^{(k)}_{\mathsf{sum}} K^2}} }\sum_{n\in \mathcal{N}}\frac{\widehat{{D}}^{(k)}_n}{{D}^{(k)} }( e_n^{(k)}-1)
    \frac{1}{\left(\widehat{D}_n^{(k)}\right)^2}\vast[ \frac{1}{{B}^{(k)}_{n}} \left(\sum_{j=1}^{S^{(k)}_n} \widetilde{\sigma}^{(k)}_{n,j}{S}^{(k)}_{n,j} \right)^2  
    \hspace{-2mm}-\underbrace{\sum_{j=1}^{S^{(k)}_n}{S}^{(k)}_{n,j} {\left(\widetilde{\sigma}^{(k)}_{n,j}\right)^2}}_{(b)}\vast]
    \\[-0.22em]& +{ \frac{8\zeta_2\alpha^2\beta^2 N}{{e^{(k)}_{\mathsf{sum}} K^2}}\hspace{-1mm} \left(e_{\mathsf{max}}^{(k)}\right)\hspace{-1mm}\left(e_{\mathsf{max}}^{(k)}-1\right)}
     \vast)
     \hspace{-0.5mm}+\hspace{-0.5mm}\sum_{k=0}^{K-1}\frac{4 e^{(k)}_{\mathsf{avg}} \alpha \Theta^2 {\beta  \sqrt{N} }}{2\sqrt{e^{(k)}_{\mathsf{sum}}}K\sqrt{K}(1-\Lambda^{(k)})}\hspace{-0.1mm}\sum_{n\in \mathcal{N}}\hspace{-0.5mm} \frac{1}{\left({D}^{(k)}\sqrt{e^{(k)}_n} \right)^2} 
\vast[ \frac{1}{{B}^{(k)}_{n}} \left(\sum_{j=1}^{S^{(k)}_n} \widetilde{\sigma}^{(k)}_{n,j}{S}^{(k)}_{n,j} \right)^2\hspace{-2mm}-\underbrace{\sum_{j=1}^{S^{(k)}_n}{S}^{(k)}_{n,j} {\left(\widetilde{\sigma}^{(k)}_{n,j}\right)^2}}_{(c)}\vast]\hspace{-4mm}
 \end{aligned} 
 \hspace{-14mm}
 \end{equation}
 \vspace{-1mm}
 \hrule
 \end{minipage}
 \vspace{-2mm}
  \end{table*}

\vspace{-1.2mm}
\begin{corollary}[Convergence under Unified Upperbounds on the Sampling Noise]\label{cor:2}
Under the conditions specified in Corollary~\ref{cor:1}, further assume (i) bounded stratified sampling noise {\small $\displaystyle\max_{k,n}\left\{\sum_{j=1}^{S^{(k)}_n} \left(1-\frac{{B}^{(k)}_{n,j}}{{S}^{(k)}_{n,j}} \right) \frac{{S}^{(k)}_{n,j}}{\left({D}^{(k)}_{n}\right)^2} \frac{{({S}^{(k)}_{n,j}-1)}
     \left(\widetilde{\sigma}_{n,j}^{(k)}\right)^2}{{B}^{(k)}_{n,j}}\right\}$} $\leq \sigma_{\mathsf{max}}$, $\forall n,k$,  (ii) bounded local iterations $\max_{k}\{ e^{(k)}_{\mathsf{max}}\}\leq e_{\mathsf{max}} $ for positive constant $e_{\mathsf{max}}$, 
     and (iii) bounded idle period as $\Omega^{(k)}\leq \left[\frac{\gamma}{K\Delta^{(k)}}\right]^+$, $\forall k$, for a finite non-negative $\gamma$, where $[a]^+\triangleq \max\{a,0\}$, $\forall a\in\mathbb{R}$. Then,  the  cumulative  average of the global loss function gradient for PSL satisfies~\eqref{eq:cor2}, implying $\frac{1}{K} \sum_{k=0}^{K-1} \left\Vert \nabla F^{({k})}(\mathbf{w}^{({k})}) \right\Vert^2 \leq \mathcal{O}(1/\sqrt{K})$.
\end{corollary}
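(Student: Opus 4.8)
The plan is to derive \eqref{eq:cor2} directly from the Corollary~\ref{cor:1} bound \eqref{eq:cor1} by upper bounding each of its four term-groups using the three new hypotheses (i)--(iii) together with the structural facts already supplied by Corollary~\ref{cor:1}, namely $\widehat{e}_{\mathsf{min}}\le e^{(k)}_{\mathsf{sum}}\le \widehat{e}_{\mathsf{max}}$, $\Lambda^{(k)}\le\Lambda_{\mathsf{max}}$, $\overline{e}_{\mathsf{min}}\le e^{(k)}_{\mathsf{avg}}\le\overline{e}_{\mathsf{max}}$, and the conservation of data under dispersion, $\sum_{n\in\mathcal N}\widehat{D}^{(k)}_n=D^{(k)}$ (which follows from $\widehat{D}^{(k)}_n=\sum_m\varrho^{(k)}_{m,n}D^{(k)}_m$ and $\sum_m\varrho^{(k)}_{n,m}=1$, so that $\sum_n \widehat{D}^{(k)}_n/D^{(k)}=1$). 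The initial-error term of \eqref{eq:cor1} is already in final form and carries over unchanged into \eqref{eq:cor2}; the remaining three groups I would bound as follows.

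For the drift term (the $\Omega^{(k+1)}\Delta^{(k+1)}$ sum) I would invoke hypothesis (iii). When $\Delta^{(k+1)}>0$ the constraint $\Omega^{(k+1)}\le \gamma/(K\Delta^{(k+1)})$ gives $\Omega^{(k+1)}\Delta^{(k+1)}\le \gamma/K$; when $\Delta^{(k+1)}\le 0$ the bracket $[\,\cdot\,]^+$ forces $\Omega^{(k+1)}=0$ (since $\Omega^{(k+1)}\in\mathbb Z^+\cup\{0\}$), whence $\Omega^{(k+1)}\Delta^{(k+1)}\le 0\le\gamma/K$. Either way $\Omega^{(k+1)}\Delta^{(k+1)}\le\gamma/K$, so summing over $k=0,\dots,K-1$ gives $\sum_k\Omega^{(k+1)}\Delta^{(k+1)}\le\gamma$, converting the drift term into the $k$-free quantity $\tfrac{2\sqrt{\widehat{e}_{\mathsf{max}}}\,\gamma}{\overline{e}_{\mathsf{min}}\alpha\sqrt{NK}(1-\Lambda_{\mathsf{max}})}$ that appears as the second term of \eqref{eq:cor2}.

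For the two terms scaling as $1/K$ (the analogues of $(c)$ and $(d)$ of \eqref{eq:gen_conv}), I would bound the per-device stratified-sampling noise by $\sigma_{\mathsf{max}}$ via hypothesis (i), replace $e^{(k)}_n-1$ and $e^{(k)}_{\mathsf{max}}(e^{(k)}_{\mathsf{max}}-1)$ by $e_{\mathsf{max}}-1$ and $e_{\mathsf{max}}(e_{\mathsf{max}}-1)$ via hypothesis (ii), collapse $\sum_n \tfrac{\widehat{D}^{(k)}_n}{D^{(k)}}(e^{(k)}_n-1)\,\sigma_{\mathsf{max}}\le (e_{\mathsf{max}}-1)\sigma_{\mathsf{max}}$ using $\sum_n \widehat{D}^{(k)}_n/D^{(k)}=1$, and lower-bound $e^{(k)}_{\mathsf{sum}}\ge\widehat{e}_{\mathsf{min}}$; the resulting summand is $k$-independent, so the outer $\tfrac1K\sum_{k}$ cancels one explicit factor of $1/K$, producing exactly the fourth (bracketed) term of \eqref{eq:cor2}. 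The linear-in-$\eta_k$ noise term (the analogue of $(e)$) is the only group needing a genuine inequality on the data weights: after bounding its noise by $\sigma_{\mathsf{max}}$ and $1/\sqrt{e^{(k)}_{\mathsf{sum}}}$ by $1/\sqrt{\widehat{e}_{\mathsf{min}}}$, I would use $e^{(k)}_n\ge 1$ and the elementary inequality $\sum a_n^2\le(\sum a_n)^2$ for nonnegative $a_n$ to write
\[
\sum_{n\in\mathcal N}\Big(\tfrac{\widehat{D}^{(k)}_n}{D^{(k)}\sqrt{e^{(k)}_n}}\Big)^2 \le \sum_{n\in\mathcal N}\Big(\tfrac{\widehat{D}^{(k)}_n}{D^{(k)}}\Big)^2 \le \Big(\sum_{n\in\mathcal N}\tfrac{\widehat{D}^{(k)}_n}{D^{(k)}}\Big)^2 = 1 .
\]
This leaves the $k$-independent summand $\tfrac{4\overline{e}_{\mathsf{max}}\alpha\Theta^2\beta\sqrt N}{(1-\Lambda_{\mathsf{max}})\sqrt{\widehat{e}_{\mathsf{min}}}\sqrt K}\sigma_{\mathsf{max}}$, which the outer $\tfrac1K\sum_{k}$ again reduces to the third term of \eqref{eq:cor2}. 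Assembling the four bounds yields \eqref{eq:cor2}, and the claimed $\mathcal O(1/\sqrt K)$ rate then follows by inspection, since the initial-error, drift, and linear-noise terms all scale as $1/\sqrt K$ while the remaining group scales as $1/K$.

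The main obstacle I anticipate is not any single inequality but the careful bookkeeping needed to match \eqref{eq:cor1} to \eqref{eq:cor2} term by term, specifically: the sign analysis of the $[\,\cdot\,]^+$ operator in hypothesis (iii), which is essential precisely because Definition~\ref{def:cons} allows $\Delta_n<0$; and verifying that the explicit $1/K$ (respectively $1/\sqrt K$) factors already inside \eqref{eq:cor1} combine correctly with the outer averaging $\tfrac1K\sum_{k}$ to leave the correct $K$-scaling in each term. A secondary point is a notational reconciliation between the $D^{(k)}_n$ written in hypothesis (i) and the $\widehat{D}^{(k)}_n$ appearing inside \eqref{eq:cor1}, which I would handle by reading the noise quantity in (i) as evaluated on the post-dispersion strata, consistent with the abuse of notation declared just before Theorem~\ref{th:main}.
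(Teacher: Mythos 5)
Your proposal is correct and follows essentially the same route as the paper's proof in Appendix~D: substitute $\Omega^{(k+1)}\Delta^{(k+1)}\leq \gamma/K$ from hypothesis (iii), bound the per-device stratified noise by $\sigma_{\mathsf{max}}$ and the iteration counts by $e_{\mathsf{max}}$, use $\sum_{n}\widehat{D}^{(k)}_n/D^{(k)}=1$ together with $e^{(k)}_{\mathsf{sum}}\geq \widehat{e}_{\mathsf{min}}$, and let the now $k$-independent summands collapse under the outer $\frac{1}{K}\sum_k$ to give the stated $K$-scalings. Your explicit sign analysis of the $[\,\cdot\,]^+$ operator and the step $\sum_{n}\big(\widehat{D}^{(k)}_n/(D^{(k)}\sqrt{e^{(k)}_n})\big)^2\leq 1$ via $e^{(k)}_n\geq 1$ and $\sum_n a_n^2\leq(\sum_n a_n)^2$ are handled only implicitly in the paper, so your writeup actually makes two silent steps of the published argument rigorous.
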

\vspace{-2mm}
\begin{proof}
Refer to Appendix~\ref{app:cor:2} for the detailed proof.
\end{proof}
\vspace{-2mm}
One of the key findings of Corollary~\ref{cor:2} is that, to have a guaranteed convergence behavior, the idle times must be inversely proportional to the model drift, i.e., $\Omega^{(k)}\leq \left[\frac{\gamma}{K\Delta^{(k)}}\right]^+$, $\forall k$. This implies that larger model drift requires rapid global aggregations (i.e., small idle times), while smaller model drift requires less frequent global aggregations (i.e., large idle times). 

We next obtain the data sampling technique that needs to be utilized under a certain data sampling budget at each device:
\begin{proposition}[PSL under Optimal Local Data Sampling]\label{prop:neyman} Considering the assumptions made in Theorem~\ref{th:main}.
For a given mini-batch size at each device $n$, i.e., $B_n^{(k)}$, tuning the number of sampled points from strata according to Neyman's sampling technique represented with respect to the variance of the data described by {\small ${B}^{(k)}_{n,j} = \left(B^{(k)}_n\widetilde{\sigma}^{(k)}_{n,j}{S}^{(k)}_{n,j} \right)\Big(\sum_{i=1}^{S^{(k)}_n} \widetilde{\sigma}^{(k)}_{n,i}{S}^{(k)}_{n,i}\Big)^{-1}$} minimizes the bound in~\eqref{eq:gen_conv}. Further, if {\small $\eta_k = \alpha\big/{\sqrt{Ke^{(k)}_{\mathsf{sum}}/N}}$} with a finite positive constant $\alpha$ chosen to satisfy the condition on $\eta_k$ in Theorem~\ref{th:main} the cumulative average of global gradient under PSL satisfies the bound in~\eqref{eq:gen_conv_neyman_main}.
\end{proposition}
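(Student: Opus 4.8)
The plan is to treat Proposition~\ref{prop:neyman} as an optimization statement followed by a substitution. I first isolate the dependence of the right-hand side of~\eqref{eq:gen_conv} on the per-stratum sampling sizes $B^{(k)}_{n,j}$: inspecting the five terms, only $(c)$ and $(e)$ contain $B^{(k)}_{n,j}$, and both contain it through the identical factor
\[
g_n^{(k)} \triangleq \sum_{j=1}^{S^{(k)}_n} \left(1-\frac{B^{(k)}_{n,j}}{S^{(k)}_{n,j}}\right)\frac{S^{(k)}_{n,j}}{\left(D^{(k)}_n\right)^2}\frac{(S^{(k)}_{n,j}-1)\left(\widetilde{\sigma}^{(k)}_{n,j}\right)^2}{B^{(k)}_{n,j}}.
\]
Writing $(c)$ and $(e)$ as $\frac{1}{1-\Lambda^{(k)}}\big[8\beta^2\Theta^2\eta_k^2\frac{\widehat{D}^{(k)}_n}{D^{(k)}}(e^{(k)}_n-1) + 8\Theta^2\beta\Gamma^{(k)}(\frac{\widehat{D}^{(k)}_n}{D^{(k)}\sqrt{e^{(k)}_n}})^2\big]g_n^{(k)}$ summed over $n,k$, I observe that the bracketed coefficient is nonnegative and that the variables $\{B^{(k)}_{n,j}\}_j$ for distinct $(n,k)$ are decoupled, each subject only to the single budget constraint $\sum_j B^{(k)}_{n,j}=B^{(k)}_n$. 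Hence minimizing~\eqref{eq:gen_conv} over the sampling sizes reduces to separately minimizing each $g_n^{(k)}$ subject to its budget.

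For the per-device subproblem, I drop the additive constant $-\frac{1}{(D^{(k)}_n)^2}\sum_j (S^{(k)}_{n,j}-1)(\widetilde\sigma^{(k)}_{n,j})^2$ (independent of the $B^{(k)}_{n,j}$) and minimize $\sum_j \frac{S^{(k)}_{n,j}(S^{(k)}_{n,j}-1)(\widetilde\sigma^{(k)}_{n,j})^2}{B^{(k)}_{n,j}}$ subject to $\sum_j B^{(k)}_{n,j}=B^{(k)}_n$. This is the classical allocation problem: by the Cauchy--Schwarz inequality $\big(\sum_j \frac{a_j}{b_j}\big)\big(\sum_j b_j\big)\ge \big(\sum_j \sqrt{a_j}\big)^2$ with $a_j=S^{(k)}_{n,j}(S^{(k)}_{n,j}-1)(\widetilde\sigma^{(k)}_{n,j})^2$ and $b_j=B^{(k)}_{n,j}$ (equivalently a one-line Lagrange-multiplier computation), the minimizer is $B^{(k)}_{n,j}\propto \sqrt{a_j}=\sqrt{S^{(k)}_{n,j}(S^{(k)}_{n,j}-1)}\,\widetilde\sigma^{(k)}_{n,j}$. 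Using the finite-population approximation $S^{(k)}_{n,j}(S^{(k)}_{n,j}-1)\approx (S^{(k)}_{n,j})^2$ (Bessel correction negligible for non-degenerate strata) gives $\sqrt{a_j}\approx\widetilde\sigma^{(k)}_{n,j}S^{(k)}_{n,j}$, and normalizing by the budget yields exactly the stated Neyman allocation $B^{(k)}_{n,j}=B^{(k)}_n\widetilde\sigma^{(k)}_{n,j}S^{(k)}_{n,j}\big/\sum_i \widetilde\sigma^{(k)}_{n,i}S^{(k)}_{n,i}$.

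It then remains to substitute this minimizer back. Plugging it into $g_n^{(k)}$, the numerator--denominator cancellation gives $\sum_j \frac{(S^{(k)}_{n,j})^2(\widetilde\sigma^{(k)}_{n,j})^2}{B^{(k)}_{n,j}}=\frac{1}{B^{(k)}_n}\big(\sum_j \widetilde\sigma^{(k)}_{n,j}S^{(k)}_{n,j}\big)^2$, and applying the same $S^{(k)}_{n,j}-1\approx S^{(k)}_{n,j}$ approximation to the constant part reproduces the bracketed expression $\frac{1}{B^{(k)}_n}(\sum_j\widetilde\sigma^{(k)}_{n,j}S^{(k)}_{n,j})^2-\sum_j S^{(k)}_{n,j}(\widetilde\sigma^{(k)}_{n,j})^2$ appearing at $(b)$ and $(c)$ of~\eqref{eq:gen_conv_neyman_main}. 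Finally, I insert the step size $\eta_k=\alpha/\sqrt{Ke^{(k)}_{\mathsf{sum}}/N}$ (so that $\Gamma^{(k)}=\frac{\eta_k}{2}e^{(k)}_{\mathsf{avg}}$ and $\eta_k^2=\alpha^2N/(Ke^{(k)}_{\mathsf{sum}})$) exactly as in the proof of Corollary~\ref{cor:1}, and collapse the denominators using the abuse of notation $D^{(k)}_n\equiv\widehat D^{(k)}_n$; terms $(a)$ and $(b)$ of~\eqref{eq:gen_conv} transform just as in~\eqref{eq:cor1}, giving~\eqref{eq:gen_conv_neyman_main}.

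I expect the main obstacle to be bookkeeping rather than conceptual: one must (i) justify rigorously that minimizing each $g_n^{(k)}$ minimizes the full summed bound (via nonnegativity of the coefficients and decoupling of the variables across $(n,k)$), and (ii) carefully track the $\widehat D^{(k)}_n$ versus $D^{(k)}_n$ normalizations together with the $S^{(k)}_{n,j}(S^{(k)}_{n,j}-1)\approx(S^{(k)}_{n,j})^2$ approximation through the substitution so that the coefficients in~\eqref{eq:gen_conv_neyman_main} match. The Cauchy--Schwarz/Lagrange optimization and the step-size substitution are otherwise routine.
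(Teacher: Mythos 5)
Your overall route is the same as the paper's: you isolate the common sampling-noise factor appearing in terms $(c)$ and $(e)$ of \eqref{eq:gen_conv}, exploit nonnegativity of the coefficients and the decoupling of $\{B^{(k)}_{n,j}\}_j$ across $(n,k)$ to reduce to per-device budgeted subproblems, substitute the minimizer back, insert the step size $\eta_k=\alpha/\sqrt{Ke^{(k)}_{\mathsf{sum}}/N}$ (so $\Gamma^{(k)}=\frac{\eta_k}{2}e^{(k)}_{\mathsf{avg}}$), and telescope the first term to the initial error exactly as in Corollary~\ref{cor:1}. Your Cauchy--Schwarz/Lagrange derivation of the allocation is a more self-contained alternative to the paper's appeal to Neyman's decomposition (Eq.~(39) of the cited reference), which rewrites the relaxed expression as a constant plus a weighted sum of squares vanishing at the Neyman allocation; the two arguments are equivalent, and yours is arguably preferable pedagogically.

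There is, however, one genuine gap: your treatment of the $(S^{(k)}_{n,j}-1)$ factor via ``$\approx$'' is not admissible inside an inequality chain. First, the exact minimizer of your exact objective is $B^{(k)}_{n,j}\propto \widetilde\sigma^{(k)}_{n,j}\sqrt{S^{(k)}_{n,j}(S^{(k)}_{n,j}-1)}$, so the stated allocation only approximately minimizes it, and the proposition's claim cannot be literally concluded from your argument. Second, and more seriously, replacing the negative constant $-\frac{1}{(\widehat D^{(k)}_n)^2}\sum_j (S^{(k)}_{n,j}-1)(\widetilde\sigma^{(k)}_{n,j})^2$ by $-\frac{1}{(\widehat D^{(k)}_n)^2}\sum_j S^{(k)}_{n,j}(\widetilde\sigma^{(k)}_{n,j})^2$ strictly \emph{decreases} the right-hand side, i.e., this ``approximation'' runs in the wrong direction for an upper bound. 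The paper's proof avoids both issues by reordering your two steps: since $1-B^{(k)}_{n,j}/S^{(k)}_{n,j}\geq 0$, it first applies the rigorous bound $S^{(k)}_{n,j}-1\leq S^{(k)}_{n,j}$ inside the summand (see \eqref{eq:neymanpre}), and only then expands and minimizes the \emph{relaxed} objective; your own Cauchy--Schwarz step applied to this relaxed objective yields exactly $B^{(k)}_{n,j}\propto S^{(k)}_{n,j}\widetilde\sigma^{(k)}_{n,j}$ with no Bessel-type approximation, and the term $-\sum_j S^{(k)}_{n,j}(\widetilde\sigma^{(k)}_{n,j})^2$ then emerges exactly from the expansion (as in \eqref{eq:109}), preserving the bound direction throughout. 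A final bookkeeping caveat: in \eqref{eq:gen_conv_neyman_main} only the telescoped first term is uniformized via $\widehat e_{\mathsf{max}}$, $\overline e_{\mathsf{min}}$, $\Lambda_{\mathsf{max}}$, whereas the drift term retains the per-iteration quantities $e^{(k)}_{\mathsf{sum}}$, $e^{(k)}_{\mathsf{avg}}$, $\Lambda^{(k)}$; transforming both terms ``just as in \eqref{eq:cor1},'' as you propose, would give a valid but coarser bound that does not match the form the network optimization in Sec.~\ref{sec:PF} subsequently relies on.
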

\vspace{-3.5mm}
\begin{proof}
Refer to Appendix~\ref{app:prop:neyman} for the detailed proof.
\end{proof}
\vspace{-1.5mm}
The choice of ${B}^{(k)}_{n,j}$ in Proposition~\ref{prop:neyman} advocates sampling more data points from those strata with higher variance to reduces the SGD noise. This technique is particularly effective when the local datasets are non-i.i.d., e.g., devices possess unbalanced number of data points from different labels.

The bound obtained in Proposition~\ref{prop:neyman} is rather general. In particular, the bound is not obtained under conditions in Corollary~\ref{cor:2}, which were considered to prove the convergence. This is intentionally done to give a generalized
bound that describes the PSL convergence under arbitrary choice of idle times and  sampling errors. We thus build our optimization with respect to this bound, making our optimization solver general and applicable to a wide variety of scenarios. We obtain the convergence of PSL when the conditions of Corollaries~\ref{cor:1} and~\ref{cor:2} are imposed on Proposition~\ref{prop:neyman} in Appendix~\ref{app:furtherOptSample} (Corollary~\ref{cor:neyman1}~and~\ref{cor:furtherOptSam2}), which are omitted here for brevity since the results are qualitatively similar. 

\vspace{-3mm}
 \section{Network-aware Parallel Successive Learning}\label{sec:PF}
 \vspace{-.1mm}
\noindent  
In~\textit{network-aware PSL}, we aim to jointly optimize the \textit{macro decisions} of the system, e.g., timing of the synchronization signals and idle times between global aggregations, and the \textit{micro decisions} of the system, e.g., local mini-batch sizes and data/parameter offloading ratios, which is among the most general formulations in literature. We
 formulate the {network-aware PSL} as the following optimization problem:
 \begin{align}
     &(\bm{\mathcal{P}}): ~~\min \frac{1}{K}\left[\sum_{k=0}^{K-1} c_1 E^{\mathsf{Tot},(k)}+ c_2 T^{\mathsf{Tot},(k)}\right]\nonumber \\[-.12em] &~~~~~~~~~~~+ c_3 \hspace{-.1mm}\underbrace{ \frac{1}{K}\sum_{k=0}^{K-1}\mathbb E\left[ \big\Vert \nabla F^{({k})}(\mathbf{w}^{({k})}) \big\Vert^2\right]}_{\small = \Xi\left(\widehat{\bm{D}}^{(k)},\bm{B}^{(k)},\Omega^{(k)},\Delta^{(k)} \right) ~\textrm{given by}~\eqref{eq:gen_conv_neyman_main}}\hspace{-40mm} \\[-.45em]
     & \textrm{s.t.}\nonumber\\ 
     &  T^{\mathsf{Tot},(k)} = T^{\mathsf{D},(k)}+T^{\mathsf{L},(k)}+T^{\mathsf{M},(k)}+T^{\mathsf{U},(k)},\label{prob:Ttot}\\[-.2em]
     & E^{\mathsf{Tot},(k)}= \sum_{n\in \mathcal{N}}E^{(k)}_n,\label{prob:Etot}\\[-.45em]
     & \sum_{k=1}^{K} T^{\mathsf{Tot},(k)}+\Omega^{(k)} = T^{\mathsf{ML}},\label{prob:Tml}\\[-.2em]
     & \max_{n\in\mathcal{N}} \left\{  T^{\mathsf{DR},{(k)}}_{n} \right\} \leq T^{\mathsf{D},(k)},\label{prob:Td}\\
     & \max_{n\in\mathcal{N}} \left\{T^{\mathsf{C},{(k)}}_{n}\right\} \leq T^{\mathsf{L},(k)},\label{prob:Tl}\\
     & \max_{n\in\mathcal{N}} \left\{  T^{\mathsf{RG},{(k)}}_{n} \right\} \leq T^{\mathsf{M},(k)},\label{prob:Tm}\\
     & \max_{n\in\mathcal{N}} \left\{  T^{\mathsf{GT},{(k)}}_{n} \right\} \leq T^{\mathsf{U},(k)},\label{prob:Tu}\\
      & \sum_{m\in \mathcal{N}}\varrho^{(k)}_{n,m} = 1,~ n\in\mathcal{N},\label{prob:varrho}\\
      & \sum_{m\in \mathcal{N}} \varphi^{(k)}_{n,m}=1,~~n\in\mathcal{N}, \label{prob:varphi}\\
        &\varphi_{n,n}^{(k)}\sum_{m \in \mathcal{N} \setminus \{n\}} \varphi_{n,m}^{(k)} \leq 0, ~n\in\mathcal{N}, \label{eq:varphi1}\\
        &(1-\varphi_{n,n}^{(k)})\sum_{m \in \mathcal{N} \setminus \{n\}} \varphi_{m,n}^{(k)} \leq 0, ~n\in\mathcal{N},\label{eq:varphi2}\\
        & f^{\mathsf{min}}_n\hspace{-.5mm}\leq\hspace{-.5mm} f^{(k)}_n\hspace{-.5mm}\leq \hspace{-.5mm} f^{\mathsf{max}}_n,\hspace{-.5mm}~1\leq \hspace{-.5mm} B_n^{(k)} \hspace{-1mm}\leq \hspace{-1mm}\sum_{m\in \mathcal{N}}\varrho^{(k)}_{m,n}D^{(k)}_{m}, n\in\mathcal{N},\label{prob:freq}\\[-0.7em]
     & \varrho^{(k)}_{n,m},\varphi^{(k)}_{n,m} \geq 0, ~~n,m\in \mathcal{N},\label{prob:feas}\\[-.1em]
     &\hspace{-1mm}\textrm{\textit{\textbf{Variables}}:}\nonumber\\[-0.4em]
     &\hspace{-1mm}\footnotesize K,\big\{\mathbf{e}^{(k)}\hspace{-.5mm},\mathbf{f}^{(k)},\mathbf{B}^{(k)}\hspace{-.5mm},\bm{\varrho}^{(k)}\hspace{-.5mm}, \bm{\varphi}^{(k)}\hspace{-.5mm},T^{\mathsf{D},(k)}\hspace{-.5mm},T^{\mathsf{L},(k)}\hspace{-.5mm},T^{\mathsf{M},(k)}\hspace{-.5mm},T^{\mathsf{U},(k)}\hspace{-.5mm},\Omega^{(k)}\big\}_{k=1}^{K} \nonumber \hspace{-10mm} 
 \end{align}
 \vspace{-7mm}

 \noindent \textbf{Objective and variables.} $\bm{\mathcal{P}}$ aims to identify the number of global aggregations $K$, and the value of the following variables at each global aggregation $k$: the number of SGD iterations {\small $\mathbf{e}^{(k)}=[{e}_n^{(k)}]_{n\in\mathcal{N}}$}, frequency cycles of the devices {\small $\bm{f}^{(k)}=[{f}_n^{(k)}]_{n\in\mathcal{N}}$},  mini-batch sizes {\small$\bm{B}^{(k)}=[{B}_n^{(k)}]_{n\in\mathcal{N}}$} (given the mini-bath size, the sample size of strata, i.e., {\small$[{B}_{n,j}^{(k)}]_{n\in\mathcal{N}}, \forall j$, is given by Proposition~\ref{prop:neyman})},  data offloading ratios {\small $\bm{\varrho}^{(k)}=[{\varrho}_{n,m}^{(k)}]_{n,m\in\mathcal{N}}$},  model parameter offloading ratios {\small $\bm{\varphi}^{(k)}=[{\varphi}_{n,m}^{(k)}]_{n,m\in\mathcal{N}}$}, synchronization  periods (i.e., {\small $T^{\mathsf{D},(k)},T^{\mathsf{L},(k)},T^{\mathsf{M},(k)},T^{\mathsf{U},(k)}$} defined in Sec.~\ref{subsec:syncSig}), and  idle times  between the global aggregations $\Omega^{(k)}$.
 The objective function of $\bm{\mathcal{P}}$ draws a tradeoff between the total energy consumption, device acquisition time/cost, and the ML training performance of the global model. The latter is captured via $\Xi$, which is characterized by the bound in~\eqref{eq:gen_conv_neyman_main}. These (possibly) competing objectives are weighted via non-negative coefficients $c_1,c_2,c_3$.\footnote{We trivially  defined $E^{\mathsf{Tot},(0)}= T^{\mathsf{Tot},(0)}=0$.}
 
 \textbf{Constraints.}
 $T^{\mathsf{Tot},(k)}$ in~\eqref{prob:Ttot} denotes the device acquisition time and $E^{\mathsf{Tot},(k)}$ in~\eqref{prob:Etot} denotes the total energy consumption during global round $k$, where $E_n^{(k)}$ is given by~\eqref{eq:EoneGlob}. Constraint~\eqref{prob:Tml} ensures that the accumulated time spent during the model training and the idle times equals to the ML model training time $T^{\mathsf{ML}}$. 
 Constraints~\eqref{prob:Td},~\eqref{prob:Tl},~\eqref{prob:Tm},~\eqref{prob:Tu} ensure that the time interval used for data dispersion (see~\eqref{eq:TDRDef}), local computation (see~\eqref{eq:TCdef}), model dispersion (see~\eqref{eq:RP}), and uplink transmission (see~\eqref{eq:MTtext}) are chosen to ensure the operation of the system without conflict. Constraints~\eqref{prob:varrho} and~\eqref{prob:varphi} ensure the proper dispersion of the data and model parameters. Constraints~\eqref{eq:varphi1} and~\eqref{eq:varphi2} together ensure that each device either disperses its model parameter or keeps it local and engages in model condensation, i.e., $\phi_{n,n}$ is binary and takes the value of $0$ if any portion of the local model is offloaded; and $1$ otherwise. Finally,~\eqref{prob:freq},~\eqref{prob:feas} are the feasibility constraints.

\textbf{Main takeaways.} In~$\bm{\mathcal{P}}$,
if $c_3=0$, and $c_1,c_2>0$, model training never occurs ($K=0$) and  devices always remain in the idle state. As $c_3$ increases, the solution favors a lower  model loss via increasing the number of global rounds $K$ and decreasing the idle times $\Omega^{(k)}$-s. $\Xi$ is a function of concept drift  ($\Delta^{(k)}_{n}$-s), especially 
upon having small concept drifts, once the global model reaches a relatively low loss (i.e., it is \textit{warmed up} and has high \textit{inertia}), performing global aggregations result in marginal performance gains, and thus the frequency of global rounds will be decreased. However, upon having large concept drifts, since the global model obsoletes fast, to track the model variations, more global rounds with lower idle times are preferred. 
Also, considering $\Xi$ behavior in~\eqref{eq:gen_conv_neyman_main}, the optimization favors larger mini-batch sizes at the devices with higher number of data points and larger data variance. Also, the data is often offloaded from the devices with limited computation resource to those with abundant resources, while the model parameters are often transferred from the devices with poor BS channel conditions to those with better channels.

\textbf{Behavior of~$\bm{\mathcal{P}}$.} Except integer {\small$K$}, all the variables are continuous. Given that {\small$1\leq K\leq T^{\mathsf{ML}}$}, with all the rest of the variables known as a function of $K$, $K$ can be obtained with an exhaustive search. We thus focus on obtaining the rest of variables for a given {\small$K$}. In~$\bm{\mathcal{P}}$, multiplication between optimization variables appear in multiple places. For example, in {\small $E^{\mathsf{C},{(k)}}_n$}, encapsulated in {\small$E^{(k)}_n$} (see~\eqref{eq:EoneGlob}) in~\eqref{prob:Etot}, multiplication between {\small$e_n^{(k)}$, $B_n^{(k)}$},  and {\small$f_n^{(k)}$} exist (see~\eqref{eq:Ec}). Similar phenomenon exist in {\small$T_n^{\mathsf{C},(k)}$} (see~\eqref{eq:TCdef}) in~\eqref{prob:Tl}. More importantly, the definition of $\Xi$ in~\eqref{eq:gen_conv_neyman_main} consists of multiple terms with multiplication of variables some of which with negative coefficients.
In particular, the problem belongs to the category of \textit{Signomial Programming}, which are highly non-convex and NP-hard~\cite{chiang2005geometric}. This is expected given the generality of the formulation and the complex behavior of bound~\eqref{eq:gen_conv_neyman_main}.

In the following, we provide a tractable technique to solve $\bm{\mathcal{P}}$. Our approach relies on a set of approximations and constraint modifications to solve the problem in an iterative manner, which enjoys strong convergence guarantees.\footnote{To be able to solve the problem, we rely on strict positive optimization variables and replace constraint~\eqref{prob:feas} with  $\varrho_{n,m}^{(k)},\varphi_{n,m}^{(k)}>0, \forall n,m$. Accordingly, inequalities in~\eqref{eq:varphi1} and~\eqref{eq:varphi2} in the form of $A(\bm{x})\leq 0$ are replaced with  $A(\bm{x})< \vartheta$, where $\vartheta>0$ is an optimization variable. $\vartheta$ is then added to the objective function with a penalty term to ensure $\vartheta \downarrow 0$ at the final solution.} Although our approach is developed for~$\bm{\mathcal{P}}$, it can be applied to a broader category of problems that we call \textit{network-aware distributed ML}, where the formulations are mostly concerned with optimizing the performance of the ML training under network constraints. We are among the first to introduce these highly versatile optimization techniques to distributed ML literature.
  
   \begin{table*}[tbp]
\begin{minipage}{0.99\textwidth}
{\scriptsize
      \begin{equation}\label{approx:H}
      \begin{aligned}
    H(\bm{x})=&\sum_{k=1}^{K} T^{\mathsf{Tot},(k)}+\Omega^{(k)} \geq \widehat{H}(\bm{x};\ell) \triangleq \prod_{k=1}^{K} \left(\frac{T^{\mathsf{D},(k)}  H([\bm{x}]^{\ell-1})}{\left[T^{\mathsf{D},(k)} \right]^{\ell-1}}\right)^{\frac{\left[T^{\mathsf{D},(k)} \right]^{\ell-1} }{H([\bm{x}]^{\ell-1})}} 
    \left(\frac{T^{\mathsf{L},(k)}  H([\bm{x}]^{\ell-1})}{\left[T^{\mathsf{L},(k)} \right]^{\ell-1}}\right)^{\frac{\left[T^{\mathsf{L},(k)} \right]^{\ell-1} }{H([\bm{x}]^{\ell-1})}} \\[-.4em]&~~~~~~
    \left(\frac{T^{\mathsf{M},(k)}  H([\bm{x}]^{\ell-1})}{\left[T^{\mathsf{M},(k)} \right]^{\ell-1}}\right)^{\frac{\left[T^{\mathsf{M},(k)} \right]^{\ell-1} }{H([\bm{x}]^{\ell-1})}} 
    \left(\frac{T^{\mathsf{U},(k)}  H([\bm{x}]^{\ell-1})}{\left[T^{\mathsf{U},(k)} \right]^{\ell-1}}\right)^{\frac{\left[T^{\mathsf{U},(k)} \right]^{\ell-1} }{H([\bm{x}]^{\ell-1})}} 
    \left(\frac{\Omega^{(k)} H([\bm{x}]^{\ell-1})}{\left[\Omega^{(k)} \right]^{\ell-1}}\right)^{\frac{\left[\Omega^{(k)} \right]^{\ell-1} }{H([\bm{x}]^{\ell-1})}}
    \end{aligned}
    \vspace{-1mm}
\end{equation}
 \vspace{-1mm}
\hrulefill
\vspace{-5mm}
\begin{multicols}{2}
\begin{equation}\label{approx:G}
    \hspace{-12mm} G(\bm{x})=\sum_{m\in \mathcal{N}}\varrho^{(k)}_{n,m} \geq \widehat{G}(\bm{x};\ell)\triangleq
    \prod_{m\in \mathcal{N}} \left(\frac{\varrho^{(k)}_{n,m} G([\bm{x}]^{\ell-1})}{\left[\varrho^{(k)}_{n,m} \right]^{\ell-1}}\right)^{\frac{\left[\varrho^{(k)}_{n,m} \right]^{\ell-1} }{G([\bm{x}]^{\ell-1})}} 
    \hspace{-8mm}
\end{equation}\;
\begin{equation}\label{approx:j}
\hspace{-12mm}
    J(\bm{x})=\sum_{m\in \mathcal{N}}\varphi^{(k)}_{n,m} \geq \widehat{J}(\bm{x};\ell)\triangleq
    \prod_{m\in \mathcal{N}} \left(\frac{\varphi^{(k)}_{n,m} J([\bm{x}]^{\ell-1})}{\left[\varphi^{(k)}_{n,m} \right]^{\ell-1}}\right)^{\frac{\left[\varphi^{(k)}_{n,m} \right]^{\ell-1} }{J([\bm{x}]^{\ell-1})}} \hspace{-8mm}
\end{equation}
\end{multicols}
\vspace{-3.5mm}
\hrulefill
\vspace{-1mm}

\begin{minipage}{.49\linewidth}
\begin{equation}\label{approx:i}
\hspace{-1mm}
      I(\bm{x})=\hspace{-.8mm}\sum_{m\in\mathcal{N}} \hspace{-.8mm}\varrho^{(k)}_{m,n} D^{(k)}_m \geq \widehat{I}(\bm{x};\ell)\triangleq \hspace{-.8mm} \prod_{m\in\mathcal{N}}\hspace{-.8mm}\left(\frac{\varrho^{(k)}_{m,n} I([\bm{x}]^{\ell-1})}{[\varrho^{(k)}_{m,n}]^{\ell-1}}\right)^{\hspace{-.8mm}\frac{D^{(k)}_m[\varrho^{(k)}_{m,n} ]^{\ell-1}}{I([\bm{x}]^{\ell-1})}}
\hspace{-5mm}
 \end{equation}
 \end{minipage}
 \;\;
 \begin{minipage}{.49\linewidth}
\begin{equation}\label{approx:R}
       \hspace{3mm} R(\bm{x})= \sum_{n\in \mathcal{N}}e^{(k)}_{n}  \geq \widehat{R}(\bm{x};\ell)\triangleq \prod_{n\in\mathcal{N}}\left(\frac{e^{(k)}_{n}  R([\bm{x}]^{\ell-1})    }{[e^{(k)}_{n}]^{[\ell-1]}} \right)^{\frac{[e^{(k)}_{n} ]^{[\ell-1]}}{R([\bm{x}]^{\ell-1})   }}
     \end{equation}
     \end{minipage}
    
    \vspace{1mm}
    \hrulefill
    \vspace{-2mm}
    
  \begin{equation}\label{approx:V}
        V(\bm{x})= \sum_{n\in \mathcal{N}}\widehat{{D}}^{(k)}_{n}e^{(k)}_{n}  \geq \widehat{V}(\bm{x};\ell)
        \triangleq 
        \prod_{n\in\mathcal{N}} \prod_{m \in \mathcal{N}}
        \left(\frac{ \varrho_{m,n}^{(k)} e^{(k)}_{n} {V}([\bm{x}]^{\ell-1})} 
        {\left[ \varrho_{m,n}^{(k)} e^{(k)}_{n} \right]^{[\ell-1]}} \right)
        ^{\frac{{D}_m^{(k)}\left[   \varrho_{m,n}^{(k)} e^{(k)}_{n} \right]^{[\ell-1]}}{{V}([\bm{x}]^{\ell-1}) }}
     \end{equation}
        }
        
        \vspace{-1mm}
        \hrulefill
\hrulefill
\end{minipage}
\vspace{-2mm}
\end{table*} 

A related problem format in literature to $\bm{\mathcal{P}}$ is \textit{geometric programming} (GP), to understand which the knowledge of monomial and posynomials is necessary.
\vspace{-2mm}
 \begin{definition}
         A \textbf{{monomial}} is defined as a function {\small $f: \mathbb{R}^n_{++}\rightarrow \mathbb{R}$}:
         {\small$f(\bm{y})=z y_1^{\alpha_1} y_2^{\alpha_2} \cdots y_n ^{\alpha_n}$}, where {\small$z\geq 0$}, {\small$\bm{y}=[y_1,\cdots,y_n]$}, and {\small$\alpha_j\in \mathbb{R}$}, $\forall j$. Further, a \textbf{posynomial} $g$ is defined as a sum of monomials: {\small$g(\bm{y})\hspace{-.5mm}= \hspace{-.5mm}\sum_{m=1}^{M} z_m y_1^{\alpha^{(1)}_m} y_2^{\alpha^{(2)}_m} \cdots y_n ^{\alpha^{(n)}_m}\hspace{-1mm}$, $z_m\hspace{-1mm}\geq 0,\forall m$}.
\end{definition}
\vspace{-2mm}
A GP in its standard form admits a posynomials objective function subject to inequality constraints on posynomials and equality constraints on monomials (see Appendix~\ref{sec:GPtransConv}). With a logarithmic change of variables, GP in its standard form can be transformed into a convex optimization that can be efficiently solved using well-known software, e.g., CVXPY~\cite{diamond2016cvxpy}. Nevertheless, $\bm{\mathcal{P}}$ does not admit the format of GP. In particular, the bound in~\eqref{eq:gen_conv_neyman_main} appearing in the objective function consists of terms with negative sign that violate the definition of posynomials. Furthermore, constraints~\eqref{prob:Tml},~\eqref{prob:varrho} and~\eqref{prob:varphi} are equalities on  posynomials, which GP does not admit. To tackle these violating terms, we first consider the constraints in the form of equality on posynomials, and use the method of penalty functions and auxiliary variables~\cite{SignomialGlobal}. To this end, we consider each equality on a posynomial in the format of $g(\bm{x})=c$ as two inequality constraints: (c-i) $g(\bm{x})\leq c$, and (c-ii)~${1}/({A g(\bm{x})})\leq c$, where $A\geq 1$ is an auxiliary variable, which will later be forced $A\downarrow 1$ via being added to the objective function with a penalty coefficient.\footnote{$A\downarrow 1$ is an equivalent  representation of $A\rightarrow 1^+$.} Inequality (c-i) is an inequality on a posynomial, which GP admits. However, (c-ii) is an inequality on a non-posynomial (division of a monomial/posynomial by a posynomial is not a posynomial). One way to transform (c-ii) to an inequality on a posynomial is to approximate its denominator by a monomial (division of a posynomial by a monomial is a posynomial). To this end, we exploit the arithmetic-geometric mean inequality, which upper bounds a posynomial with a larger monomial in value.

\vspace{-2mm}
  \begin{lemma}[\textbf{Arithmetic-geometric mean inequality}~\cite{duffin1972reversed}]\label{Lemma:ArethmaticGeometric}
         Consider a posynomial function $g(\bm{y})=\sum_{i=1}^{i'} u_i(\bm{y})$, where $u_i(\bm{y})$ is a monomial, $\forall i$. The following inequality holds:
         \vspace{-1mm}
         \begin{equation}\label{eq:approxPosMonMain}
             g(\bm{y})\geq \hat{g}(\bm{y})\triangleq \prod_{i=1}^{i'}\left( {u_i(\bm{y})}/{\alpha_i(\bm{z})}\right)^{\alpha_i(\bm{z})},
             \vspace{-1mm}
         \end{equation}
         where $\alpha_i(\bm{z})=u_i(\bm{z})/g(\bm{z})$, $\forall i$, and $\bm{z}>0$ is a fixed point.
         \end{lemma}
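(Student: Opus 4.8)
The plan is to recognize the asserted bound~\eqref{eq:approxPosMonMain} as nothing more than a single application of the classical weighted arithmetic--geometric mean inequality, where the weights are precisely the normalized monomial values $\alpha_i(\bm z)=u_i(\bm z)/g(\bm z)$ evaluated at the fixed expansion point $\bm z$. Recall that weighted AM--GM states that for positive reals $a_1,\dots,a_{i'}$ and nonnegative weights $w_1,\dots,w_{i'}$ with $\sum_i w_i=1$, one has $\sum_i w_i a_i \ge \prod_i a_i^{w_i}$. The entire proof consists of choosing $a_i$ and $w_i$ so that the arithmetic side collapses to $g(\bm y)$ and the geometric side reproduces $\hat g(\bm y)$.

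First I would check that the $\alpha_i(\bm z)$ are admissible weights. Since $\bm z>0$ lies in $\mathbb{R}^n_{++}$, each monomial is strictly positive (any degenerate term carrying a zero coefficient contributes nothing to $g$ and may be discarded), so $g(\bm z)=\sum_i u_i(\bm z)>0$, each $\alpha_i(\bm z)\ge 0$, and
\begin{equation}
\sum_{i=1}^{i'}\alpha_i(\bm z)=\frac{1}{g(\bm z)}\sum_{i=1}^{i'}u_i(\bm z)=\frac{g(\bm z)}{g(\bm z)}=1 .
\end{equation}
Next I would substitute $a_i=u_i(\bm y)/\alpha_i(\bm z)$ and $w_i=\alpha_i(\bm z)$ into weighted AM--GM; these are valid because $u_i(\bm y)>0$ for $\bm y>0$. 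The arithmetic side telescopes to $\sum_i \alpha_i(\bm z)\,u_i(\bm y)/\alpha_i(\bm z)=\sum_i u_i(\bm y)=g(\bm y)$, while the geometric side is $\prod_i (u_i(\bm y)/\alpha_i(\bm z))^{\alpha_i(\bm z)}=\hat g(\bm y)$ by definition. The inequality therefore reads exactly $g(\bm y)\ge\hat g(\bm y)$, which is~\eqref{eq:approxPosMonMain}.

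There is no genuine obstacle here, as the statement is a one-line corollary of weighted AM--GM; the only care needed in the write-up is the positivity caveat just mentioned, so that the weights and the inequality hypotheses are well defined on $\mathbb{R}^n_{++}$. The point actually worth emphasizing is tightness at the expansion point: setting $\bm y=\bm z$ yields $u_i(\bm z)/\alpha_i(\bm z)=g(\bm z)$ for every $i$, so all the $a_i$ coincide and AM--GM holds with equality, giving $\hat g(\bm z)=g(\bm z)$. A short additional computation in the logarithmic variables shows $\hat g$ is moreover tangent to $g$ at $\bm z$, which is exactly the property that makes $\hat g$ a faithful local monomial surrogate for the posynomial $g$ and justifies its use in the iterative condensation scheme that follows.
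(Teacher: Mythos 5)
Your proof is correct, and it is the standard (indeed, essentially the only) derivation of this inequality: the paper itself offers no proof, stating the lemma with a citation to Duffin's work on reversed geometric inequalities, and the cited result is proved there by exactly the weighted AM--GM substitution you use, with weights $\alpha_i(\bm z)$ summing to one and the arithmetic side collapsing to $g(\bm y)$. Your added observations---the positivity caveat for degenerate zero-coefficient terms, equality at $\bm y=\bm z$, and tangency in logarithmic variables---are all sound and in fact substantiate the paper's later (unproved) remark that these approximations are the best local monomial fits in the first-order Taylor sense, which underpins the inner-approximation convergence argument of Proposition 2.
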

         
         \vspace{-1mm}
In~Appendix~\ref{app:optTransform}, we explain all the steps taken to solve the optimization problem $\bm{\mathcal{P}}$. Due to space limitations, we provide a high level discussion in the following.

  Our technique to solve $\bm{\mathcal{P}}$ is an iterative approach, where at each iteration $\ell$, after using the aforementioned method of penalty functions based on (c-i) and (c-ii). The corresponding posynomials in (c-ii) for~\eqref{prob:Tml},~\eqref{prob:varrho} and~\eqref{prob:varphi} are approximated using~\eqref{approx:H},~\eqref{approx:G}, and~\eqref{approx:j}, respectively. Furthermore, since $\widehat{D}^{(k)}_n$, $e^{(k)}_{\mathsf{sum}}$ and $e^{(k)}_{\mathsf{avg}}$ appear in multiple places in~\eqref{eq:gen_conv_neyman_main}, for tractability, we treat them as optimization variables and add the following constraints to $\bm{\mathcal{P}}$:
  $(\widehat{D}^{(k)}_n)^{-1}\sum_{m\in\mathcal{N}} \varrho_{m,n}^{(k)} D_m^{(k)}=1, n\in\mathcal{N}$,
     ${\sum_{n\in\mathcal{N}} e_n^{(k)}}/e^{(k)}_{\mathsf{sum}}=1, \sum_{n\in \mathcal{N}}{\widehat{{D}}^{(k)}_{n}e^{(k)}_{n}}/(e^{(k)}_{\mathsf{avg}}{{D}^{(k)} })=1$, which are all equality constraint on posynomials. We thus use the method of penalty functions with approximations given in~\eqref{approx:i},~\eqref{approx:R} and~\eqref{approx:V} to transform them. It is easy to verify that~\eqref{approx:H}-\eqref{approx:V} are in fact the best local monomial approximations to their corresponding posynomials
near fixed point $\bm{x}^{[\ell-1]}$ in terms of the first-order Taylor approximation (vector $\bm{x}$ encapsulates all the optimization variables in all the expressions). 

\begin{algorithm}[t]
 	\caption{Optimization solver for problem~$\bm{\mathcal{P}}$}\label{alg:cent}
 	\SetKwFunction{Union}{Union}\SetKwFunction{FindCompress}{FindCompress}
 	\SetKwInOut{Input}{input}\SetKwInOut{Output}{output}
 	 	{\footnotesize
 	\Input{Convergence criterion, model training duration $T^{\mathsf{ML}}$.}
 	\For{$K=1$ to $T^{\mathsf{ML}}$}{
 	 Set the iteration count $\ell=0$.\\
 	 Choose a feasible point $\bm{x}^{[0]}$.\\
 	 Obtain the monomial approximations~\eqref{approx:H}-\eqref{approx:V},\eqref{eq:posObj} given $\bm{x}^{[\ell]}$.\label{midAlg1}\\
 	 Replace the results in the approximation of Problem~$\bm{\mathcal{P}}$ (i.e., $\bm{\mathcal{P}'}$ given by~\eqref{prob:INITIAL}-\eqref{prob:FINAL} in Appendix~\ref{app:optTransform}).\\
 	 With logarithmic change of variables, transform the resulting GP problem to a convex problem (as described in Appendix~\ref{sec:GPtransConv}).\\
 	 $\ell=\ell+1$\\
 	 Obtain the solution of the convex problem using  current art solvers (e.g., CVXPY~\cite{diamond2016cvxpy}) to determine  $\bm{x}^{[\ell]}$.\label{Alg:Gpconvexste}\\
 	 \If{two consecutive solutions $\bm{x}^{[\ell-1]}$ and $\bm{x}^{[\ell]}$ do not meet the specified convergence criterion}{
 	\textrm{Go to line~\ref{midAlg1} and redo the steps using $\bm{x}^{[\ell]}$.}\\\Else{Set the solution of the iterations as $\bm{x}_K^{\star}=\bm{x}^{[\ell]}$.\label{Alg:point2}\\
 	}}
 	  }
 	  $\bm{x}^\star= \min_{\{\bm{x}^\star_K \}_{1\leq K\leq T^{\mathsf{ML}}}}\{\textrm{Objective of $\bm{\mathcal{P}}$ evaluated at $\bm{x}^\star_K$}\}$
 	  }
 	  \vspace{-.1mm}
  \end{algorithm}
  
   \begin{table*}[tbp]
\begin{minipage}{0.99\textwidth}
{\footnotesize
 \begin{align}
    W(\bm{x})&= \chi^{(k)}+ \frac{1}{(1-\Lambda^{(k)})}\frac{8\beta^2\Theta^2 \alpha^2 N}{K^2} (e^{(k)}_{\mathsf{sum}})^{-1} \sum_{n\in \mathcal{N}}\frac{e_n^{(k)}}{{D}^{(k)}} \widehat{Z}_n^{(k)}+\frac{4 \Theta^2 {\beta \alpha \sqrt{N}}}{K\sqrt{K}(1-\Lambda^{(k)})} 
     \left(e^{(k)}_{\mathsf{avg}}\right)\left(e^{(k)}_{\mathsf{sum}}\right)^{-1/2}\sum_{n\in \mathcal{N}} \frac{\widehat{D}_n^{(k)}}{\left({D}^{(k)}\right)^2 e^{(k)}_n} \widehat{Z}_n^{(k)}
    \nonumber \\[-.2em]&
     \geq  \widehat{W}(\bm{x};\ell)\triangleq \left(\frac{\chi^{(k)}  {W}([\bm{x}]^{\ell-1})    }{[\chi^{(k)}]^{\ell-1} } \right)^{\frac{[\chi^{(k)}]^{\ell-1}}{{W}([\bm{x}]^{\ell-1})   }}\times \prod_{n\in\mathcal{N}}  \prod_{q=1}^{2} \left(\frac{\delta_{q}(\bm{x},n)  {W}([\bm{x}]^{\ell-1})    }{\delta_{q}([\bm{x}]^{\ell-1},n) } \right)^{\frac{\delta_{q}([\bm{x}]^{\ell-1},n)}{{W}([\bm{x}]^{\ell-1})   }},\label{eq:posObj}
     \end{align}
     }\vspace{-4mm}
     {\footnotesize
     \begin{equation}
      \delta_{1}(\bm{x},n)= \frac{1 } {(1-\Lambda^{(k)})}\frac{8\beta^2\Theta^2 \alpha^2 N }{K^2} \frac{e_n^{(k)}}{e^{(k)}_{\mathsf{sum}}{D}^{(k)} } \widehat{Z}_n^{(k)},~~
     \delta_{2}(\bm{x},n)= \frac{4 \Theta^2 {\beta \alpha \sqrt{N}}}{K\sqrt{K}(1-\Lambda^{(k)})}
 \frac{e^{(k)}_{\mathsf{avg}} \widehat{D}_n^{(k)}}{  \sqrt{e^{(k)}_{\mathsf{sum}}}\left({D}^{(k)}\right)^2 e^{(k)}_n} \widehat{Z}_n^{(k)},~\widehat{Z}^{(k)}_n=\sum_{j=1}^{S^{(k)}_n}{s}_{n,j} {\left(\widetilde{\sigma}^{(k)}_{n,j}\right)^2}\nonumber 
 \vspace{-2mm}
     \end{equation}
     }
            \hrulefill
     \end{minipage}
     \vspace{-2mm}
     \end{table*}
   We next tackle the complex term~$\Xi$ in the objective function of~$\bm{\mathcal{P}}$, i.e.,~\eqref{eq:gen_conv_neyman_main}. In~\eqref{eq:gen_conv_neyman_main}, we first upper bound  $F(\mathbf{w}^{(0)})-F^{{(K)}^\star}$ (inside the term in the first line) with $F(\mathbf{w}^{(0)})$ and $e^{(k)}_n-1$ with $e^{(k)}_n$ (inside the term in the second line), and $e^{(k)}_{\mathsf{max}}-1$ with $e^{(k)}_{\mathsf{max}}$ (inside the term in the third line) since these do not impose a notable difference in the optimization solution.  Also, to have a tractable solution, we assume that (i) the relative size of the strata to the size of the local dataset is upper bounded throughout the learning period, and let $s_{n,j}\leq 1$ denote the upper bound of the relative size of  stratum $\mathcal{S}^{(k)}_{n,j}$ to the local dataset, i.e., ${{S}^{(k)}_{n,j}}/{\widehat{D}^{(k)}_{n}}\leq s_{n,j}, ~\forall k$, and (ii) the optimizer optimizes the ML bound for an upper bound on the variance of the local strata (i.e., $\widetilde{\sigma}_{n,j}^{(k)}$, $\forall k,n,j$, is upper bounded via historical data for the optimizer); however, during the PSL model training each node uses Lemma~\ref{lemma:trackMeanvar} to track the variance of its strata, based on which it conducts non-uniform data sampling according to the rule obtained in Proposition~\ref{prop:neyman}. These upper bounds and assumptions are inherently assumed in Proposition~\ref{propKKT2}. 
     Note that in~\eqref{eq:gen_conv_neyman_main}, all the terms contain the summation over global aggregation index $k$, i.e., $\sum_{k=1}^{K}$, expect the first term (term $(a)$), which in turn can be upper bounded as {\small $\sum_{k=1}^{K} \frac{2\sqrt{\widehat{e}_{\mathsf{max}}} {F}(\mathbf{w}^{(0)}) }{\alpha \overline{e}_{\mathsf{min}}K\sqrt{K}(1-\Lambda_{\mathsf{max}})}$}. We thus consider $\Xi$ in~\eqref{eq:gen_conv_neyman_main} as {\small $\Xi=\sum_{k=1}^{K}\sigma^{(k)}_+(\bm{x})-\sigma^{(k)}_{-,1}(\bm{x})-\sigma^{(k)}_{-,2}(\bm{x})$}, where {\small $\sigma^{(k)}_+(\bm{x})$} contains all the terms with positive coefficients, while {\small $\sigma^{(k)}_{-,1}(\bm{x})$, $\sigma^{(k)}_{-,2}(\bm{x})$} are two terms with negative sign (terms $(b)$ and $(c)$ and their coefficients). 
   We next replace the term {\small $\gamma \Xi $} in the objective function of~$\bm{\mathcal{P}}$ with {\small $\gamma \sum_{k=0}^{K-1} \chi^{(k)}$}, which auxiliary variable {\small $\chi^{(k)}$} is the upperbound of summand in {\small $\Xi$}, which we  add it to the constraints as  {\small $\sigma^{(k)}_+(\bm{x})-\sigma^{(k)}_{-,1}(\bm{x})-\sigma^{(k)}_{-,2}(\bm{x}) \leq \chi^{(k)}$, $\forall k$}.
   Now we focus on this constraint, which can be written as follows:
   \vspace{-1.5mm}
     \begin{equation}\label{eq:finalObjFracMain}
  {\sigma^{(k)}_+(\bm{x})}\Big/\left({\chi^{(k)}+\sigma^{(k)}_{-,1}(\bm{x})+\sigma^{(k)}_{-,2}(\bm{x}) }\right) \leq 1.
  \vspace{-1.75mm}
     \end{equation}
 Considering the fraction in~\eqref{eq:finalObjFracMain}, all the terms encapsulated in $\sigma^{(k)}_+(\bm{x})$ are posynomials, making its numerator a posynomial. However, its denominator is also a posynomial, making it a non-posynomial fraction.  We thus focus on the approximation of the denominator with a monomial, for which we exploit Lemma~\ref{Lemma:ArethmaticGeometric} (see~Appendix~\ref{app:optTransform} for the detailed steps), the result of which is given by~\eqref{eq:posObj}. After the conducted approximations, we obtain a problem in which the objective function is a posynomial and all the constraints are inequalities on posynomials admitting the GP format (see the problem in~\eqref{prob:INITIAL}-\eqref{prob:FINAL} in Appendix~\ref{app:optTransform}). It is easy to verify that with a logarithmic change of the optimization variables, the problem then becomes a convex optimization, which is easy to solve using existing software. We provide the pseudo-code of our optimization solver in Algorithm~\ref{alg:cent}. Our optimization solver, for each given $K$, optimizes over the variable set {\small $\big\{\mathbf{e}^{(k)}\hspace{-.5mm},\mathbf{f}^{(k)},\mathbf{B}^{(k)}\hspace{-.5mm},\bm{\varrho}^{(k)}\hspace{-.5mm}, \bm{\varphi}^{(k)}\hspace{-.5mm},T^{\mathsf{D},(k)}\hspace{-.5mm},T^{\mathsf{L},(k)}\hspace{-.5mm},T^{\mathsf{M},(k)}\hspace{-.5mm},T^{\mathsf{U},(k)}\hspace{-.5mm},\Omega^{(k)}\big\}_{k=1}^{K}$}, which leads to a space complexity of $\mathcal{O}(3|\mathcal{N}|\times K + 2|\mathcal{N}|^2\times K + 4\times K)$. Specifically, to solve the convex optimization problem obtained in Alg. 1, gradient descent using CVXPY~\cite{diamond2016cvxpy} is conducted over this set of variables. We next show the convergence guarantees of our solver.

 \begin{figure}[t]
\vspace{-.5mm}
\centering
\includegraphics[width=.40\textwidth]{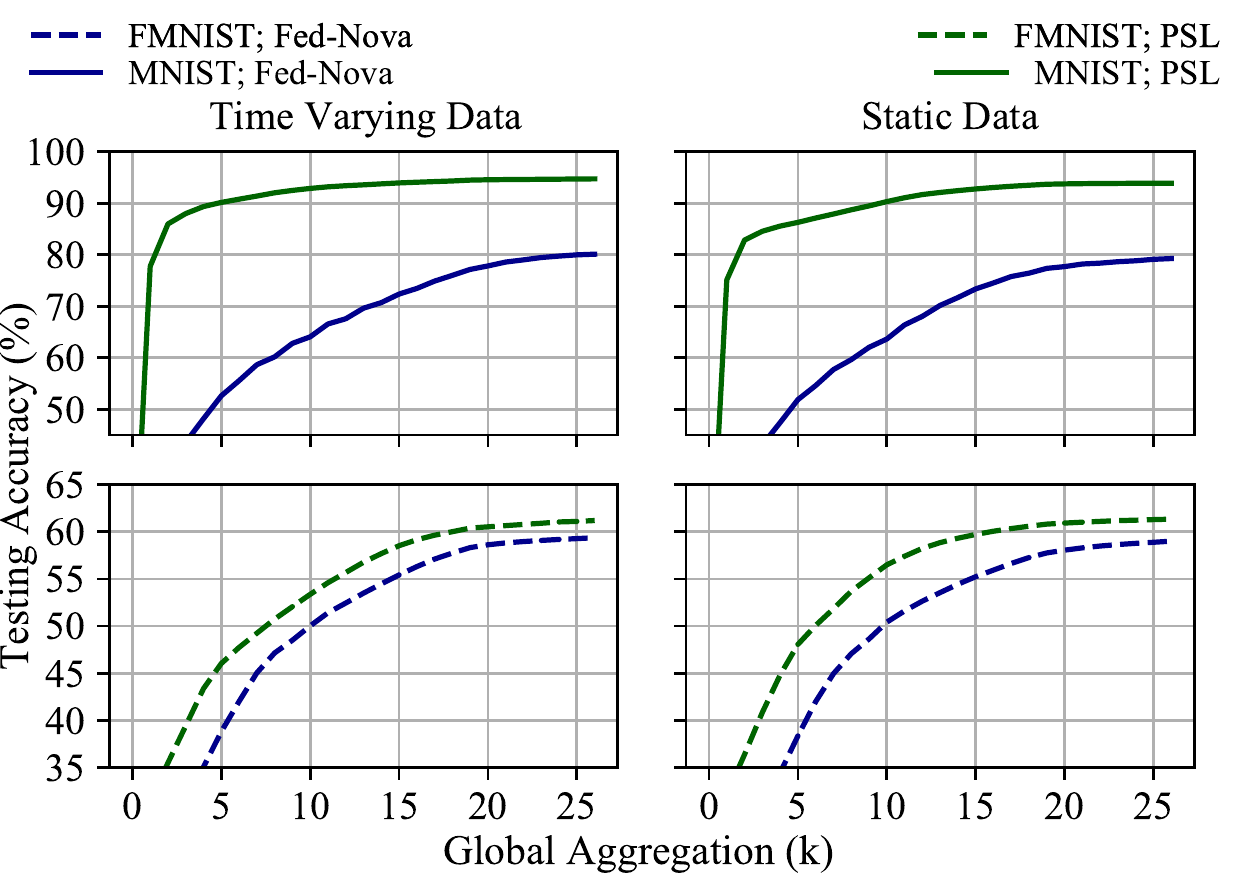}
\vspace{-1.8mm}
\caption{Accuracy obtained using PSL vs. the baseline method (Fed-Nova~\cite{wang2020tackling}). Left subplot: time varying local datasets across the global aggregations. Right subplot: static local datasets across the global aggregations. The results are obtained using moving average with window size of $10$.}
\label{fig:mvavg_comparison}
\vspace{-.1mm}
\end{figure}

    
        \vspace{-2mm}
        \begin{proposition}[Convergence of the Optimization Solver]\label{propKKT2}
        For each $K$, Algorithm~\ref{alg:cent} generates a sequence of
 solutions for $\bm{\mathcal{P}}'$ using the approximations~\eqref{approx:H}-\eqref{approx:V},\eqref{eq:posObj} that converge to $\bm{x}^\star_K$
satisfying the Karush–Kuhn–Tucker (KKT) conditions of $\bm{\mathcal{P}}$.
        \end{proposition}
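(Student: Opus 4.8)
The plan is to recognize Algorithm~\ref{alg:cent} as an instance of the \emph{successive inner approximation} (condensation) method for signomial programs, and to establish convergence to a KKT point by verifying that each monomial surrogate used at iteration $\ell$ satisfies the three classical conditions of Marks and Wright. Concretely, writing a generic approximated posynomial as $g(\bm{x})$ and its monomial surrogate expanded about the previous iterate $\bm{x}^{[\ell-1]}$ as $\widehat{g}(\bm{x};\ell)$, I would verify: (i) $\widehat{g}(\bm{x};\ell)\leq g(\bm{x})$ for all feasible $\bm{x}$; (ii) $\widehat{g}(\bm{x}^{[\ell-1]};\ell)=g(\bm{x}^{[\ell-1]})$; and (iii) $\nabla\widehat{g}(\bm{x}^{[\ell-1]};\ell)=\nabla g(\bm{x}^{[\ell-1]})$. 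Property (i) is immediate from Lemma~\ref{Lemma:ArethmaticGeometric}, the arithmetic--geometric mean inequality, applied to each of~\eqref{approx:H}--\eqref{approx:V} and to the denominator posynomial condensed in~\eqref{eq:posObj}; since each surrogate lower-bounds the true posynomial that appears in a denominator of a (c-ii)-type constraint, the approximate feasible set is an inner approximation of that of $\bm{\mathcal{P}}$.

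For property (ii), I would substitute $\bm{z}=\bm{x}^{[\ell-1]}$ into the AGM bound: because the exponents $\alpha_i(\bm{z})=u_i(\bm{z})/g(\bm{z})$ sum to one, the product $\prod_i\left(u_i(\bm{z})/\alpha_i(\bm{z})\right)^{\alpha_i(\bm{z})}$ collapses to $g(\bm{z})^{\sum_i\alpha_i(\bm{z})}=g(\bm{z})$, so the surrogate is tight at the expansion point. Property (iii) is the first-order tangency claim already asserted in the text; I would confirm it by differentiating the logarithms of the two forms and matching term by term, which reduces to the identity $\partial_j\log\widehat{g}=\sum_i\alpha_i(\bm{z})\,\partial_j\log u_i=\partial_j\log g$ evaluated at $\bm{z}$, a direct consequence of the same logarithmic change of variables under which the GP becomes convex.

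Having established (i)--(iii), I would invoke the inner-approximation convergence theorem. Because each subproblem is a GP that becomes convex after the log transform (line~\ref{Alg:Gpconvexste} of Algorithm~\ref{alg:cent}) and is solved to global optimality, the value/gradient matching in (ii)--(iii) yields a monotonically non-increasing objective along the iterates and feasibility of every $\bm{x}^{[\ell]}$ for $\bm{\mathcal{P}}$; standard compactness of the feasible set then guarantees a convergent subsequence, and the gradient-matching condition forces the stationarity system of the convexified $\bm{\mathcal{P}}'$ to coincide, at any limit point, with that of $\bm{\mathcal{P}}$. The penalty and auxiliary-variable devices are then dispatched separately: I would show that the multipliers $A\downarrow 1$ enforcing the posynomial equalities~\eqref{prob:Tml},~\eqref{prob:varrho},~\eqref{prob:varphi}, the slack $\vartheta\downarrow 0$ attached to~\eqref{eq:varphi1}--\eqref{eq:varphi2}, and the epigraph variables $\chi^{(k)}$ introduced via~\eqref{eq:finalObjFracMain} are each driven to their limits as the penalty coefficients grow, at which point the stationarity and complementary-slackness conditions of the penalized problem reduce exactly to the KKT conditions of $\bm{\mathcal{P}}$.

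The main obstacle I anticipate is property (iii) for the structurally heavier surrogates --- in particular~\eqref{approx:V}, whose exponents are themselves data-weighted ratios over the double index $(n,m)$, and~\eqref{eq:posObj}, where the posynomial being condensed is the denominator $\chi^{(k)}+\sigma^{(k)}_{-,1}(\bm{x})+\sigma^{(k)}_{-,2}(\bm{x})$ obtained by moving the \emph{signomial} bound~\eqref{eq:gen_conv_neyman_main} into the epigraph constraint~\eqref{eq:finalObjFracMain}. Verifying first-order tangency there, and simultaneously confirming that the negative-coefficient terms of~\eqref{eq:gen_conv_neyman_main} have genuinely been absorbed into a bona fide posynomial denominator so that the AGM step in Lemma~\ref{Lemma:ArethmaticGeometric} is legitimately applicable, is where the bookkeeping is most delicate; the remainder is the standard condensation-method argument once (i)--(iii) are in hand.
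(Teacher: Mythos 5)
Your proposal follows essentially the same route as the paper's own proof: the paper likewise observes that $\bm{\mathcal{P}}'$ is an inner approximation of $\bm{\mathcal{P}}$ and reduces convergence to checking the three conditions of the general inner-approximation framework (lower-bounding surrogate, tightness, and gradient matching at the expansion point, all supplied by the arithmetic--geometric mean inequality of Lemma~\ref{Lemma:ArethmaticGeometric}), citing standard signomial-programming arguments for the verification it omits. Your sketch simply fills in those checks explicitly --- including the penalty/auxiliary-variable bookkeeping --- so it is a correct and somewhat more detailed rendering of the same argument.
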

         \vspace{-4.2mm}
      \begin{proof}
     It can be shown that~$\bm{\mathcal{P}'}$ (given by~\eqref{prob:INITIAL}-\eqref{prob:FINAL}) is an \textit{inner approximation}~\cite{GeneralInnerApp} of~$\bm{\mathcal{P}}$. Thus, it is sufficient to examine the three characteristics mentioned in~\cite{GeneralInnerApp} for Algorithm~\ref{alg:cent} to prove the convergence, which can be done using the methods in~\cite{SignomialGlobal,wang2021uav} and omitted for brevity.
      \end{proof}
      \vspace{-3mm}

\vspace{-3mm}
\section{Simulation Results}
\noindent We next evaluate the effectiveness of PSL.
We incorporate the effect of fading in {\small $h^{(k)}_{n}$},  {\small$h^{(k)}_{n,m}$} (in~\eqref{rateU} and~\eqref{rateD}). For the uplink channel {\small
$
     h^{(k)}_{n}= \sqrt{\beta_{n}^{(k)}} u_{n}^{(k)},
$}
 where {\small$ u_{n}^{(k)} \sim \mathcal{CN}(0,1)$} captures Rayleigh fading, and {\small
$
    \beta_{n}^{(k)} = \beta_0 - 10\widetilde{\alpha}\log_{10}(d^{(k)}_{n}/d_0)
$}~\cite{tse2005fundamentals}.
Here, {\small$\beta_0=-30$}dB, $d_0=1$m, {\small$\widetilde{\alpha}=3$}, and $d^{(k)}_{n}$ is the instantaneous distance between node $n$ and the BS. We use the same formula to describe the D2D channels but choose $\widetilde{\alpha}=3.2$ since D2D are more prone to excessive loss. Channels are realized with coherence time of $50$ ms. The devices are randomly placed in a circle area with radius of $25$m with a BS in the center. Our simulations were implemented using Pytorch~\cite{paszke2019pytorch} and run on three Nvidia Tesla V100 GPUs with 32 GB VRAM each. 
We used CVXPY~\cite{diamond2016cvxpy} to obtain the solutions of the convex problems obtained through Alg.~\ref{alg:cent}.
\vspace{-5mm}
\subsection{Dynamic Model Training Under PSL: Proof-of-Concept}
\vspace{-0.5mm}
 We compare the ML performance of PSL against the baseline method Fed-Nova~\cite{wang2020tackling}, a state-of-the-art FedL method that also accounts for varying SGD iterations across the devices. Fed-Nova aims to obtain unbiased global models for FedL by normalizing devices' received gradients at the server with respect to their number of conducted local SGD iterations, which is shown to significantly  outperform existing FedL methods including FedAvg, FedProx, and VRLSGD~\cite{wang2020tackling}.
We consider classification tasks over MNIST~\cite{MNIST} and Fashion-MNIST \cite{xiao2017} datasets for a network of $10$ devices in Fig.~\ref{fig:mvavg_comparison} and Table~\ref{tab:poc_energy_time}.\footnote{The results in Fig.~\ref{fig:mvavg_comparison} and Table~\ref{tab:poc_energy_time} are the averaged results obtained from $10$ Monte-Carlo iterations of independent network realizations.}
Both 
datasets consist of $60$K training samples and $10$k testing samples, where each datum belongs to one of $10$ labels. 
We consider both time-varying and static device datasets. In the dynamic case, devices obtain a \textit{new} dataset of size drawn from a normal distribution $\mathcal{N}(1000,125)$ after each global aggregation. In the static case, devices use a fixed dataset with size  drawn from $\mathcal{N}(1000,125)$. Device datasets are only composed of data from $3$ distinct labels. To have a fair comparison, we isolate the ML performance and consider both methods with \textit{no data offloading}. The maximum and minimum number of local SGDs are considered $25$ and $1$ respectively.
Fig.~\ref{fig:mvavg_comparison} shows that PSL outperforms the baseline method in both cases owed to its local data management and non-uniform data sampling. We quantify the corresponding resource savings of PSL in Table~\ref{tab:poc_energy_time}. 
 The jumps in energy consumption when moving from $50$\% to $60$\% accuracy on FMNIST and $70$\% to $80$\% accuracy on MNIST are due to the natural saturation in training improvement of the federated ML methods upon reaching higher accuracies. 
For example, to improve from $40$\% to $50$\% accuracy on FMNIST, PSL
uses $3$ additional aggregations while Fed-Nova requires $5$ additional aggregations. Meanwhile, to
improve from $50$\% to $60$\% classification accuracy on FMNIST, PSL needs $10$ further aggregations while
Fed-Nova uses $16$ further aggregations.
\vspace{-2mm}





\begin{table}[t]
 \vspace{1mm}
\caption{PSL network savings vs the baseline method (Fed-Nova~\cite{wang2020tackling}). We measure network savings to reach accuracy thresholds by the combination of energy and device acquisition time (DAT).}
\vspace{-1.5mm}
\centering
\label{tab:poc_energy_time} 
{\footnotesize
\begin{tabularx}{0.48\textwidth}{m{0.5em} m{1.5em} m{5.2em} m{4.8em} m{5.2em} m{4.8em}} 
\toprule[.2em] 
& \multirow{2}{*}{\textbf{Acc}} & \multicolumn{2}{c}{\bf{Static Datasets}} & \multicolumn{2}{c}{\bf{Time Varying Datasets}} \\ 
\cmidrule(lr){3-4} \cmidrule{5-6} 
& & \textbf{Energy (kJ)} & \textbf{DAT (s)} & \textbf{Energy (kJ)} & \textbf{DAT (s)} \\ 
\hline 
\multirow{3}{*}{\rotatebox[origin=c]{90}{{\parbox[c]{7.8mm}{\notsotiny \bf{FMNIST}} }}} & 40\% & 11.7 (50\%) & 730 (50\%) & 11.7 (33\%) & 730 (33\%) \\
& 50\% & 23.3 (40\%) & 1461 (40\%) & 17.5 (30\%) & 1095 (30\%) \\
& 60\% & 64.1 (42\%) & 4017 (42\%) & 58.3 (38\%) & 3652 (38\%)\\
\hline 
\multirow{3}{*}{\rotatebox[origin=c]{90}{{\parbox[c]{6mm}{\notsotiny \bf{MNIST}} }}} & 60\% & 35.0 (86\%) & 2191 (86\%) & 35.0 (86\%) & 2191 (86\%) \\
& 70\% & 64.1 (85\%) & 4017 (85\%) & 64.1 (85\%) & 4017 (85\%) \\
& 80\% & 140 (92\%) & 8765 (92\%) & 117 (91\%) & 7304 (91\%) \\
\midrule
\end{tabularx}
}
\vspace{-1.5mm}
\end{table}

\vspace{-2mm}
\subsection{Network-Aware PSL: Ablation Study}
Direct examination of $\boldsymbol{\mathcal{P}}$ is difficult due to the complexity and entanglement of the optimization variables 
and elements of the bound in~\eqref{eq:gen_conv_neyman_main}.
As a result, we perform an ablation study -- systematically evaluating the impacts of important optimization and scaling variables in isolation -- to characterize $\boldsymbol{\mathcal{P}}$ in detail. We use the set of network characteristics in Table~\ref{tab:sim_network_params}.
 The results in Sections \ref{subsubsub:1}, \ref{subsubsub:3}, \ref{subsubsub:5}, \ref{subsubsub:7} are the averaged results over $10$ Monte-Carlo iterations of independent network realizations. Sections \ref{subsubsub:2}, \ref{subsubsub:4}, \ref{subsubsub:6} focus on showing discretized numerical values and, in order to preserve the key intuition behind the results, we show the result for a single network realization.


\subsubsection{Optimization Solver and Network Size}\label{subsubsub:1}
We first investigate the convergence of our optimization solver for various network sizes in Fig.~\ref{fig:test_obj_value}, where {\small $T^{\mathsf{ML}} = 1000$}s, {\small$K=2$}, and {\small$N\in\{5,10,15,20\}$}. As can be seen, larger number of devices leads to slower convergence but a better final solution since it causes (i) processing higher number of data points across the devices that leads to a better ML performance, and (ii) more efficient D2D data/model transfer opportunity.
Furthermore, Fig.~\ref{fig:test_obj_value} reveals the diminishing rewards of increasing the number of devices, where an initial increase from {\small $N=5$} to {\small $N=10$} results in a notable performance improvement; however, this effect is less notable as the number of devices increase.
This implies that engaging more devices may allow for more efficient model training but there is a point after which the network energy consumption (due to data processing and model aggregations) overshadows the  ML performance gains.

\begin{figure}[t]
\centering
\includegraphics[width=.45\textwidth]{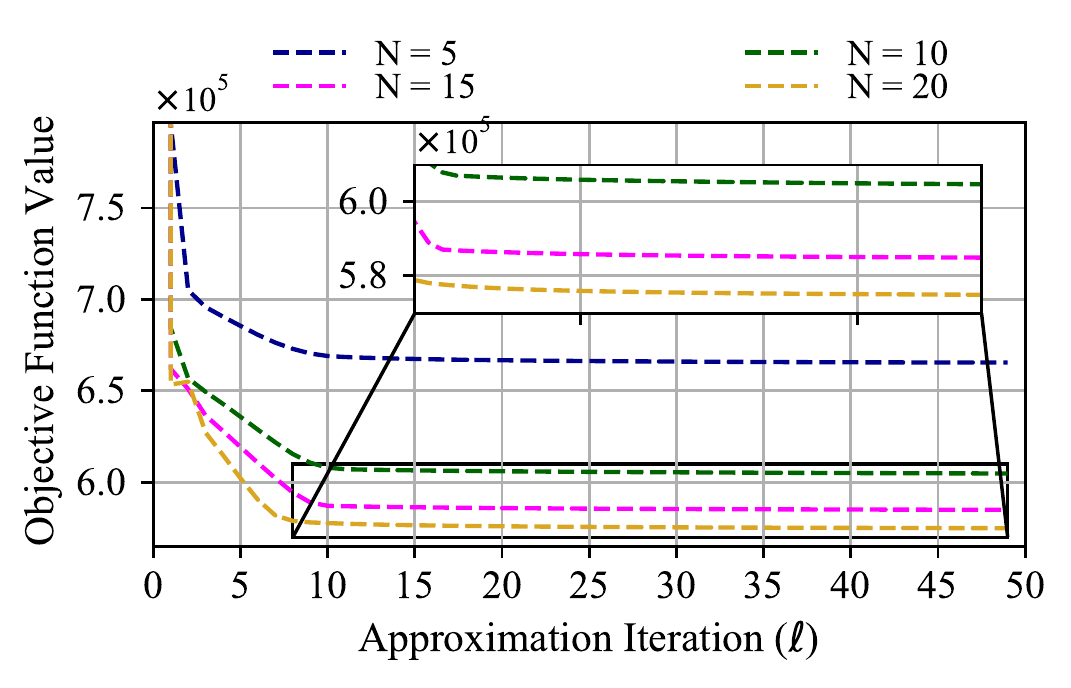}
\vspace{-3.3mm}
\caption{Convergence of the objective function of $\bm{\mathcal{P}}$ upon having varying number of devices $N$ for $T^{\mathsf{ML}}=1000s$, where the ML performance term in the objective function is prioritized over the energy and delay terms. For a fixed training period, PSL achieves a lower objective function value upon having higher number of devices due to achieving a lower ML loss value.}
\label{fig:test_obj_value}
\vspace{-1mm}
\end{figure}

\begin{figure}[t]
\vspace{-3mm}
\centering
\includegraphics[width=.48\textwidth]{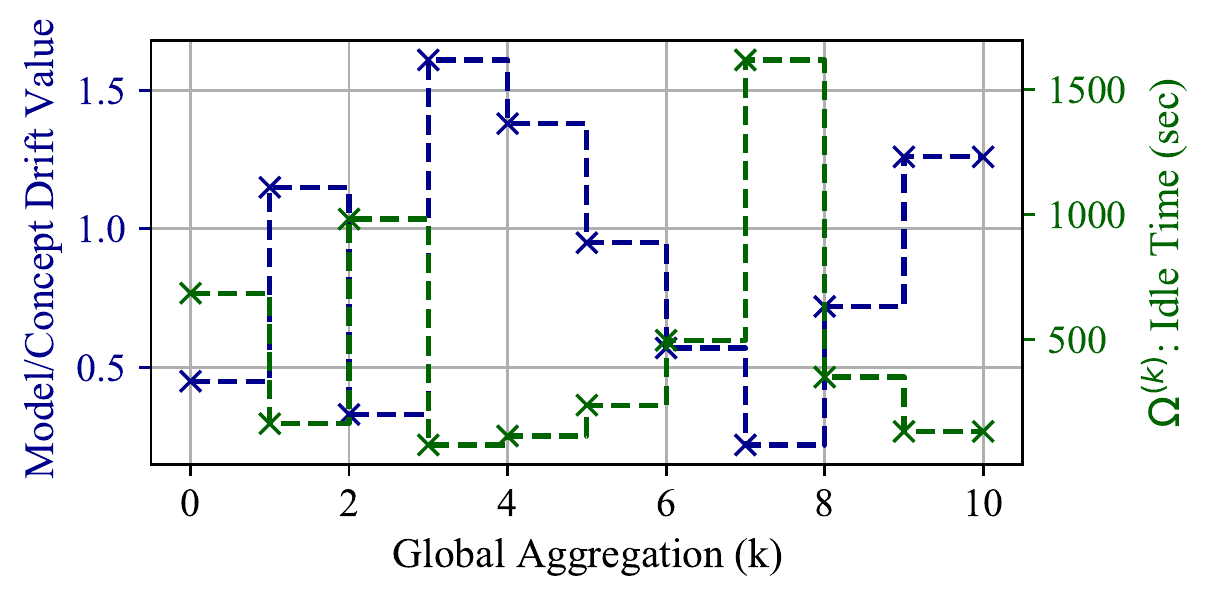}
\vspace{-3.4mm}
\caption{Demonstration of the variation of the idle time in between the consecutive global aggregations (right y-axis) for a given configuration of model/concept drift (left y-axis). The higher value of concept drift leads to smaller idle times in between the global aggregations, i.e., the global aggregations are triggered faster with higher concept drifts. }
\label{fig:MD_omega_K}
\vspace{-1mm}
\end{figure}

\begin{table}[!b]
\caption{Network characteristics used for ablation study. The experiments all use the network values herein, unless indicated otherwise.}
\vspace{-1.5mm}
\centering
\label{tab:sim_network_params} 
{\footnotesize
\begin{tabularx}{0.48\textwidth}{m{2.9em} m{5em} | m{2.5em} m{5.6em} | m{2.5em} m{5.1em}} 
\toprule[.2em]
\textbf{Param} & \textbf{Value} & \textbf{Param} & \textbf{Value} & \textbf{Param} & \textbf{Value} \\
\hline
$b^{\mathsf{D}}$ & $32\times 32\times 4$ & $p_n^{\mathsf{U}}$ & $250$mW & $\Delta$ & $0.1$  \\
$M \hspace{-0.5mm}\times\hspace{-0.5mm} b^{\mathsf{G}}$ & $72000$ & $p_n^{\mathsf{D}}$ & $100$mW & $\Lambda$  & $0.9$\\
$f^{\mathsf{max}}$ & $2.3$GHz & $N_0$ & $-174$dBm/Hz & $\alpha$ & $0.1$ \\
$f^{\mathsf{min}}$ & $100$KHz & $B^\mathsf{U}$ & $1$MHz & $\zeta_2$ & $1e-5$ \\
  $\alpha_n$& $2e-13$ & $B^\mathsf{D}$ & $100$kHz &  $\Theta$& $3$ \\
\midrule
\end{tabularx}
}
\vspace{-2mm}
\end{table}
\subsubsection{Model/Concept Drift vs. Idle Time}\label{subsubsub:2}
We investigate the effect of the model drift {\small $\Delta^{(k)}$} on the  system idle time in Fig.~\ref{fig:MD_omega_K}, where {\small$T^{\mathsf{ML}}=5000$}s, and {\small$K=10$}.
The figure shows that our solution promotes rapid global aggregations when the value of model drift increases. Further, when the value of concept drift is low, the global aggregations are carried out via higher idle times: since the data variations at devices is small, they can stay in idle state to save energy and device acquisition cost.


\begin{figure}[t]
\centering
\vspace{-5mm}
\includegraphics[width=.42\textwidth]{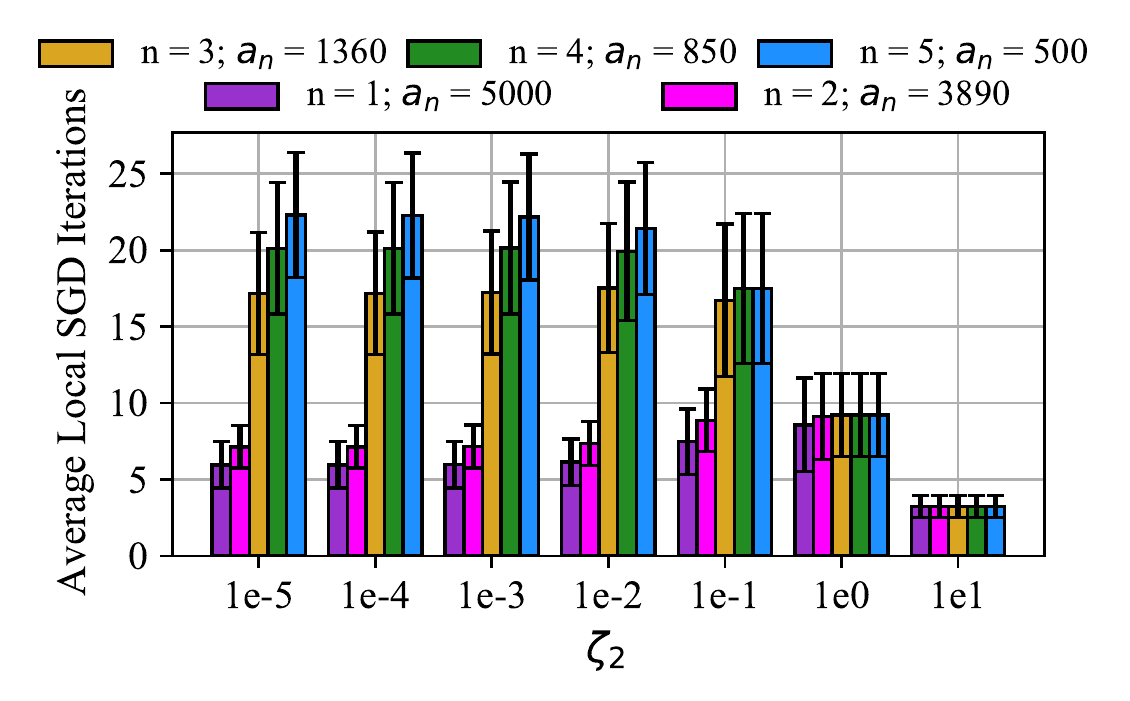}
\vspace{-3mm}
\caption{Average local SGD iterations for different devices through the model training period upon having different degree of model dissimilarity $\zeta_2$. Upon having smaller $\zeta_2$, the variance across the local number of SGD iterations across the devices is preferable, where the devices with lower processing cost have higher number of local SGD iterations. Upon increasing the model dissimilarity, both the variance and the mean of the SGD iterations across the devices is decreased. For larger values of $\zeta_2$, i.e., extreme non-iid data, all the devices conduct the same number of local SGD iterations. Each bar depicts the averaged result with standard deviations obtained over $10$ independent network realizations.}
\label{fig:zeta2_epochs}
\vspace{-2mm}
\end{figure}

\subsubsection{Model Dissimilarity vs. Local SGD Iterations}\label{subsubsub:3}
$\zeta_2$ in~\eqref{eq:gen_conv_neyman_main} quantifies data dissimilarity, where higher data dissimilarity increases the chance of local model bias with increased local SGD iterations. 
In Fig.~\ref{fig:zeta2_epochs}, we vary dissimilarity and depict the number of SGD iterations across the devices, where {\small$T^{\mathsf{ML}} = 100$}s. 
Each device $n$ requires $a_n$ CPU cycles to process each datum, thus a large $a_n$ indicates a higher data processing cost. Fig.~\ref{fig:zeta2_epochs} shows that when $\zeta_2$ is small, i.e., data is homogeneous across the devices, PSL maximizes ML performance by having more local SGD iterations at devices with small $a_n$ ({\small$n\in\{3,4,5\}$)}. 
As $\zeta_2$ increases, uneven local SGD iterations can favor local models at devices with higher SGD iterations, so PSL reduces the variance of the number SGD iterations among the devices. 


\begin{figure}[t]
\vspace{-.01mm}
\centering
\includegraphics[width=.48\textwidth]{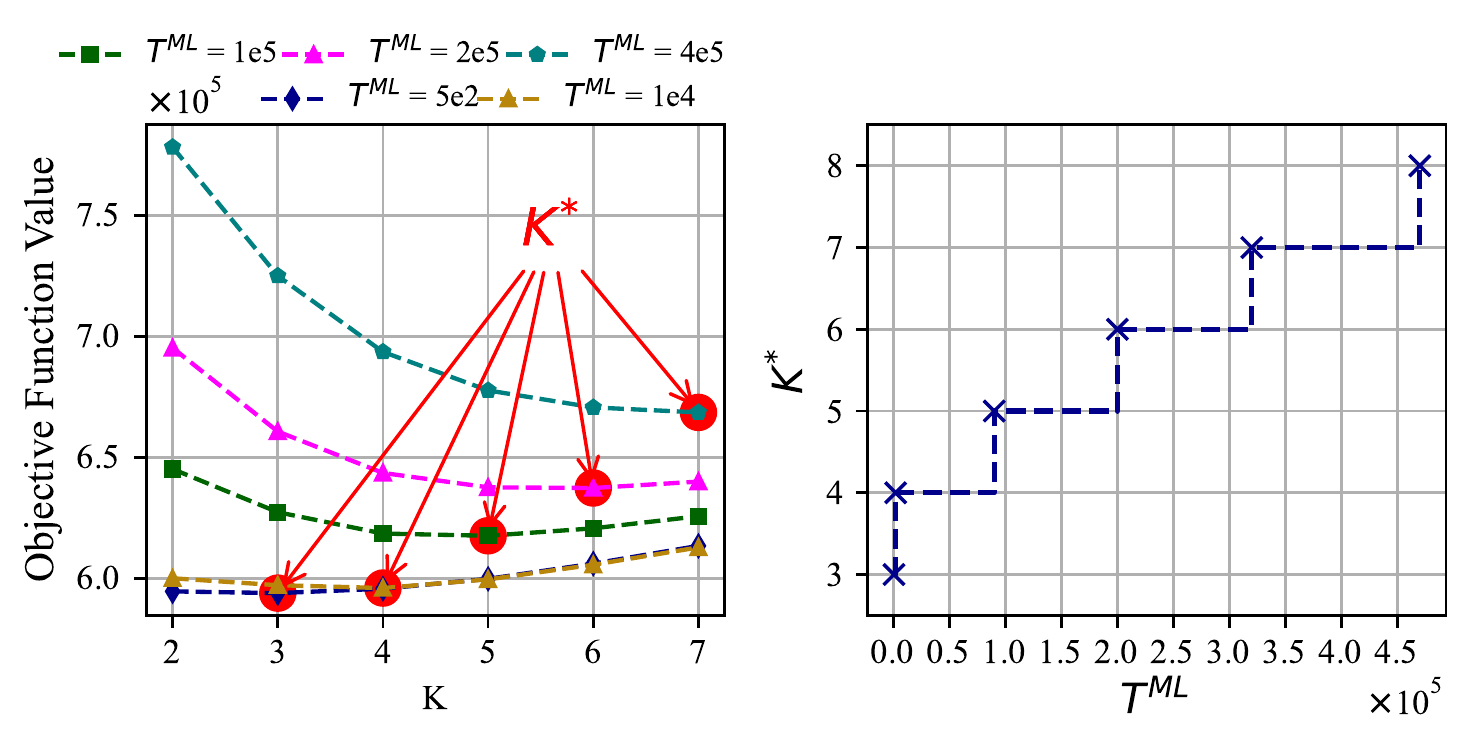}
\vspace{-3.4mm}
\caption{Optimal number of global aggregations $K^\star$ for different period of model training intervals $T^{\mathsf{ML}}$. Global aggregations trigger faster for a \textit{cold model}. During model training, the \textit{inertia} of the model increases and it becomes harder to trigger a new global aggregation once the model is \textit{warmed up}.}
\label{fig:TML_vs_K}
\vspace{-1mm}
\end{figure}

\vspace{-.1mm}
\subsubsection{Cold vs. Warmed Up Model, and Model Inertia}\label{subsubsub:4}
Model training interval {\small $T^{\mathsf{ML}}$} limits the time for all aspects of PSL (i.e., data/model transfer, local processing, and uplink transmissions). In Fig.~\ref{fig:TML_vs_K}, we depict the optimal number of global aggregations for a wide range of {\small$T^{\mathsf{ML}}$} values.
The left subplot shows the objective function value as a function of the number of global aggregations {\small$K$}. It can be seen that increasing the number of global aggregations {\small$K$} is not suitable for all scenarios. Large {\small$K$} implies that the system must spend more \textit{time} on the data and model transmission processes and therefore has less time to run model training. As a result, for smaller {\small$T^{\mathsf{ML}}$}, {\small$K^\star$} is smaller, and {\small$K^\star$} increases as {\small$T^{\mathsf{ML}}$} increases.
In the right subplot of  Fig.~\ref{fig:TML_vs_K}, we analyze the relationship between $T^{\mathsf{ML}}$ and $K^\star$.
Initially, i.e., having a cold model, $K^\star$ increases rapidly as a function of $T^{\mathsf{ML}}$, however, as the model gets warmed up increasing $K^\star$ calls for larger increments in $T^{\mathsf{ML}}$ which signifies the model inertia (i.e., for a warmed up model, to trigger a new model training round larger data variations are needed so that ML model gains outweighs the network costs). 

We next sequentially focus on the scaling of energy, acquisition time, and ML bound terms in problem~$\bm{\mathcal{P}}$.
\begin{figure}[t]
\centering
\vspace{-8mm}
\includegraphics[width=.45\textwidth]{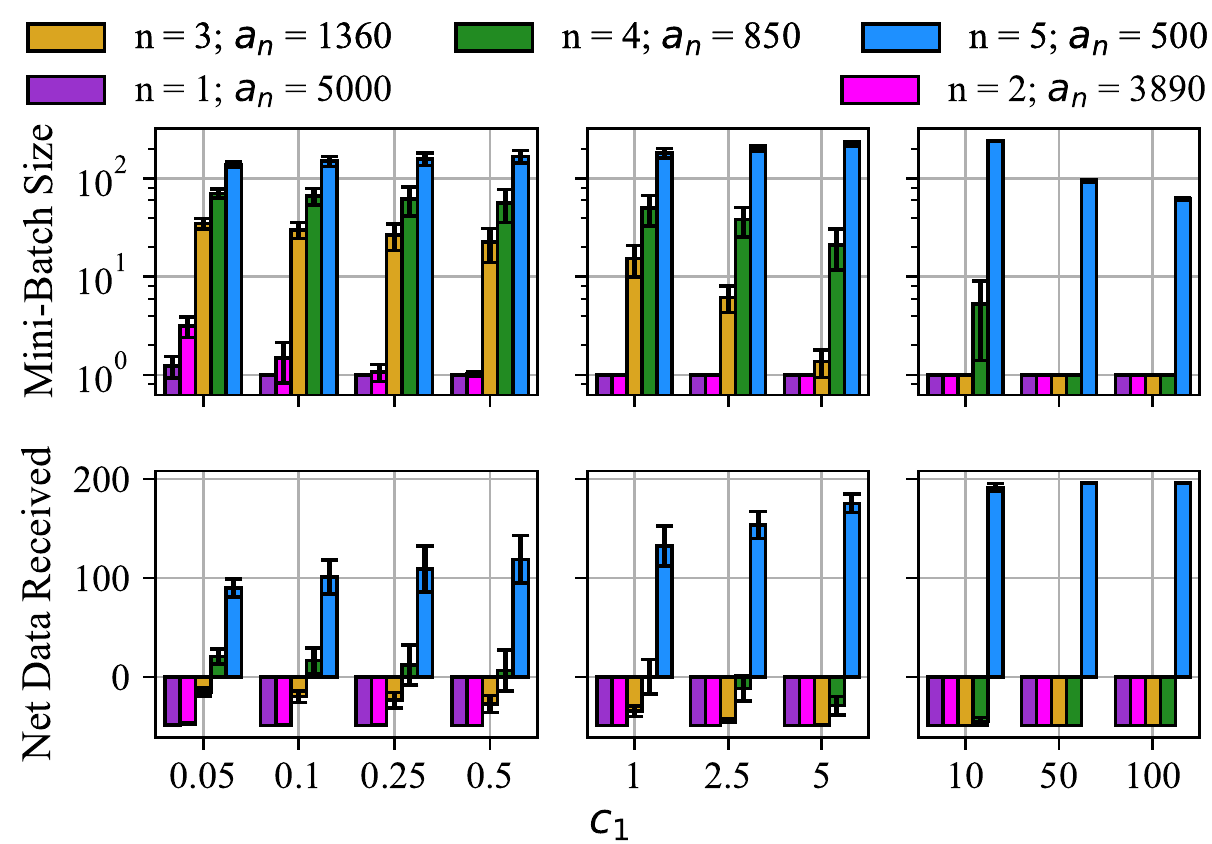}
\vspace{-2.5mm}
\caption{Top row: the average mini-batch size through model training period.  Bottom row: the net number of received data at the devices (negative values imply data offloading) through model training period. Different ranges of energy importance $c_1$ in $\bm{\mathcal{P}}$ are considered. 
The devices with higher processing costs have smaller mini-batch sizes and offload their data. 
As $c_1$ increases, the system offloads the data of devices with high processing costs, and, to compensate for the impact on the ML loss, the mini-batch at the device with the cheapest processing cost ($n=5$) is increased. 
The right subplot shows the extreme case, where the importance of the energy overshadows the ML performance and the mini-batch size even at the cheapest device decreases.  Each bar depicts the averaged result with standard deviations obtained over $10$ independent network realizations.
}
\label{fig:c1_minibatch_netd}
\vspace{-1.2mm}
\end{figure}

\vspace{-.2mm}
\subsubsection{Energy Scaling in Optimization Objective}\label{subsubsub:5}
$c_1$ in the objective of~$\bm{\mathcal{P}}$ controls  the importance of the energy components. 
To demonstrate the effect of $c_1$, we focus on the data processing (measured via mini-batch size) and offloading (measured via net data received) in Fig.~\ref{fig:c1_minibatch_netd}, where {\small $T^{\mathsf{ML}} = 100$}s and {\small$\zeta_2 = 1e-3$}. 
In the first column of Fig.~\ref{fig:c1_minibatch_netd}, $c_1$ increases while is still small, which leads to less data processing at devices with high processing cost (i.e., $n=2$) and more data offloading from them. The middle column accelerates the increment of $c_1$, showing rapid increase of data reception and processing at device $n=5$. Here, even lower processing cost devices ($n=4$) begin offloading data and processing less data. In the last column, data processing becomes almost prohibitive, and consequently, even device $n=5$ processes only a subset of its data. Even though device $n=5$ processes only a subset of its data, it is still beneficial for all devices to transfer data to device $n=5$ in order to reduce their SGD noise (they choose mini-batch sizes of $1$ so lower number of local data implies a lower SGD noise) and thereby reduce the ML term $\Xi$.

\subsubsection{Temporal Importance in Optimization Objective}\label{subsubsub:6}
The cost of device acquisition, $c_2$, determines the balance of device acquisition and idle times. We investigate its influence in Fig.~\ref{fig:idle_vs_active}, where {\small $T^{\mathsf{ML}} = 1000$}s.
It can be seen that a decrease in the device acquisition/active times corresponds to an increase in the idle times. This trade-off occurs in three regimes, segmented into three subplots. 
Initially, increasing $c_2$ has no effect until a network dependent threshold (in this case $c_2 = 40$) is reached.
After this threshold, we enter a regime in which incremental increase of $c_2$ results in a notable increase in idle times and a proportional decrease in active times. 
Finally, in the third regime, the marginal reward of reducing device acquisition, which translates to processing less data in the ML bound, become prominent. To outweigh this effect, $c_2$ must increase by orders of magnitude to reduce the system active time.


\subsubsection{ML Performance in Optimization Objective}\label{subsubsub:7}
Finally, we characterize the ML model training importance $c_3$ by its effect on the devices mini-batch size in Fig.~\ref{fig:c3_minib} for {\small $T^{\mathsf{ML}} = 1000$s}. With  increasing $c_3$, the importance of the model performance increases, and the solution prioritizes the third term in the objective by increasing the mini-batch size at the cost-efficient devices: we see that $n=5$ begins processing more data, and, as $c_3$ increases, other devices also begin processing more data based on a combination of data offloading and processing cost.


\begin{figure}[t]
\vspace{-7mm}
\centering
\includegraphics[width=.47\textwidth]{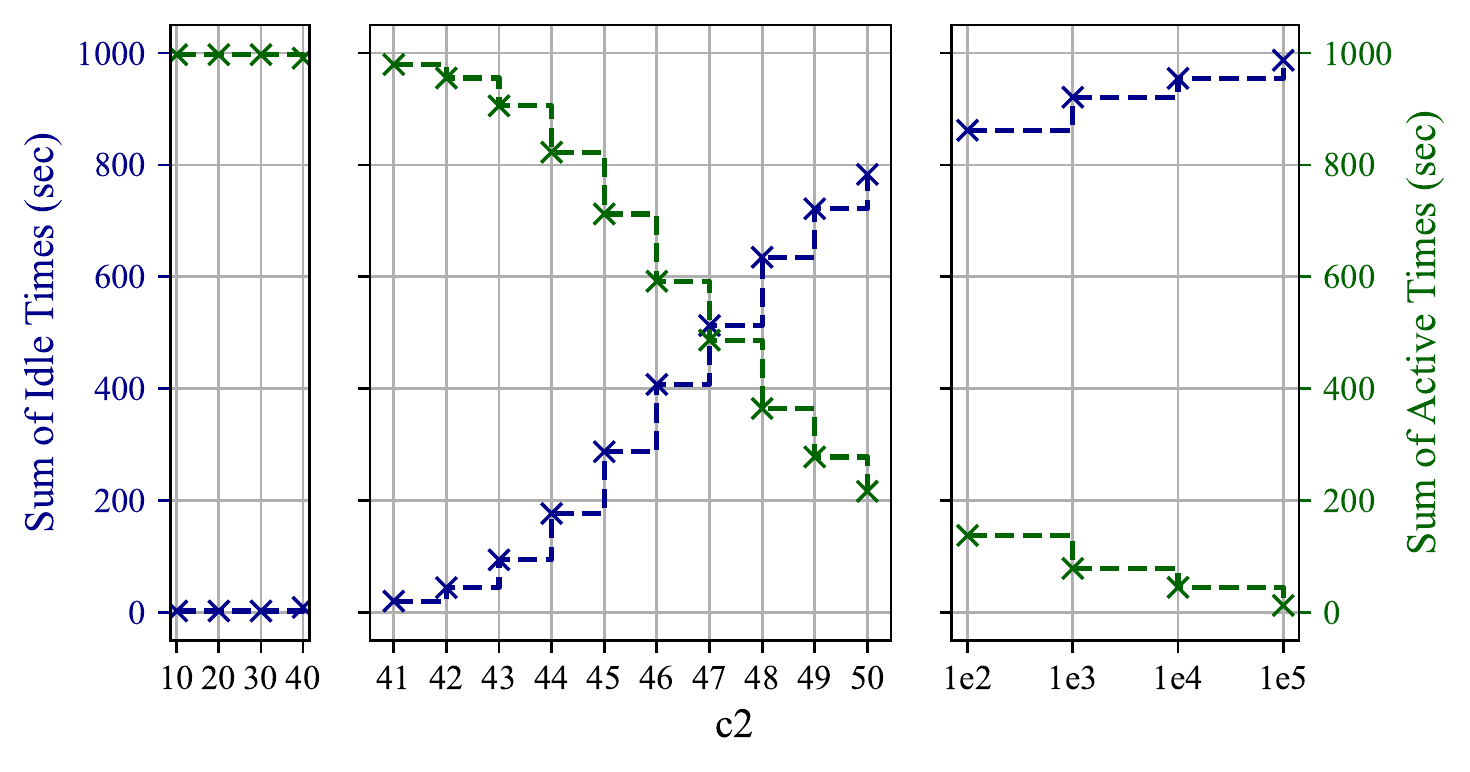}
\vspace{-4mm}
\caption{Device acquisition time (right y-axis) and device idle times (left y-axis) for varying values of cost of device utilization $c_2$ in~$\bm{\mathcal{P}}$. As $c_2$ increases the device utilization time decreases and devices remain idle for longer times.}
\label{fig:idle_vs_active}
\vspace{-2mm}
\end{figure}

\begin{figure}[t]
\centering
\includegraphics[width=.45\textwidth]{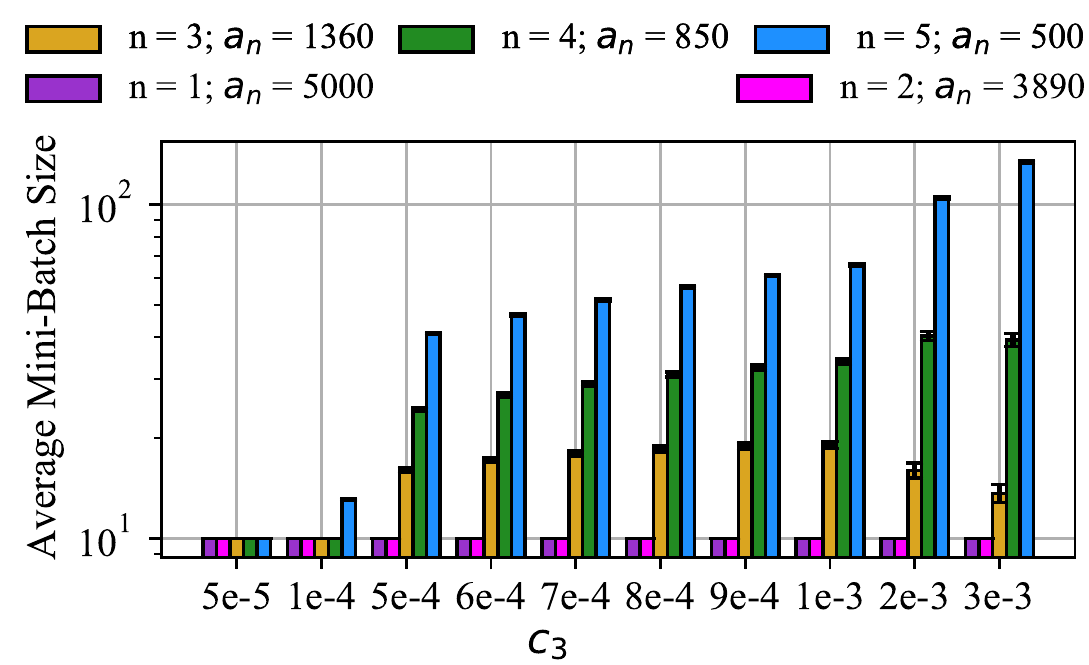}
\vspace{-2mm}
\caption{Average mini-batch size of the devices for various values of ML model performance importance $c_3$ in the objective function of $\bm{\mathcal{P}}$. 
As $c_3$ increases, the devices start processing more data locally, where more data is being processed in the cheapest devices in terms of computation cost.  Each bar depicts the averaged result with standard deviations obtained over $10$ independent network realizations.}
\label{fig:c3_minib}
\vspace{-1mm}
\end{figure}

\vspace{-1mm}
\section{Conclusion}
\noindent We introduced dynamic distributed learning via PSL. PSL extends federated learning through network, heterogeneity, and proximity. 
Also, it considers a cooperative distributed learning paradigm via incorporating \textit{data dispersion} and \textit{model/gradient dispersion with local condensation} across the devices to battle the innate shortcomings of
federated learning. PSL further considers a realistic scenario in which the model training rounds are conducted with \textit{idle times} in between, during which the data evolves across the devices. We modeled the processes conducted during PSL, introduced a new definition for concept/model drift, obtained the convergence characteristics of PSL, and formulated the network-aware PSL problem. We then proposed a general optimization solver to solve the problem with convergence guarantee.
Multiple future work directions are also discussed in the paper.
\vspace{-3mm}

\bibliographystyle{IEEEtran}
\bibliography{PSL}

\begin{thebibliography}{10}
\providecommand{\url}[1]{#1}
\csname url@samestyle\endcsname
\providecommand{\newblock}{\relax}
\providecommand{\bibinfo}[2]{#2}
\providecommand{\BIBentrySTDinterwordspacing}{\spaceskip=0pt\relax}
\providecommand{\BIBentryALTinterwordstretchfactor}{4}
\providecommand{\BIBentryALTinterwordspacing}{\spaceskip=\fontdimen2\font plus
\BIBentryALTinterwordstretchfactor\fontdimen3\font minus
  \fontdimen4\font\relax}
\providecommand{\BIBforeignlanguage}[2]{{%
\expandafter\ifx\csname l@#1\endcsname\relax
\typeout{** WARNING: IEEEtran.bst: No hyphenation pattern has been}%
\typeout{** loaded for the language `#1'. Using the pattern for}%
\typeout{** the default language instead.}%
\else
\language=\csname l@#1\endcsname
\fi
#2}}
\providecommand{\BIBdecl}{\relax}
\BIBdecl

\bibitem{jiang2016machine}
C.~Jiang, H.~Zhang, Y.~Ren, Z.~Han, K.-C. Chen, and L.~Hanzo, ``Machine
  learning paradigms for next-generation wireless networks,'' \emph{IEEE
  Wireless Commu.}, vol.~24, no.~2, pp. 98--105, 2016.

\bibitem{8865093}
J.~{Park}, S.~{Samarakoon}, M.~{Bennis}, and M.~{Debbah}, ``Wireless network
  intelligence at the edge,'' \emph{Proc. IEEE}, vol. 107, no.~11, pp.
  2204--2239, 2019.

\bibitem{mcmahan2017communication}
B.~McMahan, E.~Moore, D.~Ramage, S.~Hampson, and B.~A. y~Arcas,
  ``Communication-efficient learning of deep networks from decentralized
  data,'' in \emph{Proc. Artif. Intell. Stat.}, 2017, pp. 1273--1282.

\bibitem{zhu2020toward}
G.~Zhu, D.~Liu, Y.~Du, C.~You, J.~Zhang, and K.~Huang, ``Toward an intelligent
  edge: wireless communication meets machine learning,'' \emph{IEEE Commun.
  Mag.}, vol.~58, no.~1, pp. 19--25, 2020.

\bibitem{chen2019joint}
M.~Chen, Z.~Yang, W.~Saad, C.~Yin, H.~V. Poor, and S.~Cui, ``A joint learning
  and communications framework for federated learning over wireless networks,''
  \emph{arXiv preprint arXiv:1909.07972}, 2019.

\bibitem{8737464}
N.~H. {Tran}, W.~{Bao}, A.~{Zomaya}, M.~N.~H. {Nguyen}, and C.~S. {Hong},
  ``Federated learning over wireless networks: Optimization model design and
  analysis,'' in \emph{Proc. IEEE Int. Conf. Comput. Commun. (INFOCOM)}, 2019,
  pp. 1387--1395.

\bibitem{8664630}
S.~{Wang}, T.~{Tuor}, T.~{Salonidis}, K.~K. {Leung}, C.~{Makaya}, T.~{He}, and
  K.~{Chan}, ``Adaptive federated learning in resource constrained edge
  computing systems,'' \emph{IEEE J. Sel. Areas Commun. (JSAC)}, vol.~37,
  no.~6, pp. 1205--1221, 2019.

\bibitem{8870236}
G.~{Zhu}, Y.~{Wang}, and K.~{Huang}, ``Broadband analog aggregation for
  low-latency federated edge learning,'' \emph{IEEE Trans. Wireless Commun.},
  vol.~19, no.~1, pp. 491--506, 2020.

\bibitem{9014530}
M.~M. {Amiri} and D.~{Gündüz}, ``Federated learning over wireless fading
  channels,'' \emph{IEEE Trans. Wireless Commun.}, vol.~19, pp. 3546--3557,
  2020.

\bibitem{9054168}
N.~{Shlezinger}, M.~{Chen}, Y.~C. {Eldar}, H.~V. {Poor}, and S.~{Cui},
  ``Federated learning with quantization constraints,'' in \emph{Proc. IEEE
  Int. Conf. Acoustics, Speech, Signal Process. (ICASSP)}, 2020, pp.
  8851--8855.

\bibitem{renggli2019sparcml}
C.~Renggli, S.~Ashkboos, M.~Aghagolzadeh, D.~Alistarh, and T.~Hoefler,
  ``Spar{CML}: High-performance sparse communication for machine learning,'' in
  \emph{Conf. High Perf. Comput. Netw. Stor. Anal.}, 2019, pp. 1--15.

\bibitem{9261995}
C.~T. Dinh, N.~H. Tran, M.~N.~H. Nguyen, C.~S. Hong, W.~Bao, A.~Y. Zomaya, and
  V.~Gramoli, ``Federated learning over wireless networks: Convergence analysis
  and resource allocation,'' \emph{IEEE/ACM Trans. Netw.}, vol.~29, no.~1, pp.
  398--409, 2021.

\bibitem{9024521}
S.~{Dhakal}, S.~{Prakash}, Y.~{Yona}, S.~{Talwar}, and N.~{Himayat}, ``Coded
  federated learning,'' in \emph{IEEE Glob. Commun. Conf.}, 2019, pp. 1--6.

\bibitem{wang2021network}
S.~Wang, Y.~Ruan, Y.~Tu, S.~Wagle, C.~G. Brinton, and C.~Joe-Wong,
  ``Network-aware optimization of distributed learning for fog computing,''
  \emph{IEEE/ACM Trans. Netw.}, vol.~29, no.~5, pp. 2019--2032, 2021.

\bibitem{wang2018wide}
W.~Wang, Y.~Sun, B.~Eriksson, W.~Wang, and V.~Aggarwal, ``Wide compression:
  Tensor ring nets,'' in \emph{Proc. IEEE Conf. Comput. Vision Pattern Recog.
  (CVPR)}, 2018, pp. 9329--9338.

\bibitem{ji2020dynamic}
S.~Ji, W.~Jiang, A.~Walid, and X.~Li, ``Dynamic sampling and selective masking
  for communication-efficient federated learning,'' \emph{arXiv preprint
  arXiv:2003.09603}, 2020.

\bibitem{TobeAdded4}
L.~Liu, J.~Zhang, S.~Song, and K.~B. Letaief, ``Client-edge-cloud hierarchical
  federated learning,'' in \emph{Proc. IEEE Int. Conf. Commun. (ICC)}, 2020,
  pp. 1--6.

\bibitem{8950073}
S.~{Savazzi}, M.~{Nicoli}, and V.~{Rampa}, ``Federated learning with
  cooperating devices: A consensus approach for massive {IoT} networks,''
  \emph{IEEE Internet Things J.}, vol.~7, no.~5, pp. 4641--4654, 2020.

\bibitem{9154332}
H.~{Xing}, O.~{Simeone}, and S.~{Bi}, ``Decentralized federated learning via
  {SGD} over wireless {D2D} networks,'' in \emph{IEEE Int. Workshop Signal
  Process. Adv. Wireless Commun.}, 2020, pp. 1--5.

\bibitem{hosseinalipour2020multi}
S.~Hosseinalipour, S.~S. Azam, C.~G. Brinton, N.~Michelusi, V.~Aggarwal, D.~J.
  Love, and H.~Dai, ``Multi-stage hybrid federated learning over large-scale
  {D2D}-enabled fog networks,'' \emph{IEEE/ACM Trans. Netw.}, pp. 1--16, 2022.

\bibitem{lin2021two}
F.~P.-C. Lin, S.~Hosseinalipour, S.~S. Azam, C.~G. Brinton, and N.~Michelusi,
  ``Semi-decentralized federated learning with cooperative {D2D} local model
  aggregations,'' \emph{IEEE J. Sel. Areas Commun.}, 2021.

\bibitem{bertran2019adversarially}
M.~Bertran, N.~Martinez, A.~Papadaki, Q.~Qiu, M.~Rodrigues, G.~Reeves, and
  G.~Sapiro, ``Adversarially learned representations for information
  obfuscation and inference,'' in \emph{Proc. Int. Conf. Machine Learn.
  (ICML)}, 2019, pp. 614--623.

\bibitem{azam2020towards}
S.~S. Azam, T.~Kim, S.~Hosseinalipour, C.~Joe-Wong, S.~Bagchi, and C.~Brinton,
  ``Can we generalize and distribute private representation learning?''
  \emph{Proc. 25th Int. Conf. Artif. Intell. Stat. (AISTATS)}, 2022.

\bibitem{9492062}
Z.~Ji, L.~Chen, N.~Zhao, Y.~Chen, G.~Wei, and F.~R. Yu, ``Computation
  offloading for edge-assisted federated learning,'' \emph{IEEE Trans. Veh.
  Technol.}, vol.~70, no.~9, pp. 9330--9344, 2021.

\bibitem{zhao2018federated}
Y.~Zhao, M.~Li, L.~Lai, N.~Suda, D.~Civin, and V.~Chandra, ``Federated learning
  with non-iid data,'' \emph{arXiv preprint arXiv:1806.00582}, 2018.

\bibitem{hosseinalipour2020federated}
S.~Hosseinalipour, C.~G. Brinton, V.~Aggarwal, H.~Dai, and M.~Chiang, ``From
  federated to fog learning: Distributed machine learning over heterogeneous
  wireless networks,'' \emph{IEEE Commun. Mag.}, vol.~58, no.~12, pp. 41--47,
  2020.

\bibitem{wang2021device}
S.~Wang, M.~Lee, S.~Hosseinalipour, R.~Morabito, M.~Chiang, and C.~G. Brinton,
  ``Device sampling for heterogeneous federated learning: Theory, algorithms,
  and implementation,'' in \emph{IEEE Conf. Comput. Commun. (INFOCOM)}, 2021,
  pp. 1--10.

\bibitem{6815897}
M.~N. {Tehrani}, M.~{Uysal}, and H.~{Yanikomeroglu}, ``Device-to-device
  communication in {5G} cellular networks: challenges, solutions, and future
  directions,'' \emph{IEEE Commun. Mag.}, vol.~52, no.~5, pp. 86--92, 2014.

\bibitem{9311931}
M.~Chen, H.~V. Poor, W.~Saad, and S.~Cui, ``Wireless communications for
  collaborative federated learning,'' \emph{IEEE Commun. Mag.}, vol.~58,
  no.~12, pp. 48--54, 2020.

\bibitem{abolhasan2004review}
M.~Abolhasan, T.~Wysocki, and E.~Dutkiewicz, ``A review of routing protocols
  for mobile ad hoc networks,'' \emph{Ad hoc Netw.}, vol.~2, no.~1, pp. 1--22,
  2004.

\bibitem{zeadally2012vehicular}
S.~Zeadally, R.~Hunt, Y.-S. Chen, A.~Irwin, and A.~Hassan, ``Vehicular ad hoc
  networks ({VANETS}): status, results, and challenges,'' \emph{Telecommun.
  Syst.}, vol.~50, no.~4, pp. 217--241, 2012.

\bibitem{bekmezci2013flying}
I.~Bekmezci, O.~K. Sahingoz, and {\c{S}}.~Temel, ``Flying ad-hoc networks
  ({FANETs}): A survey,'' \emph{Ad hoc Netw.}, vol.~11, no.~3, pp. 1254--1270,
  2013.

\bibitem{zhang2019deep2}
S.~Zhang, L.~Yao, A.~Sun, and Y.~Tay, ``Deep learning based recommender system:
  A survey and new perspectives,'' \emph{ACM Comput. Surv. (CSUR)}, vol.~52,
  no.~1, pp. 1--38, 2019.

\bibitem{LAtimes}
\BIBentryALTinterwordspacing
D.~Netburn. From covid to curbside, 2020 changed our vocabulary too. [Online].
  Available:
  \url{https://www.latimes.com/science/story/2020-12-28/from-covid-to-curbside-2020-changed-our-vocabulary-too}
\BIBentrySTDinterwordspacing

\bibitem{creswell2018generative}
A.~Creswell, T.~White, V.~Dumoulin, K.~Arulkumaran, B.~Sengupta, and A.~A.
  Bharath, ``Generative adversarial networks: An overview,'' \emph{IEEE Signal
  Process. Magazine}, vol.~35, no.~1, pp. 53--65, 2018.

\bibitem{lohr2019sampling}
S.~L. Lohr, \emph{Sampling: Design and Analysis: Design And Analysis}.\hskip
  1em plus 0.5em minus 0.4em\relax CRC Press, 2019.

\bibitem{zhao2014accelerating}
P.~Zhao and T.~Zhang, ``Accelerating minibatch stochastic gradient descent
  using stratified sampling,'' \emph{arXiv preprint arXiv:1405.3080}, 2014.

\bibitem{bienstock2018principled}
D.~Bienstock, G.~Mu{\~n}oz, and S.~Pokutta, ``Principled deep neural network
  training through linear programming,'' \emph{arXiv preprint
  arXiv:1810.03218}, 2018.

\bibitem{rizk2020dynamic}
E.~Rizk, S.~Vlaski, and A.~H. Sayed, ``Dynamic federated learning,'' \emph{IEEE
  21st Int. WRKSH Signal Process. Adv. Wireless Commun. (SPAWC)}, 2020.

\bibitem{wang2020tackling}
J.~Wang, Q.~Liu, H.~Liang, G.~Joshi, and H.~V. Poor, ``Tackling the objective
  inconsistency problem in heterogeneous federated optimization,'' \emph{Adv.
  Neural Inf. Process. Syst. (NeurIPS)}, vol.~33, pp. 7611--7623, 2020.

\bibitem{8424236}
Y.~Chen, N.~Zhao, Z.~Ding, and M.-S. Alouini, ``Multiple uavs as relays:
  Multi-hop single link versus multiple dual-hop links,'' \emph{IEEE Trans.
  Wireless Commun.}, vol.~17, no.~9, pp. 6348--6359, 2018.

\bibitem{7110419}
J.~Zhang, L.-L. Yang, L.~Hanzo, and H.~Gharavi, ``Advances in cooperative
  single-carrier {FDMA} communications: Beyond {LTE}-advanced,'' \emph{IEEE
  Commun. Surv. \& Tuts.}, vol.~17, no.~2, pp. 730--756, 2015.

\bibitem{dinh2019federated}
C.~Dinh, N.~H. Tran, M.~N. Nguyen, C.~S. Hong, W.~Bao, A.~Zomaya, and
  V.~Gramoli, ``Federated learning over wireless networks: Convergence analysis
  and resource allocation,'' \emph{arXiv preprint arXiv:1910.13067}, 2019.

\bibitem{ruan2021flexible}
Y.~Ruan, X.~Zhang, S.-C. Liang, and C.~Joe-Wong, ``Towards flexible device
  participation in federated learning,'' \emph{arXiv:2006.06954}, 2021.

\bibitem{chiang2005geometric}
M.~Chiang, ``Geometric programming for communication systems,'' \emph{Found.
  Trends{\textregistered} Commun. Inf. Theory}, vol.~2, no. 1--2, pp. 1--154,
  2005.

\bibitem{diamond2016cvxpy}
S.~Diamond and S.~Boyd, ``{CVXPY}: {A} {P}ython-embedded modeling language for
  convex optimization,'' \emph{Journal of Machine Learning Research}, vol.~17,
  no.~83, pp. 1--5, 2016.

\bibitem{SignomialGlobal}
G.~Xu, ``Global optimization of signomial geometric programming problems,''
  \emph{Eur. J. Oper. Res.}, vol. 233, no.~3, pp. 500--510, 2014.

\bibitem{duffin1972reversed}
R.~J. Duffin and E.~L. Peterson, ``Reversed geometric programs treated by
  harmonic means,'' \emph{Indiana Univ. Math J.}, vol.~22, pp. 531--550, 1972.

\bibitem{GeneralInnerApp}
B.~R. Marks and G.~P. Wright, ``A general inner approximation algorithm for
  nonconvex mathematical programs,'' \emph{Oper. Res.}, vol.~26, no.~4, pp.
  681--683, 1978.

\bibitem{wang2021uav}
S.~Wang, S.~Hosseinalipour, M.~Gorlatova, C.~G. Brinton, and M.~Chiang,
  ``{UAV}-assisted online machine learning over multi-tiered networks: A
  hierarchical nested personalized federated learning approach,''
  \emph{arXiv:2106.15734}, 2021.

\bibitem{tse2005fundamentals}
D.~Tse and P.~Viswanath, \emph{Fundamentals of wireless communication}.\hskip
  1em plus 0.5em minus 0.4em\relax Cambridge university press, 2005.

\bibitem{paszke2019pytorch}
A.~Paszke, S.~Gross, F.~Massa, A.~Lerer, J.~Bradbury, G.~Chanan, T.~Killeen,
  Z.~Lin, N.~Gimelshein, L.~Antiga \emph{et~al.}, ``Pytorch: An imperative
  style, high-performance deep learning library,'' in \emph{Adv. Neural Inf.
  Process. Syst.}, vol.~32, 2019.

\bibitem{MNIST}
\BIBentryALTinterwordspacing
L.~Yan, C.~Corinna, and C.~J. Burges. The {MNIST} dataset of handwritten
  digits. [Online]. Available: \url{http://yann.lecun.com/exdb/mnist/}
\BIBentrySTDinterwordspacing

\bibitem{xiao2017}
\BIBentryALTinterwordspacing
H.~Xiao, K.~Rasul, and R.~Vollgraf. Fashion-{MNIST}. [Online]. Available:
  \url{https://github.com/zalandoresearch/fashion-mnist}
\BIBentrySTDinterwordspacing

\bibitem{neyman1992two}
J.~Neyman, ``On the two different aspects of the representative method: the
  method of stratified sampling and the method of purposive selection,'' in
  \emph{Breakthroughs Stat.}\hskip 1em plus 0.5em minus 0.4em\relax Springer,
  1992, pp. 123--150.

\bibitem{boyd2007tutorial}
S.~Boyd, S.-J. Kim, L.~Vandenberghe, and A.~Hassibi, ``A tutorial on geometric
  programming,'' \emph{Opt. Eng.}, vol.~8, no.~1, p.~67, 2007.

\end{thebibliography}
\vspace{-14mm}
\begin{IEEEbiographynophoto}{Seyyedali Hosseinalipour (M'20)}
received his Ph.D. in EE from NCSU in 2020. He is currently an assistant professor of EE at University at Buffalo-SUNY. 
\end{IEEEbiographynophoto}
\vspace{-13.5mm}
\begin{IEEEbiographynophoto}{Su Wang (S'20)} is a Ph.D. student at Purdue University. He received his B.Sc. in ECE from Purdue University in 2019.
\end{IEEEbiographynophoto}
\vspace{-13.5mm}
\begin{IEEEbiographynophoto}{Nicol\`{o} Michelusi (SM'19)} 
received his Ph.D. in EE from University of Padova, Italy, in 2013. He is an Assistant Professor of ECEE at Arizona State University.
\end{IEEEbiographynophoto}
\vspace{-13.5mm}
\begin{IEEEbiographynophoto}{Vaneet Aggarwal (SM'15)} received his Ph.D. in EE from Princeton University in 2010. He is currently an Associate Professor at Purdue University.
\end{IEEEbiographynophoto}
\vspace{-13.5mm}
\begin{IEEEbiographynophoto}{Christopher G. Brinton (SM'20)}
is Elmore Chaired  Assistant Professor of ECE at Purdue University. He received his Ph.D. in EE from Princeton University in 2016.
\end{IEEEbiographynophoto}
\vspace{-13.5mm}
\begin{IEEEbiographynophoto}{David Love (F'15)} is Nick Trbovich Professor of ECE at Purdue University. He received his Ph.D. in EE from University of Texas at Austin in 2004.
\end{IEEEbiographynophoto}
\vspace{-13.5mm}
\begin{IEEEbiographynophoto}{Mung Chiang (F'12)} is the President of Purdue University. He is also Roscoe H. George Distinguished Professor of ECE at Purdue University. He received his Ph.D. from Stanford University in  2003.
\end{IEEEbiographynophoto}

\begingroup
\onecolumn

\appendices

\section{Proof of Lemma~\ref{lemma:trackMeanvar}}\label{app:meanVar}

\noindent Letting $\mathcal{A}$ denote the set of new datapoints that are added to stratum $\mathcal{S}$, where $\bm{\mu}_{_\mathcal{A}}= \frac{1}{{|\mathcal{A}|}}\sum_{d\in \mathcal{A}} \bm{d},~~~~
    \sigma^2_{_\mathcal{A}}= \frac{1}{|\mathcal{A}|-1}\sum_{d\in \mathcal{A}} \left\Vert\bm{d} -\bm{\mu}_{_\mathcal{A}} \right\Vert^2
$, with $\bm{d}$ denoting the feature vector of datapoint $d$. It is straightforward to verify that the new mean  of the stratum is given by
\begin{align}
      &\hspace{-3mm}\bm{\mu}_{\mathsf{new}} =  \frac{|\mathcal{S}| \bm{\mu}_{\mathsf{old}} + |\mathcal{A}|\bm{\mu}_{_\mathcal{A}}-|\mathcal{D}|\bm{\mu}_{_\mathcal{D}}}{|\mathcal{S}|+|\mathcal{A}|-|\mathcal{D}|}.
      \end{align}
      Also, the new variance of the stratum is given by

\begin{equation}
\hspace{-0.3mm}
    \begin{aligned}
      &\sigma^2_{\mathsf{new}}=\frac{\sum_{d\in\mathcal{A}}\Vert\bm{d} - \bm{\mu}_{\mathsf{new}}\Vert^2+\sum_{d\in\mathcal{S}}\hspace{-.5mm}\Vert\bm{d} - \bm{\mu}_{\mathsf{new}}\Vert^2-\sum_{d\in\mathcal{D}}\hspace{-.5mm}\Vert\bm{d} - \bm{\mu}_{\mathsf{new}}\Vert^2}{|\mathcal{A}|+|\mathcal{S}|+|\mathcal{D}|-1}
      \\&=\frac{\sum_{d\in\mathcal{A}}\hspace{-.5mm}\Vert \bm{d}\Vert^2\hspace{-.5mm}+\hspace{-.5mm}\sum_{d\in\mathcal{S}}\hspace{-.5mm}\Vert \bm{d}\Vert^2\hspace{-.5mm}-\hspace{-.5mm}\sum_{d\in\mathcal{D}}\hspace{-.5mm}\Vert \bm{d}\Vert^2\hspace{-.5mm}+\hspace{-.5mm}|\mathcal{A}|\Vert\bm{\mu}_{\mathsf{new}}\Vert^2\hspace{-.5mm}+|\mathcal{S}|\Vert\bm{\mu}_{\mathsf{new}}\Vert^2 \hspace{-.5mm}-\hspace{-.5mm}|\mathcal{D}|\Vert\bm{\mu}_{\mathsf{new}}\Vert^2\hspace{-.5mm}-2\bm{\mu}_{\mathsf{new}}^\top(\sum_{d\in\mathcal{S}}\hspace{-.5mm} \bm{d} \hspace{-.5mm}+\hspace{-.5mm}\sum_{d\in\mathcal{A}} \hspace{-.5mm}\bm{d}\hspace{-.5mm}-\hspace{-.5mm}\sum_{d\in\mathcal{D}}\hspace{-.5mm} \bm{d})}{|\mathcal{A}|+|\mathcal{S}|+|\mathcal{D}|-1}\\&
      =\frac{\sum_{d\in\mathcal{A}}\Vert \bm{d}\Vert^2+\sum_{d\in\mathcal{S}}\Vert \bm{d}\Vert^2-\sum_{d\in\mathcal{D}}\Vert \bm{d}\Vert^2+|\mathcal{A}|\Vert\bm{\mu}_{\mathsf{new}}\Vert^2+|\mathcal{S}|\Vert\bm{\mu}_{\mathsf{new}}\Vert^2 -|\mathcal{D}|\Vert\bm{\mu}_{\mathsf{new}}\Vert^2-2\bm{\mu}_{\mathsf{new}}^\top(|\mathcal{S}|\bm{\mu}_{\mathsf{old}}+|\mathcal{A}|\bm{\mu}_{_\mathcal{A}}-|\mathcal{D}|\bm{\mu}_{_\mathcal{D}})}{|\mathcal{A}|+|\mathcal{S}|+|\mathcal{D}|-1}
      \\&
          =\frac{\sum_{d\in\mathcal{A}}\Vert \bm{d}\Vert^2+\sum_{d\in\mathcal{S}}\Vert \bm{d}\Vert^2-\sum_{d\in\mathcal{D}}\Vert \bm{d}\Vert^2+(|\mathcal{A}|+|\mathcal{S}|-|\mathcal{D}|)\left\Vert\frac{ |\mathcal{S}| \bm{\mu}_{\mathsf{old}} + |\mathcal{A}|\bm{\mu}_{_\mathcal{A}}-|\mathcal{D}|\bm{\mu}_{_\mathcal{D}} }{|\mathcal{S}|+|\mathcal{A}|-|\mathcal{D}|}\right\Vert^2} {|\mathcal{A}|+|\mathcal{S}|+|\mathcal{D}|-1}   \\&
          ~~~~-\frac{
          2\left(\frac{ |\mathcal{S}| \bm{\mu}_{\mathsf{old}} + |\mathcal{A}|\bm{\mu}_{_\mathcal{A}} -|\mathcal{D}|\bm{\mu}_{_\mathcal{D}}}{|\mathcal{S}|+|\mathcal{A}|-|\mathcal{D}|}\right)^\top(|\mathcal{S}|\bm{\mu}_{\mathsf{old}}+|\mathcal{A}|\bm{\mu}_{_\mathcal{A}}-|\mathcal{D}|\bm{\mu}_{_\mathcal{D}})}{|\mathcal{A}|+|\mathcal{S}|+|\mathcal{D}|-1}
          \\&
             =\frac{\sum_{d\in\mathcal{A}}\Vert \bm{d}\Vert^2+\sum_{d\in\mathcal{S}}\Vert \bm{d}\Vert^2-\sum_{d\in\mathcal{D}}\Vert \bm{d}\Vert^2+ \Vert\bm{\mu}_{\mathsf{old}}\Vert^2 \left(\frac{-|\mathcal{S}|^2}{|\mathcal{A}|+|\mathcal{S}|-|\mathcal{D}|}\right)  +\Vert\bm{\mu}_{_\mathcal{A}}\Vert^2\left(\frac{-|\mathcal{A}|^2}{|\mathcal{A}|+|\mathcal{S}|-|\mathcal{D}|}\right)+\Vert\bm{\mu}_{D}\Vert^2 \left(\frac{-|\mathcal{D}|^2}{|\mathcal{A}|+|\mathcal{S}|-|\mathcal{D}|}\right)}{|\mathcal{A}|+|\mathcal{S}|-|\mathcal{D}|-1}
                 \\&
            ~~~~ +\frac{-2\bm{\mu}_{\mathsf{old}}\bm{\mu}^\top_{_\mathcal{A}}\left(\frac{|\mathcal{A}||\mathcal{S}|}{|\mathcal{A}|+|\mathcal{S}|-|\mathcal{D}|}\right)  +2\bm{\mu}_{\mathsf{old}}\bm{\mu}^\top_{_\mathcal{D}}\left(\frac{|\mathcal{S}||\mathcal{D}|}{|\mathcal{A}|+|\mathcal{S}|-|\mathcal{D}|}\right) 
            +2\bm{\mu}_{_\mathcal{A}}\bm{\mu}^\top_{_\mathcal{D}}\left(\frac{|\mathcal{A}||\mathcal{D}|}{|\mathcal{A}|+|\mathcal{S}|-|\mathcal{D}|}\right) }{|\mathcal{A}|+|\mathcal{S}|-|\mathcal{D}|-1}
             \\&
             =\frac{\sum_{d\in\mathcal{A}}\Vert \bm{d}\Vert^2-|\mathcal{A}|\Vert\bm{\mu}_{_{\mathcal{A}}}\Vert^2+\sum_{d\in\mathcal{S}}\Vert \bm{d}\Vert^2-|\mathcal{S}|\Vert\bm{\mu}_{\mathsf{old}}\Vert^2-\sum_{d\in\mathcal{D}}\Vert \bm{d}\Vert^2+|\mathcal{D}|\Vert\bm{\mu}_{_{\mathcal{D}}}\Vert^2}{|\mathcal{A}|+|\mathcal{S}|-|\mathcal{D}|-1}
            \\&
            ~~~~+\frac{ \Vert\bm{\mu}_{\mathsf{old}}\Vert^2 \left(\frac{-|\mathcal{S}|^2}{|\mathcal{A}|+|\mathcal{S}|-|\mathcal{D}|}+|\mathcal{S}|\right)  +\Vert\bm{\mu}_{_\mathcal{A}}\Vert^2\left(\frac{-|\mathcal{A}|^2}{|\mathcal{A}|+|\mathcal{S}|-|\mathcal{D}|}+|\mathcal{A}|\right)+\Vert\bm{\mu}_{D}\Vert^2 \left(\frac{-|\mathcal{D}|^2}{|\mathcal{A}|+|\mathcal{S}|-|\mathcal{D}|}-|\mathcal{D}|\right)}{|\mathcal{A}|+|\mathcal{S}|-|\mathcal{D}|-1}
                 \\&
            ~~~~ +\frac{-2\bm{\mu}_{\mathsf{old}}\bm{\mu}^\top_{_\mathcal{A}}\left(\frac{|\mathcal{A}||\mathcal{S}|}{|\mathcal{A}|+|\mathcal{S}|-|\mathcal{D}|}\right)  +2\bm{\mu}_{\mathsf{old}}\bm{\mu}^\top_{_\mathcal{D}}\left(\frac{|\mathcal{S}||\mathcal{D}|}{|\mathcal{A}|+|\mathcal{S}|-|\mathcal{D}|}\right) 
            +2\bm{\mu}_{_\mathcal{A}}\bm{\mu}^\top_{_\mathcal{D}}\left(\frac{|\mathcal{A}||\mathcal{D}|}{|\mathcal{A}|+|\mathcal{S}|-|\mathcal{D}|}\right) }{|\mathcal{A}|+|\mathcal{S}|-|\mathcal{D}|-1}
            \\&
             =\frac{(|\mathcal{A}|-1)\sigma^2_{_\mathcal{A}}+(|\mathcal{S}|-1)\sigma^2_{old}-(|\mathcal{D}|-1)\sigma^2_{_\mathcal{D}}}{|\mathcal{A}|+|\mathcal{S}|-|\mathcal{D}|-1}
            \\&
            ~~~~+\frac{ \Vert\bm{\mu}_{\mathsf{old}}\Vert^2 \left(\frac{|\mathcal{A}||\mathcal{S}|-|\mathcal{S}||\mathcal{D}|}{|\mathcal{A}|+|\mathcal{S}|-|\mathcal{D}|}\right)  +\Vert\bm{\mu}_{_\mathcal{A}}\Vert^2\left(\frac{|\mathcal{A}||\mathcal{S}|-|\mathcal{A}||\mathcal{D}|}{|\mathcal{A}|+|\mathcal{S}|-|\mathcal{D}|}\right)+\Vert\bm{\mu}_{D}\Vert^2 \left(\frac{-|\mathcal{A}||\mathcal{D}|-|\mathcal{S}||\mathcal{D}|}{|\mathcal{A}|+|\mathcal{S}|-|\mathcal{D}|}\right)}{|\mathcal{A}|+|\mathcal{S}|-|\mathcal{D}|-1}
                 \\&
            ~~~~ +\frac{-2\bm{\mu}_{\mathsf{old}}\bm{\mu}^\top_{_\mathcal{A}}\left(\frac{|\mathcal{A}||\mathcal{S}|}{|\mathcal{A}|+|\mathcal{S}|-|\mathcal{D}|}\right)  +2\bm{\mu}_{\mathsf{old}}\bm{\mu}^\top_{_\mathcal{D}}\left(\frac{|\mathcal{S}||\mathcal{D}|}{|\mathcal{A}|+|\mathcal{S}|-|\mathcal{D}|}\right) 
            +2\bm{\mu}_{_\mathcal{A}}\bm{\mu}^\top_{_\mathcal{D}}\left(\frac{|\mathcal{A}||\mathcal{D}|}{|\mathcal{A}|+|\mathcal{S}|-|\mathcal{D}|}\right) }{|\mathcal{A}|+|\mathcal{S}|-|\mathcal{D}|-1}\\&
            =  \frac{(|\mathcal{S}|-1)\sigma^2_{old} + (|\mathcal{A}|-1)\sigma^2_{_\mathcal{A}}-(|\mathcal{D}|-1)\sigma^2_{_\mathcal{D}}
            }{|\mathcal{A}|+|\mathcal{S}|-|\mathcal{D}|-1}\\&~~~~
             +\frac{\left(\frac{|\mathcal{A}||\mathcal{S}|}{|\mathcal{A}|+|\mathcal{S}|-|\mathcal{D}|}\right)\Vert \bm{\mu}_{\mathsf{old}}-\bm{\mu}_{_\mathcal{A}}\Vert^2-
        \left(\frac{|\mathcal{S}||\mathcal{D}|}{|\mathcal{A}|+|\mathcal{S}|-|\mathcal{D}|}\right)\Vert \bm{\mu}_{\mathsf{old}}-\bm{\mu}_{_\mathcal{D}}\Vert^2-
         \left(\frac{|\mathcal{A}||\mathcal{D}|}{|\mathcal{A}|+|\mathcal{S}|-|\mathcal{D}|}\right)\Vert \bm{\mu}_{A}-\bm{\mu}_{_\mathcal{D}}\Vert^2}{|\mathcal{A}|+|\mathcal{S}|-|\mathcal{D}|-1}.
\end{aligned}
\hspace{-17mm}
\end{equation}
With minor rearranging of the terms, the result of the lemma is obtained.

\newpage
\section{Proof of Theorem~\ref{th:main}}\label{app:th:main}
\noindent Revisiting the parameter updating rule of the method, the evolution of the global parameter can be described as follows:
\begin{equation}
    \mathbf{w}^{(k+1)}=\mathbf{w}^{(k)}-\eta_{k}\overline{\nabla {F}}^{(k)},
\end{equation}
where $\overline{\nabla {F}}^{(k)}$ is the normalized accumulated gradient of the devices with respect to their number of local descents given by
\begin{equation}\label{eq:globAggApp}
    \overline{\nabla {F}}^{(k)}=\sum_{n'\in \mathcal{N}}\frac{\widehat{{D}}^{(k)}_{n'}e^{(k)}_{n'}}{{D}^{(k)} }
    \sum_{n\in \mathcal{N}}\frac{\widehat{{D}}^{(k)}_n}{{D}^{(k)} e^{(k)}_n}\overline{\nabla {F}}_n^{(k)},
\end{equation}
where we have used the facts that (i) the model training is conducted on the final datasets of the nodes at each global model training round, i.e., $\widehat{{D}}^{(k)}_{n}$, $\forall n\in\mathcal{N}$, which is obtained after data arrival and departure at the nodes prior to the start of local model training, and (ii) during data offloading, no data points disappears, and thus $\sum_{n\in \mathcal{N}}\widehat{{D}}^{(k)}_{n}= \sum_{n\in \mathcal{N}}{{D}}^{(k)}_{n}={D}^{(k)}$.

Using the $\beta$-smoothness of the global loss function (Assumption~\ref{Assup:lossFun}), we have 
\begin{equation}
 F^{({k})}(\mathbf{w}^{(k+1)}) \leq F^{({k})}(\mathbf{w}^{(k)}) +  \nabla{F^{({k})}(\mathbf{w}^{(k)})}^\top \left( \mathbf{w}^{(k+1)} - \mathbf{w}^{(k)}\right)+ \frac{\beta}{2} \left\Vert \mathbf{w}^{(k+1)} - \mathbf{w}^{(k)}\right\Vert^2.
\end{equation}

\begin{remark}\label{remark:modelDisc} Throughout, the super index on the loss functions, e.g.,  $F^{(k)}(\mathbf{w})$ or $F^{(k)}_n(\mathbf{w})$ stands for the time index under which the losses are measured and the corresponding dataset at the network after model dispersion phase, i.e., $F^{(k)}(\mathbf{w})=F(\mathbf{w}|\mathcal{D}^{(k)})=F_n(\mathbf{w}|\widehat{\mathcal{D}}^{(k)})$ or $F^{(k)}_n(\mathbf{w})=F_n(\mathbf{w}|\widehat{\mathcal{D}}_n^{(k)})$, $\forall \mathbf{w}$.
\end{remark}

Replacing the updating rule for $\mathbf{w}^{(k+1)}$ and taking the conditional expectation (with respect to randomized data sampling at the last aggregation of the PSL) from both hand sides yields
\begin{equation}\label{eq:res1}
\begin{aligned}
 \mathbb E_k \left[F^{(k)}(\mathbf{w}^{(k+1)})\right] \leq& F^{(k)}(\mathbf{w}^{(k)}) -  \nabla{F^{(k)}(\mathbf{w}^{(k)})}^\top \mathbb{E}_k\left[ \eta_{k}\sum_{n'\in \mathcal{N}}\frac{\widehat{{D}}^{(k)}_{n'}e^{(k)}_{n'}}{{D}^{(k)} }
    \sum_{n\in \mathcal{N}}\frac{\widehat{{D}}^{(k)}_n}{{D}^{(k)} e^{(k)}_n}\overline{\nabla {F}}_n^{(k)}\right]\\&+ \frac{\beta}{2} \mathbb{E}_k\left[\left\Vert  \eta_{k}\sum_{n'\in \mathcal{N}}\frac{\widehat{{D}}^{(k)}_{n'}e^{(k)}_{n'}}{{D}^{(k)} }
    \sum_{n\in \mathcal{N}}\frac{\widehat{{D}}^{(k)}_n}{{D}^{(k)} e^{(k)}_n}\overline{\nabla {F}}_n^{(k)}\right\Vert^2\right].
    \end{aligned}
\end{equation}
\begin{remark}
Note that as mentioned in Remark~\ref{remark:modelDisc}, $F^{(k)}(\mathbf{w}^{(k+1)})=F(\mathbf{w}^{(k+1)}|\mathcal{D}^{(k)})$ is the loss of the final model obtained at global iteration $k$ based on conducting SGD across the devices on the global data distribution $\mathcal{D}^{(k)}$. This loss is different than the loss under which the global iteration $k+1$ starts, i.e., $F^{(k+1)}(\mathbf{w}^{(k+1)})=F(\mathbf{w}^{(k+1)}|\mathcal{D}^{(k+1)})$ due to the existence of model/concept drift. In the later parts of the proof, we will reveal the connection between $F^{(k)}(\mathbf{w}^{(k+1)})$ and  $F^{(k+1)}(\mathbf{w}^{(k+1)})$.
\end{remark}
Since $\overline{\nabla {F}}_n^{(k)} =-\left( \mathbf{w}_n^{(k),e^{(k)}_n}-\mathbf{w}^{(k)}\right) \Big/ {\eta_{k}}$, via recursive expansion of the updating rule in~\eqref{eq:WeightupdateStrat}, we get
\begin{equation}\label{eq:nablabarF}
    \overline{\nabla {F}}_n^{(k)} = {\frac{1}{\widehat{{D}}^{(k)}_n}} \sum_{e=1}^{e^{(k)}_n} \sum_{j=1}^{{S}^{(k)}_{n}}\sum_{d\in \mathcal{B}^{(k),e}_{n,j}} \hspace{-1mm} {\frac{{S}^{(k)}_{n,j}\nabla  f(\mathbf{w}^{(k),e-1}_{n},d)}{{B}^{(k)}_{n,j}}}.
\end{equation}
Taking the expectation from both hand sides of~\eqref{eq:nablabarF}, using the law of total expectation (across the local mini-batches) and the fact that data points selection inside each stratum is conducted uniformly at random without replacement,  we get
\begin{equation}\label{eq:avrgNablas}
    \begin{aligned}
   \mathbb{E}_k \left[\overline{\nabla {F}}_n^{(k)}\right] =&{\frac{1}{\widehat{{D}}^{(k)}_n}} \sum_{e=1}^{e^{(k)}_n} \sum_{j=1}^{{S}^{(k)}_{n}}{S}^{(k)}_{n,j}\mathbb{E}_k  \left[ \sum_{d\in \mathcal{B}^{(k),e}_{n,j}}  {\frac{\nabla  f(\mathbf{w}^{(k),e-1}_{n},d)}{{B}^{(k)}_{n,j}}}\right]\\&={\frac{1}{\widehat{{D}}^{(k)}_n}} \sum_{e=1}^{e^{(k)}_n} \sum_{j=1}^{{S}^{(k)}_{n}}{S}^{(k)}_{n,j} \sum_{d\in \mathcal{S}^{(k)}_{n,j}}  {\frac{\nabla  f(\mathbf{w}^{(k),e-1}_{n},d)}{{S}^{(k)}_{n,j}}}
   \\&= \sum_{e=1}^{e^{(k)}_n} \sum_{j=1}^{{S}^{(k)}_{n}} \sum_{d\in \mathcal{S}^{(k)}_{n,j}}  {\frac{\nabla  f(\mathbf{w}^{(k),e-1}_{n},d)}{{\widehat{{D}}^{(k)}_n}}}
   = \sum_{e=1}^{e^{(k)}_n}  {\nabla  F^{(k)}_n(\mathbf{w}^{(k),e-1}_{n})}.
    \end{aligned}
\end{equation}
By replacing~\eqref{eq:avrgNablas} back in~\eqref{eq:res1}, we get
\begin{equation}
\begin{aligned}
 \mathbb E_k \left[F^{(k)}(\mathbf{w}^{(k+1)})\right] \leq& F^{(k)}(\mathbf{w}^{(k)}) -  \nabla{F^{(k)}(\mathbf{w}^{(k)})}^\top \mathbb{E}_k\left[ \eta_{k}\sum_{n'\in \mathcal{N}}\frac{\widehat{{D}}^{(k)}_{n'}e^{(k)}_{n'}}{{D}^{(k)} }
    \sum_{n\in \mathcal{N}}\frac{\widehat{{D}}^{(k)}_n}{{D}^{(k)} e^{(k)}_n} \sum_{e=1}^{e^{(k)}_n}  {\nabla  F^{(k)}_n(\mathbf{w}^{(k),e-1}_{n})}\right]\\&+ \frac{\beta}{2} \mathbb{E}_k\left[\left\Vert  \eta_{k}\sum_{n'\in \mathcal{N}}\frac{\widehat{{D}}^{(k)}_{n'}e^{(k)}_{n'}}{{D}^{(k)} }
    \sum_{n\in \mathcal{N}}\frac{\widehat{{D}}^{(k)}_n}{{D}^{(k)} e^{(k)}_n}\overline{\nabla {F}}_n^{(k)}\right\Vert^2\right].
    \end{aligned}
\end{equation}
Using the fact that for any two real valued vectors $\bm{a}$ and $\bm{b}$ with the same length, we have: $2\bm{a}^\top\bm{b}=\Vert \bm{a}\Vert^2+\Vert \bm{b}\Vert^2-\Vert \bm{a}-\bm{b}\Vert^2$, we further get
\begin{equation}\label{eq:firstlossIneq}
\begin{aligned}
 \mathbb E_k \left[F^{(k)}(\mathbf{w}^{(k+1)})\right] \leq& F^{(k)}(\mathbf{w}^{(k)}) - \frac{\eta_{k}}{2}\sum_{n'\in \mathcal{N}}\frac{\widehat{{D}}^{(k)}_{n'}e^{(k)}_{n'}}{{D}^{(k)} } \mathbb{E}_k\vast[
 \left\Vert\nabla{F^{(k)}(\mathbf{w}^{(k)})}\right\Vert^2 +\left\Vert 
    \sum_{n\in \mathcal{N}}\frac{\widehat{{D}}^{(k)}_n}{{D}^{(k)} e^{(k)}_n} \sum_{e=1}^{e^{(k)}_n}  {\nabla  F^{(k)}_n(\mathbf{w}^{(k),e-1}_{n})}\right\Vert^2\\&-  \left\Vert
    \nabla{F^{(k)}(\mathbf{w}^{(k)})}-
    \sum_{n\in \mathcal{N}}\frac{\widehat{{D}}^{(k)}_n}{{D}^{(k)} e^{(k)}_n} \sum_{e=1}^{e^{(k)}_n}  {\nabla  F^{(k)}_n(\mathbf{w}^{(k),e-1}_{n})}
    \right\Vert^2
    \vast]\\&+ \frac{\beta \eta^2_{k}}{2} \mathbb{E}_k\left[\left\Vert \sum_{n'\in \mathcal{N}}\frac{\widehat{{D}}^{(k)}_{n'}e^{(k)}_{n'}}{{D}^{(k)} }
    \sum_{n\in \mathcal{N}}\frac{\widehat{{D}}^{(k)}_n}{{D}^{(k)} e^{(k)}_n}\overline{\nabla {F}}_n^{(k)}\right\Vert^2\right].
    \end{aligned}
\end{equation}
We bound the last term on the right hand side of above inequality as follows:
\begin{equation}\label{eq:res2}
\hspace{-1mm}
    \begin{aligned}
    &\frac{\beta \eta^2_{k}}{2} \mathbb{E}_k\left[\left\Vert \sum_{n'\in \mathcal{N}}\frac{\widehat{{D}}^{(k)}_{n'}e^{(k)}_{n'}}{{D}^{(k)} }
    \sum_{n\in \mathcal{N}}\frac{\widehat{{D}}^{(k)}_n}{{D}^{(k)} e^{(k)}_n}\overline{\nabla {F}}_n^{(k)}\right\Vert^2\right]
    \\&=\frac{\beta \eta^2_{k}}{2} \mathbb{E}_k\left[\left\Vert \sum_{n'\in \mathcal{N}}\frac{\widehat{{D}}^{(k)}_{n'}e^{(k)}_{n'}}{{D}^{(k)} }
    \sum_{n\in \mathcal{N}}\frac{\widehat{{D}}^{(k)}_n}{{D}^{(k)} e^{(k)}_n} \left({\frac{1}{\widehat{{D}}^{(k)}_n}} \sum_{e=1}^{e^{(k)}_n} \sum_{j=1}^{{S}^{(k)}_{n}}\sum_{d\in \mathcal{B}^{(k),e}_{n,j}} \hspace{-1mm} {\frac{{S}^{(k)}_{n,j}\nabla  f(\mathbf{w}^{(k),e-1}_{n},d)}{{B}^{(k)}_{n,j}}}\right)\right\Vert^2\right]
    \\&
    =\frac{\beta \eta^2_{k}}{2} \mathbb{E}_k\Bigg[\Bigg\Vert \sum_{n'\in \mathcal{N}}\frac{\widehat{{D}}^{(k)}_{n'}e^{(k)}_{n'}}{{D}^{(k)} }
    \sum_{n\in \mathcal{N}}\frac{\widehat{{D}}^{(k)}_n}{{D}^{(k)} e^{(k)}_n} \Bigg({\frac{1}{\widehat{{D}}^{(k)}_n}} \sum_{e=1}^{e^{(k)}_n} \sum_{j=1}^{{S}^{(k)}_{n}}\sum_{d\in \mathcal{B}^{(k),e}_{n,j}} \hspace{-1mm} {\frac{{S}^{(k)}_{n,j}\nabla  f(\mathbf{w}^{(k),e-1}_{n},d)}{{B}^{(k)}_{n,j}}}\Bigg)
    \\&-\sum_{n'\in \mathcal{N}}\frac{\widehat{{D}}^{(k)}_{n'}e^{(k)}_{n'}}{{D}^{(k)} }
    \sum_{n\in \mathcal{N}}\frac{\widehat{{D}}^{(k)}_n}{{D}^{(k)} e^{(k)}_n}\sum_{e=1}^{e^{(k)}_n} \nabla F^{(k)}_n(\mathbf{w}_n^{(k),e-1})+\sum_{n'\in \mathcal{N}}\frac{\widehat{{D}}^{(k)}_{n'}e^{(k)}_{n'}}{{D}^{(k)} }
    \sum_{n\in \mathcal{N}}\frac{\widehat{{D}}^{(k)}_n}{{D}^{(k)} e^{(k)}_n}\sum_{e=1}^{e^{(k)}_n} \nabla F^{(k)}_n(\mathbf{w}_n^{(k),e-1})
    \Bigg\Vert^2\Bigg]
    \\&    \overset{(i)}{\leq} {\beta \eta^2_{k}}~ \mathbb{E}_k\Bigg[\Bigg\Vert \sum_{n'\in \mathcal{N}}\frac{\widehat{{D}}^{(k)}_{n'}e^{(k)}_{n'}}{{D}^{(k)} }
    \sum_{n\in \mathcal{N}}\frac{\widehat{{D}}^{(k)}_n}{{D}^{(k)} e^{(k)}_n} \Bigg({\frac{1}{\widehat{{D}}^{(k)}_n}} \sum_{e=1}^{e^{(k)}_n} \sum_{j=1}^{{S}^{(k)}_{n}}\sum_{d\in \mathcal{B}^{(k),e}_{n,j}} \hspace{-1mm} {\frac{{S}^{(k)}_{n,j}\nabla  f(\mathbf{w}^{(k),e-1}_{n},d)}{{B}^{(k)}_{n,j}}}\Bigg)
    \\&-\sum_{n'\in \mathcal{N}}\frac{\widehat{{D}}^{(k)}_{n'}e^{(k)}_{n'}}{{D}^{(k)} }
    \sum_{n\in \mathcal{N}}\frac{\widehat{{D}}^{(k)}_n}{{D}^{(k)} e^{(k)}_n}\sum_{e=1}^{e^{(k)}_n} \nabla F^{(k)}_n(\mathbf{w}_n^{(k),e-1})\Bigg\Vert^2\Bigg]+{\beta \eta^2_{k}}~\mathbb{E}_k\Bigg[\Bigg\Vert\sum_{n'\in \mathcal{N}}\frac{\widehat{{D}}^{(k)}_{n'}e^{(k)}_{n'}}{{D}^{(k)} }
    \sum_{n\in \mathcal{N}}\frac{\widehat{{D}}^{(k)}_n}{{D}^{(k)} e^{(k)}_n}\sum_{e=1}^{e^{(k)}_n} \nabla F^{(k)}_n(\mathbf{w}_n^{(k),e-1})\Bigg\Vert^2\Bigg]
    \\&={\beta \eta^2_{k}}\left(\sum_{n'\in \mathcal{N}}\frac{\widehat{{D}}^{(k)}_{n'}e^{(k)}_{n'}}{{D}^{(k)} } \right)^2~ \underbrace{\mathbb{E}_k\Bigg[\Bigg\Vert 
    \sum_{n\in \mathcal{N}}\frac{\widehat{{D}}^{(k)}_n}{{D}^{(k)} e^{(k)}_n} \Bigg({\frac{1}{\widehat{{D}}^{(k)}_n}} \sum_{e=1}^{e^{(k)}_n} \sum_{j=1}^{{S}^{(k)}_{n}}\sum_{d\in \mathcal{B}^{(k),e}_{n,j}} \hspace{-1mm} {\frac{{S}^{(k)}_{n,j}\nabla  f(\mathbf{w}^{(k),e-1}_{n},d)}{{B}^{(k)}_{n,j}}}
   -
    \sum_{e=1}^{e^{(k)}_n} \nabla F^{(k)}_n(\mathbf{w}_n^{(k),e-1})\Bigg)\Bigg\Vert^2\Bigg]}_{(a)}
    \\&+{\beta \eta^2_{k}}\left(\sum_{n'\in \mathcal{N}}\frac{\widehat{{D}}^{(k)}_{n'}e^{(k)}_{n'}}{{D}^{(k)} } \right)^2~\mathbb{E}_k\Bigg[\Bigg\Vert
    \sum_{n\in \mathcal{N}}\frac{\widehat{{D}}^{(k)}_n}{{D}^{(k)} e^{(k)}_n}\sum_{e=1}^{e^{(k)}_n} \nabla F^{(k)}_n(\mathbf{w}_n^{(k),e-1})\Bigg\Vert^2\Bigg],
    \end{aligned}\hspace{-14mm}
\end{equation}
where in inequality $(i)$ we have used the Cauchy-Schwarz inequality $\Vert \mathbf{a}+\mathbf{b} \Vert^2\leq 2 \Vert \mathbf{a} \Vert^2+2\Vert \mathbf{b} \Vert^2$.

\textbf{Bounding term $(a)$ in~\eqref{eq:res2}:}  Using~\eqref{eq:avrgNablas}, i.e., each local gradient estimation is unbiased, combined with the fact that  the noise of gradient estimation is assumed to be independent across the nodes we get
\begin{equation}\label{eq:firstVar}
\begin{aligned}
    (a) \overset{(i)}{=}&\sum_{n\in \mathcal{N}}\left(\frac{\widehat{{D}}^{(k)}_n}{{D}^{(k)} e^{(k)}_n} \right)^2\mathbb{E}_k\Bigg[\Bigg\Vert 
     \Bigg({\frac{1}{\widehat{{D}}^{(k)}_n}} \sum_{e=1}^{e^{(k)}_n} \sum_{j=1}^{{S}^{(k)}_{n}}\sum_{d\in \mathcal{B}^{(k),e}_{n,j}} \hspace{-1mm} {\frac{{S}^{(k)}_{n,j}\nabla  f(\mathbf{w}^{(k),e-1}_{n},d)}{{B}^{(k)}_{n,j}}}
   -
    \sum_{e=1}^{e^{(k)}_n} \nabla F^{(k)}_n(\mathbf{w}_n^{(k),e-1})\Bigg)\Bigg\Vert^2\Bigg]
    \\&=
    \sum_{n\in \mathcal{N}}\left(\frac{\widehat{{D}}^{(k)}_n}{{D}^{(k)} e^{(k)}_n} \right)^2\mathbb{E}_k\Bigg[\Bigg\Vert 
    \sum_{e=1}^{e^{(k)}_n}  \underbrace{\Bigg({\frac{1}{\widehat{{D}}^{(k)}_n}}  \sum_{j=1}^{{S}^{(k)}_{n}}\sum_{d\in \mathcal{B}^{(k),e}_{n,j}} \hspace{-1mm} {\frac{{S}^{(k)}_{n,j}\nabla  f(\mathbf{w}^{(k),e-1}_{n},d)}{{B}^{(k)}_{n,j}}}
   -
    \nabla F^{(k)}_n(\mathbf{w}_n^{(k),e-1})\Bigg)}_{(b)}\Bigg\Vert^2\Bigg]
    \\& \overset{(ii)}{=}\sum_{n\in \mathcal{N}} \left(\frac{\widehat{{D}}^{(k)}_n}{{D}^{(k)} e^{(k)}_n} \right)^2~\sum_{e=1}^{e^{(k)}_n}\mathbb{E}_k\Bigg[\Bigg\Vert 
    \Bigg({\frac{1}{\widehat{{D}}^{(k)}_n}}  \sum_{j=1}^{{S}^{(k)}_{n}}\sum_{d\in \mathcal{B}^{(k),e}_{n,j}} \hspace{-1mm} {\frac{{S}^{(k)}_{n,j}\nabla  f(\mathbf{w}^{(k),e-1}_{n},d)}{{B}^{(k)}_{n,j}}}
   -
    \nabla F^{(k)}_n(\mathbf{w}_n^{(k),e-1})\Bigg)\Bigg\Vert^2\Bigg] .
    \end{aligned}
\end{equation}
where to obtain equality $(i)$ we expanded $(a)$ in~\eqref{eq:res2} and used the fact that the noise of gradient estimation across the mini-batches are independent and zero mean, and to obtain (ii) we used the fact that each individual term in $(b)$ is zero mean.
To further bound~\eqref{eq:firstVar}, we consider the result of~\cite{lohr2019sampling} (Chapter 3, Eq. (3.5)), based on which the above equality can be further simplified as follows:
\begin{equation}\label{eq:firstA}
 \begin{aligned}
     (a)&=\sum_{n\in \mathcal{N}} \left(\frac{\widehat{{D}}^{(k)}_n}{{D}^{(k)} e^{(k)}_n} \right)^2~\sum_{e=1}^{e^{(k)}_n}\left[\left\Vert \frac{1}{\widehat{{D}}^{(k)}_n} \sum_{j=1}^{{S}^{(k)}_{n}}\sum_{d\in \mathcal{B}^{(k),e}_{n,j}} \hspace{-1mm} {\frac{{S}^{(k)}_{n,j}\nabla  f(\mathbf{w}^{(k),e-1}_{n},d)}{{B}^{(k)}_{n,j}}}
     -\frac{1}{\widehat{{D}}^{(k)}_n} \sum_{d\in \widehat{\mathcal{D}}^{(k)}_{n}}\nabla  f(\mathbf{w}^{(k),e-1}_{n},d)\right\Vert^2\right] \\
     &=\sum_{n\in \mathcal{N}} \left(\frac{\widehat{{D}}^{(k)}_n}{{D}^{(k)} e^{(k)}_n} \right)^2~\sum_{e=1}^{e^{(k)}_n}  \sum_{j=1}^{{S}^{(k)}_{n}} \left(1-\frac{{B}^{(k)}_{n,j}}{{S}^{(k)}_{n,j}} \right) \frac{\left({S}^{(k)}_{n,j}\right)^2}{\left(\widehat{{D}}^{(k)}_{n}\right)^2} \frac{\left(\sigma_{n,j}^{(k),e-1}\right)^2}{{B}^{(k)}_{n,j}},
     \end{aligned}
 \end{equation}
 where $\sigma_{n,j}^{(k),e-1}$ denotes the \textit{variance of the gradients} evaluated at the particular local descent iteration $e-1$ for the parameter realization $\mathbf{w}^{(k),e-1}_{n}$. We further bound this quantity as follows:
 \begin{align}\label{eq:stratVar}
     \left(\sigma_{n,j}^{(k),e-1}\right)^2&= \frac{\sum_{d\in\mathcal{S}^{(k)}_{n,j} } \Big\Vert \nabla  f(\mathbf{w}^{(k),e-1}_{n},d)-{\sum_{d'\in\mathcal{S}^{(k)}_{n,j}} }\frac{\nabla  f(\mathbf{w}^{(k),e-1}_{n},d')}{{S}^{(k)}_{n,j}}\Big\Vert^2}{{S}^{(k)}_{n,j}-1}
     \nonumber\\
     &=\frac{\sum_{d\in\mathcal{S}^{(k)}_{n,j} }\frac{1}{\left({S}^{(k)}_{n,j}\right)^2} \Big\Vert {S}^{(k)}_{n,j}\nabla  f(\mathbf{w}^{(k),e-1}_{n},d)-{\sum_{d'\in\mathcal{S}^{(k)}_{n,j}} }{\nabla  f(\mathbf{w}^{(k),e-1}_{n},d')}\Big\Vert^2}{{S}^{(k)}_{n,j}-1}
     \nonumber\\
     &\overset{(i)}{\leq} \frac{\sum_{d\in\mathcal{S}^{(k)}_{n,j} }\frac{{S}^{(k)}_{n,j}-1}{\left({S}^{(k)}_{n,j}\right)^2} \sum_{d'\in\mathcal{S}^{(k)}_{n,j}} \Big\Vert \nabla  f(\mathbf{w}^{(k),e-1}_{n},d)-{\nabla  f(\mathbf{w}^{(k),e-1}_{n},d')}\Big\Vert^2}{{S}^{(k)}_{n,j}-1}
     \nonumber\\&
     \leq \frac{\sum_{d\in\mathcal{S}^{(k)}_{n,j} }\frac{({S}^{(k)}_{n,j}-1)\Theta^2}{\left({S}^{(k)}_{n,j}\right)^2} \sum_{d'\in\mathcal{S}^{(k)}_{n,j}} \Big\Vert \bm{d}-\bm{d}'\Big\Vert^2}{{S}^{(k)}_{n,j}-1}
     \leq \frac{({S}^{(k)}_{n,j}-1)\Theta^2}{\left({S}^{(k)}_{n,j}\right)^2}\frac{\sum_{d\in\mathcal{S}^{(k)}_{n,j} } \sum_{d'\in\mathcal{S}^{(k)}_{n,j}} \Big\Vert \bm{d}-\bm{d}'+\widetilde{\bm{\lambda}}^{(k)}_{n,j}-\widetilde{\bm{\lambda}}^{(k)}_{n,j}\Big\Vert^2}{{S}^{(k)}_{n,j}-1}\nonumber\\
     &=\frac{({S}^{(k)}_{n,j}-1)\Theta^2}{\left({S}^{(k)}_{n,j}\right)^2}\frac{\sum_{d\in\mathcal{S}^{(k)}_{n,j} } \sum_{d'\in\mathcal{S}^{(k)}_{n,j}} \left[\Big\Vert \bm{d}-  \widetilde{\bm{\lambda}}^{(k)}_{n,j} \Big\Vert^2 + \Big\Vert \bm{d}'-  \widetilde{\bm{\lambda}}^{(k)}_{n,j}\Big\Vert^2 - 2(\bm{d}-  \widetilde{\bm{\lambda}}^{(k)}_{n,j})^\top(\bm{d}'-  \widetilde{\bm{\lambda}}^{(k)}_{n,j}) \right]}{{S}^{(k)}_{n,j}-1}
     \nonumber\\&\overset{(ii)}{=}\frac{({S}^{(k)}_{n,j}-1)\Theta^2}{\left({S}^{(k)}_{n,j}\right)^2}\frac{ {S}^{(k)}_{n,j} \sum_{d\in\mathcal{S}^{(k)}_{n,j} } \Big\Vert \bm{d}-  \widetilde{\bm{\lambda}}^{(k)}_{n,j} \Big\Vert^2 +  {S}^{(k)}_{n,j} \sum_{d'\in\mathcal{S}^{(k)}_{n,j}} \Big\Vert \bm{d}'-  \widetilde{\bm{\lambda}}^{(k)}_{n,j}\Big\Vert^2}{{S}^{(k)}_{n,j}-1}\nonumber \\
     &
     =\frac{2({S}^{(k)}_{n,j}-1)\Theta^2}{{S}^{(k)}_{n,j}}
     \left(\widetilde{\sigma}_{n,j}^{(k)}\right)^2, 
     \end{align}
 where $\bm{d}$ denotes the feature vector of data point $d$, and  $\widetilde{\bm{\lambda}}^{(k)}_{n,j}$ and $\widetilde{\sigma}_{n,j}^{(k)}$ denote the mean and sample variance of data points in stratum $\mathcal{S}^{(k)}_{n,j}$, which are gradient independent. Also, in inequality $(i)$ of~\eqref{eq:stratVar}, we used the Cauchy-Schwarz inequality, and in $(ii)$ we used the fact that $\sum_{d\in\mathcal{S}^{(k)}_{n,j} } (\bm{d}-  \widetilde{\bm{\lambda}}^{(k)}_{n,j}) =\bm{0}$.
Replacing the above result in~\eqref{eq:firstA}, gives us
\begin{equation}\label{eq:111}
   (a)\leq 2 \Theta^2\sum_{n\in \mathcal{N}} \left(\frac{\widehat{{D}}^{(k)}_n}{{D}^{(k)} e^{(k)}_n} \right)^2~\sum_{e=1}^{e^{(k)}_n}  \sum_{j=1}^{{S}^{(k)}_{n}} \left(1-\frac{{B}^{(k)}_{n,j}}{{S}^{(k)}_{n,j}} \right) \frac{{S}^{(k)}_{n,j}}{\left(\widehat{{D}}^{(k)}_{n}\right)^2} \frac{{({S}^{(k)}_{n,j}-1)}
     \left(\widetilde{\sigma}_{n,j}^{(k)}\right)^2}{{B}^{(k)}_{n,j}}.
\end{equation}
Replacing the above result back in~\eqref{eq:res2}, inequality~\eqref{eq:firstlossIneq} can be written as follows:
 \begin{align}\label{eq:secondlossIneq}
 &\mathbb E_k \left[F^{(k)}(\mathbf{w}^{(k+1)})\right] \leq F^{(k)}(\mathbf{w}^{(k)}) - \frac{\eta_{k}}{2}\sum_{n'\in \mathcal{N}}\frac{\widehat{{D}}^{(k)}_{n'}e^{(k)}_{n'}}{{D}^{(k)} } \mathbb{E}_k\vast[
 \left\Vert\nabla{F^{(k)}(\mathbf{w}^{(k)})}\right\Vert^2 +\left\Vert 
    \sum_{n\in \mathcal{N}}\frac{\widehat{{D}}^{(k)}_n}{{D}^{(k)} e^{(k)}_n} \sum_{e=1}^{e^{(k)}_n}  {\nabla  F^{(k)}_n(\mathbf{w}^{(k),e-1}_{n})}\right\Vert^2
    \nonumber\\&-  \left\Vert
    \nabla{F^{(k)}(\mathbf{w}^{(k)})}-
    \sum_{n\in \mathcal{N}}\frac{\widehat{{D}}^{(k)}_n}{{D}^{(k)} e^{(k)}_n} \sum_{e=1}^{e^{(k)}_n}  {\nabla  F^{(k)}_n(\mathbf{w}^{(k),e-1}_{n})}
    \right\Vert^2
    \vast]
     \nonumber\\&+2 \Theta^2 {\beta \eta^2_{k}}\left(\sum_{n'\in \mathcal{N}}\frac{\widehat{{D}}^{(k)}_{n'}e^{(k)}_{n'}}{{D}^{(k)} } \right)^2\sum_{n\in \mathcal{N}} \left(\frac{\widehat{{D}}^{(k)}_n}{{D}^{(k)} e^{(k)}_n} \right)^2~\sum_{e=1}^{e^{(k)}_n}  \sum_{j=1}^{{S}^{(k)}_{n}} \left(1-\frac{{B}^{(k)}_{n,j}}{{S}^{(k)}_{n,j}} \right) \frac{{S}^{(k)}_{n,j}}{\left(\widehat{{D}}^{(k)}_{n}\right)^2} \frac{{({S}^{(k)}_{n,j}-1)}
     \left(\widetilde{\sigma}_{n,j}^{(k)}\right)^2}{{B}^{(k)}_{n,j}}\hspace{-14mm}
     \nonumber\\&+{\beta \eta^2_{k}}\left(\sum_{n'\in \mathcal{N}}\frac{\widehat{{D}}^{(k)}_{n'}e^{(k)}_{n'}}{{D}^{(k)} } \right)^2~\mathbb{E}_k\left[\Bigg\Vert
    \sum_{n\in \mathcal{N}}\frac{\widehat{{D}}^{(k)}_n}{{D}^{(k)} e^{(k)}_n}\sum_{e=1}^{e^{(k)}_n} \nabla F^{(k)}_n(\mathbf{w}_n^{(k),e-1})\Bigg\Vert^2\right].\hspace{-4mm}
    \end{align}
Performing some algebraic manipulations on the above inequality and gathering the terms yields
\begin{align}\label{eq:secondlossIneq}
\hspace{-10mm}
 \mathbb E_k &\left[F^{(k)} (\mathbf{w}^{(k+1)})\right] \leq F^{(k)}(\mathbf{w}^{(k)}) - \frac{\eta_{k}}{2}\sum_{n'\in \mathcal{N}}\frac{\widehat{{D}}^{(k)}_{n'}e^{(k)}_{n'}}{{D}^{(k)} } 
 \left\Vert\nabla{F^{(k)}(\mathbf{w}^{(k)})}\right\Vert^2 \nonumber\\&\hspace{-6mm}
 +\underbrace{\left({\beta \eta^2_{k}}\left(\sum_{n'\in \mathcal{N}}\frac{\widehat{{D}}^{(k)}_{n'}e^{(k)}_{n'}}{{D}^{(k)} } \right)^2 - \frac{\eta_{k}}{2}\sum_{n'\in \mathcal{N}}\frac{\widehat{{D}}^{(k)}_{n'}e^{(k)}_{n'}}{{D}^{(k)} }\right) \mathbb{E}_k\left[\left\Vert 
    \sum_{n\in \mathcal{N}}\frac{\widehat{{D}}^{(k)}_n}{{D}^{(k)} e^{(k)}_n} \sum_{e=1}^{e^{(k)}_n}  {\nabla  F^{(k)}_n(\mathbf{w}^{(k),e-1}_{n})}\right\Vert^2\right]}_{(c)}
    \nonumber\\&\hspace{-6mm}
    + \frac{\eta_{k}}{2}\sum_{n'\in \mathcal{N}}\frac{\widehat{{D}}^{(k)}_{n'}e^{(k)}_{n'}}{{D}^{(k)} }   \underbrace{\mathbb{E}_k\left[\left\Vert
    \nabla{F^{(k)}(\mathbf{w}^{(k)})}-
    \sum_{n\in \mathcal{N}}\frac{\widehat{{D}}^{(k)}_n}{{D}^{(k)} e^{(k)}_n} \sum_{e=1}^{e^{(k)}_n}  {\nabla  F^{(k)}_n(\mathbf{w}^{(k),e-1}_{n})}
    \right\Vert^2\right]}_{(d)}
     \nonumber\\&\hspace{-6mm}
     +2 \Theta^2 {\beta \eta^2_{k}}\left(\sum_{n'\in \mathcal{N}}\frac{\widehat{{D}}^{(k)}_{n'}e^{(k)}_{n'}}{{D}^{(k)} } \right)^2\sum_{n\in \mathcal{N}} \left(\frac{\widehat{{D}}^{(k)}_n}{{D}^{(k)} e^{(k)}_n} \right)^2~\sum_{e=1}^{e^{(k)}_n}  \sum_{j=1}^{{S}^{(k)}_{n}} \left(1-\frac{{B}^{(k)}_{n,j}}{{S}^{(k)}_{n,j}} \right) \frac{{S}^{(k)}_{n,j}}{\left(\widehat{{D}}^{(k)}_{n}\right)^2} \frac{{({S}^{(k)}_{n,j}-1)}
     \left(\widetilde{\sigma}_{n,j}^{(k)}\right)^2}{{B}^{(k)}_{n,j}}.\hspace{-14mm}
    \end{align}
    Assuming $\eta_k\leq \left(2\beta\sum_{n'\in \mathcal{N}}\frac{\widehat{{D}}^{(k)}_{n'}e^{(k)}_{n'}}{{D}^{(k)} } \right)^{-1}$ makes the coefficient in term $(c)$ negative and thus it can be removed from the upper bound. In the following, we aim to bound term $(d)$:
    \begin{align}\label{eq:res3}
        (d) & \overset{(i)}{\leq}   \sum_{n\in \mathcal{N}}\frac{\widehat{{D}}^{(k)}_n}{{D}^{(k)} }\mathbb{E}_k\left[\left\Vert
    \nabla{F^{(k)}_n(\mathbf{w}^{(k)})}-
    \frac{1}{e^{(k)}_n} \sum_{e=1}^{e^{(k)}_n}  {\nabla  F^{(k)}_n(\mathbf{w}^{(k),e-1}_{n})}
    \right\Vert^2\right]
    \nonumber 
    \\& 
    \overset{(ii)}{\leq} \sum_{n\in \mathcal{N}}\frac{\widehat{{D}}^{(k)}_n}{D^{(k)} e_n^{(k)}}\sum_{e=1}^{e^{(k)}_n} \mathbb{E}_k\left[\left\Vert
   \nabla{F^{(k)}_n(\mathbf{w}^{(k)})}-
     {\nabla  F^{(k)}_n(\mathbf{w}^{(k),e-1}_{n})}
    \right\Vert^2\right]
    \leq \beta^2\sum_{n\in \mathcal{N}}\frac{\widehat{{D}}^{(k)}_n}{D^{(k)} e_n^{(k)}}\sum_{e=1}^{e^{(k)}_n} \mathbb{E}_k\left[\left\Vert
   \mathbf{w}^{(k)}-
     \mathbf{w}^{(k),e-1}_{n}
    \right\Vert^2\right].
    \end{align}
    To bound $\mathbb{E}_k\left[\left\Vert
   \mathbf{w}^{(k)}-
     \mathbf{w}^{(k),e-1}_{n}
    \right\Vert^2\right]$, we take the following steps:
  \begin{align}\label{eq:res4}
    &
    \mathbb{E}_k\left[\left\Vert
   \mathbf{w}^{(k)}-
     \mathbf{w}^{(k),e-1}_{n}
    \right\Vert^2\right]\overset{(iii)}{=}\eta_k^2  \mathbb{E}_k\left[\left\Vert
   {\frac{1}{\widehat{{D}}^{(k)}_n}} \sum_{e'=1}^{e-1} \sum_{j=1}^{{S}^{(k)}_{n}}\sum_{d\in \mathcal{B}^{(k),e'}_{n,j}} \hspace{-1mm} {\frac{{S}^{(k)}_{n,j}\nabla  f(\mathbf{w}^{(k),e'-1}_{n},d)}{{B}^{(k)}_{n,j}}}
    \right\Vert^2 \right]
    \nonumber\\&= \eta_k^2 \mathbb{E}_k\vast[\Bigg\Vert
   {\frac{1}{\widehat{{D}}^{(k)}_n}} \sum_{e'=1}^{e-1} \sum_{j=1}^{{S}^{(k)}_{n}}\sum_{d\in \mathcal{B}^{(k),e'}_{n,j}} \hspace{-1mm} {\frac{{S}^{(k)}_{n,j}\nabla  f(\mathbf{w}^{(k),e'-1}_{n},d)}{{B}^{(k)}_{n,j}}}
   \nonumber\\&~~~~~~~~~~~~~~~~~~~~~~~~~~~~~~-\frac{1}{\widehat{{D}}^{(k)}_n}\sum_{e'=1}^{e-1} \sum_{d\in \widehat{\mathcal{D}}^{(k)}_{n}}\nabla  f(\mathbf{w}^{(k),e'-1}_{n},d)+\frac{1}{\widehat{{D}}^{(k)}_n} \sum_{e'=1}^{e-1}\sum_{d\in \widehat{\mathcal{D}}^{(k)}_{n}}\nabla  f(\mathbf{w}^{(k),e'-1}_{n},d)
    \Bigg\Vert^2 \vast]
    \nonumber \\& \overset{(iv)}{\leq }2 \eta_k^2  \mathbb{E}_k\left[\Bigg\Vert
   {\frac{1}{\widehat{{D}}^{(k)}_n}} \sum_{e'=1}^{e-1} \sum_{j=1}^{{S}^{(k)}_{n}}\sum_{d\in \mathcal{B}^{(k),e'}_{n,j}} \hspace{-1mm} {\frac{{S}^{(k)}_{n,j}\nabla  f(\mathbf{w}^{(k),e'-1}_{n},d)}{{B}^{(k)}_{n,j}}}
   -\frac{1}{\widehat{{D}}^{(k)}_n} \sum_{e'=1}^{e-1}\sum_{d\in \widehat{\mathcal{D}}^{(k)}_{n}}\nabla  f(\mathbf{w}^{(k),e'-1}_{n},d)\Bigg\Vert^2\right] \hspace{-10mm}\nonumber\\&~~~~~~~~~~~~~~~~~~~~~~~~~~~~~~+2 \eta_k^2 \mathbb{E}_k\left[\Bigg\Vert \frac{1}{\widehat{{D}}^{(k)}_n} \sum_{e'=1}^{e-1}\sum_{d\in \widehat{\mathcal{D}}^{(k)}_{n}}\nabla  f(\mathbf{w}^{(k),e'-1}_{n},d)
    \Bigg\Vert^2 \right]
   \nonumber \\&
    \overset{(v)}{=} \underbrace{2 \eta_k^2 \sum_{e'=1}^{e-1}\mathbb{E}_k\left[\Bigg\Vert
   {\frac{1}{\widehat{{D}}^{(k)}_n}}  \sum_{j=1}^{{S}^{(k)}_{n}}\sum_{d\in \mathcal{B}^{(k),e'}_{n,j}} \hspace{-1mm} {\frac{{S}^{(k)}_{n,j}\nabla  f(\mathbf{w}^{(k),e'-1}_{n},d)}{{B}^{(k)}_{n,j}}}
   -\frac{1}{\widehat{{D}}^{(k)}_n} \sum_{d\in \widehat{\mathcal{D}}^{(k)}_{n}}\nabla  f(\mathbf{w}^{(k),e'-1}_{n},d)\Bigg\Vert^2\right]}_{(e)} \hspace{-10mm}\nonumber\\&~~~~~~~~~~~~~~~~~~~~~~~~~~~~~~+\underbrace{2 \eta_k^2 \mathbb{E}_k\left[\Bigg\Vert \frac{1}{\widehat{{D}}^{(k)}_n} \sum_{e'=1}^{e-1}\sum_{d\in \widehat{\mathcal{D}}^{(k)}_{n}}\nabla  f(\mathbf{w}^{(k),e'-1}_{n},d)
    \Bigg\Vert^2\right]}_{(f)} ,
    \end{align}
where in inequalities $(i)$ and $(ii)$ in~\eqref{eq:res3} we used Jenson's inequality; and to obtain~\eqref{eq:res4}, in equality $(iii)$ we used~\eqref{eq:WeightupdateStrat}, in inequality $(iv)$ we used  Cauchy–Schwarz inequality, and $(v)$
uses the fact that each local gradient estimation is unbiased (i.e., zero mean) conditioned on its own local parameter and the law of total expectation (across the mini-batches $e'$). 

Similar to~\eqref{eq:firstA}, using~\eqref{eq:stratVar} we upper bound term $(e)$ in~\eqref{eq:res4} as follows:
\begin{equation}\label{eq:A2}
    (e) \leq 4 \Theta^2 \eta_k^2 \sum_{e'=1}^{e-1}\sum_{j=1}^{{S}^{(k)}_{n}} \left(1-\frac{{B}^{(k)}_{n,j}}{{S}^{(k)}_{n,j}} \right) \frac{{S}^{(k)}_{n,j}}{\left(\widehat{{D}}^{(k)}_{n}\right)^2} \frac{{({S}^{(k)}_{n,j}-1)}
     \left(\widetilde{\sigma}_{n,j}^{(k)}\right)^2}{{B}^{(k)}_{n,j}}.
\end{equation}
Also, for term $(f)$, we have
\begin{align}\label{eq:B2}
   (f)&
   \overset{(i)}{\leq} 2 \eta_k^2 (e-1)\sum_{e'=1}^{e-1}\mathbb{E}_k\left[\Bigg\Vert \frac{1}{\widehat{{D}}^{(k)}_n} \sum_{d\in \widehat{\mathcal{D}}^{(k)}_{n}}\nabla  f(\mathbf{w}^{(k),e'-1}_{n},d) - \nabla F^{(k)}_n(\mathbf{w}^{(k)})+ \nabla F^{(k)}_n(\mathbf{w}^{(k)})
    \Bigg\Vert^2 \right]
   \nonumber \\
    &\overset{(ii)}{\leq} 4 \eta_k^2 (e-1)\sum_{e'=1}^{e-1}\mathbb{E}_k\left[\Bigg\Vert \nabla  F^{(k)}_n(\mathbf{w}^{(k),e'-1}_{n}) - \nabla F^{(k)}_n(\mathbf{w}^{(k)})\Bigg\Vert^2\right] + 4 \eta_k^2 (e-1)\sum_{e'=1}^{e-1} \Bigg\Vert\nabla F^{(k)}_n(\mathbf{w}^{(k)})
    \Bigg\Vert^2 
    \nonumber \\&
    \leq 4 \eta_k^2 (e-1)\sum_{e'=1}^{e-1}\mathbb{E}_k\left[\Bigg\Vert \nabla  F^{(k)}_n(\mathbf{w}^{(k),e'-1}_{n}) - \nabla F^{(k)}_n(\mathbf{w}^{(k)})\Bigg\Vert^2\right] + 4 \eta_k^2 (e-1)\sum_{e'=1}^{e-1} \Bigg\Vert\nabla F^{(k)}_n(\mathbf{w}^{(k)})
    \Bigg\Vert^2 
    \nonumber \\&
    \leq 4\eta_k^2\beta^2 (e-1) \sum_{e'=1}^{e-1}\mathbb{E}_k\left[\Bigg\Vert \mathbf{w}^{(k),e'-1}_{n} - \mathbf{w}^{(k)}\Bigg\Vert^2\right]+ 4 \eta_k^2 (e-1)\sum_{e'=1}^{e-1} \Bigg\Vert\nabla F^{(k)}_n(\mathbf{w}^{(k)})
    \Bigg\Vert^2 ,
\end{align}
where inequalities $(i)$ and $(ii)$ are obtained via Cauchy-Schwarz inequality. Replacing the result of~\eqref{eq:A2} and~\eqref{eq:B2} back in~\eqref{eq:res4} we have
\begin{align}
  \mathbb{E}_k\left[  \left\Vert
  \mathbf{w}^{(k)}-
     \mathbf{w}^{(k),e-1}_{n}
    \right\Vert^2\right] \leq& 4 \Theta^2 \eta_k^2 \sum_{e'=1}^{e-1}\sum_{j=1}^{{S}^{(k)}_{n}} \left(1-\frac{{B}^{(k)}_{n,j}}{{S}^{(k)}_{n,j}} \right) \frac{{S}^{(k)}_{n,j}}{\left(\widehat{{D}}^{(k)}_{n}\right)^2} \frac{{({S}^{(k)}_{n,j}-1)}
     \left(\widetilde{\sigma}_{n,j}^{(k)}\right)^2}{{B}^{(k)}_{n,j}} \nonumber \\
     & + 4\eta_k^2\beta^2 (e-1) \sum_{e'=1}^{e-1}\mathbb{E}_k\left[\Bigg\Vert \mathbf{w}^{(k),e'-1}_{n} - \mathbf{w}^{(k)}\Bigg\Vert^2\right]+ 4 \eta_k^2 (e-1)\sum_{e'=1}^{e-1} \Bigg\Vert\nabla F^{(k)}_n(\mathbf{w}^{(k)})
    \Bigg\Vert^2, 
\end{align}
which implies
\begin{align}
   &\sum_{e=1}^{e_n^{(k)}} \mathbb{E}_k\left[\left\Vert
   \mathbf{w}^{(k)}-
     \mathbf{w}^{(k),e-1}_{n}
    \right\Vert^2\right] \leq 4 \Theta^2 \eta_k^2\sum_{e=1}^{e_n^{(k)}} \sum_{e'=1}^{e-1}\sum_{j=1}^{{S}^{(k)}_{n}} \left(1-\frac{{B}^{(k)}_{n,j}}{{S}^{(k)}_{n,j}} \right) \frac{{S}^{(k)}_{n,j}}{\left(\widehat{{D}}^{(k)}_{n}\right)^2} \frac{{({S}^{(k)}_{n,j}-1)}
     \left(\widetilde{\sigma}_{n,j}^{(k)}\right)^2}{{B}^{(k)}_{n,j}} \nonumber \\
     & + 4\eta_k^2\beta^2 \sum_{e=1}^{e_n^{(k)}}(e-1) \sum_{e'=1}^{e-1}\mathbb{E}_k\left[\Bigg\Vert \mathbf{w}^{(k),e'-1}_{n} - \mathbf{w}^{(k)}\Bigg\Vert^2\right]+ 4 \eta_k^2 \sum_{e=1}^{e_n^{(k)}}(e-1)\sum_{e'=1}^{e-1} \Bigg\Vert\nabla F^{(k)}_n(\mathbf{w}^{(k)})
    \Bigg\Vert^2
   \nonumber \\&\leq
    4 \Theta^2 \eta_k^2 \left(e_n^{(k)}\right)\left(e_n^{(k)}-1\right)\sum_{j=1}^{{S}^{(k)}_{n}} \left(1-\frac{{B}^{(k)}_{n,j}}{{S}^{(k)}_{n,j}} \right) \frac{{S}^{(k)}_{n,j}}{\left(\widehat{{D}}^{(k)}_{n}\right)^2} \frac{{({S}^{(k)}_{n,j}-1)}
     \left(\widetilde{\sigma}_{n,j}^{(k)}\right)^2}{{B}^{(k)}_{n,j}} \nonumber \\
     & + 4\eta_k^2\beta^2 \left(e_n^{(k)}\right)\left(e_n^{(k)}-1\right) \sum_{e=1}^{e_n^{(k)}}\mathbb{E}_k\left[\Bigg\Vert \mathbf{w}^{(k),e-1}_{n} - \mathbf{w}^{(k)}\Bigg\Vert^2\right]+ 4 \eta_k^2 \left(e_n^{(k)}\right)\left(e_n^{(k)}-1\right) \sum_{e=1}^{e_n^{(k)}} \Bigg\Vert\nabla F^{(k)}_n(\mathbf{w}^{(k)})
    \Bigg\Vert^2,
\end{align}
Assuming $\eta_k \leq \left(2\beta\sqrt{e_n^{(k)}(e_n^{(k)}-1)}\right)^{-1},\forall n$, the above inequality implies
\begin{align}
    \sum_{e=1}^{e_n^{(k)}} \mathbb{E}_k\left[\left\Vert
   \mathbf{w}^{(k)}-
     \mathbf{w}^{(k),e-1}_{n}
    \right\Vert^2\right] \leq&
    \frac{4 \Theta^2 \eta_k^2 e_n^{(k)}\left(e_n^{(k)}-1\right)}{1- 4\eta_k^2\beta^2 e_n^{(k)}\left(e_n^{(k)}-1\right)}\sum_{j=1}^{{S}^{(k)}_{n}} \left(1-\frac{{B}^{(k)}_{n,j}}{{S}^{(k)}_{n,j}} \right) \frac{{S}^{(k)}_{n,j}}{\left(\widehat{{D}}^{(k)}_{n}\right)^2} \frac{{({S}^{(k)}_{n,j}-1)}
     \left(\widetilde{\sigma}_{n,j}^{(k)}\right)^2}{{B}^{(k)}_{n,j}} 
     \nonumber \\
     & + \frac{4 \eta_k^2 \left(e_n^{(k)}\right)^2\left(e_n^{(k)}-1\right)}{1- 4\eta_k^2\beta^2 e_n^{(k)}\left(e_n^{(k)}-1\right)}  \Bigg\Vert\nabla F^{(k)}_n(\mathbf{w}^{(k)})
    \Bigg\Vert^2.
\end{align}
Replacing this result back in~\eqref{eq:res3} we have
 \begin{align}\label{eq:res5}
        (d) 
    \leq &{4 \beta^2\Theta^2 \eta_k^2}\sum_{n\in \mathcal{N}}\frac{\left(e_n^{(k)}\right)\left(e_n^{(k)}-1\right)}{1- 4\eta_k^2\beta^2 e_n^{(k)}\left(e_n^{(k)}-1\right)}\frac{\widehat{{D}}^{(k)}_n}{D^{(k)} e_n^{(k)}}
    \sum_{j=1}^{{S}^{(k)}_{n}} \left(1-\frac{{B}^{(k)}_{n,j}}{{S}^{(k)}_{n,j}} \right) \frac{{S}^{(k)}_{n,j}}{\left(\widehat{{D}}^{(k)}_{n}\right)^2} \frac{{({S}^{(k)}_{n,j}-1)}
     \left(\widetilde{\sigma}_{n,j}^{(k)}\right)^2}{{B}^{(k)}_{n,j}}
   \nonumber  \\&
     + {4 \eta_k^2\beta^2 }\sum_{n\in \mathcal{N}}\frac{\widehat{{D}}^{(k)}_n}{D^{(k)} e_n^{(k)}}\frac{\left(e_n^{(k)}\right)^2\left(e_n^{(k)}-1\right)}{{1- 4\eta_k^2\beta^2 e_n^{(k)}\left(e_n^{(k)}-1\right)}}  \Bigg\Vert\nabla F^{(k)}_n(\mathbf{w}^{(k)})
    \Bigg\Vert^2
    \nonumber \\&
    ={4 \beta^2\Theta^2 \eta_k^2}\sum_{n\in \mathcal{N}}\frac{\left(e_n^{(k)}\right)\left(e_n^{(k)}-1\right)}{1- 4\eta_k^2\beta^2 e_n^{(k)}\left(e_n^{(k)}-1\right)}\frac{\widehat{{D}}^{(k)}_n}{D^{(k)} e_n^{(k)}}
    \sum_{j=1}^{{S}^{(k)}_{n}} \left(1-\frac{{B}^{(k)}_{n,j}}{{S}^{(k)}_{n,j}} \right) \frac{{S}^{(k)}_{n,j}}{\left(\widehat{{D}}^{(k)}_{n}\right)^2} \frac{{({S}^{(k)}_{n,j}-1)}
     \left(\widetilde{\sigma}_{n,j}^{(k)}\right)^2}{{B}^{(k)}_{n,j}}
   \nonumber  \\&
     + {4 \eta_k^2\beta^2 }\sum_{n\in \mathcal{N}}\frac{\widehat{{D}}^{(k)}_n}{{D}^{(k)} }\frac{e_n^{(k)}\left(e_n^{(k)}-1\right)}{{1- 4\eta_k^2\beta^2 e_n^{(k)}\left(e_n^{(k)}-1\right)}}  \Bigg\Vert\nabla F^{(k)}_n(\mathbf{w}^{(k)})
    \Bigg\Vert^2
    \end{align}
    Assuming $e^{(k)}_{\mathsf{max}}=\max_{n\in\mathcal{N}}\{e^{(k)}_n\}$, and using the bounded dissimilarity assumption among the local gradients (Assumption~\ref{Assup:Dissimilarity}), we get
    \begin{align}\label{eq:res4444}
        (d) 
    \leq &{4 \beta^2\Theta^2 \eta_k^2}\sum_{n\in \mathcal{N}}\frac{\left(e_n^{(k)}\right)\left(e_n^{(k)}-1\right)}{1- 4\eta_k^2\beta^2 e_n^{(k)}\left(e_n^{(k)}-1\right)}\frac{\widehat{{D}}^{(k)}_n}{D^{(k)} e_n^{(k)}}
    \sum_{j=1}^{{S}^{(k)}_{n}} \left(1-\frac{{B}^{(k)}_{n,j}}{{S}^{(k)}_{n,j}} \right) \frac{{S}^{(k)}_{n,j}}{\left(\widehat{{D}}^{(k)}_{n}\right)^2} \frac{{({S}^{(k)}_{n,j}-1)}
     \left(\widetilde{\sigma}_{n,j}^{(k)}\right)^2}{{B}^{(k)}_{n,j}}
   \nonumber  \\&
     + \frac{4 \eta_k^2\beta^2 \left(e_{\mathsf{max}}^{(k)}\right)\left(e_{\mathsf{max}}^{(k)}-1\right)}{1- 4\eta_k^2\beta^2 e_{\mathsf{max}}^{(k)}\left(e_{\mathsf{max}}^{(k)}-1\right)}\left(\zeta_1  \Bigg\Vert \sum_{n\in \mathcal{N}}\frac{\widehat{{D}}^{(k)}_n}{{D}^{(k)} } \nabla F^{(k)}_n(\mathbf{w}^{(k)})
    \Bigg\Vert^2 + \zeta_2 \right)
    \nonumber  \\&
    ={4 \beta^2\Theta^2 \eta_k^2}\sum_{n\in \mathcal{N}}\frac{\left(e_n^{(k)}\right)\left(e_n^{(k)}-1\right)}{1- 4\eta_k^2\beta^2 e_n^{(k)}\left(e_n^{(k)}-1\right)}\frac{\widehat{{D}}^{(k)}_n}{D^{(k)} e_n^{(k)}}
    \sum_{j=1}^{{S}^{(k)}_{n}} \left(1-\frac{{B}^{(k)}_{n,j}}{{S}^{(k)}_{n,j}} \right) \frac{{S}^{(k)}_{n,j}}{\left(\widehat{{D}}^{(k)}_{n}\right)^2} \frac{{({S}^{(k)}_{n,j}-1)}
     \left(\widetilde{\sigma}_{n,j}^{(k)}\right)^2}{{B}^{(k)}_{n,j}}
   \nonumber  \\&
     + \frac{4 \eta_k^2\beta^2 \left(e_{\mathsf{max}}^{(k)}\right)\left(e_{\mathsf{max}}^{(k)}-1\right)}{1- 4\eta_k^2\beta^2 e_{\mathsf{max}}^{(k)}\left(e_{\mathsf{max}}^{(k)}-1\right)}\left(\zeta_1  \left\Vert \nabla F^{(k)}(\mathbf{w}^{(k)})
    \right\Vert^2 + \zeta_2 \right).
    \end{align}
 Replacing this result back in~\eqref{eq:secondlossIneq} and gathering the terms leads to
 \begin{align}\label{eq:thirdlossIneq}
\hspace{-10mm}
 \mathbb E_k &\left[F^{(k)} (\mathbf{w}^{(k+1)})\right] \leq F^{(k)}(\mathbf{w}^{(k)}) + \frac{\eta_{k}}{2}\sum_{n\in \mathcal{N}}\frac{\widehat{{D}}^{(k)}_{n}e^{(k)}_{n}}{{D}^{(k)} } \underbrace{\left(\frac{4 \eta_k^2\beta^2 \left(e_{\mathsf{max}}^{(k)}\right)\left(e_{\mathsf{max}}^{(k)}-1\right)}{1- 4\eta_k^2\beta^2 e_{\mathsf{max}}^{(k)}\left(e_{\mathsf{max}}^{(k)}-1\right)}\zeta_1-1 \right)}_{(g)}
 \left\Vert\nabla{F^{(k)}(\mathbf{w}^{(k)})}\right\Vert^2 \nonumber\\&\hspace{-6mm}
    + \frac{\eta_{k}}{2}\sum_{n'\in \mathcal{N}}\frac{\widehat{{D}}^{(k)}_{n'}e^{(k)}_{n'}}{{D}^{(k)} }  \vast({4 \beta^2\Theta^2 \eta_k^2}\sum_{n\in \mathcal{N}}\frac{\left(e_n^{(k)}\right)\left(e_n^{(k)}-1\right)}{{1- 4\eta_k^2\beta^2 e_n^{(k)}\left(e_n^{(k)}-1\right)}}\frac{\widehat{{D}}^{(k)}_n}{D^{(k)} e_n^{(k)}}
    \sum_{j=1}^{{S}^{(k)}_{n}} \left(1-\frac{{B}^{(k)}_{n,j}}{{S}^{(k)}_{n,j}} \right) \frac{{S}^{(k)}_{n,j}}{\left(\widehat{{D}}^{(k)}_{n}\right)^2} \frac{{({S}^{(k)}_{n,j}-1)}
     \left(\widetilde{\sigma}_{n,j}^{(k)}\right)^2}{{B}^{(k)}_{n,j}}\hspace{-10mm}\nonumber\\
     &
     + \frac{4\zeta_2 \eta_k^2\beta^2 \left(e_{\mathsf{max}}^{(k)}\right)\left(e_{\mathsf{max}}^{(k)}-1\right)}{1- 4\eta_k^2\beta^2 e_{\mathsf{max}}^{(k)}\left(e_{\mathsf{max}}^{(k)}-1\right)}
     \vast)
     \nonumber\\&\hspace{-6mm}
     +2 \Theta^2 {\beta \eta^2_{k}}\left(\sum_{n'\in \mathcal{N}}\frac{\widehat{{D}}^{(k)}_{n'}e^{(k)}_{n'}}{{D}^{(k)} } \right)^2\sum_{n\in \mathcal{N}} \left(\frac{\widehat{{D}}^{(k)}_n}{{D}^{(k)} e^{(k)}_n} \right)^2 e^{(k)}_n~ \sum_{j=1}^{{S}^{(k)}_{n}} \left(1-\frac{{B}^{(k)}_{n,j}}{{S}^{(k)}_{n,j}} \right) \frac{{S}^{(k)}_{n,j}}{\left(\widehat{{D}}^{(k)}_{n}\right)^2} \frac{{({S}^{(k)}_{n,j}-1)}
     \left(\widetilde{\sigma}_{n,j}^{(k)}\right)^2}{{B}^{(k)}_{n,j}}.\hspace{-14mm}
    \end{align}
 With a proper choice of step size, we aim to make term $(g)$ negative, for which we first impose the following condition on the denominator of the fraction in $(g)$ (the following condition is equivalent to our prior condition: $\eta_k \leq \left(2\beta\sqrt{e_n^{(k)}(e_n^{(k)}-1)}\right)^{-1},\forall n$):
 \begin{align}\label{eq:firstConStep}
     1- 4\eta_k^2\beta^2 e_{\mathsf{max}}^{(k)}\left(e_{\mathsf{max}}^{(k)}-1\right) \geq 0\Rightarrow \eta_k \leq \frac{1}{2\beta} \sqrt{\frac{1}{ e_{\mathsf{max}}^{(k)}\left(e_{\mathsf{max}}^{(k)}-1\right)}}.
 \end{align}
 We also assume that there exist a set of constants $\Lambda^{(k)}$, $\forall k$, where
 \begin{align}
 \frac{4 \eta_k^2\beta^2 \left(e_{\mathsf{max}}^{(k)}\right)\left(e_{\mathsf{max}}^{(k)}-1\right)}{1- 4\eta_k^2\beta^2 e_{\mathsf{max}}^{(k)}\left(e_{\mathsf{max}}^{(k)}-1\right)}\zeta_1 \leq  \Lambda^{(k)} < 1\Rightarrow \frac{1}{1- 4\eta_k^2\beta^2 e_{\mathsf{max}}^{(k)}\left(e_{\mathsf{max}}^{(k)}-1\right)}\leq   \frac{\zeta_1+\Lambda^{(k)}}{\zeta_1},
 \end{align}
which can be obtained under the following condition on the step size:
\begin{align}
    \eta_k \leq \frac{1}{2\beta} \sqrt{ \frac{\Lambda^{(k)}}{(\zeta_1+\Lambda^{(k)})\left( e_{\mathsf{max}}^{(k)}\left(e_{\mathsf{max}}^{(k)}-1\right)\right)}},
\end{align}
which is a tighter condition on the step size as compared to~\eqref{eq:firstConStep}. Let us further define $\Gamma^{(k)}=\frac{\eta_{k}}{2}\sum_{n\in \mathcal{N}}\frac{\widehat{{D}}^{(k)}_{n}e^{(k)}_{n}}{{D}^{(k)} }$. Under these conditions/assumptions, we simplify~\eqref{eq:thirdlossIneq} as follows:
 \begin{align}\label{eq:final_one_step}
 &\left\Vert\nabla{F^{(k)}(\mathbf{w}^{(k)})}\right\Vert^2  \leq 
      \frac{F^{(k)}(\mathbf{w}^{(k)}) - \mathbb E_k \left[F^{(k)} (\mathbf{w}^{(k+1)})\right]}{\Gamma^{(k)}(1-\Lambda^{(k)})}
  \nonumber\\&\hspace{-6mm}
    + \frac{1}{(1-\Lambda^{(k)})}\Bigg({4 \beta^2\Theta^2 \eta_k^2 }\sum_{n\in \mathcal{N}}\frac{\widehat{{D}}^{(k)}_n}{{D}^{(k)} }( e_n^{(k)}-1) \left(
    \frac{\zeta_1+\Lambda^{(k)}}{\zeta_1}
    \right)
    \sum_{j=1}^{{S}^{(k)}_{n}} \left(1-\frac{{B}^{(k)}_{n,j}}{{S}^{(k)}_{n,j}} \right) \frac{{S}^{(k)}_{n,j}}{\left(\widehat{{D}}^{(k)}_{n}\right)^2} \frac{{({S}^{(k)}_{n,j}-1)}
     \left(\widetilde{\sigma}_{n,j}^{(k)}\right)^2}{{B}^{(k)}_{n,j}}\hspace{-10mm}\nonumber \\&
     +{4\zeta_2 \eta_k^2\beta^2 \left(e_{\mathsf{max}}^{(k)}\right)\left(e_{\mathsf{max}}^{(k)}-1\right)}
     \frac{\zeta_1+\Lambda^{(k)}}{\zeta_1}
     \Bigg)
     \nonumber\\&\hspace{-6mm}
     +\frac{4 \Theta^2 {\beta \eta_{k}}}{(1-\Lambda^{(k)})}\left(\sum_{n'\in \mathcal{N}}\frac{\widehat{{D}}^{(k)}_{n'}e^{(k)}_{n'}}{{D}^{(k)} } \right)\sum_{n\in \mathcal{N}} \left(\frac{\widehat{{D}}^{(k)}_n}{{D}^{(k)} } \right)^2\frac{1}{e^{(k)}_n} \sum_{j=1}^{{S}^{(k)}_{n}} \left(1-\frac{{B}^{(k)}_{n,j}}{{S}^{(k)}_{n,j}} \right) \frac{{S}^{(k)}_{n,j}}{\left(\widehat{{D}}^{(k)}_{n}\right)^2} \frac{{({S}^{(k)}_{n,j}-1)}
     \left(\widetilde{\sigma}_{n,j}^{(k)}\right)^2}{{B}^{(k)}_{n,j}}.\hspace{-14mm}
 \end{align}
 


 Taking total expectation and averaging across global aggregations, we have:
 \begin{align}\label{eq:final_one_stepC}
 &\frac{1}{K} \sum_{k=0}^{K-1}\mathbb E\left\Vert\nabla{F^{(k)}(\mathbf{w}^{(k)})}\right\Vert^2  \leq \underbrace{\frac{1}{K} \sum_{k=0}^{K-1} \left[
      \frac{ \mathbb E\left[ F^{(k)}(\mathbf{w}^{(k)})\right] - \mathbb E \left[F^{(k)} (\mathbf{w}^{(k+1)})\right]}{\Gamma^{(k)}(1-\Lambda^{(k)})}\right]}_{(a)}
  \nonumber\\&\hspace{-6mm}
    + \frac{1}{K} \sum_{k=0}^{K-1} \vast[ \frac{1}{(1-\Lambda^{(k)})}\Bigg({4 \beta^2\Theta^2 \eta_k^2 }\sum_{n\in \mathcal{N}}\frac{\widehat{{D}}^{(k)}_n}{{D}^{(k)} }( e_n^{(k)}-1) \left(
    \frac{\zeta_1+\Lambda^{(k)}}{\zeta_1}
    \right)
    \sum_{j=1}^{{S}^{(k)}_{n}} \left(1-\frac{{B}^{(k)}_{n,j}}{{S}^{(k)}_{n,j}} \right) \frac{{S}^{(k)}_{n,j}}{\left(\widehat{{D}}^{(k)}_{n}\right)^2} \frac{{({S}^{(k)}_{n,j}-1)}
     \left(\widetilde{\sigma}_{n,j}^{(k)}\right)^2}{{B}^{(k)}_{n,j}}\hspace{-10mm}\nonumber \\&
     +{4\zeta_2 \eta_k^2\beta^2 \left(e_{\mathsf{max}}^{(k)}\right)\left(e_{\mathsf{max}}^{(k)}-1\right)}
     \frac{\zeta_1+\Lambda^{(k)}}{\zeta_1}
     \Bigg)
     \nonumber\\&\hspace{-6mm}
     +\frac{4 \Theta^2 {\beta \eta_{k}}}{(1-\Lambda^{(k)})}\left(\sum_{n'\in \mathcal{N}}\frac{\widehat{{D}}^{(k)}_{n'}e^{(k)}_{n'}}{{D}^{(k)} } \right)\sum_{n\in \mathcal{N}} \left(\frac{\widehat{{D}}^{(k)}_n}{{D}^{(k)} } \right)^2\frac{1}{e^{(k)}_n} \sum_{j=1}^{{S}^{(k)}_{n}} \left(1-\frac{{B}^{(k)}_{n,j}}{{S}^{(k)}_{n,j}} \right) \frac{{S}^{(k)}_{n,j}}{\left(\widehat{{D}}^{(k)}_{n}\right)^2} \frac{{({S}^{(k)}_{n,j}-1)}
     \left(\widetilde{\sigma}_{n,j}^{(k)}\right)^2}{{B}^{(k)}_{n,j}}\vast].\hspace{-14mm}
 \end{align}

 We next focus on the first term on the right hand side of~\eqref{eq:final_one_step} and aim to upper bound it. Let $t_k$ denotes the wall-clock time index (in seconds) in which global aggregation $k$ begins and $t'_k=t_k+T^{\mathsf{Tot},(k)}$ denotes the time in which global aggregation $k$ concludes. In particular, $t\in[t'_k,t_{k+1}]$ is the idle time in between global model training rounds of $k$ and $k+1$ and the length of the idle time is given by $\Omega^{(k+1)}=t_{k+1}-t'_k$. To avoid confusion between wall-clock time index and global aggregation index, we let $\widehat{\mathcal{D}}_n(t)$  denotes the dataset of device $n$ at wall-clock time $t$ after all the data arrivals and departures occur (and for global dataset $\widehat{\mathcal{D}}(t)$), while $\widehat{\mathcal{D}}_n^{(k)}$ denotes the respective dataset possessed by node $n$  used to obtain the model for global aggregation  $k+1$ (and for the global dataset $\widehat{\mathcal{D}}^{(k)}$). Note that we assume that the data arrival/departures only happen during the idle time in between the global aggregations, so during the period of each device acquisition time, the loss functions are stationary. Noting that $F^{(k)}(\mathbf{w}^{(k)})$ is the loss function under which global aggregation $k+1$ starts from (at $t_{k+1}$)\footnote{Note that we have assumed that to obtain the local model parameters to conduct the $k+1$-th global aggregation, each device $n$ uses $\widehat{\mathcal D}_n^{(k)}$.}, we bound it in terms of $F^{(k-1)}(\mathbf{w}^{(k)})$, i.e., the loss under which global aggregation $k$ concludes as follows:
\begin{align}\label{eq:driftLoss}
F^{(k)}(\mathbf{w}^{(k)})=    F(\mathbf{w}^{(k)} | \widehat{\mathcal{D}}^{(k)})&= F(\mathbf{w}^{(k)} | \widehat{\mathcal{D}}(t_{k+1}))= F(\mathbf{w}^{(k)} | \widehat{\mathcal{D}}(t_{k+1}))-F(\mathbf{w}^{(k)} | \widehat{\mathcal{D}}(t_{k+1}-1))+F(\mathbf{w}^{(k)} | \widehat{\mathcal{D}}(t_{k+1}-1))
    \nonumber \\&~~-F(\mathbf{w}^{(k)} | 
     \widehat{\mathcal{D}}(t_{k+1}-2))+F(\mathbf{w}^{(k)} | \widehat{\mathcal{D}}(t_{k+1}-2))-\cdots +F(\mathbf{w}^{(k)} | \widehat{\mathcal{D}}(t'_{k})) \nonumber\\&  
    = \sum_{n \in \mathcal{N}} \Bigg[\frac{|\widehat{\mathcal{D}}_n(t_{k+1})|}{|\widehat{\mathcal{D}}(t_{k+1})|}F_{n}(\mathbf{w}^{(k)}|\widehat{\mathcal{D}}_n{(t_{k+1})})
   -
   \frac{|\widehat{\mathcal{D}}_n(t_{k+1}-1)|}{|\widehat{\mathcal{D}}(t_{k+1}-1)|}F_{n}(\mathbf{w}^{(k)}|\widehat{\mathcal{D}}_n{(t_{k+1}-1)})
 \nonumber  \\& ~~+
   \frac{|\widehat{\mathcal{D}}_n(t_{k+1}-1)|}{|\widehat{\mathcal{D}}(t_{k+1}-1)|}F_{n}(\mathbf{w}^{(k)}|\widehat{\mathcal{D}}_n{(t_{k+1}-1)})-\cdots + \frac{|\widehat{\mathcal{D}}_n(t'_{k})|}{|\widehat{\mathcal{D}}(t'_{k})|}F_{n}(\mathbf{w}^{(k)}|\widehat{\mathcal{D}}_n{(t'_{k})})\Bigg]
   \nonumber 
   \\&\leq (t_{k+1}-t'_{k}) \sum_{n\in \mathcal{N}} \Delta_n^{(k+1)} + \sum_{n\in \mathcal{N}} \frac{|\widehat{\mathcal{D}}_n(t'_{k})|}{|\widehat{\mathcal{D}}(t'_{k})|}F_{n}(\mathbf{w}^{(k)}|\widehat{\mathcal{D}}_n{(t'_{k})})\nonumber \\
   &= \Omega^{(k+1)} \sum_{n\in \mathcal{N}} \Delta_n^{(k+1)} + \sum_{n\in \mathcal{N}} \frac{|\widehat{\mathcal{D}}_n^{(k-1)}|}{|\widehat{\mathcal{D}}^{(k-1)}|}F_{n}(\mathbf{w}^{(k)}|\widehat{\mathcal{D}}_n^{(k-1)})
 \nonumber   \\&=\Omega^{(k+1)} \sum_{n\in \mathcal{N}} \Delta_n^{(k+1)} + F(\mathbf{w}^{(k)}|\widehat{\mathcal{D}}^{(k-1)})=\Omega^{(k+1)}  \Delta^{(k+1)} + F^{(k-1)}(\mathbf{w}^{(k)}),
   \end{align}
where $\Delta_n^{(k+1)}=\max_{t\in T^{\mathsf{Idle},(k+1)}} \Delta_n(t)$, $\forall n$, $T^{\mathsf{Idle},(k+1)}$ denotes the set of idle time instances between global aggregations $k+1$ and $k$), $\Delta^{(k+1)}=\sum_{n\in \mathcal{N}}\Delta_n^{(k+1)}$. Let $F^{(-1)}(\mathbf{w}^{(0)})$ denote the initial loss of the algorithm before model training starts, using the above upper bound we have $F^{(0)}(\mathbf{w}^{(0)}) \leq F^{(-1)}(\mathbf{w}^{(0)})+\Omega^{(1)}  \Delta^{(1)}$, where $F^{(0)}(\mathbf{w}^{(0)})$ is the initial loss that the first global model training round starts from after having the idle period of $\Omega^{(1)}$. 



Using~\eqref{eq:final_one_step} and~\eqref{eq:driftLoss}, applying $\frac{\zeta_1+\Lambda^{(k)}}{\zeta_1} < 2$ yields
\vspace{-4mm}

 \begin{align}\label{eq:th1proof}
 &\hspace{-16mm}\frac{1}{K} \sum_{k=0}^{K-1}\mathbb E \Vert \nabla F^{(k)}(\mathbf{w}^{({k})}) \Vert^2 \leq \frac{1}{K} \vast[
 \sum_{k=0}^{K-1}\frac{\mathbb E\left[{F}^{(k-1)}(\mathbf{w}^{(k)})\right] - \mathbb E\left[F^{(k)}(\mathbf{w}^{(k+1)})\right]}{\Gamma^{(k)}(1-\Lambda^{(k)})}
 +\sum_{k=0}^{K-1}\frac{\Omega^{(k+1)} \Delta^{(k+1)} }{\Gamma^{(k)}(1-\Lambda^{(k)})}
  \hspace{-16mm} \nonumber 
   \\&\hspace{-16mm}+\sum_{k=0}^{K-1} \frac{1}{(1-\Lambda^{(k)})}\Bigg({8 \beta^2\Theta^2 \eta_k^2 }\sum_{n\in \mathcal{N}}\frac{\widehat{{D}}^{(k)}_n}{{D}^{(k)} }( e_n^{(k)}-1)   
    \sum_{j=1}^{{S}^{(k)}_{n}} \left(1-\frac{{B}^{(k)}_{n,j}}{{S}^{(k)}_{n,j}} \right) \frac{{S}^{(k)}_{n,j}}{\left(\widehat{{D}}^{(k)}_{n}\right)^2} \frac{{({S}^{(k)}_{n,j}-1)}
     \left(\widetilde{\sigma}_{n,j}^{(k)}\right)^2}{{B}^{(k)}_{n,j}}\hspace{-14mm}\nonumber \hspace{-16mm}
     \\&
     \hspace{-16mm}+{8\zeta_2 \eta_k^2\beta^2 \left(e_{\mathsf{max}}^{(k)}\right)\left(e_{\mathsf{max}}^{(k)}-1\right)}
     \Bigg)
     \hspace{-16mm} \nonumber \\&
     \hspace{-16mm}+\sum_{k=0}^{K-1}\frac{8 \Theta^2 {\beta \Gamma^{(k)} }}{(1-\Lambda^{(k)})}\sum_{n\in \mathcal{N}} \left(\frac{\widehat{{D}}^{(k)}_n}{{D}^{(k)} } \right)^2\frac{1}{e^{(k)}_n} \sum_{j=1}^{{S}^{(k)}_{n}} \left(1-\frac{{B}^{(k)}_{n,j}}{{S}^{(k)}_{n,j}} \right) \frac{{S}^{(k)}_{n,j}}{\left(\widehat{{D}}^{(k)}_{n}\right)^2} \frac{{({S}^{(k)}_{n,j}-1)}
     \left(\widetilde{\sigma}_{n,j}^{(k)}\right)^2}{{B}^{(k)}_{n,j}}
 \vast],\hspace{-6mm}
 \end{align}
  which concludes the proof.
 
 \newpage

 \section{Proof of Corollary~\ref{cor:1}}\label{app:cor:1}
 Considering~\eqref{eq:th1proof}, we have
 \begin{align}\label{eq:th1proofC}
 &\hspace{-16mm}\frac{1}{K} \sum_{k=0}^{K-1}\mathbb E \Vert \nabla F^{(k)}(\mathbf{w}^{({k})}) \Vert^2 \leq \frac{1}{K} \vast[
 \sum_{k=0}^{K-1}\frac{\mathbb E\left[{F}^{(k-1)}(\mathbf{w}^{(k)})\right] - \mathbb E\left[F^{(k)}(\mathbf{w}^{(k+1)})\right]}{\left(\eta_k e_{\mathsf{avg}}^{(k)}/2\right)(1-\Lambda^{(k)})}
 +\sum_{k=0}^{K-1}\frac{\Omega^{(k+1)} \Delta^{(k+1)} }{\left(\eta_k e_{\mathsf{avg}}^{(k)}/2\right)(1-\Lambda^{(k)})}
  \hspace{-16mm} \nonumber 
   \\&\hspace{-16mm}+\sum_{k=0}^{K-1} \frac{1}{(1-\Lambda^{(k)})}\Bigg({8 \beta^2\Theta^2 \eta_k^2 }\sum_{n\in \mathcal{N}}\frac{\widehat{{D}}^{(k)}_n}{{D}^{(k)} }( e_n^{(k)}-1)   
    \sum_{j=1}^{{S}^{(k)}_{n}} \left(1-\frac{{B}^{(k)}_{n,j}}{{S}^{(k)}_{n,j}} \right) \frac{{S}^{(k)}_{n,j}}{\left(\widehat{{D}}^{(k)}_{n}\right)^2} \frac{{({S}^{(k)}_{n,j}-1)}
     \left(\widetilde{\sigma}_{n,j}^{(k)}\right)^2}{{B}^{(k)}_{n,j}}\hspace{-14mm}\nonumber \hspace{-16mm}
     \\&
     \hspace{-16mm}+{8\zeta_2 \eta_k^2\beta^2 \left(e_{\mathsf{max}}^{(k)}\right)\left(e_{\mathsf{max}}^{(k)}-1\right)}
     \Bigg)
     \hspace{-16mm} \nonumber \\&
     \hspace{-16mm}+\sum_{k=0}^{K-1}\frac{8 \Theta^2 {\beta \left(\eta_k e_{\mathsf{avg}}^{(k)}/2\right) }}{(1-\Lambda^{(k)})}\sum_{n\in \mathcal{N}} \left(\frac{\widehat{{D}}^{(k)}_n}{{D}^{(k)} } \right)^2\frac{1}{e^{(k)}_n} \sum_{j=1}^{{S}^{(k)}_{n}} \left(1-\frac{{B}^{(k)}_{n,j}}{{S}^{(k)}_{n,j}} \right) \frac{{S}^{(k)}_{n,j}}{\left(\widehat{{D}}^{(k)}_{n}\right)^2} \frac{{({S}^{(k)}_{n,j}-1)}
     \left(\widetilde{\sigma}_{n,j}^{(k)}\right)^2}{{B}^{(k)}_{n,j}}
 \vast].\hspace{-6mm}
 \end{align}
 Assuming $\eta_k = \alpha \frac{1}{\sqrt{{e}^{(k)}_{\mathsf{sum}} K/N}}$ with a finite positive constant $\alpha$, $\max_k \left\{\Lambda^{(k)}\right\} \leq \Lambda_{\mathsf{max}}< 1$, 
 $( \overline{e}_{\mathsf{max}})^{-1} \leq \left(e^{(k)}_{\mathsf{avg}}\right)^{-1}\leq ( \overline{e}_{\mathsf{min}})^{-1}$, where $e^{(k)}_{\mathsf{avg}}=\sum_{n\in \mathcal{N}}\frac{\widehat{{D}}^{(k)}_{n}e^{(k)}_{n}}{{D}^{(k)} }$, and $ \widehat{e}_{\mathsf{min}} \leq e^{(k)}_{\mathsf{sum}}\leq  \widehat{e}_{\mathsf{max}}$,
 we get
 
 \begin{align}\label{eq:cor1proof}
 &\frac{1}{K} \sum_{k=0}^{K-1} \mathbb E\left[\Vert \nabla F^{(k)}(\mathbf{w}^{({k})}) \Vert^2\right] \leq 
      2 \sqrt{\widehat{{e}}_{\mathsf{max}}} \frac{F^{(-1)}(\mathbf{w}^{(0)}) - F^{(K)^\star}}{\overline{e}_{\mathsf{min}}\alpha \sqrt{NK}(1-\Lambda_{\mathsf{max}})}
      +\frac{ 2 \sqrt{\widehat{{e}}_{\mathsf{max}}}}{\overline{e}_{\mathsf{min}}\alpha \sqrt{NK}}\sum_{k=0}^{K-1}\frac{\Omega^{(k+1)} \Delta^{(k+1)} }{1-\Lambda_{\mathsf{max}}}
   \nonumber \\&+\frac{1}{K}\sum_{k=0}^{K-1} \frac{1}{(1-\Lambda_{\mathsf{max}})}\Bigg({8 \beta^2\Theta^2 \frac{\alpha^2 N}{K e^{(k)}_{\mathsf{sum}}} }\sum_{n\in \mathcal{N}}\frac{\widehat{{D}}^{(k)}_n}{{D}^{(k)} }( e_n^{(k)}-1) 
    \sum_{j=1}^{{S}^{(k)}_{n}} \left(1-\frac{{B}^{(k)}_{n,j}}{{S}^{(k)}_{n,j}} \right) \frac{{S}^{(k)}_{n,j}}{\left(\widehat{{D}}^{(k)}_{n}\right)^2} \frac{{({S}^{(k)}_{n,j}-1)}
     \left(\widetilde{\sigma}_{n,j}^{(k)}\right)^2}{{B}^{(k)}_{n,j}} \hspace{-10mm}\nonumber \\&
     +{8\zeta_2 \frac{\alpha^2 N}{K e^{(k)}_{\mathsf{sum}}}\beta^2 \left(e_{\mathsf{max}}^{(k)}\right)\left(e_{\mathsf{max}}^{(k)}-1\right)}
     \Bigg)
     \nonumber\\&\hspace{-6mm}
     +\frac{1}{K}\sum_{k=0}^{K-1}\frac{4\overline{e}_{\mathsf{max}}\alpha \Theta^2 {\beta \sqrt{N}}}{(1-\Lambda_{\mathsf{max}})\sqrt{{e}^{(k)}_{\mathsf{sum}}}\sqrt{K}}\sum_{n\in \mathcal{N}} \left(\frac{\widehat{{D}}^{(k)}_n}{{D}^{(k)} } \right)^2\frac{1}{e^{(k)}_n} \sum_{j=1}^{{S}^{(k)}_{n}} \left(1-\frac{{B}^{(k)}_{n,j}}{{S}^{(k)}_{n,j}} \right) \frac{{S}^{(k)}_{n,j}}{\left(\widehat{{D}}^{(k)}_{n}\right)^2} \frac{{({S}^{(k)}_{n,j}-1)}
     \left(\widetilde{\sigma}_{n,j}^{(k)}\right)^2}{{B}^{(k)}_{n,j}},\hspace{-14mm}
 \end{align}
 which concludes the proof.
 
\newpage
 \section{Proof of Corollary~\ref{cor:2}}\label{app:cor:2}
 
Considering~\eqref{eq:cor1proof}, assuming a bounded stratified sampling noise: $\max_{k,n}\left\{\sum_{j=1}^{{S}^{(k)}_{n}} \left(1-\frac{{B}^{(k)}_{n,j}}{{S}^{(k)}_{n,j}} \right) \frac{{S}^{(k)}_{n,j}}{\left(\widehat{{D}}^{(k)}_{n}\right)^2} \frac{{({S}^{(k)}_{n,j}-1)}
     \left(\widetilde{\sigma}_{n,j}^{(k)}\right)^2}{{B}^{(k)}_{n,j}}\right\}\leq \sigma_{\mathsf{max}}$, $\forall n,k$, bounded local iterations, $\max_{k}\{ e^{(k)}_{\mathsf{max}}\}\leq e_{\mathsf{max}} $,  $ \widehat{e}_{\mathsf{min}} \leq e^{(k)}_{\mathsf{sum}}\leq  \widehat{e}_{\mathsf{max}}$, and $\Delta^{(k)}\leq \left[\frac{\gamma}{K\Omega^{(k)}}\right]^{+}$, $\forall k$,  for a finite non-negative constant $\gamma$, we have
     \begin{align}\label{eq:cor2proof}
 &\frac{1}{K} \sum_{k=0}^{K-1} \mathbb E\left[\Vert \nabla F^{(k)}(\mathbf{w}^{({k})}) \Vert^2\right] \leq 
      2  \sqrt{\widehat{e}_{\mathsf{max}}} \frac{F^{(-1)}(\mathbf{w}^{(0)}) - F^{(K)^\star}}{\overline{e}_{\mathsf{min}}\alpha \sqrt{NK}(1-\Lambda_{\mathsf{max}})}
      +\frac{ 2   \sqrt{\widehat{e}_{\mathsf{max}}}}{\overline{e}_{\mathsf{min}}\alpha \sqrt{NK}}\sum_{k=0}^{K-1}\frac{\gamma}{K(1-\Lambda_{\mathsf{max}})}
   \nonumber \\&+\frac{1}{{K}}\sum_{k=0}^{K-1} \frac{1}{(1-\Lambda_{\mathsf{max}})}\Bigg({8 \beta^2\Theta^2 \frac{\alpha^2N}{\widehat{e}_{\mathsf{min}}K} }( e_{\mathsf{max}}-1) \sigma_{\mathsf{max}}+{8\zeta_2 \frac{\alpha^2N}{\widehat{e}_{\mathsf{min}}K}\beta^2 \left(e_{\mathsf{max}}\right)\left(e_{\mathsf{max}}-1\right)}
     \Bigg)
     \nonumber\\&
     +\frac{1}{{K}}\sum_{k=0}^{K-1}\frac{4\overline{e}_{\mathsf{max}}\alpha \Theta^2 {\beta \sqrt{N}}}{(1-\Lambda_{\mathsf{max}})\sqrt{\widehat{e}_{\mathsf{min}}}\sqrt{K}} \sigma_{\mathsf{max}} 
    \nonumber \\ &\leq 
      2 \sqrt{\widehat{e}_{\mathsf{max}}} \frac{F^{(-1)}(\mathbf{w}^{(0)}) - F^{(K)^\star}}{\overline{e}_{\mathsf{min}}\alpha \sqrt{N K}(1-\Lambda_{\mathsf{max}})}
      +\frac{ 2 \sqrt{\widehat{e}_{\mathsf{max}}}\gamma  }{\overline{e}_{\mathsf{min}}\alpha \sqrt{N K}(1-\Lambda_{\mathsf{max}})}
   \nonumber \\&+\frac{1}{{K}}\sum_{k=0}^{K-1} \frac{1}{(1-\Lambda_{\mathsf{max}})}\Bigg({8 \beta^2\Theta^2 \frac{\alpha^2N}{\widehat{e}_{\mathsf{min}} K} }( e_{\mathsf{max}}-1) \sigma_{\mathsf{max}}+{8\zeta_2 \frac{\alpha^2N}{\widehat{e}_{\mathsf{min}} K}\beta^2 \left(e_{\mathsf{max}}\right)\left(e_{\mathsf{max}}-1\right)}
    \Bigg)
     \nonumber\\&
     +\frac{1}{{K}}\sum_{k=0}^{K-1}\frac{4\overline{e}_{\mathsf{max}}\alpha \Theta^2 {\beta \sqrt{N} }}{(1-\Lambda_{\mathsf{max}})\sqrt{\widehat{e}_{\mathsf{min}}}\sqrt{K}} \sigma_{\mathsf{max}}  
    \nonumber 
    \\ &=
    2 \sqrt{\widehat{e}_{\mathsf{max}}}  \frac{F^{(-1)}(\mathbf{w}^{(0)}) - F^{(K)^\star}}{\overline{e}_{\mathsf{min}}\alpha \sqrt{N K}(1-\Lambda_{\mathsf{max}})}
      +\frac{ 2\sqrt{\widehat{e}_{\mathsf{max}}} \gamma   }{\overline{e}_{\mathsf{min}}\alpha \sqrt{N K}(1-\Lambda_{\mathsf{max}})}
   \nonumber \\&+\frac{1}{{K}}\frac{1}{(1-\Lambda_{\mathsf{max}})}\Bigg({8 \beta^2\Theta^2 {\alpha^2} N}( e_{\mathsf{max}}-1) \sigma_{\mathsf{max}}/\widehat{e}_{\mathsf{min}}+{8\zeta_2 {\alpha^2}\beta^2 N \left(e_{\mathsf{max}}\right)\left(e_{\mathsf{max}}-1\right)}/\widehat{e}_{\mathsf{min}}
     \Bigg)
     \nonumber\\&
     +\frac{4\overline{e}_{\mathsf{max}}\alpha \Theta^2 {\beta  \sqrt{N} }}{(1-\Lambda_{\mathsf{max}})\sqrt{\widehat{e}_{\mathsf{min}}}\sqrt{K}}  \sigma_{\mathsf{max}} ,
  \end{align}
  which concludes the proof.
  
  \newpage
 
 \section{Proof of Proposition~\ref{prop:neyman}}\label{app:prop:neyman}
 We focus on the term described in~\eqref{eq:A2}, and bound the following summation:
 \begin{align}
     \sum_{j=1}^{{S}^{(k)}_{n}} \left(1-\frac{{B}^{(k)}_{n,j}}{{S}^{(k)}_{n,j}} \right) \frac{{S}^{(k)}_{n,j}}{\left(\widehat{{D}}^{(k)}_{n}\right)^2} \frac{{({S}^{(k)}_{n,j}-1)}
     \left(\widetilde{\sigma}_{n,j}^{(k)}\right)^2}{{B}^{(k)}_{n,j}}.
 \end{align}
 Since the dataset is fixed during each round of local model training and the sampling of the mini-batches from different strata is conducted with respect to the variance of the data at each minibatch, we can omit the index of local SGD iteration $e'$ from the above expression to get
 \begin{align}\label{eq:neymanpre}
    \sum_{j=1}^{{S}^{(k)}_{n}} \left(1-\frac{{B}^{(k)}_{n,j}}{{S}^{(k)}_{n,j}} \right) \frac{{S}^{(k)}_{n,j}}{\left(\widehat{{D}}^{(k)}_{n}\right)^2} \frac{{({S}^{(k)}_{n,j}-1)}
     \left(\widetilde{\sigma}_{n,j}^{(k)}\right)^2}{{B}^{(k)}_{n,j}}\leq    \sum_{j=1}^{{S}^{(k)}_{n}} \left(1-\frac{{B}^{(k)}_{n,j}}{{S}^{(k)}_{n,j}} \right) \frac{\left({S}^{(k)}_{n,j}\right)^2}{\left(\widehat{{D}}^{(k)}_{n}\right)^2} \frac{
     \left(\widetilde{\sigma}_{n,j}^{(k)}\right)^2}{{B}^{(k)}_{n,j}}.
 \end{align}
 Exploiting a similar technique used in~\cite{neyman1992two} (see Eq. (39) of~\cite{neyman1992two}), it can be shown that the the right hand side of~\eqref{eq:neymanpre} can be written as
 \begin{align}
     \sum_{j=1}^{{S}^{(k)}_{n}} \left(1-\frac{{B}^{(k)}_{n,j}}{{S}^{(k)}_{n,j}} \right) \frac{\left({S}^{(k)}_{n,j}\right)^2}{\left(\widehat{{D}}^{(k)}_{n}\right)^2} \frac{
     \left(\widetilde{\sigma}_{n,j}^{(k)}\right)^2}{{B}^{(k)}_{n,j}}=&\frac{\widehat{{D}}^{(k)}_{n}-{B}_n^{(k)}}{{B}_n^{(k)}\left(\widehat{{D}}^{(k)}_{n}\right)^2}\sum_{j=1}^{{S}^{(k)}_{n}} {S}^{(k)}_{n,j} \left(\widetilde{\sigma}_{n,j}^{(k)}\right)^2  \nonumber \\&+\sum_{j=1}^{{S}^{(k)}_{n}} {B}^{(k)}_{n,j}\left(\frac{{S}^{(k)}_{n,j} \widetilde{\sigma}_{n,j}^{(k)}}{{B}^{(k)}_{n,j}\left(\widehat{{D}}^{(k)}_{n}\right)^2}
    -\frac{\sum_{\mathcal{S}^{(k)}_{n,j'}\in \mathcal{S}^{(k)}_{n}} {S}^{(k)}_{n,j'}\widetilde{\sigma}_{n,j'}^{(k)}}{{B}_n^{(k)}\left(\widehat{{D}}^{(k)}_{n}\right)^2} \right)^2 
   \nonumber  \\&-\frac{1}{{B}_n^{(k)}\widehat{{D}}^{(k)}_{n}}\sum_{j=1}^{{S}^{(k)}_{n}} {S}^{(k)}_{n,j}\left(\widetilde{\sigma}_{n,j}^{(k)}-\frac{\sum_{\mathcal{S}^{(k)}_{n,j'}\in \mathcal{S}^{(k)}_{n}}{S}^{(k)}_{n,j'} \widetilde{\sigma}_{n,j'}^{(k)}}{\widehat{{D}}^{(k)}_{n}} \right)^2.
 \end{align}
 Considering the fact that only the term on the second line is dependent on the mini-batch size (${B}^{(k)}_{n,j}$),
 minimization of this expression can be achieved with the choice of mini-batch size $ {B}^{(k)}_{n,j} = \frac{B^{(k)}_n\widetilde{\sigma}^{(k)}_{n,j}{S}^{(k)}_{n,j} }{\sum_{\mathcal{S}^{(k)}_{n,j'}\in \mathcal{S}^{(k)}_{n}} \widetilde{\sigma}^{(k)}_{n,j'}\vert \mathcal{S}^{(k)}_{n,j'}\vert }$, $\forall n,j$. 
 We subsequently simplify the above expression as follows:
 \begin{align}\label{eq:109}
 \begin{aligned}
      &\sum_{j=1}^{{S}^{(k)}_{n}} \left(1-\frac{{B}^{(k)}_{n,j}}{{S}^{(k)}_{n,j}} \right) \frac{\left({S}^{(k)}_{n,j}\right)^2}{\left(\widehat{{D}}^{(k)}_{n}\right)^2} \frac{
     \left(\widetilde{\sigma}_{n,j}^{(k)}\right)^2}{{B}^{(k)}_{n,j}}=\sum_{j=1}^{{S}^{(k)}_{n}} \left(\frac{1}{{B}^{(k)}_{n,j}}-\frac{1}{{S}^{(k)}_{n,j}} \right)\frac{\left({S}^{(k)}_{n,j}\right)^2}{\left(\widehat{{D}}^{(k)}_{n}\right)^2} {\left(\widetilde{\sigma}_{n,j}^{(k)}\right)^2}
     \\
     &\overset{(i)}{=} \sum_{j=1}^{{S}^{(k)}_{n}} \frac{1}{\left(\widehat{D}_n^{(k)}\right)^2}\left[ \left(\frac{\sum_{\mathcal{S}^{(k)}_{n,j'}\in \mathcal{S}^{(k)}_{n}} ~\vert \mathcal{S}^{(k)}_{n,j'}\vert \widetilde{\sigma}^{(k)}_{n,j'}}{ {B}_n^{(k)}~{S}^{(k)}_{n,j} ~\widetilde{\sigma}^{(k)}_{n,j}}\right)\left({S}^{(k)}_{n,j}\right)^2 {\left(\widetilde{\sigma}^{(k)}_{n,j}\right)^2}-{S}^{(k)}_{n,j} {\left(\widetilde{\sigma}^{(k)}_{n,j}\right)^2}\right]
     \\&
    = \sum_{j=1}^{{S}^{(k)}_{n}} \frac{1}{\left(\widehat{D}_n^{(k)}\right)^2}\left[ \left(\frac{\sum_{\mathcal{S}^{(k)}_{n,j'}\in \mathcal{S}^{(k)}_{n}} \widetilde{\sigma}^{(k)}_{n,j'}\vert \mathcal{S}^{(k)}_{n,j'}\vert }{ {B}_n^{(k)} }\right){S}^{(k)}_{n,j} {\widetilde{\sigma}^{(k)}_{n,j}}-{S}^{(k)}_{n,j} {\left(\widetilde{\sigma}^{(k)}_{n,j}\right)^2}\right]
     \\&
    = \frac{1}{\left(\widehat{D}_n^{(k)}\right)^2}\left[ \frac{1}{{B}_n^{(k)}} \left(\sum_{j=1}^{{S}^{(k)}_{n}} \widetilde{\sigma}^{(k)}_{n,j}{S}^{(k)}_{n,j} \right)^2-\sum_{j=1}^{{S}^{(k)}_{n}}{S}^{(k)}_{n,j} {\left(\widetilde{\sigma}^{(k)}_{n,j}\right)^2}\right]
     \end{aligned}
     \hspace{-7mm}
 \end{align}
where in equality~$(i)$ we have used the sampling rule: $ {B}^{(k)}_{n,j} = \frac{B^{(k)}_n\widetilde{\sigma}^{(k)}_{n,j}{S}^{(k)}_{n,j} }{\sum_{\mathcal{S}^{(k)}_{n,j'}\in \mathcal{S}^{(k)}_{n}} \widetilde{\sigma}^{(k)}_{n,j'}\vert \mathcal{S}^{(k)}_{n,j'}\vert }$. It is worth noting that the only control parameter in the above inequality is the mini-batch size ${B}_n^{(k)}$ appearing in the first term inside the bracket, while the rest of the parameters are fixed.
 
 Replacing the above result in~\eqref{eq:th1proof}, we get
 
 {\small
\begin{align*}
 &\frac{1}{{K}} \sum_{k=0}^{K-1} \mathbb E\left[\Vert \nabla F^{(k)}(\mathbf{w}^{({k})}) \Vert^2\right]\leq \frac{1}{{K}} \Vast[
 \sum_{k=0}^{K-1}\frac{\mathbb E\left[{F}^{(k-1)}(\mathbf{w}^{(k)})\right] - \mathbb E\left[F^{(k)}(\mathbf{w}^{(k+1)})\right]}{\Gamma^{(k)}(1-\Lambda^{(k)})}
 +\sum_{k=0}^{K-1}\frac{\Omega^{(k+1)} \Delta^{(k+1)} }{\Gamma^{(k)}(1-\Lambda^{(k)})}
  \hspace{-16mm} \nonumber 
  \hspace{-16mm} 
  \\&+\sum_{k=0}^{K-1} \frac{1}{(1-\Lambda^{(k)})}\vast({8 \beta^2\Theta^2 \eta_k^2 }\sum_{n\in \mathcal{N}}\frac{\widehat{{D}}^{(k)}_n}{{D}^{(k)} }( e_n^{(k)}-1)
    \frac{1}{\left(\widehat{D}_n^{(k)}\right)^2}\vast[ \frac{1}{{B}_n^{(k)}} \left(\sum_{j=1}^{{S}^{(k)}_{n}} \widetilde{\sigma}^{(k)}_{n,j}{S}^{(k)}_{n,j} \right)^2\nonumber \\&-\sum_{j=1}^{{S}^{(k)}_{n}}{S}^{(k)}_{n,j} {\left(\widetilde{\sigma}^{(k)}_{n,j}\right)^2}\vast]
     +{8\zeta_2 \eta_k^2\beta^2 \left(e_{\mathsf{max}}^{(k)}\right)\left(e_{\mathsf{max}}^{(k)}-1\right)}
     \vast)
     \hspace{-16mm} \nonumber  \hspace{-16mm}\\&
     +\sum_{k=0}^{K-1}\frac{8 \Theta^2 {\beta \Gamma^{(k)} }}{(1-\Lambda^{(k)})}\sum_{n\in \mathcal{N}} \left(\frac{\widehat{{D}}^{(k)}_n}{{D}^{(k)} } \right)^2\frac{1}{e^{(k)}_n}\frac{1}{\left(\widehat{D}_n^{(k)}\right)^2}\left[ \frac{1}{{B}_n^{(k)}} \left(\sum_{j=1}^{{S}^{(k)}_{n}} \widetilde{\sigma}^{(k)}_{n,j}{S}^{(k)}_{n,j} \right)^2-\sum_{j=1}^{{S}^{(k)}_{n}}{S}^{(k)}_{n,j} {\left(\widetilde{\sigma}^{(k)}_{n,j}\right)^2}\right]
 \Vast].\hspace{-6mm}
 \end{align*} 
 }
 
 Considering $\eta_k = \alpha \frac{1}{\sqrt{e^{(k)}_{\mathsf{sum}} K/N}}$, defining $e^{(k)}_{\mathsf{avg}}=\sum_{n\in \mathcal{N}}\frac{\widehat{{D}}^{(k)}_{n}e^{(k)}_{n}}{{D}^{(k)} }$ since $\Gamma^{(k)}=\frac{\eta_{k}}{2}\sum_{n\in \mathcal{N}}\frac{\widehat{{D}}^{(k)}_{n}e^{(k)}_{n}}{{D}^{(k)} }=\frac{\eta_k}{2}e^{(k)}_{\mathsf{avg}}$, we get
 \begin{align}
 &\frac{1}{{K}} \sum_{k=0}^{K-1} \mathbb E\left[\Vert \nabla F^{(k)}(\mathbf{w}^{({k})}) \Vert^2\right]\leq
 \sum_{k=0}^{K-1} \frac{2\sqrt{e^{(k)}_{\mathsf{sum}}}\left(\mathbb E\left[{F}^{(k-1)}(\mathbf{w}^{(k)})\right] - \mathbb E\left[F^{(k)}(\mathbf{w}^{(k+1)})\right]\right)}{\alpha e^{(k)}_{\mathsf{avg}}\sqrt{N K}(1-\Lambda^{(k)})}
 \nonumber\\&
 +\sum_{k=0}^{K-1}\frac{2\sqrt{e^{(k)}_{\mathsf{sum}}}\Omega^{(k+1)} \Delta^{(k+1)} }{\alpha e^{(k)}_{\mathsf{avg}}\sqrt{N K}(1-\Lambda^{(k)})}
  \hspace{-16mm} \nonumber 
  \hspace{-16mm} 
  \\&+\sum_{k=0}^{K-1} \frac{1}{(1-\Lambda^{(k)})}\vast({8 \beta^2\Theta^2 \frac{\alpha^2N}{{e^{(k)}_{\mathsf{sum}} K^2}} }\sum_{n\in \mathcal{N}}\frac{\widehat{{D}}^{(k)}_n}{{D}^{(k)} }( e_n^{(k)}-1) 
    \frac{1}{\left(\widehat{D}_n^{(k)}\right)^2}\vast[ \frac{1}{{B}_n^{(k)}} \left(\sum_{j=1}^{{S}^{(k)}_{n}} \widetilde{\sigma}^{(k)}_{n,j}{S}^{(k)}_{n,j} \right)^2 \hspace{-10mm}\nonumber \hspace{-10mm} \\&-\sum_{j=1}^{{S}^{(k)}_{n}}{S}^{(k)}_{n,j} {\left(\widetilde{\sigma}^{(k)}_{n,j}\right)^2}\vast]
     +{8\zeta_2 \frac{\alpha^2N}{{e^{(k)}_{\mathsf{sum}} K^2}}\beta^2 \left(e_{\mathsf{max}}^{(k)}\right)\left(e_{\mathsf{max}}^{(k)}-1\right)}
     \vast)
     \hspace{-16mm} \nonumber  \hspace{-16mm}
     \\&
     +\sum_{k=0}^{K-1}\frac{4 e^{(k)}_{\mathsf{avg}} \alpha \Theta^2 {\beta \sqrt{N} }}{\sqrt{e^{(k)}_{\mathsf{sum}}}K\sqrt{K}(1-\Lambda^{(k)})}\sum_{n\in \mathcal{N}} \left(\frac{\widehat{{D}}^{(k)}_n}{{D}^{(k)} } \right)^2\frac{1}{e^{(k)}_n}\frac{1}{\left(\widehat{D}_n^{(k)}\right)^2} 
    \nonumber \\&  \hspace{65mm}\times\left[ \frac{1}{{B}_n^{(k)}} \left(\sum_{j=1}^{{S}^{(k)}_{n}} \widetilde{\sigma}^{(k)}_{n,j}{S}^{(k)}_{n,j} \right)^2-\sum_{j=1}^{{S}^{(k)}_{n}}{S}^{(k)}_{n,j} {\left(\widetilde{\sigma}^{(k)}_{n,j}\right)^2}\right].\hspace{-6mm}
 \end{align} 
  To remove the dependency of the bound on $F^{(k)}(\mathbf{w}^{(k+1)})$, $1\leq k\leq K$, which requires non-causal information to optimize, we further upper bound the first term on the right hand side of the above inequality and obtain the following upper bound
  \begin{align}\label{eq:prop1FinalProofBound}
 &\frac{1}{{K}} \sum_{k=0}^{K-1} \mathbb E\left[\Vert \nabla F^{(k)}(\mathbf{w}^{({k})}) \Vert^2\right]\leq
 \frac{2\sqrt{\widehat{e}_{\mathsf{max}}}\left({F}^{(-1)}(\mathbf{w}^{(0)}) - F^{(K)^\star}\right)}{\alpha \overline{e}_{\mathsf{min}}\sqrt{N K}(1-\Lambda_{\mathsf{max}})}
 +\sum_{k=0}^{K-1}\frac{2\sqrt{e^{(k)}_{\mathsf{sum}}}\Omega^{(k+1)} \Delta^{(k+1)} }{\alpha e^{(k)}_{\mathsf{avg}}\sqrt{N K}(1-\Lambda^{(k)})}
  \hspace{-16mm} \nonumber 
  \hspace{-16mm} 
  \\&+\sum_{k=0}^{K-1} \frac{1}{(1-\Lambda^{(k)})}\vast({8 \beta^2\Theta^2 \frac{\alpha^2N}{{e^{(k)}_{\mathsf{sum}} K^2}} }\sum_{n\in \mathcal{N}}\frac{\widehat{{D}}^{(k)}_n}{{D}^{(k)} }( e_n^{(k)}-1)
    \frac{1}{\left(\widehat{D}_n^{(k)}\right)^2}\vast[ \frac{1}{{B}_n^{(k)}} \left(\sum_{j=1}^{{S}^{(k)}_{n}} \widetilde{\sigma}^{(k)}_{n,j}{S}^{(k)}_{n,j} \right)^2 \hspace{-10mm}\nonumber \hspace{-10mm} \\&-\sum_{j=1}^{{S}^{(k)}_{n}}{S}^{(k)}_{n,j} {\left(\widetilde{\sigma}^{(k)}_{n,j}\right)^2}\vast]
     +{8\zeta_2 \frac{\alpha^2 N}{{e^{(k)}_{\mathsf{sum}} K^2}}\beta^2 \left(e_{\mathsf{max}}^{(k)}\right)\left(e_{\mathsf{max}}^{(k)}-1\right)}
     \vast)
     \hspace{-16mm} \nonumber  \hspace{-16mm}
     \\&
     +\sum_{k=0}^{K-1}\frac{4 e^{(k)}_{\mathsf{avg}} \alpha \Theta^2 {\beta \sqrt{N} }}{\sqrt{e^{(k)}_{\mathsf{sum}}}K\sqrt{K}(1-\Lambda^{(k)})}\sum_{n\in \mathcal{N}} \left(\frac{\widehat{{D}}^{(k)}_n}{{D}^{(k)} } \right)^2\frac{1}{e^{(k)}_n}\frac{1}{\left(\widehat{D}_n^{(k)}\right)^2} 
    \nonumber \\&  \hspace{65mm}\times\left[ \frac{1}{{B}_n^{(k)}} \left(\sum_{j=1}^{{S}^{(k)}_{n}} \widetilde{\sigma}^{(k)}_{n,j}{S}^{(k)}_{n,j} \right)^2-\sum_{j=1}^{{S}^{(k)}_{n}}{S}^{(k)}_{n,j} {\left(\widetilde{\sigma}^{(k)}_{n,j}\right)^2}\right],\hspace{-6mm}
 \end{align} 
  which concludes the proof.
  \newpage
  \section{Further Convergence Results Under Optimal Sampling}\label{app:furtherOptSample}
  \begin{corollary}[Convergence of PSL with Optimal Local Sampling under Bounded Local Iterations]\label{cor:neyman1} In addition to the conditions described in Proposition~\ref{prop:neyman}, further assume that  $\max_k \left\{\Lambda^{(k)}\right\} \leq \Lambda_{\mathsf{max}}< 1$, $ \overline{e}_{\mathsf{min}} \leq e^{(k)}_{\mathsf{avg}}\leq  \overline{e}_{\mathsf{max}}$, $\forall k$, and $ \widehat{e}_{\mathsf{min}} \leq e^{(k)}_{\mathsf{sum}}\leq  \widehat{e}_{\mathsf{max}}$, $\forall k$. Then, the convergence behavior of the cumulative average of gradient of the global loss functions for PSL is described by
  \begin{equation}\label{cor3Res}
      \begin{aligned}
       &\frac{1}{K} \sum_{k=0}^{K-1} \mathbb E\left[\Vert \nabla F^{(k)}(\mathbf{w}^{({k})}) \Vert^2\right] \leq
  \frac{2\sqrt{\widehat{e}_{\mathsf{max}}}\left({F}^{(-1)}(\mathbf{w}^{(0)}) - F^{(K)^\star}\right)}{\alpha \overline{e}_{\mathsf{min}}\sqrt{N K}(1-\Lambda_{\mathsf{max}})}
 + \frac{2\sqrt{\widehat{e}_{\mathsf{max}}}}{\alpha \overline{e}_{\mathsf{min}}\sqrt{N K}}\sum_{k=0}^{K-1}\frac{\Omega^{(k+1)} \Delta^{(k+1)} }{(1-\Lambda_{\mathsf{max}})}
  \hspace{-16mm} 
  \\&+\sum_{k=0}^{K-1} \frac{1}{(1-\Lambda_{\mathsf{max}})}\vast({8 \beta^2\Theta^2 \frac{\alpha^2 N}{{\widehat{e}_{\mathsf{min}} K^2}} }\sum_{n\in \mathcal{N}}\frac{\widehat{{D}}^{(k)}_n}{{D}^{(k)} }( e_n^{(k)}-1)
    \frac{1}{\left(\widehat{D}_n^{(k)}\right)^2}\vast[ \frac{1}{{B}_n^{(k)}} \left(\sum_{j=1}^{{S}^{(k)}_{n}} \widetilde{\sigma}^{(k)}_{n,j}{S}^{(k)}_{n,j} \right)^2 \hspace{-10mm} \\&-\sum_{j=1}^{{S}^{(k)}_{n}}{S}^{(k)}_{n,j} {\left(\widetilde{\sigma}^{(k)}_{n,j}\right)^2}\vast]
     +{8\zeta_2 \frac{\alpha^2 N}{{\widehat{e}_{\mathsf{min}} K^2}}\beta^2 \left(e_{\mathsf{max}}^{(k)}\right)\left(e_{\mathsf{max}}^{(k)}-1\right)}
     \vast)
     \hspace{-16mm}
     \\&
     +\sum_{k=0}^{K-1}\frac{4 \overline{e}_{\mathsf{max}} \alpha \Theta^2 {\beta \sqrt{N} }}{\sqrt{\widehat{e}_{\mathsf{min}}}K\sqrt{K}(1-\Lambda_{\mathsf{max}})}\sum_{n\in \mathcal{N}} \left(\frac{1}{{D}^{(k)}\sqrt{e^{(k)}_n} } \right)^2\left[ \frac{1}{{B}_n^{(k)}} \left(\sum_{j=1}^{{S}^{(k)}_{n}} \widetilde{\sigma}^{(k)}_{n,j}{S}^{(k)}_{n,j} \right)^2-\sum_{j=1}^{{S}^{(k)}_{n}}{S}^{(k)}_{n,j} {\left(\widetilde{\sigma}^{(k)}_{n,j}\right)^2}\right].
      \end{aligned}
  \end{equation}
\end{corollary}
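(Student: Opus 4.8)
The plan is to obtain \eqref{cor3Res} directly from the general bound \eqref{eq:gen_conv_neyman_main} established in Proposition~\ref{prop:neyman}, which already holds under the conditions of Theorem~\ref{th:main} together with the Neyman allocation and the step-size choice $\eta_k=\alpha/\sqrt{Ke^{(k)}_{\mathsf{sum}}/N}$. The three additional hypotheses of the corollary --- $\Lambda^{(k)}\le\Lambda_{\mathsf{max}}<1$, $\overline{e}_{\mathsf{min}}\le e^{(k)}_{\mathsf{avg}}\le\overline{e}_{\mathsf{max}}$, and $\widehat{e}_{\mathsf{min}}\le e^{(k)}_{\mathsf{sum}}\le\widehat{e}_{\mathsf{max}}$ --- serve only to replace the residual $k$-dependent factors $\Lambda^{(k)}$, $e^{(k)}_{\mathsf{avg}}$, $e^{(k)}_{\mathsf{sum}}$ still appearing inside the summations of \eqref{eq:gen_conv_neyman_main} by their $k$-independent majorants. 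Thus the proof is essentially a sequence of monotone, term-by-term majorizations, and no new analytic inequality is required.

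Concretely, I would process \eqref{eq:gen_conv_neyman_main} group by group. Term $(a)$ is inherited verbatim, since the telescoping of the consecutive loss differences was already carried out in the proof of Proposition~\ref{prop:neyman} using exactly these constants. For the model-drift summand I would bound $\sqrt{e^{(k)}_{\mathsf{sum}}}\le\sqrt{\widehat{e}_{\mathsf{max}}}$, $1/e^{(k)}_{\mathsf{avg}}\le 1/\overline{e}_{\mathsf{min}}$ and $1/(1-\Lambda^{(k)})\le 1/(1-\Lambda_{\mathsf{max}})$, pulling the now-constant prefactor outside the sum to recover the second line of \eqref{cor3Res}. For the stratified-sampling/dissimilarity group I would use $1/e^{(k)}_{\mathsf{sum}}\le 1/\widehat{e}_{\mathsf{min}}$ together with $1/(1-\Lambda^{(k)})\le 1/(1-\Lambda_{\mathsf{max}})$; and for the final group I would use $e^{(k)}_{\mathsf{avg}}\le\overline{e}_{\mathsf{max}}$, $1/\sqrt{e^{(k)}_{\mathsf{sum}}}\le 1/\sqrt{\widehat{e}_{\mathsf{min}}}$ and the same $\Lambda$-bound, additionally relaxing the coefficient $4e^{(k)}_{\mathsf{avg}}/(2\sqrt{e^{(k)}_{\mathsf{sum}}})$ to $4\overline{e}_{\mathsf{max}}/\sqrt{\widehat{e}_{\mathsf{min}}}$, a harmless factor-of-two loosening that preserves the direction of the inequality.

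The one point requiring genuine care --- and the step I expect to be the main obstacle --- is the legitimacy of these substitutions in the presence of the negative sub-terms labeled $(b)$ and $(c)$ in \eqref{eq:gen_conv_neyman_main}. Each such group multiplies the bracket $\tfrac{1}{B^{(k)}_n}\big(\sum_j\widetilde{\sigma}^{(k)}_{n,j}S^{(k)}_{n,j}\big)^2-\sum_j S^{(k)}_{n,j}(\widetilde{\sigma}^{(k)}_{n,j})^2$ by a positive coefficient of the form $1/e^{(k)}_{\mathsf{sum}}$; replacing that coefficient by the larger $1/\widehat{e}_{\mathsf{min}}$ is a valid \emph{upper} bound only if the bracket is non-negative as a whole, since otherwise enlarging the coefficient would make a negative contribution more negative. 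I would therefore invoke the identity \eqref{eq:109} from the proof of Proposition~\ref{prop:neyman}, which shows that this bracket equals $(\widehat{D}^{(k)}_n)^2$ times the stratified-sampling variance and is hence $\ge 0$ (the same reasoning covers the final group). With the bracket treated as a single non-negative factor, every coefficient replacement is monotone in the correct direction.

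Finally, since under the stated bounds each summand of \eqref{eq:gen_conv_neyman_main} is dominated pointwise in $k$ by the corresponding summand of \eqref{cor3Res}, summing over $k=0,\dots,K-1$ preserves the inequality and yields \eqref{cor3Res}, completing the proof.
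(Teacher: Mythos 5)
Your proposal is correct and follows essentially the same route as the paper: the paper's proof of Corollary~\ref{cor:neyman1} likewise starts from the Proposition~\ref{prop:neyman} bound \eqref{eq:gen_conv_neyman_main} (in the form \eqref{eq:prop1FinalProofBound}) and applies exactly the term-by-term majorizations $1/(1-\Lambda^{(k)})\le 1/(1-\Lambda_{\mathsf{max}})$, $\sqrt{e^{(k)}_{\mathsf{sum}}}\le\sqrt{\widehat{e}_{\mathsf{max}}}$, $1/e^{(k)}_{\mathsf{sum}}\le 1/\widehat{e}_{\mathsf{min}}$, $1/e^{(k)}_{\mathsf{avg}}\le 1/\overline{e}_{\mathsf{min}}$, and $e^{(k)}_{\mathsf{avg}}\le\overline{e}_{\mathsf{max}}$ before rearranging, with term $(a)$ inherited verbatim as you note. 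Your additional check, via the identity \eqref{eq:109}, that the bracket $\frac{1}{B^{(k)}_n}\big(\sum_{j}\widetilde{\sigma}^{(k)}_{n,j}S^{(k)}_{n,j}\big)^2-\sum_{j}S^{(k)}_{n,j}\big(\widetilde{\sigma}^{(k)}_{n,j}\big)^2$ is non-negative under the Neyman allocation (so enlarging its positive coefficient is a legitimate upper bound) is a sign condition the paper leaves implicit, and verifying it is the right instinct rather than a deviation.
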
 
\begin{proof} Considering~\eqref{eq:prop1FinalProofBound}, assuming  $  \max_{k} \left\{\Lambda^{(k)}\right\}  \leq \Lambda_{\mathsf{max}}< 1$, $( \overline{e}_{\mathsf{max}})^{-1} \leq \left(e^{(k)}_{\mathsf{avg}}\right)^{-1}\leq ( \overline{e}_{\mathsf{min}})^{-1}$, $\forall k$ and $ \widehat{e}_{\mathsf{min}} \leq e^{(k)}_{\mathsf{sum}}\leq  \widehat{e}_{\mathsf{max}}$, $\forall k$, we get
 \begin{align}\label{eq:prop2FinalProofBound}
 &\frac{1}{{K}} \sum_{k=0}^{K-1} \mathbb E\left[\Vert \nabla F^{(k)}(\mathbf{w}^{({k})}) \Vert^2\right]\leq
 \frac{2\sqrt{\widehat{e}_{\mathsf{max}}}\left({F}^{(0)}(\mathbf{w}^{(0)}) - F^{(K)^\star}\right)}{\alpha \overline{e}_{\mathsf{min}}\sqrt{N K}(1-\Lambda_{\mathsf{max}})}
 \nonumber\\&
 +\sum_{k=0}^{K-1}\frac{2\sqrt{\widehat{e}_{\mathsf{max}}}\Omega^{(k+1)} \Delta^{(k+1)} }{\alpha \overline{e}_{\mathsf{min}}\sqrt{N K}(1-\Lambda_{\mathsf{max}})}
  \hspace{-16mm} \nonumber 
  \hspace{-16mm} 
  \\&+\sum_{k=0}^{K-1} \frac{1}{(1-\Lambda_{\mathsf{max}})}\vast({8 \beta^2\Theta^2 \frac{\alpha^2N}{{\widehat{e}_{\mathsf{min}} K^2}} }\sum_{n\in \mathcal{N}}\frac{\widehat{{D}}^{(k)}_n}{{D}^{(k)} }( e_n^{(k)}-1)
    \frac{1}{\left(\widehat{D}_n^{(k)}\right)^2}\vast[ \frac{1}{{B}_n^{(k)}} \left(\sum_{j=1}^{{S}^{(k)}_{n}} \widetilde{\sigma}^{(k)}_{n,j}{S}^{(k)}_{n,j} \right)^2 \hspace{-10mm}\nonumber \hspace{-10mm} \\&-\sum_{j=1}^{{S}^{(k)}_{n}}{S}^{(k)}_{n,j} {\left(\widetilde{\sigma}^{(k)}_{n,j}\right)^2}\vast]
     +{8\zeta_2 \frac{\alpha^2 N}{{\widehat{e}_{\mathsf{min}} K^2}}\beta^2 \left(e_{\mathsf{max}}^{(k)}\right)\left(e_{\mathsf{max}}^{(k)}-1\right)}
     \vast)
     \hspace{-16mm} \nonumber  \hspace{-16mm}
     \\&
     +\sum_{k=0}^{K-1}\frac{4 \overline{e}_{\mathsf{max}} \alpha \Theta^2 {\beta \sqrt{N} }}{\sqrt{\widehat{e}_{\mathsf{min}}}K\sqrt{K}(1-\Lambda_{\mathsf{max}})}\sum_{n\in \mathcal{N}} \left(\frac{\widehat{{D}}^{(k)}_n}{{D}^{(k)} } \right)^2\frac{1}{e^{(k)}_n}\frac{1}{\left(\widehat{D}_n^{(k)}\right)^2} 
    \nonumber \\&  \hspace{65mm}\times\left[ \frac{1}{{B}_n^{(k)}} \left(\sum_{j=1}^{{S}^{(k)}_{n}} \widetilde{\sigma}^{(k)}_{n,j}{S}^{(k)}_{n,j} \right)^2-\sum_{j=1}^{{S}^{(k)}_{n}}{S}^{(k)}_{n,j} {\left(\widetilde{\sigma}^{(k)}_{n,j}\right)^2}\right].\hspace{-6mm}
 \end{align} 

 Rearranging the terms concludes the proof.
\end{proof}
\begin{corollary}[Convergence of PSL with Optimal Local Sampling under Unified Upperbounds on the Sampling Noise]\label{cor:furtherOptSam2}
Under the conditions specified in Corollary~\ref{cor:neyman1}, further assume a bounded stratified sampling noise $\max_{k,n}\left\{\frac{1}{\left(\widehat{D}_n^{(k)}\right)^2} \left[ \frac{1}{{B}_n^{(k)}} \left(\sum_{j=1}^{{S}^{(k)}_{n}} \widetilde{\sigma}^{(k)}_{n,j}{S}^{(k)}_{n,j} \right)^2-\sum_{j=1}^{{S}^{(k)}_{n}}{S}^{(k)}_{n,j} {\left(\widetilde{\sigma}^{(k)}_{n,j}\right)^2}\right]\right\}\leq \sigma_{\mathsf{max}}$, $\forall n,k$, bounded local iterations $\max_{k}\{ e^{(k)}_{\mathsf{max}}\}\leq e_{\mathsf{max}} $, and bounded idle period as $\Omega^{(k)}\leq \left[\frac{\gamma}{K\Delta^{(k)}}\right]^+$, $\forall k$, for a finite non-negative constant $\gamma$. Then, the convergence behavior of the  cumulative average of gradient of the global loss functions across the global aggregation instances of PSL is described by the following upper bound, which implies $\frac{1}{{K}} \sum_{k=0}^{K-1} \mathbb E\left[\Vert \nabla F^{(k)}(\mathbf{w}^{({k})}) \Vert^2\right] \leq \mathcal{O}(1/\sqrt{K})$:
\begin{equation}
\footnotesize
    \begin{aligned}
    &\frac{1}{K} \sum_{k=0}^{K-1} \Vert \nabla F^{(k)}(\mathbf{w}^{({k})}) \Vert^2 \leq
    \frac{2\sqrt{\widehat{e}_{\mathsf{max}}}\left({F}^{(0)}(\mathbf{w}^{(0)}) - F^{(K)^\star}\right)}{\alpha \overline{e}_{\mathsf{min}}\sqrt{N K}(1-\Lambda_{\mathsf{max}})}
 + \frac{2\sqrt{\widehat{e}_{\mathsf{max}}}}{\alpha \overline{e}_{\mathsf{min}}\sqrt{N K}}\frac{\gamma }{(1-\Lambda_{\mathsf{max}})}  
     +\frac{4 \overline{e}_{\mathsf{max}} \alpha \Theta^2 {\beta \sqrt{N}  }}{\sqrt{\widehat{e}_{\mathsf{min}}}\sqrt{K}(1-\Lambda_{\mathsf{max}})} \sigma_{\mathsf{max}}\hspace{-6mm}
  \\&+ \frac{1}{K(1-\Lambda_{\mathsf{max}})}\vast({ \frac{8 \beta^2\Theta^2\alpha^2 N}{{\widehat{e}_{\mathsf{min}} }} }( {e}_{\mathsf{max}}-1)
   \sigma_{\mathsf{max}}
     +{8\zeta_2 \frac{\alpha^2N}{{\widehat{e}_{\mathsf{min}} }}\beta^2 \left({e}_{\mathsf{max}}\right)\left({e}_{\mathsf{max}}-1\right)}
     \vast).
    \end{aligned}
\end{equation}
\end{corollary}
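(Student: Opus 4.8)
The plan is to begin from the bound~\eqref{cor3Res} already established in Corollary~\ref{cor:neyman1}, which incorporates the optimal Neyman sampling rule of Proposition~\ref{prop:neyman}, the step-size choice $\eta_k=\alpha/\sqrt{Ke^{(k)}_{\mathsf{sum}}/N}$, and the uniform bounds $\max_k\{\Lambda^{(k)}\}\le\Lambda_{\mathsf{max}}<1$, $\overline{e}_{\mathsf{min}}\le e^{(k)}_{\mathsf{avg}}\le\overline{e}_{\mathsf{max}}$, and $\widehat{e}_{\mathsf{min}}\le e^{(k)}_{\mathsf{sum}}\le\widehat{e}_{\mathsf{max}}$. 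The task then reduces to applying the three new assumptions termwise to the four groups of terms on the right-hand side of~\eqref{cor3Res}, closely following the template used for Corollary~\ref{cor:2} in Appendix~\ref{app:cor:2}. The first (initial-loss) group already has the form appearing in the claimed bound and is carried over unchanged.

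For the drift group I would invoke the bounded idle-period assumption $\Omega^{(k)}\le[\gamma/(K\Delta^{(k)})]^+$, which gives $\Omega^{(k+1)}\Delta^{(k+1)}\le\gamma/K$ for every $k$; summing over the $K$ indices then collapses $\sum_{k=0}^{K-1}\Omega^{(k+1)}\Delta^{(k+1)}/(1-\Lambda_{\mathsf{max}})$ into $\gamma/(1-\Lambda_{\mathsf{max}})$, producing term $(b)$ of the claimed bound. For the dissimilarity/sampling group I would use $\tfrac{1}{(\widehat{D}^{(k)}_n)^2}\big[\tfrac{1}{B^{(k)}_n}(\sum_j\widetilde{\sigma}^{(k)}_{n,j}S^{(k)}_{n,j})^2-\sum_j S^{(k)}_{n,j}(\widetilde{\sigma}^{(k)}_{n,j})^2\big]\le\sigma_{\mathsf{max}}$, together with $e^{(k)}_n-1\le e_{\mathsf{max}}-1$ and $\sum_{n}\widehat{D}^{(k)}_n/D^{(k)}=1$, so that the inner sum is bounded by $(e_{\mathsf{max}}-1)\sigma_{\mathsf{max}}$, while the $\zeta_2$ piece uses $e^{(k)}_{\mathsf{max}}(e^{(k)}_{\mathsf{max}}-1)\le e_{\mathsf{max}}(e_{\mathsf{max}}-1)$; since these summands carry a $1/K^2$ prefactor, the outer $\sum_{k=0}^{K-1}$ contributes an overall $1/K$ factor, yielding the $\mathcal{O}(1/K)$ group of the claimed bound.

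The step I expect to be the main obstacle is the last (variance) group, because the sampling-noise assumption is normalized by $\widehat{D}^{(k)}_n$ whereas that term carries the factor $(1/(D^{(k)}\sqrt{e^{(k)}_n}))^2$. Here I would first reconcile the two normalizations by rewriting $\tfrac{1}{(D^{(k)})^2e^{(k)}_n}\big[\cdots\big]=\big(\tfrac{\widehat{D}^{(k)}_n}{D^{(k)}}\big)^2\tfrac{1}{e^{(k)}_n}\cdot\tfrac{1}{(\widehat{D}^{(k)}_n)^2}\big[\cdots\big]\le\sigma_{\mathsf{max}}\big(\tfrac{\widehat{D}^{(k)}_n}{D^{(k)}}\big)^2\tfrac{1}{e^{(k)}_n}$, and then bounding the residual sum by $\sum_{n}\big(\widehat{D}^{(k)}_n/D^{(k)}\big)^2/e^{(k)}_n\le\sum_{n}\widehat{D}^{(k)}_n/D^{(k)}=1$, which holds because the data fractions lie in $[0,1]$ and $e^{(k)}_n\ge1$. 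With the remaining $1/(K\sqrt{K})$ prefactor, the outer sum collapses to $1/\sqrt{K}$, producing term $(c)$ of the claimed bound.

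Finally, I would assemble the four simplified groups into the stated inequality and read off the asymptotics: the initial-loss, drift, and variance terms each scale as $\mathcal{O}(1/\sqrt{K})$, whereas the dissimilarity/sampling group scales as $\mathcal{O}(1/K)$; hence the dominant behavior is $\tfrac{1}{K}\sum_{k=0}^{K-1}\mathbb{E}\Vert\nabla F^{(k)}(\mathbf{w}^{(k)})\Vert^2\le\mathcal{O}(1/\sqrt{K})$, which concludes the proof.
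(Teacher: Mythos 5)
Your proposal is correct and follows essentially the same route as the paper's proof in Appendix~\ref{app:furtherOptSample}: substitute the unified bounds into~\eqref{cor3Res} of Corollary~\ref{cor:neyman1} term by term, collapse the drift sum via $\Omega^{(k+1)}\Delta^{(k+1)}\le \gamma/K$, bound the $1/K^2$-prefactored dissimilarity/sampling group and the $1/(K\sqrt{K})$-prefactored variance group by $\sigma_{\mathsf{max}}$ and $e_{\mathsf{max}}$, and read off the $\mathcal{O}(1/\sqrt{K})$ rate. Your explicit reconciliation of the $(\widehat{D}_n^{(k)})^{-2}$ normalization in the sampling-noise assumption with the $(D^{(k)})^{-2}(e_n^{(k)})^{-1}$ factor in the last group, via $\sum_{n}(\widehat{D}_n^{(k)}/D^{(k)})^2/e_n^{(k)}\le 1$, is a step the paper performs silently, so your write-up is if anything slightly more careful.
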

\begin{proof}
Considering~\eqref{cor3Res}, assuming a bounded stratified sampling noise:\\ $\max_{k,n}\left\{\frac{1}{\left(\widehat{D}_n^{(k)}\right)^2} \left[ \frac{1}{{B}_n^{(k)}} \left(\sum_{j=1}^{{S}^{(k)}_{n}} \widetilde{\sigma}^{(k)}_{n,j}{S}^{(k)}_{n,j} \right)^2-\sum_{j=1}^{{S}^{(k)}_{n}}{S}^{(k)}_{n,j} {\left(\widetilde{\sigma}^{(k)}_{n,j}\right)^2}\right]\right\}\leq \sigma_{\mathsf{max}}$, $\forall n,k$, bounded local iterations, $\max_{k}\{ e^{(k)}_{\mathsf{max}}\}\leq e_{\mathsf{max}} $, and $\Delta^{(k)}\leq \left[ \frac{\gamma}{K\Omega^{(k)}}\right]^+$, $\forall k$, for a finite non-negative constant $\gamma$, we have
\begin{align}
 &\frac{1}{K} \sum_{k=0}^{K-1} \mathbb{E} \Vert \nabla F^{(k)}(\mathbf{w}^{({k})}) \Vert^2 \leq
  \frac{2\sqrt{\widehat{e}_{\mathsf{max}}}\left({F}^{(-1)}(\mathbf{w}^{(0)}) - F^{(K)^\star}\right)}{\alpha \overline{e}_{\mathsf{min}}\sqrt{N K}(1-\Lambda_{\mathsf{max}})}
 + \frac{2\sqrt{\widehat{e}_{\mathsf{max}}}}{\alpha \overline{e}_{\mathsf{min}}\sqrt{N K}}\sum_{k=0}^{K-1}\frac{\gamma }{K(1-\Lambda_{\mathsf{max}})}
  \hspace{-16mm} \nonumber 
  \\&+\sum_{k=0}^{K-1} \frac{1}{(1-\Lambda_{\mathsf{max}})}\vast({ \frac{8 \beta^2\Theta^2\alpha^2 N}{{\widehat{e}_{\mathsf{min}} K^2}} }\sum_{n\in \mathcal{N}}\frac{\widehat{{D}}^{(k)}_n}{{D}^{(k)} }( {e}_{\mathsf{max}}-1) 
   \sigma_{\mathsf{max}}
     +{8\zeta_2 \frac{\alpha^2 N}{{\widehat{e}_{\mathsf{min}} K^2}}\beta^2 \left({e}_{\mathsf{max}}\right)\left({e}_{\mathsf{max}}-1\right)}
     \vast)
     \hspace{-16mm} \nonumber 
     \\&
     +\sum_{k=0}^{K-1}\frac{4 \overline{e}_{\mathsf{max}} \alpha \Theta^2 {\beta \sqrt{N}  }}{\sqrt{\widehat{e}_{\mathsf{min}}}K\sqrt{K}(1-\Lambda_{\mathsf{max}})} \sigma_{\mathsf{max}},\hspace{-6mm}
  \nonumber   \\ &\leq 
     \frac{2\sqrt{\widehat{e}_{\mathsf{max}}}\left({F}^{(0)}(\mathbf{w}^{(0)}) - F^{(K)^\star}\right)}{\alpha \overline{e}_{\mathsf{min}}\sqrt{N K}(1-\Lambda_{\mathsf{max}})}
 + \frac{2\sqrt{\widehat{e}_{\mathsf{max}}}}{\alpha \overline{e}_{\mathsf{min}}\sqrt{N K}}\frac{\gamma }{(1-\Lambda_{\mathsf{max}})}
  \hspace{-16mm} \nonumber 
  \\&+\sum_{k=0}^{K-1} \frac{1}{(1-\Lambda_{\mathsf{max}})}\vast({ \frac{8 \beta^2\Theta^2\alpha^2 N}{{\widehat{e}_{\mathsf{min}} K^2}} }\sum_{n\in \mathcal{N}}\frac{\widehat{{D}}^{(k)}_n}{{D}^{(k)} }( {e}_{\mathsf{max}}-1)
   \sigma_{\mathsf{max}}
     +{8\zeta_2 \frac{\alpha^2 N}{{\widehat{e}_{\mathsf{min}} K^2}}\beta^2 \left({e}_{\mathsf{max}}\right)\left({e}_{\mathsf{max}}-1\right)}
     \vast)
     \hspace{-16mm} \nonumber  \hspace{-16mm}
     \\&
     +\sum_{k=0}^{K-1}\frac{4 \overline{e}_{\mathsf{max}} \alpha \Theta^2 {\beta \sqrt{N} }}{\sqrt{\widehat{e}_{\mathsf{min}}}K\sqrt{K}(1-\Lambda_{\mathsf{max}})}\sigma_{\mathsf{max}},\hspace{-6mm}
   \nonumber   \\&=
     \frac{2\sqrt{\widehat{e}_{\mathsf{max}}}\left({F}^{(0)}(\mathbf{w}^{(0)}) - F^{(K)^\star}\right)}{\alpha \overline{e}_{\mathsf{min}}\sqrt{N K}(1-\Lambda_{\mathsf{max}})}
 + \frac{2\sqrt{\widehat{e}_{\mathsf{max}}}}{\alpha \overline{e}_{\mathsf{min}}\sqrt{N K}}\frac{\gamma }{(1-\Lambda_{\mathsf{max}})}
  + \frac{1}{(1-\Lambda_{\mathsf{max}})}\vast({ \frac{8 \beta^2\Theta^2\alpha^2 N}{{\widehat{e}_{\mathsf{min}} K}} }( {e}_{\mathsf{max}}-1) \\&
   +{8\zeta_2 \frac{\alpha^2 N}{{\widehat{e}_{\mathsf{min}} K}}\beta^2 \left({e}_{\mathsf{max}}\right)\left({e}_{\mathsf{max}}-1\right)}
     \vast)
          +\sum_{k=0}^{K-1}\frac{4 \overline{e}_{\mathsf{max}} \alpha \Theta^2 {\beta \sqrt{N} }}{\sqrt{\widehat{e}_{\mathsf{min}}}K\sqrt{K}(1-\Lambda_{\mathsf{max}})} \sigma_{\mathsf{max}},\hspace{-6mm}
   \nonumber  \\&=
    \frac{2\sqrt{\widehat{e}_{\mathsf{max}}}\left({F}^{(0)}(\mathbf{w}^{(0)}) - F^{(K)^\star}\right)}{\alpha \overline{e}_{\mathsf{min}}\sqrt{N K}(1-\Lambda_{\mathsf{max}})}
 + \frac{2\sqrt{\widehat{e}_{\mathsf{max}}}}{\alpha \overline{e}_{\mathsf{min}}\sqrt{N K}}\frac{\gamma }{(1-\Lambda_{\mathsf{max}})}
  \hspace{-16mm} \nonumber 
  \\&+ \frac{1}{K(1-\Lambda_{\mathsf{max}})}\vast({ \frac{8 \beta^2\Theta^2\alpha^2 N}{{\widehat{e}_{\mathsf{min}} }} }( {e}_{\mathsf{max}}-1)
   \sigma_{\mathsf{max}}
     +{8\zeta_2 \frac{\alpha^2N}{{\widehat{e}_{\mathsf{min}} }}\beta^2 \left({e}_{\mathsf{max}}\right)\left({e}_{\mathsf{max}}-1\right)}
     \vast)
     \hspace{-16mm} \nonumber  \hspace{-16mm}
     \\&
     +\frac{4 \overline{e}_{\mathsf{max}} \alpha \Theta^2 {\beta \sqrt{N} }}{\sqrt{\widehat{e}_{\mathsf{min}}}\sqrt{K}(1-\Lambda_{\mathsf{max}})} \sigma_{\mathsf{max}},\hspace{-6mm}
 \end{align} 
  where concludes the proof. \end{proof}
  \newpage

 \section{Transforming the Network-Aware Optimization of  PSL Problem}\label{app:optTransform}
   \subsection{Geometric Programming} \label{sec:GPtransConv}
   A basic knowledge of \textit{monomials} and \textit{posynomials}, which is given below, is a prerequisite to understand the GP. 
         \begin{definition}
         A \textbf{{monomial}} is a function $f: \mathbb{R}^n_{++}\rightarrow \mathbb{R}$:\footnote{$\mathbb{R}^n_{++}$ denotes the strictly positive quadrant of $n$-dimensional Euclidean space.} $f(\bm{y})=d y_1^{\alpha_1} y_2^{\alpha_2} \cdots y_n ^{\alpha_n}$, where $d\geq 0$, $\bm{y}=[y_1,\cdots,y_n]$, and $\alpha_j\in \mathbb{R}$, $\forall j$. Based on the definition of monomials, a \textbf{posynomial} $g$ is defined as a sum of monomials: $g(\bm{y})= \sum_{m=1}^{M} d_m y_1^{\alpha^{(1)}_m} y_2^{\alpha^{(2)}_m} \cdots y_n ^{\alpha^{(n)}_m}$.
\end{definition}
A standard GP is a non-convex optimization
problem defined as minimizing a posynomial subject to inequality constraints on posynomials (and monomials) and equality constraints on monomials~\cite{chiang2005geometric,boyd2007tutorial}:
        \begin{equation}\label{eq:GPformat}
            \begin{aligned}
            &\min_{\bm{y}} f_0 (\bm{y})\\
            &\textrm{s.t.} ~~~ f_i(\bm{y})\leq 1, \;\; i=1,\cdots,I,\\
           &~~~~~~ h_l(\bm{y})=1, \;\; l=1,\cdots,L,
            \end{aligned}
        \end{equation}
        where  $f_i(\bm{y})=\sum_{m=1}^{M_i} d_{i,m} y_1^{\alpha^{(1)}_{i,m}} y_2^{\alpha^{(2)}_{i,m}} \cdots y_n ^{\alpha^{(n)}_{i,m}}$, $\forall i$, and $h_l(\bm{y})= d_l y_1^{\alpha^{(1)}_l} y_2^{\alpha^{(2)}_l} \cdots y_n ^{\alpha^{(n)}_l}$, $\forall l$. Due to the fact that the log-sum-exp function $f(\bm{y}) = \log \sum_{j=1}^n e^{y_j}$ is convex ($\log$ denotes the natural logarithm) with the following change of variables $z_i=\log(y_i)$, $b_{i,k}=\log(d_{i,k})$, $b_l=\log (d_l)$  the GP can be converted into the following convex format
          \begin{equation}~\label{GPtoConvex}
            \begin{aligned}
            &\min_{\bm{z}} \;\log \sum_{m=1}^{M_0} e^{\left(\bm{\alpha}^{\top}_{0,m}\bm{z}+ b_{0,m}\right)}\\
            &\textrm{s.t.} ~~~ \log \sum_{m=1}^{M_i} e^{\left(\bm{\alpha}^{\top}_{i,m}\bm{z}+ b_{i,m}\right)}\leq 0 \;\; i=1,\cdots,I,\\
           &~~~~~~~ \bm{\alpha}_l^\top \bm{z}+b_l =0\;\; l=1,\cdots,L,
            \end{aligned}
        \end{equation}
        where $\bm{z}=[z_1,\cdots,z_n]^\top$, $\bm{\alpha}_{i,k}=\left[\alpha_{i,k}^{(1)},\alpha_{i,k}^{(2)}\cdots, \alpha_{i,k}^{(n)}\right]^\top$, $\forall i,k$, and $\bm{\alpha}_{l}=\left[\alpha_{l}^{(1)},a_{l}^{(2)}\cdots, \alpha_{l}^{(n)}\right]^\top$\hspace{-2mm}, $\forall l$.

        
        \subsection{Optimization Problem Transformation}
 Let us revisit Problem~$\bm{\mathcal{P}}$ in the following, where we use the explicit expressions for the constraints w.r.t the optimization variables:
 \begin{align}
     &(\bm{\mathcal{P}}): ~~\min \frac{1}{K}\left[\sum_{k=0}^{K-1} c_1 E^{\mathsf{Tot},(k)}+ c_2 T^{\mathsf{Tot},(k)}\right]+ c_3 \underbrace{\frac{1}{K}\sum_{k=0}^{K-1}\mathbb E \left\Vert \nabla F^{(k)}(\mathbf{w}^{({k})}) \right\Vert^2}_{= \Xi\left(\widehat{\bm{D}}^{(k)},\bm{B}^{(k)},\Omega^{(k)},\Delta^{(k)} \right) ~\textrm{given by}~\eqref{eq:gen_conv_neyman_main}}\\
     & \textrm{s.t.}\nonumber\\ 
     &  T^{\mathsf{Tot},(k)} = T^{\mathsf{D},(k)}+T^{\mathsf{L},(k)}+T^{\mathsf{M},(k)}+T^{\mathsf{U},(k)},\label{prob:TtotApp}\\
     & E^{\mathsf{Tot},(k)}= \sum_{n\in \mathcal{N}}E^{(k)}_n,\label{prob:EtotApp}\\
     & \sum_{k=1}^{K} T^{\mathsf{Tot},(k)}+\Omega^{(k)} = T^{\mathsf{ML}},\label{prob:TmlApp}\\
     & \max_{n\in\mathcal{N}} \left\{  \max_{m\in\mathcal{N}} \left\{\varrho^{(k)}_{m,n} {D}^{(k)}_m b^{\mathsf{D}} /r_{m,n}\right\} \right\} \leq T^{\mathsf{D},(k)},\label{prob:TdApp}\\
     & \max_{n\in\mathcal{N}} \left\{e^{(k)}_n\frac{a_n B^{(k)}_n}{f^{(k)}_n}\right\} \leq T^{\mathsf{L},(k)},\label{prob:TlApp}\\
     & \max_{n\in\mathcal{N}} \left\{  \max_{m\in\mathcal{N}} \left\{ {\varphi_{m,n} {M}b^{\mathsf{G}} }/{r_{m,n}} \right\} \right\} \leq T^{\mathsf{M},(k)},\label{prob:TmApp}\\
     & \max_{n\in\mathcal{N}} \left\{   \frac{Mb^{\mathsf{G}}\varphi_{n,n}}{r_n}\right\} \leq T^{\mathsf{U},(k)},\label{prob:TuApp}\\
      & \sum_{m\in \mathcal{N}}\varrho^{(k)}_{n,m} = 1,~ n\in\mathcal{N},\label{prob:varrhoApp}\\
      & \sum_{m\in \mathcal{N}} \varphi^{(k)}_{n,m}=1,~~n\in\mathcal{N}, \label{prob:varphiApp}\\
        &\varphi_{n,n}^{(k)}\sum_{m \in \mathcal{N} \setminus \{n\}} \varphi_{n,m}^{(k)} \leq 0, ~n\in\mathcal{N}, \label{eq:varphi1App}\\
        &(1-\varphi_{n,n}^{(k)})\sum_{m \in \mathcal{N} \setminus \{n\}} \varphi_{m,n}^{(k)} \leq 0, ~n\in\mathcal{N},\label{eq:varphi2App}\\
           & f^{\mathsf{min}}_n\leq f^{(k)}_n\leq f^{\mathsf{max}}_n,~1\leq B_n^{(k)}\leq \widehat{D}_n^{(k)},~~n\in\mathcal{N},\label{prob:freqApp}\\
     &  \varrho^{(k)}_{n,m},\varphi^{(k)}_{n,m} \geq 0, ~~n,m\in \mathcal{N},\label{prob:feasApp}\\
     &\hspace{-2mm}\textrm{Variables:}\nonumber\\
     &\hspace{-2mm}\small K,\big\{\mathbf{f}^{(k)},\mathbf{B}^{(k)},\bm{\varrho}^{(k)}, \bm{\varphi}^{(k)},T^{\mathsf{D},(k)},T^{\mathsf{L},(k)},T^{\mathsf{M},(k)},T^{\mathsf{U},(k)},\Omega^{(k)}\big\}_{k=1}^{K} \nonumber \hspace{-10mm} 
 \end{align}

In the following, we aim to transform the problem into GP format:

\textbf{Constraint~\eqref{prob:TdApp}:} We transform this constraint via expressing it as an normalized inequality on monomials as follows:
 \begin{tcolorbox}[ams align]
    & \left(T^{\mathsf{D},(k)}\right)^{-1}\varrho^{(k)}_{m,n} {D}^{(k)}_m b^{\mathsf{D}} /r_{m,n} \leq 1,~ m,n\in\mathcal{N}.
 \end{tcolorbox}


\textbf{Constraint~\eqref{prob:TlApp}:} We transform this constraint via expressing it as an normalized inequality on posynomials as follows:
\begin{tcolorbox}[ams align]
    &\left(T^{\mathsf{L},(k)}\right)^{-1}e^{(k)}_n\frac{a_n B^{(k)}_n}{f^{(k)}_n} \leq 1, ~ n\in\mathcal{N}.
\end{tcolorbox}

\textbf{Constraint~\eqref{prob:TmApp}:} We transform this constraint via expressing it as an normalized inequality on monomials as follows:
\begin{tcolorbox}[ams align]
    & {\left(T^{\mathsf{M},(k)} \right)^{-1}\varphi_{m,n} {M}b^{\mathsf{G}} }/{r_{m,n}} \leq 1, ~ m,n\in \mathcal{N}.
\end{tcolorbox}

\textbf{Constraint~\eqref{prob:TuApp}}: We transform this constraint via expressing it as an normalized inequality on monomials as follows:
\begin{tcolorbox}[ams align]
  &  \left( T^{\mathsf{U},(k)} \right)^{-1}   \frac{Mb^{\mathsf{G}}\varphi_{n,n}}{r_n} \leq 1,  n\in\mathcal{N}.
\end{tcolorbox}

Note that~\eqref{prob:TtotApp},~\eqref{prob:EtotApp} are solely definition of the terms in the objective function which are posynomials.
We next focus on the violating constraints~\eqref{prob:TmlApp},~\eqref{prob:varrhoApp},~\eqref{prob:varphiApp} since they impose equality constraints on posynomials which are unacceptable in GP. We transform these violating constraints in the following:

\textbf{Constraint~\eqref{prob:TmlApp}:} We first revisit this constraints:
\begin{equation}
   \sum_{k=1}^{K} [ T^{\mathsf{Tot},(k)}+\Omega^{(k)}] = T^{\mathsf{ML}}.
\end{equation}
We write this constraint via introducing an auxiliary variable and then write the constraint as two inequalities:
\begin{align}
 & (T^{\mathsf{ML}})^{-1} \left(\sum_{k=1}^{K} T^{\mathsf{D},(k)}+T^{\mathsf{L},(k)}+T^{\mathsf{M},(k)}+T^{\mathsf{U},(k)}+\Omega^{(k)}\right) \leq 1,\\
 & \frac{(A^{\mathsf{ML}})^{-1}}{(T^{\mathsf{ML}})^{-1} \left(\sum_{k=1}^{K} T^{\mathsf{D},(k)}+T^{\mathsf{L},(k)}+T^{\mathsf{M},(k)}+T^{\mathsf{U},(k)}+\Omega^{(k)}\right)} \leq 1,\label{frac1App}\\
 & A^{\mathsf{ML}}\geq 1,
\end{align}
where $ A^{\mathsf{ML}}$ is added with a large penalty term to the objective function to force $A^{\mathsf{ML}}\downarrow 1$ at the optimal point. It can be seen that the fraction in~\eqref{frac1App} is not in the format of GP since it is an inequality with a posynomial in the denominator, which is not a posynomial. We thus exploit arithmetic-geometric mean inequality (Lemma~\ref{Lemma:ArethmaticGeometric}) to approximate the denominator with a monomial:
\begin{align}\label{eq:testerappr}
    H(\bm{x})=&\sum_{k=1}^{K} T^{\mathsf{Tot},(k)}+\Omega^{(k)} \geq \widehat{H}(\bm{x};\ell) \triangleq \prod_{k=1}^{K} \left(\frac{T^{\mathsf{D},(k)}  H([\bm{x}]^{\ell-1})}{\left[T^{\mathsf{D},(k)} \right]^{\ell-1}}\right)^{\frac{\left[T^{\mathsf{D},(k)} \right]^{\ell-1} }{H([\bm{x}]^{\ell-1})}} 
    \left(\frac{T^{\mathsf{L},(k)}  H([\bm{x}]^{\ell-1})}{\left[T^{\mathsf{L},(k)} \right]^{\ell-1}}\right)^{\frac{\left[T^{\mathsf{L},(k)} \right]^{\ell-1} }{H([\bm{x}]^{\ell-1})}} \nonumber \\&
    \left(\frac{T^{\mathsf{M},(k)}  H([\bm{x}]^{\ell-1})}{\left[T^{\mathsf{M},(k)} \right]^{\ell-1}}\right)^{\frac{\left[T^{\mathsf{M},(k)} \right]^{\ell-1} }{H([\bm{x}]^{\ell-1})}} 
    \left(\frac{T^{\mathsf{U},(k)}  H([\bm{x}]^{\ell-1})}{\left[T^{\mathsf{U},(k)} \right]^{\ell-1}}\right)^{\frac{\left[T^{\mathsf{U},(k)} \right]^{\ell-1} }{H([\bm{x}]^{\ell-1})}} 
    \left(\frac{\Omega^{(k)} H([\bm{x}]^{\ell-1})}{\left[\Omega^{(k)} \right]^{\ell-1}}\right)^{\frac{\left[\Omega^{(k)} \right]^{\ell-1} }{H([\bm{x}]^{\ell-1})}}.
\end{align}

We finally approximate the constraint  as follows:
\begin{tcolorbox}[ams align]
 & (T^{\mathsf{ML}})^{-1} \left(\sum_{k=1}^{K} T^{\mathsf{Tot},(k)}+\Omega^{(k)}\right) \leq 1,\\
 & \frac{(A^{\mathsf{ML}})^{-1}}{(T^{\mathsf{ML}})^{-1} 
 \widehat{H}(\bm{x};\ell)} \leq 1,\\
 & A^{\mathsf{ML}}\geq 1.
\end{tcolorbox}

\textbf{Constraint~\eqref{prob:varrhoApp}}:  Considering this constraint,
    $\sum_{m\in \mathcal{N}}\varrho^{(k)}_{n,m} = 1$, 
we write it via introducing an auxiliary variable (for data offloading) as two inequalities:
\begin{align}
 & \sum_{m\in \mathcal{N}}\varrho^{(k)}_{n,m} \leq 1,\\
 & \frac{(A^{\mathsf{DO}})^{-1}}{\sum_{m\in \mathcal{N}}\varrho^{(k)}_{n,m}} \leq 1,\label{frac2App}\\
 & A^{\mathsf{DO}}\geq 1,
\end{align}
where $ A^{\mathsf{DO}}$ is added with a large penalty term to the objective function to force $ A^{\mathsf{DO}}\downarrow 1$ at the optimal point.
We further exploit arithmetic-geometric mean inequality (Lemma~\ref{Lemma:ArethmaticGeometric}) to approximate the denominator of~\eqref{frac2App}:
\begin{equation}
    G(\bm{x})=\sum_{m\in \mathcal{N}}\varrho^{(k)}_{n,m} \geq \widehat{G}(\bm{x};\ell)\triangleq
    \prod_{m\in \mathcal{N}} \left(\frac{\varrho^{(k)}_{n,m} G([\bm{x}]^{\ell-1})}{\left[\varrho^{(k)}_{n,m} \right]^{\ell-1}}\right)^{\frac{\left[\varrho^{(k)}_{n,m} \right]^{\ell-1} }{G([\bm{x}]^{\ell-1})}} .
\end{equation}
We finally approximate this constraint as follows:
\begin{tcolorbox}[ams align]
 & \sum_{m\in \mathcal{N}}\varrho^{(k)}_{n,m} \leq 1,\\
 & \frac{(A^{\mathsf{DO}})^{-1}}{\widehat{G}(\bm{x};\ell)} \leq 1,\\
 & A^{\mathsf{DO}}\geq 1.
\end{tcolorbox}

\textbf{Constraint~\eqref{prob:varphiApp}}: Considering this constraint, 
$ \sum_{m\in \mathcal{N}} \varphi^{(k)}_{n,m}=1,~~n\in\mathcal{N}$,
we write it via introducing an auxiliary variable (for model offloading) as two inequalities:
\begin{align}
 & \sum_{m\in \mathcal{N}} \varphi^{(k)}_{n,m}\leq 1,\\
 & \frac{(A^{\mathsf{MO},(k)})^{-1}}{\sum_{m\in \mathcal{N}} \varphi^{(k)}_{n,m}} \leq 1,\label{fracApp3}\\
 & A^{\mathsf{MO},(k)}\geq 1,
\end{align}
where $ A^{\mathsf{MO},(k)}$ is added with a large penalty term to the objective function to force $A^{\mathsf{MO},(k)}\downarrow 1$ at the optimal point.
We further exploit arithmetic-geometric mean inequality (Lemma~\ref{Lemma:ArethmaticGeometric}) to approximate the denominator of~\eqref{fracApp3}:
\begin{equation}
    J(\bm{x})=\sum_{m\in \mathcal{N}}\varphi^{(k)}_{n,m} \geq \widehat{J}(\bm{x};\ell)\triangleq
    \prod_{m\in \mathcal{N}} \left(\frac{\varphi^{(k)}_{n,m} J([\bm{x}]^{\ell-1})}{\left[\varphi^{(k)}_{n,m} \right]^{\ell-1}}\right)^{\frac{\left[\varphi^{(k)}_{n,m} \right]^{\ell-1} }{J([\bm{x}]^{\ell-1})}} .
\end{equation}

Finally, we transform this constraint as follows:
\begin{tcolorbox}[ams align]
 & \sum_{m\in \mathcal{N}}\varphi^{(k)}_{n,m} \leq 1,\\
 & \frac{(A^{\mathsf{MO},(k)})^{-1}}{\widehat{J}(\bm{x};\ell)} \leq 1,\\
 & A^{\mathsf{MO},(k)}\geq 1.
\end{tcolorbox}

    \textbf{Constraints}~\eqref{eq:varphi1App},~\eqref{eq:varphi2App}: These two constraints are not in the standard format of GP, since the right hand sides are $0$. We first focus on~\eqref{eq:varphi1App} and write it as follows:
    \begin{align}
        &\varphi_{n,n}^{(k)}\sum_{m \in \mathcal{N} \setminus \{n\}} \varphi_{n,m}^{(k)}+1 \leq A^{\mathbf{B_1},(k)} \label{frac4App},\\
        & A^{\mathbf{B_1},(k)}\geq 1,
    \end{align}
    where $A^{\mathbf{B_1},(k)}$ is added with a large penalty term to the objective function to force $A^{\mathbf{B_1},(k)}\downarrow 1$. Note that~\eqref{frac4App} can be readily written as an inequality on a posynomial, and thus the constraint can be written as follows:
     \begin{tcolorbox}[ams align]
&\left(A^{\mathbf{B_1},(k)}\right)^{-1}\left(\varphi_{n,n}^{(k)}\sum_{m \in \mathcal{N} \setminus \{n\}} \varphi_{n,m}^{(k)}+1\right) \leq 1\\
        & A^{\mathbf{B_1},(k)}\geq 1.
\end{tcolorbox}
    We next focus on~\eqref{eq:varphi2App} and perform the following algebraic steps:
     \begin{align}
         &(1-\varphi_{n,n}^{(k)})\sum_{m \in \mathcal{N} \setminus \{n\}} \varphi_{m,n}^{(k)} +1 \leq A^{\mathbf{B_2},(k)}\nonumber \\
         &
        \Rightarrow \sum_{m \in \mathcal{N} \setminus \{n\}} \varphi_{m,n}^{(k)} -\varphi_{n,n}^{(k)}\sum_{m \in \mathcal{N} \setminus \{n\}} \varphi_{m,n}^{(k)} +1 \leq A^{\mathbf{B_2},(k)}\\
         &\Rightarrow  \sum_{m \in \mathcal{N} \setminus \{n\}} \varphi_{m,n}^{(k)}  +1 \leq A^{\mathbf{B_2},(k)} +\varphi_{n,n}^{(k)}\sum_{m \in \mathcal{N} \setminus \{n\}} \varphi_{m,n}^{(k)}
         \\
         &\Rightarrow 
         \frac{\sum_{m \in \mathcal{N} \setminus \{n\}} \varphi_{m,n}^{(k)}  +1}{A^{\mathbf{B_2},(k)} +\varphi_{n,n}^{(k)}\sum_{m \in \mathcal{N} \setminus \{n\}} \varphi_{m,n}^{(k)}} \leq 1, \label{frac5App}
     \end{align}
    where $A^{\mathbf{B_2},(k)}$ is added with a penalty term to the objective function to ensure $A^{\mathbf{B_2},(k)} \downarrow 1$ at the optimal point.
     We next use Arithmetic-geometric mean inequality to approximate the denominator of~\eqref{frac5App} with a monomial as follows:
     \begin{equation}
      L(\bm{x})=   1 +\varphi_{n,n}^{(k)}\sum_{m \in \mathcal{N} \setminus \{n\}} \varphi_{m,n}^{(k)} \geq \widehat{L}(\bm{x};\ell) \triangleq \left(\frac{ L([\bm{x}]^{\ell-1})}{1}\right)^{\frac{1}{L([\bm{x}]^{\ell-1})}} 
      \prod_{m\in \mathcal{N}\setminus\{n\}} \left(\frac{\varphi^{(k)}_{n,n}\varphi^{(k)}_{m,n} L([\bm{x}]^{\ell-1})}{\left[\varphi^{(k)}_{n,n}\varphi^{(k)}_{m,n} \right]^{\ell-1}}\right)^{\frac{\left[\varphi^{(k)}_{n,n}\varphi^{(k)}_{m,n} \right]^{\ell-1} }{L([\bm{x}]^{\ell-1})}} .
     \end{equation}
     Finally, we write this constraint as follows:
     \begin{tcolorbox}[ams align]
 & \frac{\sum_{m \in \mathcal{N} \setminus \{n\}} \varphi_{m,n}^{(k)}  +1}{\widehat{L}(\bm{x};\ell)} \leq 1,\\
 &A^{\mathbf{B_2},(k)}\geq 1.
\end{tcolorbox}
     
\textbf{The ML bound in the objective function:} We now turn our attention to the term associated with the ML performance in the objective function:
\begin{equation}\label{eq:gen_conv_neymanApp1}
 \begin{aligned}
 & \hspace{-7mm}\frac{1}{K} \hspace{-1mm} \sum_{k=0}^{K-1} \mathbb E\left[\Vert \nabla F^{(k)}(\mathbf{w}^{({k})}) \Vert^2\right]\leq
 \frac{2\sqrt{\widehat{e}_{\mathsf{max}}}\left({F}^{(0)}(\mathbf{w}^{(0)})\right)}{\alpha \overline{e}_{\mathsf{min}}\sqrt{N K}(1-\Lambda_{\mathsf{max}})}
 +\sum_{k=0}^{K-1}\frac{2\sqrt{e^{(k)}_{\mathsf{sum}}}\Omega^{(k+1)} \Delta^{(k+1)} }{\alpha e^{(k)}_{\mathsf{avg}}\sqrt{N K}(1-\Lambda^{(k)})}\hspace{-16mm}
  \\&+\sum_{k=0}^{K-1} \frac{1}{(1-\Lambda^{(k)})}\vast({8 \beta^2\Theta^2 \frac{\alpha^2 N}{{e^{(k)}_{\mathsf{sum}} K^2}} }\sum_{n\in \mathcal{N}}\frac{\widehat{{D}}^{(k)}_n}{{D}^{(k)} }( e_n^{(k)})
    \frac{1}{\left(\widehat{D}_n^{(k)}\right)^2}\vast[ \frac{1}{{B}_n^{(k)}} \left(\sum_{j=1}^{{S}^{(k)}_{n}} \widetilde{\sigma}^{(k)}_{n,j}{S}^{(k)}_{n,j} \right)^2  
    \\&-\sum_{j=1}^{{S}^{(k)}_{n}}{S}^{(k)}_{n,j} {\left(\widetilde{\sigma}^{(k)}_{n,j}\right)^2}\vast]
     +{8\zeta_2 \frac{\alpha^2N}{{e^{(k)}_{\mathsf{sum}} K^2}}\beta^2 \left(e_{\mathsf{max}}^{(k)}\right)^2}
     \vast)
\\&
     +\sum_{k=0}^{K-1}\frac{4 e^{(k)}_{\mathsf{avg}} \alpha \Theta^2 {\beta \sqrt{N} }}{\sqrt{e^{(k)}_{\mathsf{sum}}}K\sqrt{K}(1-\Lambda^{(k)})}\sum_{n\in \mathcal{N}} \left(\frac{1}{{D}^{(k)} } \right)^2\frac{1}{e^{(k)}_n}
\left[ \frac{1}{{B}_n^{(k)}} \left(\sum_{j=1}^{{S}^{(k)}_{n}} \widetilde{\sigma}^{(k)}_{n,j}{S}^{(k)}_{n,j} \right)^2-\sum_{j=1}^{{S}^{(k)}_{n}}{S}^{(k)}_{n,j} {\left(\widetilde{\sigma}^{(k)}_{n,j}\right)^2}\right].
 \end{aligned} 
 \end{equation}
 To have a tractable solution, we assume that the relative size of the strata to the size of the local dataset is upper bounded throughout the learning period, and let $s_{n,j}$ denote the upper bound of the relative size of  stratum $\mathcal{S}^{(k)}_{n,j}$ to the local dataset, i.e., $\frac{{S}^{(k)}_{n,j}}{\widehat{D}^{(k)}_{n}}\leq s_{n,j}, ~\forall k$. Accordingly, we have   
 \begin{equation}
 \sum_{j=1}^{{S}^{(k)}_{n}}{S}^{(k)}_{n,j} {\left(\widetilde{\sigma}^{(k)}_{n,j}\right)^2} \leq  \sum_{j=1}^{{S}^{(k)}_{n}} s_{n,j}\widehat{D}^{(k)}_n \left(\widetilde{\sigma}^{(k)}_{n,j}\right)^2= \widehat{D}^{(k)}_n \underbrace{\sum_{j=1}^{{S}^{(k)}_{n}} s_{n,j} \left(\widetilde{\sigma}^{(k)}_{n,j}\right)^2}_{\triangleq \widehat{Z}^{(k)}_n},
 \end{equation}
\begin{equation}
   \left(\sum_{j=1}^{{S}^{(k)}_{n}} \widetilde{\sigma}^{(k)}_{n,j}{S}^{(k)}_{n,j}\right)^2 \leq \left(\sum_{j=1}^{{S}^{(k)}_{n}} \widetilde{\sigma}^{(k)}_{n,j}{s}_{n,j}\widehat{D}^{(k)}_n\right)^2=\left(\widehat{D}^{(k)}_n\right)^2\underbrace{\left(\sum_{j=1}^{{S}^{(k)}_{n}} \widetilde{\sigma}^{(k)}_{n,j}{s}_{n,j}\right)^2}_{\triangleq  \widetilde{Z}^{(k)}_n}.
\end{equation} 
Thus the expression in~\eqref{eq:gen_conv_neymanApp1} can be written as follows:
 \begin{align}
 & \frac{1}{K} \hspace{-1mm} \sum_{k=0}^{K-1} \mathbb E\left[\Vert \nabla F^{(k)}(\mathbf{w}^{({k})}) \Vert^2\right]\leq
 \frac{2\sqrt{\widehat{e}_{\mathsf{max}}}\left({F}^{(-1)}(\mathbf{w}^{(0)})\right)}{\alpha \overline{e}_{\mathsf{min}}\sqrt{N K}(1-\Lambda_{\mathsf{max}})}+\sum_{k=0}^{K-1}\frac{2\sqrt{e^{(k)}_{\mathsf{sum}}}\Omega^{(k+1)} \Delta^{(k+1)} }{\alpha e^{(k)}_{\mathsf{avg}}\sqrt{N K}(1-\Lambda^{(k)})}\hspace{-16mm}
  \nonumber \\&+\sum_{k=0}^{K-1} \frac{1}{(1-\Lambda^{(k)})}\vast({ \frac{8 \beta^2\Theta^2\alpha^2N}{{e^{(k)}_{\mathsf{sum}} K^2}} }\sum_{n\in \mathcal{N}}\frac{ e_n^{(k)}}{{D}^{(k)} \widehat{{D}}^{(k)}_n} 
    \Bigg[ \frac{1}{{B}_n^{(k)}} \left(\widehat{D}^{(k)}_n\right)^2\widetilde{Z}^{(k)}_n
    -\widehat{D}^{(k)}_n\widehat{Z}^{(k)}_n\Bigg]
     +{8\zeta_2 \frac{\alpha^2N}{{e^{(k)}_{\mathsf{sum}} K^2}}\beta^2 \left(e_{\mathsf{max}}^{(k)}\right)^2}
     \vast)\nonumber \\&
     +\sum_{k=0}^{K-1}\frac{4 e^{(k)}_{\mathsf{avg}} \alpha \Theta^2 {\beta \sqrt{N} }}{\sqrt{e^{(k)}_{\mathsf{sum}}}K\sqrt{K}(1-\Lambda^{(k)})}\sum_{n\in \mathcal{N}} \left(\frac{1}{{D}^{(k)} } \right)^2\frac{1}{e^{(k)}_n}
\Bigg[ \frac{1}{{B}_n^{(k)}} \left(\widehat{D}^{(k)}_n\right)^2\widetilde{Z}^{(k)}_n
    -\widehat{D}^{(k)}_n\widehat{Z}^{(k)}_n\Bigg],
 \end{align} 
which can be written as follows:
 \begin{equation}\label{eq:gen_conv_neyman3}
\footnotesize
\hspace{-10mm}
\begin{aligned}
 &\Xi\triangleq  \sum_{k=0}^{K-1} \mathbb E\left[\Vert \nabla F^{(k)}(\mathbf{w}^{({k})}) \Vert^2\right] \leq  \underbrace{\left(\min_{1\leq k\leq K}\left\{\sum_{n\in \mathcal{N}}\frac{\widehat{{D}}^{(k)}_{n}e^{(k)}_{n}}{{D}^{(k)} }\right\}\right)^{-1} \left(\max_{1\leq k\leq K} \left\{\sum_{n\in\mathcal{N}} e_n^{(k)}\right\}\right)^{1/2}}_{(a)}  \underbrace{\frac{2\left(F^{(-1)}(\mathbf{w}^{(0)})\right)}{\alpha \sqrt{N K}(1-\Lambda_{\mathsf{max}})}}_{(b)}
 \hspace{-20mm}
 \\&+\sum_{k=0}^{K-1} \underbrace{\left(\sum_{n\in \mathcal{N}}\frac{\widehat{{D}}^{(k)}_{n}e^{(k)}_{n}}{{D}^{(k)} }\right)^{-1} \left(\sum_{n\in\mathcal{N}} e_n^{(k)}\right)^{1/2} }_{(c)}\underbrace{\frac{2\Omega^{(k+1)} \Delta^{(k+1)} }{\alpha \sqrt{N K}(1-\Lambda^{(k)})}}_{(d)}
  \\&+\sum_{k=0}^{K-1} \underbrace{\frac{1}{(1-\Lambda^{(k)})}}_{(e)}\vast(\underbrace{\frac{8 \beta^2\Theta^2 \alpha^2 N}{K^2} \left(\sum_{n\in\mathcal{N}} e_n^{(k)}\right)^{-1} \sum_{n\in \mathcal{N}}\frac{e_n^{(k)}\widehat{{D}}^{(k)}_n}{{D}^{(k)}}
   \frac{1}{{B}_n^{(k)}} \widetilde{Z}_n^{(k)}}_{(f)}
    \\& +\underbrace{{\frac{8\zeta_2\beta^2\alpha^2 N}{K^2} \left(\sum_{n\in\mathcal{N}} e_n^{(k)}\right)^{-1}   \left( \max_{n\in \mathcal{N}}\{e_{n}^{(k)}\}\right)^2}}_{(g)}
     \vast)
     \\&+\sum_{k=0}^{K-1}\underbrace{\frac{4 \Theta^2 {\beta \alpha \sqrt{N} }}{\sqrt{K} K(1-\Lambda^{(k)})} 
     \left(\sum_{n \in \mathcal{N}}  \frac{\widehat{{D}}^{(k)}_{n}e^{(k)}_{n}}{{D}^{(k)} }\right)\left(\sum_{n\in\mathcal{N}} e_n^{(k)}\right)^{-1/2}
     }_{(h)}\underbrace{\sum_{n\in \mathcal{N}} \left(\frac{\widehat{{D}}^{(k)}_n}{{D}^{(k)} \sqrt{e^{(k)}_n}} \right)^2  \frac{1}{{B}_n^{(k)}} \widetilde{Z}_n^{(k)}}_{(i)}\\&
 -\sum_{k=0}^{K-1} \underbrace{\frac{1}{(1-\Lambda^{(k)})}\frac{8 \beta^2\Theta^2 \alpha^2 N}{K^2} \left(\sum_{n\in\mathcal{N}} e_n^{(k)}\right)^{-1} }_{(j)} \underbrace{\sum_{n\in \mathcal{N}}\frac{e_n^{(k)}}{{D}^{(k)}} \widehat{Z}_n^{(k)}}_{(m)}\\
 &-\sum_{k=0}^{K-1}\underbrace{\frac{4 \Theta^2 {\beta \alpha \sqrt{N} }}{\sqrt{K} K(1-\Lambda^{(k)})} 
     \left(\sum_{n\in\mathcal{N}}\frac{\widehat{{D}}^{(k)}_{n}e^{(k)}_{n}}{{D}^{(k)} }\right)\left(\sum_{n\in\mathcal{N}} e_n^{(k)}\right)^{-1/2}}_{(n)}\underbrace{\sum_{n\in \mathcal{N}} \left(\frac{\sqrt{\widehat{{D}}^{(k)}_n}}{{D}^{(k)} \sqrt{e^{(k)}_n}} \right)^2\widehat{Z}_n^{(k)}}_{(o)},
 \end{aligned}
 \end{equation}
 which is explicitly written as the optimization variables. In the following, we break down the expression into different parts and carefully investigate the characteristics of each part with respect to the optimization variables.
 \begin{itemize}
     \item Term $(a)$ is in the form of a fraction with  a non-posynomial denominator and a posynomial in the denominator, which is not a posynomial. Term $(b)$ is a constant and thus a posynomial. Thus, term $(a)$ and $(b)$ combined are in the form of a ratio with a posynomial denominator which is not a posynomial. 
     \item Term $(c)$ has a non-posynomial denominator and a posynomial numerator, and thus is not a posynomial. Term $(d)$ is a posynomial. Thus, term $(c)$ and $(d)$ combined are in the form of a ratio with a posynomial in the denominator which is not a posynomial. 
     \item Term $(e)$ is a constant and thus a posynomial. Term $(f)$ is a ratio between a sum of monomials which is a posynomial and a posynomial, and thus it is not a posynomial. Also, term $(g)$ is a ratio between a max of monomials and a posynomial, which is not a posynomial. Thus, both combinations of term $(e)$ with $(f)$ and term $(e)$ with $(g)$ are not posynomials.
     \item Term $(h)$ is a ratio between a posynomial and square root of a posynomial, which is not a posynomial. Term $(i)$ is a sum of monomials which is a posynomial. Thus, combination of term $(h)$ with $(i)$ is not a posynomial.
     \item Term $(j)$ is a ratio between two posynomials, which is not a posynomial. Term $(m)$ is a sum of monomials which is a posynomial. Thus, combination of term $(j)$ and $(m)$ is a posynomial. 
     \item Term $(n)$ a ratio between a posynomial and square root of a posynomial, which is not a posynomial. Term $(o)$ is a sum of monomials which is a posynomial. Thus, combination of term $(n)$ and $(o)$ is not a posynomial.
 \end{itemize}
 
 Note that combination of two terms $(j)\&(m)$ is multiplied with a negative sign which makes the term non-posynomial even if the terms inside were posynomials. The same holds for the combination of terms $(n)\&(o)$. 
 
 In summary, we are faced with two challenges: first, the existence of division by posynomials and square root of posynomials and maximum of monomials inside the expressions; second, the existence of negative sign in the expression. In the following, we aim to alleviate these issues via introducing new constraints, using arithmetic-geometric mean to approximate the posynomial denominators with monomials, and some algebraic manipulations.

 \textbf{Introducing new constrains and exploiting the arithmetic-geometric mean:} We first incorporate $e^{(k)}_{\mathsf{sum}}$ and $e^{(k)}_{\mathsf{avg}}$ as the optimization variables, which makes the expressions simpler and more tractable. Then, to ensure that they take their desired values in the solution, we re-write their definitions as follows:
 \begin{align}
     & \frac{e^{(k)}_{\mathsf{sum}}}{\sum_{n\in\mathcal{N}} e_n^{(k)}}=1,\label{conmide1}\\
     & \frac{e^{(k)}_{\mathsf{avg}}}{\sum_{n\in \mathcal{N}}\frac{\widehat{{D}}^{(k)}_{n}e^{(k)}_{n}}{{D}^{(k)} }}=1\label{conmide2}.
 \end{align}
 However, the above two expressions are as the ratio of a monomial and a posynomial, which is not a posynomial. We thus exploit the arithmetic-geometric mean inequality in Lemma~\ref{Lemma:ArethmaticGeometric} and approximate the denominator of the above two constraints as below:
  \begin{align}
       R(\bm{x})= \sum_{n\in \mathcal{N}}e^{(k)}_{n}  \geq \widehat{R}(\bm{x};\ell)\triangleq \prod_{n\in\mathcal{N}}\left(\frac{e^{(k)}_{n}  R([\bm{x}]^{\ell-1})    }{\left[e^{(k)}_{n} \right]^{[\ell-1]}} \right)^{\frac{\left[e^{(k)}_{n} \right]^{[\ell-1]}}{R([\bm{x}]^{\ell-1})   }},
     \end{align}
  \begin{align}
        & V(\bm{x})= \sum_{n\in \mathcal{N}}\widehat{{D}}^{(k)}_{n}e^{(k)}_{n}  
        = \sum_{n\in \mathcal{N}} \sum_{m \in \mathcal{N}} {D}^{(k)}_{m} \varrho_{m,n}^{(k)} e^{(k)}_{n}  \nonumber \\
        & \geq 
        \widehat{V}(\bm{x};\ell)
        \triangleq 
        \prod_{n\in\mathcal{N}} \prod_{m \in \mathcal{N}}
        \left(\frac{ \varrho_{m,n}^{(k)} e^{(k)}_{n} {V}([\bm{x}]^{\ell-1})} 
        {\left[ \varrho_{m,n}^{(k)} e^{(k)}_{n} \right]^{[\ell-1]}} \right)
        ^{\frac{ \mathcal{D}^{(k)}_{m} \left[   \varrho_{m,n}^{(k)} e^{(k)}_{n} \right]^{[\ell-1]}}{{V}([\bm{x}]^{\ell-1}) }}.
     \end{align}
     As a result, constraints~\eqref{conmide1} and~\eqref{conmide2} can be expressed as follows:
 
 \begin{tcolorbox}[ams align]
     & \left({e^{(k)}_{\mathsf{sum}}}\right)^{-1}\sum_{n\in\mathcal{N}} e_n^{(k)}\leq 1,\label{conmide1F1}\\
     & \frac{(A^{\mathsf{Sum},(k)})^{-1}}{\left({e^{(k)}_{\mathsf{sum}}}\right)^{-1}\widehat{R}(\bm{x};\ell)}\leq 1,\label{conmide1F2}\\
     &A^{\mathsf{Sum},(k)}\geq 1,\\
     & \left({e^{(k)}_{\mathsf{avg}}}\right)^{-1}\left[\sum_{n\in \mathcal{N}}\frac{\widehat{{D}}^{(k)}_{n}e^{(k)}_{n}}{{D}^{(k)} }\right]\leq 1\label{conmide2F1},\\
        & \frac{{D}^{(k)}(A^{\mathsf{Avg},(k)})^{-1}}{\left({e^{(k)}_{\mathsf{avg}}}\right)^{-1}\widehat{V}(\bm{x};\ell)}\leq 1\label{conmide2F2},\\
        &A^{\mathsf{Avg},(k)}\geq 1,
 \end{tcolorbox}
\noindent  where $A^{\mathsf{Sum},(k)}$ and $A^{\mathsf{Avg},(k)}$ are added with penalty terms to the objective function to ensure $A^{\mathsf{Sum},(k)} \downarrow 1$ and $A^{\mathsf{Avg},(k)} \downarrow 1$.
 As a result, we have translated both~\eqref{conmide1} and~\eqref{conmide2} to inequalities on posynomials, which is acceptable in GP. 
 
 We also find it useful to add $\widehat{D}^{(k)}_n$ as optimization variable to have tractable expressions. Considering $\widehat{D}^{(k)}_n=\sum_{m\in\mathcal{N}} \varrho^{(k)}_{m,n} D^{(k)}_m$, we take a similar approach as above and exploit the arithmetic-geometric mean inequality as follows:
  \begin{align}
     & I(\bm{x})=\sum_{m\in\mathcal{N}} \varrho^{(k)}_{m,n} D^{(k)}_m \geq \widehat{I}(\bm{x};\ell)\triangleq \prod_{m\in\mathcal{N}}\left(\frac{\varrho^{(k)}_{m,n} I([\bm{x}]^{\ell-1})}{[\varrho^{(k)}_{m,n}]^{\ell-1}}\right)^{\frac{D^{(k)}_m[\varrho^{(k)}_{m,n} ]^{\ell-1}}{I([\bm{x}]^{\ell-1})}}.
 \end{align}
We finally transform this constraint as:
\begin{tcolorbox}[ams align]
     & (\widehat{D}^{(k)}_n)^{-1} \left[\sum_{m\in\mathcal{N}} \varrho^{(k)}_{m,n} D^{(k)}_m\right] \leq 1,\\&
     \frac{(A^{\mathsf{D},(k)})^{-1}}{(\widehat{D}^{(k)}_n)^{-1} \widehat{I}(\bm{x};\ell)} \leq 1,\\&
     A^{\mathsf{D},(k)} \geq 1,
 \end{tcolorbox}
 where $A^{\mathsf{D},(k)}$ is added with a penalty terms to the objective function to ensure $A^{\mathsf{D},(k)}\downarrow 1$.
 
 Then, we introduce $\widehat{e}_{\mathsf{max}}$ and $\overline{e}_{min}$ (appeared in term (a) in \eqref{eq:gen_conv_neyman3}) into the optimization variables and impose the following constraints:
 \begin{tcolorbox}[ams align]
     & {e^{(k)}_{\mathsf{sum}}}(\widehat{e}_{\mathsf{max}})^{-1}\leq 1,\label{conmide1F}\\
     & {\overline{e}_{\mathsf{min}}}({e^{(k)}_{\mathsf{avg}}})^{-1} \leq 1\label{conmide2F},
 \end{tcolorbox}
 \noindent both of which are inequality on monomials. 
 
 Furthermore, we introduce $e^{(k)}_{\mathsf{max}}$ as another optimization variable and then introduce the following constraint:
 \begin{tcolorbox}[ams align]\label{conmide3F}
    \frac{e^{(k)}_{n}}{e^{(k)}_{\mathsf{max}}} \leq 1, ~n\in\mathcal{N},
 \end{tcolorbox}
\noindent  which is an inequality on a posynomial accepted in GP. 
 
 It can be readily validated that with the procedure taken above, i.e.,  extending the optimization variables space to contain $e^{(k)}_{\mathsf{sum}}$, $e^{(k)}_{\mathsf{avg}}$, $\widehat{D}^{(k)}_n$,  $\widehat{e}_{\mathsf{max}}$ $\overline{e}_{min}$, and $e^{(k)}_{\mathsf{max}}$, and introducing the above constraints, all the terms which are recognized by under-braces in~\eqref{eq:gen_conv_neyman3} are transformed to posynomials. We then, focus on the two summations in the last two lines of~\eqref{eq:gen_conv_neyman3} which have negative coefficients. 
 
 \textbf{Handling the terms with negative coefficients:}
 We write down $\Xi$ defined in~\eqref{eq:gen_conv_neyman3} as follows:
 \begin{equation}
     \Xi = \sum_{k=0}^{K-1} \left[\sigma_{1}^{(k)}(\bm{x}) + \sigma_{2}^{(k)}(\bm{x})  + \sigma_{3}^{(k)}(\bm{x})  + \sigma_{4}^{(k)}(\bm{x}) +\sigma_{5}^{(k)}(\bm{x})  -\sigma_{6}^{(k)}(\bm{x}) -\sigma_{7}^{(k)}(\bm{x})\right],
     \end{equation}
     where $\sigma_{1}$ is the first line of~\eqref{eq:gen_conv_neyman3} (i.e.,  $(a)\&(b)$) after the above variables are introduced as optimization variables, $\sigma_{2}$ is the second line of~\eqref{eq:gen_conv_neyman3} (i.e., $(c)\&(d)$) after the above variables are introduced as optimization variables, and so forth. Note that $\sigma_{6},\sigma_{7}$ are the last two lines of~\eqref{eq:gen_conv_neyman3} which appear with negative sign. We next define:
     \begin{align}
         &\sigma_{+}^{(k)}(\bm{x})\triangleq \sigma_{1}^{(k)}(\bm{x}) + \sigma_{2}^{(k)}(\bm{x})  + \sigma_{3}^{(k)}(\bm{x})  + \sigma_{4}^{(k)}(\bm{x}) +\sigma_{5}^{(k)}(\bm{x}),\\
         &\sigma_{-,1}^{(k)}(\bm{x})\triangleq   \sigma_{6}^{(k)}(\bm{x}), \\
         &\sigma_{-,2}^{(k)}(\bm{x})\triangleq \sigma_{7}^{(k)}(\bm{x}).
     \end{align}
     
    Note that $\Xi=\sum_{k=0}^{K-1}\left[\sigma_{+}^{(k)}(\bm{x})-\sigma_{-,1}^{(k)}(\bm{x})-\sigma_{-,2}^{(k)}(\bm{x})\right]$ corresponds to the term with the weight coefficient $\gamma$ in the objective function of $\bm{\mathcal{P}}$. We then transform the term $\gamma \Xi $ in the objective function as $\gamma \sum_{k=0}^{K-1} \chi^{(k)}$, in which the auxiliary variable $\chi^{(k)}$ is the upperbound of the summand in $\Xi$  (i.e., $\sigma_{+}^{(k)}(\bm{x})-\sigma_{-,1}^{(k)}(\bm{x})-\sigma_{-,2}^{(k)}(\bm{x})\leq \chi^{(k)}, ~\forall k$), which is  added to the constraints as follows:
     \begin{align}
     &\sigma_{+}^{(k)}(\bm{x})-\sigma_{-,1}^{(k)}(\bm{x})-\sigma_{-,2}^{(k)}(\bm{x}) \leq \chi^{(k)}\\
     &\equiv  \sigma_{+}^{(k)}(\bm{x}) \leq \chi^{(k)}+\sigma_{-,1}^{(k)}(\bm{x})+\sigma_{-,2}^{(k)}(\bm{x})\\
     & \equiv \frac{\sigma_{+}^{(k)}(\bm{x})} {\chi^{(k)}+\sigma_{-,1}^{(k)}(\bm{x})+\sigma_{-,2}^{(k)}(\bm{x}) } \leq 1 \label{eq:finalObjFrac}.
     \end{align}
     To transform this constraint to an upperbound inequality on a posynomial, we need to make the numerator of the fraction on the left hand side of~\eqref{eq:finalObjFrac} a posynomial and its denominator a monomial. Note that all the terms in the numerator are posynomials, which makes the numerator a posynomial. Thus, it is sufficient to focus on the approximation of the denominator, for which we exploit the arithmetic-geometric mean inequality as follows:
     \begin{align}
    W(\bm{x})=& \chi^{(k)}+\frac{1}{(1-\Lambda^{(k)})}\frac{8 \beta^2\Theta^2 \alpha^2 N}{K^2} (e^{(k)}_{\mathsf{sum}})^{-1} \sum_{n\in \mathcal{N}}\frac{e_n^{(k)}}{{D}^{(k)} } \widehat{Z}_n^{(k)} \nonumber \\
 &+\frac{4 \Theta^2 {\beta \alpha \sqrt{N} }}{\sqrt{K} K(1-\Lambda^{(k)})} 
     \left(e^{(k)}_{\mathsf{avg}}\right)\left(e^{(k)}_{\mathsf{sum}}\right)^{-1/2}\sum_{n\in \mathcal{N}} \frac{\widehat{{D}}^{(k)}_n}{\left({D}^{(k)}\right)^2 e^{(k)}_n} \widehat{Z}_n^{(k)}
     \nonumber\\&
    \geq  \widehat{W}(\bm{x};\ell)\triangleq \left(\frac{\chi^{(k)}  {W}([\bm{x}]^{\ell-1})    }{[\chi^{(k)}]^{\ell-1} } \right)^{\frac{[\chi^{(k)}]^{\ell-1}}{{W}([\bm{x}]^{\ell-1})   }}\times \prod_{n\in\mathcal{N}}  \prod_{q=1}^{2} \left(\frac{\delta_{q}(\bm{x},n)  {W}([\bm{x}]^{\ell-1})    }{\delta_{q}([\bm{x}]^{\ell-1},n) } \right)^{\frac{\delta_{q}([\bm{x}]^{\ell-1},n)}{{W}([\bm{x}]^{\ell-1})   }},
     \end{align}
    where
     \begin{align}
       & \delta_{1}(\bm{x},n)= \frac{1}{(1-\Lambda^{(k)})}\frac{8 \beta^2\Theta^2 \alpha^2 N}{K^2} \frac{e_n^{(k)}}{e^{(k)}_{\mathsf{sum}}{D}^{(k)}} \widehat{Z}_n^{(k)},\\&
     \delta_{2}(\bm{x},n)= \frac{4 \Theta^2 {\beta \alpha \sqrt{N} }}{K\sqrt{K}(1-\Lambda^{(k)})} 
 \frac{e^{(k)}_{\mathsf{avg}}\widehat{{D}}^{(k)}_n}{  \left(e^{(k)}_{\mathsf{sum}}\right)^{1/2}\left({D}^{(k)}\right)^2 e^{(k)}_n} \widehat{Z}_n^{(k)},\\&\widehat{Z}^{(k)}_n=\sum_{j=1}^{S^{(k)}_n}{s}_{n,j} {\left(\widetilde{\sigma}^{(k)}_{n,j}\right)^2}.
     \end{align}
     We finally convert~\eqref{eq:finalObjFrac} to the following constraint:
     \begin{tcolorbox}[ams align]\label{conmide3F}
    \frac{\sigma_{+}^{(k)} (\bm{x})} {\widehat{W}(\bm{x};\ell)} \leq 1,
 \end{tcolorbox}
     which is an inequality on the ratio between a posynomial and a monomial and thus a posynomial.

     We finally obtain the optimization problem presented on the next page as our final formulation, which is an iterative-based optimization, where at each iteration, the optimization admits the GP format (coefficients $p_1,p_2,p_3,p_4,p_5,p_6,p_7,p_8 \gg 1$ in the objective function are constant penalty coefficients):
     
     \newpage
     \vspace{-9mm}
     {\footnotesize
 \begin{tcolorbox}[ams align]
     &\hspace{-4mm} \footnotesize (\bm{\mathcal{P}'}): \min \frac{1}{K}\sum_{k=0}^{K-1} \hspace{-.5mm} \left[c_1 \left( \sum_{n\in \mathcal{N}}E^{(k)}_n\right)+c_2 \left( T^{\mathsf{D},(k)}+T^{\mathsf{L},(k)}+T^{\mathsf{M},(k)}+T^{\mathsf{U},(k)}\right)\hspace{-.5mm}+ c_3 \chi^{(k)}\right]\hspace{-.5mm}\nonumber \\
     &\hspace{10mm}+ p_1 A^{\mathsf{ML}} \hspace{-.5mm}+p_2 A^{\mathsf{DO}} +\hspace{-.5mm}\sum_{k=0}^{K-1} \left[p_3 A^{\mathsf{MO},(k)}\hspace{-.5mm}+p_4 A^{\mathsf{D},(k)} \hspace{-.5mm}+p_5 A^{\mathsf{Sum},(k)}\hspace{-.5mm}+p_6 A^{\mathsf{Avg},(k)}+p_7 A^{\mathsf{B_1},(k)}+p_8 A^{\mathsf{B_2},(k)}\right]
     \label{prob:INITIAL} \hspace{-4mm} \\[-1em]
     & \textrm{s.t.}\nonumber\\
     & \frac{\sigma_{+}^{(k)} (\bm{x}) }{\widehat{W}(\bm{x};\ell)} \leq 1,\\
      & \left({e^{(k)}_{\mathsf{sum}}}\right)^{-1}\sum_{n\in\mathcal{N}} e_n^{(k)}\leq 1,\label{conmide1F1}\\
     & \frac{(A^{\mathsf{Sum},(k)})^{-1}}{\left({e^{(k)}_{\mathsf{sum}}}\right)^{-1}\widehat{R}(\bm{x};\ell)}\leq 1,\label{conmide1F2}\\
      & \left(\widehat{D}^{(k)}_n\right)^{-1} \left[\sum_{m\in\mathcal{N}} \varrho^{(k)}_{m,n} D^{(k)}_m\right] \leq 1\\&
     \frac{(A^{\mathsf{D},(k)})^{-1}}{\left(\widehat{D}^{(k)}_n\right)^{-1} \widehat{I}(\bm{x};\ell)} \leq 1,\\
     & \left({e^{(k)}_{\mathsf{avg}}}\right)^{-1}\left[\sum_{n\in \mathcal{N}}\frac{\widehat{{D}}^{(k)}_{n}e^{(k)}_{n}}{{D}^{(k)} }\right]\leq 1\label{conmide2F1},\\
        & \frac{{D}^{(k)}(A^{\mathsf{Avg},(k)})^{-1}}{\left({e^{(k)}_{\mathsf{avg}}}\right)^{-1}\widehat{V}(\bm{x};\ell)}\leq 1\label{conmide2F2},\\
      &{e^{(k)}_{n}}\left({e^{(k)}_{\mathsf{max}}}\right)^{-1} \leq 1, ~n\in\mathcal{N},\\
       & {e^{(k)}_{\mathsf{sum}}}\left(\widehat{e}_{\mathsf{max}}\right)^{-1}\leq 1,\\
     & {\overline{e}_{\mathsf{min}}}\left({e^{(k)}_{\mathsf{avg}}}\right)^{-1} \leq 1,\\
      & \left(T^{\mathsf{ML}}\right)^{-1} \left(\sum_{k=1}^{K} T^{\mathsf{Tot},(k)}+\Omega^{(k)}\right) \leq 1,\\
 & \frac{(A^{\mathsf{ML}})^{-1}}{(T^{\mathsf{ML}})^{-1} 
 \widehat{H}(\bm{x};\ell)} \leq 1,\\
     & \left(T^{\mathsf{D},(k)}\right)^{-1}\varrho^{(k)}_{m,n} {D}^{(k)}_m b^{\mathsf{D}} /r_{m,n} \leq 1 ,~  m,n\in\mathcal{N},\\
   &\left(T^{\mathsf{L},(k)}\right)^{-1}e^{(k)}_n{a_n B^{(k)}_n}/{f^{(k)}_n} \leq 1, ~ n\in\mathcal{N},\\
     &   {\left(T^{\mathsf{M},(k)} \right)^{-1}\varphi_{m,n} {M}b^{\mathsf{G}} }/{r_{m,n}} \leq 1, ~ m,n\in \mathcal{N},\\
       &  \left( T^{\mathsf{U},(k)} \right)^{-1}   {Mb^{\mathsf{G}}\varphi_{n,n}}/{r_n} \leq 1,  n\in\mathcal{N},\\
    & \sum_{m\in \mathcal{N}}\varrho^{(k)}_{n,m} \leq 1,\\
 & \frac{(A^{\mathsf{DO}})^{-1}}{\widehat{G}(\bm{x};\ell)} \leq 1,\\
      & \sum_{m\in \mathcal{N}}\varphi^{(k)}_{n,m} \leq 1,\\
 & \frac{(A^{\mathsf{MO},(k)})^{-1}}{\widehat{J}(\bm{x};\ell)} \leq 1,\\
 &\left(A^{\mathbf{B_1},(k)}\right)^{-1}\left(\varphi_{n,n}^{(k)}\sum_{m \in \mathcal{N} \setminus \{n\}} \varphi_{n,m}^{(k)}+1\right) \leq 1,~~n\in\mathcal{N},\\&
 \frac{\sum_{m \in \mathcal{N} \setminus \{n\}} \varphi_{m,n}^{(k)}  +1}{\widehat{L}(\bm{x};\ell)} \leq 1, ~~n\in\mathcal{N},\\&
    A^{\mathsf{MO},(k)}\hspace{-.5mm}, A^{\mathsf{Avg},(k)}\hspace{-.5mm},A^{\mathsf{D},(k)}\hspace{-.5mm},A^{\mathsf{Sum},(k)}\hspace{-.5mm},A^{\mathbf{B_1},(k)}\hspace{-.5mm},A^{\mathbf{B_2},(k)},A^{\mathsf{DO}},A^{\mathsf{ML}}\geq 1,  \frac{f^{\mathsf{min}}_n}{f^{(k)}_n}\leq 1,~~\frac{ f^{(k)}_n}{f^{\mathsf{max}}_n}\leq 1,~~ 1\leq B_n^{(k)}\leq \widehat{D}^{(k)}_n,~~ n\in\mathcal{N} \\
     & {\widehat{D}}_n^{(k)}\hspace{-.5mm},{e}_n^{(k)}\hspace{-.5mm},{f}_n^{(k)}\hspace{-.5mm},T^{\mathsf{D},(k)}\hspace{-.5mm},T^{\mathsf{L},(k)}\hspace{-.5mm},T^{\mathsf{M},(k)}\hspace{-.5mm},T^{\mathsf{U},(k)}\hspace{-.5mm},\Omega^{(k)}\hspace{-.5mm},e^{(k)}_{\mathsf{sum}}\hspace{-.2mm},e^{(k)}_{\mathsf{avg}}\hspace{-.2mm},e^{(k)}_{\mathsf{max}}\hspace{-.1mm},\chi^{(k)}, \overline{e}_{\mathsf{min}},\widehat{e}_{\mathsf{max}},\varrho^{(k)}_{n,m},\varphi^{(k)}_{n,m} > 0, ~~n,m\in \mathcal{N},\label{prob:FINAL}\\
      \cline{1-2}
     &\hspace{-5mm}\textrm{\textit{\textbf{Variables}}:}\nonumber\\
     &\scriptsize \hspace{-4mm} K,\big\{\bm{\widehat{D}}^{(k)}\hspace{-.5mm},\mathbf{e}^{(k)}\hspace{-.5mm},\mathbf{f}^{(k)}\hspace{-.5mm},\mathbf{B}^{(k)}\hspace{-.5mm},\bm{\varrho}^{(k)}\hspace{-.5mm}, \bm{\varphi}^{(k)}\hspace{-.5mm},T^{\mathsf{D},(k)}\hspace{-.5mm},T^{\mathsf{L},(k)}\hspace{-.5mm},T^{\mathsf{M},(k)}\hspace{-.5mm},T^{\mathsf{U},(k)}\hspace{-.5mm},\Omega^{(k)}\hspace{-.5mm},e^{(k)}_{\mathsf{sum}}\hspace{-.2mm},e^{(k)}_{\mathsf{avg}}\hspace{-.2mm},e^{(k)}_{\mathsf{max}}\hspace{-.1mm},\chi^{(k)}\hspace{-.5mm},A^{\mathsf{MO},(k)}\hspace{-.5mm}, A^{\mathsf{Avg},(k)}\hspace{-.5mm},A^{\mathsf{D},(k)}\hspace{-.5mm},A^{\mathsf{Sum},(k)}\hspace{-.5mm},A^{\mathbf{B_1},(k)}\hspace{-.5mm},A^{\mathbf{B_2},(k)}\big\}_{k=1}^{K},\nonumber \\
     &\hspace{-4mm}A^{\mathsf{DO}},A^{\mathsf{ML}},\overline{e}_{\mathsf{min}},\widehat{e}_{\mathsf{max}} \nonumber \hspace{-10mm} 
 \end{tcolorbox}
 }
 

 \section{Numerical Evaluation of the Optimality and Complexity of Algorithm~\ref{alg:cent}}\label{app:NumOpt}

To numerically assess the optimality obtained by the solver of our method, we conduct an exhaustive search over the minibatch size, one of the critical variables in our problem. 
The results are presented in Fig.~\ref{fig:optimality_minib}  for the same settings as in Fig.~\ref{fig:test_obj_value} of the manuscript, with $N = 5$ devices, where we fix the value of all optimization variables except minibatch size at one device.
We can see that our result, indicated by the red circle and arrow, achieves the optimal objective function value for three choices of energy importance (i.e., $c_1$ in the objective of our optimization problem $\bm{\mathcal{P}}$).

To assess computational complexity, we measure the run time of our algorithm in Fig.~\ref{fig:complexity_timing}, again for the same settings as Fig.~\ref{fig:test_obj_value}, with varying numbers of devices (the computation times are obtained upon conducting 15 iterations in Alg.~\ref{alg:cent} since as seen from Fig.~\ref{fig:test_obj_value} the algorithm converges after that point). The run time of our method increases in a near quadratic relationship as the number of devices increases, which we expect based on the space complexity of the optimization discussed in Sec. IV.

\begin{figure}[h!]
    \centering
    \includegraphics[width=0.4\textwidth]{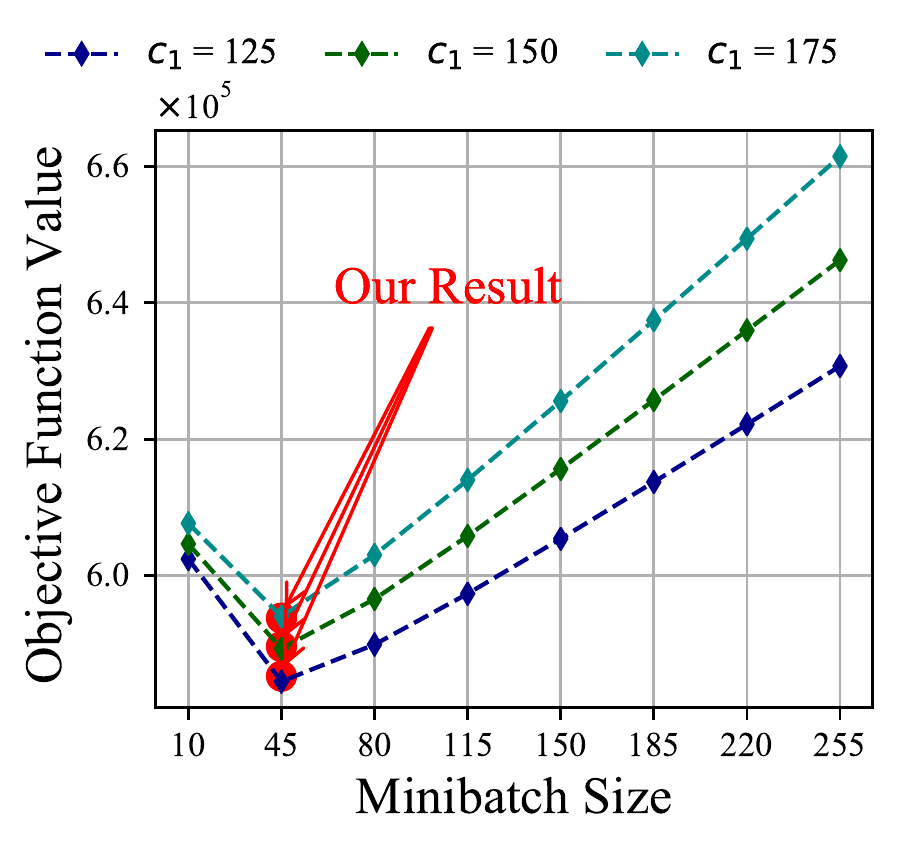}
    \vspace{-2mm}
    \caption{Numerical evaluation of the optimality of the obtained solution of Alg. 1 under three different objective function configurations. The objective function value is shown as the mini-batch size of one device is varied. Our result from the optimization  is indicated by the red circle and arrow, which gives the smallest objective function values.}
    \label{fig:optimality_minib}
\end{figure}

\begin{figure}[h!]
    \centering
    \includegraphics[width=0.4\textwidth]{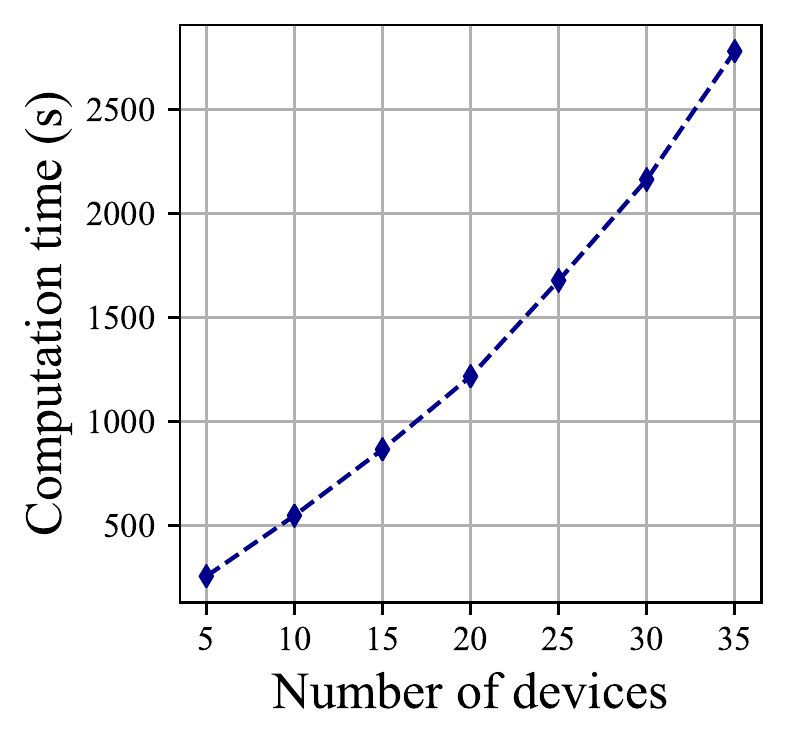}
    \vspace{-2mm}
    \caption{Numerical evaluation of the complexity  Alg. 1 in terms of the run time  for different number of devices.}
    \label{fig:complexity_timing}
\end{figure}

\vspace{-0mm}

\end{document}